%% file: main.tex
\documentclass{article}

\usepackage{microtype}
\usepackage{graphicx}
\usepackage{subcaption}
\usepackage{booktabs} 

\usepackage{titletoc}
\usepackage[toc, page, header]{appendix}

\usepackage{hyperref}

\usepackage[accepted]{icml2026}

\usepackage{amsmath}
\usepackage{amssymb}
\usepackage{mathtools}
\usepackage{amsthm}

\usepackage{thmtools}

\renewcommand{\algorithmiccomment}[1]{ \hfill $\triangleright$ { #1}}
\renewcommand{\algorithmicrequire}{\textbf{Input:}}
\renewcommand{\algorithmicensure}{\textbf{Output:}}
\usepackage[capitalize,noabbrev]{cleveref}

\theoremstyle{plain}
\newtheorem{theorem}{Theorem}[section]

\newtheorem{lemma}[theorem]{Lemma}

\theoremstyle{definition}
\newtheorem{definition}[theorem]{Definition}
\newtheorem{assumption}[theorem]{Assumption}
\theoremstyle{remark}
\newtheorem{remark}[theorem]{Remark}



\usepackage{float}
\allowdisplaybreaks[1]

\input{math_commands}

\usepackage{enumitem}

\icmltitlerunning{The BoBW Algorithms for Heavy-Tailed MDPs}

\newcommand{\compilefullversion}{true}
\ifthenelse{\equal{\compilefullversion}{false}}{%
	\newcommand{\OnlyInFull}[1]{}
	\newcommand{\OnlyInShort}[1]{#1}
}{%
	\newcommand{\OnlyInFull}[1]{#1}%
	\newcommand{\OnlyInShort}[1]{}%
}%

\newcommand{\compilehidecomments}{true}
\ifthenelse{ \equal{\compilehidecomments}{true} }{%
	\newcommand{\yu}[1]{}
    \newcommand{\longbo}[1]{}
    \newcommand{\yihan}[1]{}
}{
	\newcommand{\yu}[1]{{\color{cyan}[\text{Yu:} #1]}}
    \newcommand{\longbo}[1]{{\color{orange}[\text{Longbo:} #1]}}
    \newcommand{\yihan}[1]{{\color{teal}[\text{Yihan:} #1]}}
}

\begin{document}

\twocolumn[
\icmltitle{Best-of-Both-Worlds for Heavy-Tailed Markov Decision Processes}



  \icmlsetsymbol{equal}{*}

  \begin{icmlauthorlist}
    \icmlauthor{Yu Chen}{iiis}
    \icmlauthor{Yuhao Liu}{iiis}
    \icmlauthor{Jiatai Huang}{zhuoshi}
    \icmlauthor{Yihan Du}{sutd}
    \icmlauthor{Longbo Huang}{iiis}
  \end{icmlauthorlist}

  \icmlaffiliation{iiis}{Institute for Interdisciplinary Information Sciences, Tsinghua University, Beijing, China}
  \icmlaffiliation{zhuoshi}{Beijing ZhuoShi Capital Co., Ltd., China}
  \icmlaffiliation{sutd}{ESD, Singapore University of Technology and Design, Singapore}

  \icmlcorrespondingauthor{Longbo Huang}{longbohuang@tsinghua.edu.cn}

  \icmlkeywords{Machine Learning, ICML}

  \vskip 0.3in
]



\printAffiliationsAndNotice{}  
\begin{abstract}
We investigate episodic Markov Decision Processes with heavy-tailed losses (HTMDPs). Existing approaches for HTMDPs are conservative in stochastic environments and lack adaptivity in adversarial regimes. In this work, we propose algorithms \texttt{HT-FTRL-OM} and \texttt{HT-FTRL-UOB} for HTMDPs that achieve Best-of-Both-Worlds (BoBW) guarantees: instance-independent regret in adversarial environments and logarithmic instance-dependent regret in self-bounding (including the stochastic case) environments. For the known transition setting, \texttt{HT-FTRL-OM} applies the Follow-The-Regularized-Leader (FTRL) framework over occupancy measures with novel skipping loss estimators, achieving a $\widetilde{\mathcal{O}}(T^{1/\alpha})$ regret bound in adversarial regimes and a $\mathcal{O}(\log T)$ regret in stochastic regimes. Building upon this framework, we develop a novel algorithm \texttt{HT-FTRL-UOB} to tackle the more challenging unknown-transition setting. Under a mild truncative nonnegativity condition on the loss distributions, this algorithm employs a pessimistic skipping loss estimator and achieves a $\widetilde{\mathcal{O}}(T^{1/\alpha} + \sqrt{T})$ regret in adversarial regimes and a $\mathcal{O}(\log^2(T))$ regret in stochastic regimes. Our analysis overcomes key barriers through several technical insights, including a local control mechanism for heavy-tailed shifted losses, a new suboptimal-mass propagation principle, and a novel regret decomposition that isolates transition uncertainty from heavy-tailed estimation errors and skipping bias.
\end{abstract}

\input{tex_file/introduction}
\input{tex_file/related_works}
\input{tex_file/preliminaries}

\input{tex_file/known_transition}

\input{tex_file/unknown_transition}
\input{tex_file/conclusion}

\section*{Impact Statement}
This paper presents work whose goal is to advance the field of Machine Learning. 
There are many potential societal consequences of our work, none which we feel must be specifically highlighted here.

\section*{Acknowledgement}

This work was supported by the National Natural Science
Foundation of China Grant 52494974.


\bibliography{ref}
\bibliographystyle{icml2026}

\appendix
\onecolumn
\appendixpage


\input{tex_file/appendix_related_works}

\input{tex_file/appendix_known_transition}
\input{tex_file/appendix_unknown_transition}

\end{document}


%% file: math_commands.tex
\usepackage{amsmath,amsfonts,bm,bbm}

\newcommand{\Skip}{ {\text{skip}} }
\newcommand{\Shift}{ {\text{shift}} }

\newcommand{\Pess}{ {\text{pess}} }

\Crefname{ALC@unique}{Line}{Lines}

\Crefname{algorithm}{Algorithm}{Algorithms}
\Crefname{assumption}{Assumption}{Assumptions}
\Crefname{lemma}{Lemma}{Lemmas}
\Crefname{proposition}{Proposition}{Propositions}
\Crefname{corollary}{Corollary}{Corollaries}
\Crefname{theorem}{Theorem}{Theorems}
\Crefname{definition}{Definition}{Definitions}
\Crefname{remark}{Remark}{Remarks}

\Crefformat{equation}{Eq. #2(#1)#3}
\Crefrangeformat{equation}{Eqs. #3(#1)#4 to #5(#2)#6}
\Crefmultiformat{equation}{Eqs. #2(#1)#3}{ and #2(#1)#3}{, #2(#1)#3}{ and #2(#1)#3}
\Crefrangemultiformat{equation}{Eqs. #3(#1)#4 to #5(#2)#6}{ and #3(#1)#4 to #5(#2)#6}{, #3(#1)#4 to #5(#2)#6}{ and #3(#1)#4 to #5(#2)#6}

\def\eqref#1{equation~\ref{#1}}

\def\1{\mathbbm{1}}

\def\ringpi{{\mathring{\pi}}}

\def\vb{{\bm{b}}}

\def\vg{{\bm{g}}}

\def\vq{{\bm{q}}}

\def\vx{{\bm{x}}}
\def\vy{{\bm{y}}}
\def\vz{{\bm{z}}}

\DeclareMathAlphabet{\mathsfit}{\encodingdefault}{\sfdefault}{m}{sl}
\SetMathAlphabet{\mathsfit}{bold}{\encodingdefault}{\sfdefault}{bx}{n}

\def\gA{{\mathcal{A}}}

\def\gE{{\mathcal{E}}}
\def\gF{{\mathcal{F}}}

\def\gM{{\mathcal{M}}}

\def\gO{{\mathcal{O}}}
\def\gP{{\mathcal{P}}}
\def\gQ{{\mathcal{Q}}}

\def\gS{{\mathcal{S}}}

\def\sG{{\mathbb{G}}}

\def\sP{{\mathbb{P}}}

\newcommand{\wt}{\widetilde}
\newcommand{\wh}{\widehat}

\newcommand{\E}{\mathbb{E}}

\newcommand{\R}{\mathbb{R}}

\newcommand{\poly}{\mathrm{poly}}

\newcommand{\Reg}{\mathrm{Reg}}

\newif\ifsup\supfalse
\suptrue

\DeclareMathOperator*{\argmin}{argmin}

%% file: tex_file/introduction.tex
\section{Introduction}
\label{sec:introduction}
Markov Decision Process (MDP) is a fundamental framework for modeling sequential decision-making problems, underpinning modern advancements in Reinforcement Learning (RL), operations research, and control systems. Standard RL theory often relies on the assumption that feedback signals (losses or rewards) are bounded or sub-Gaussian, enabling the use of classical concentration inequalities. While convenient for theoretical analysis, this assumption can be violated in real-world scenarios where the feedback is inherently heavy-tailed. In domains such as network traffic routing \citep{liebeherr2012delay}, financial asset pricing \citep{bradley2003financial,bai2025review}, and image signal processing \citep{hamza2001image}, agents frequently encounter extreme events or outliers with non-negligible probability, underscoring the importance of developing robust RL algorithms to handle MDPs with heavy-tailed losses.

Beyond modeling the tail behavior of loss distributions, robust algorithm design must also account for the underlying nature of the environment. This line of research distinguishes between distinct environment types, typically categorized into stochastic and adversarial regimes. In the stochastic regime, losses are generated from stationary distributions, while in the adversarial regime, losses may be arbitrarily chosen by an adversary and may vary over time. Since the nature of the environment is often unknown \emph{a priori}, a desirable property is \textit{Best-of-Both-Worlds} (BoBW), as proposed by \citep{bubeck2012best} for the multi-armed bandit (MAB) problem. Ideally, without knowing the regime in advance, a BoBW algorithm should simultaneously achieve an optimal instance-dependent regret in stochastic environments (typically logarithmic in $T$) and an instance-independent regret in adversarial environments.

In this paper, we investigate the \textit{Markov Decision Processes with Heavy-Tailed losses} (HTMDPs) under various environmental conditions, including stochastic and adversarial regimes. Unlike standard settings with bounded feedback, the loss $\ell$ that occurs in HTMDPs may be unbounded and can take negative values, and we work under the bounded $\alpha$-moment condition ($\alpha \in (1, 2]$), i.e., $\E[|\ell|^\alpha]\le\sigma^\alpha$ for some $\sigma > 0$ (with a mild truncative nonnegativity condition additionally required only for the unknown transition case; see \Cref{ass:truncative-nonnegativity}). 

Although there is a large body of work studying HTMDPs in tabular and function approximation settings \citep{zhuang2021noregret,park2024heavytailed,zhu2024robust,huang2023tackling,huang2024horizon,li2024variance}, existing results almost exclusively focus on stochastic environments and primarily provide instance-independent regret guarantees, without providing instance-dependent regret bounds of logarithmic order in the number of episodes $T$ (i.e., $\gO(\log T)$). Consequently, it remains a crucial problem to design BoBW algorithms for HTMDPs which, without knowing the regime \emph{a priori}, simultaneously attain an instance-independent regret minimax-optimal in $\sigma,T$ up to $\poly(H,S,A)$ and log factors in adversarial regimes and logarithmic instance-dependent regret in stochastic regimes.

\textbf{Challenges. }
Tackling HTMDPs in both adversarial and stochastic environments presents notable obstacles. 
\textit{i)} Existing BoBW algorithms for MDPs critically depend on concentration arguments under boundedness or sub-Gaussian assumptions to control stability terms and second-order regret components, but these tools break down under heavy-tailed feedback. 
\textit{ii)} Existing algorithms for HTMDPs are largely developed for stationary stochastic environments and their analyses exploit such structure to control heavy-tailed estimation errors. These guarantees do not extend to adversarially varying losses, and thus cannot be directly used when the regime is unknown \emph{a priori}.
\textit{iii)} Unlike bandits, MDPs introduce long-horizon dependence through state transitions, causing heavy-tailed noise to accumulate in value estimation. Moreover, the state-action structure implies huge challenges in the optimal-action elimination procedure, which is the crucial step for achieving a logarithmic stochastic regret bound (see \Cref{sec:challenge-and-insights}).
This leads to the following challenging and important open question:
\begin{center}
\textit{\textbf{Can BoBW performance be achieved under HTMDPs?}}
\end{center}
\textbf{Our Contributions. } 
In this paper, we answer this question in the affirmative. 
We start with HTMDPs with known transitions. Our proposed algorithm \textbf{H}eavy-\textbf{T}ailed \textbf{FTRL} over \textbf{O}ccupancy \textbf{M}easure (\texttt{HT-FTRL-OM}) integrates the Follow-The-Regularized-Leader (FTRL) framework over occupancy measures with a $1/\alpha$-Tsallis entropy regularizer and a novel skipping loss estimator (see \Cref{sec:known-alg}), and achieves BoBW guarantees simultaneously in adversarial regimes and self-bounding (including stochastic as a special case) regimes. Leveraging this framework, we further investigate the unknown-transition setting and develop algorithm \textbf{H}eavy-\textbf{T}ailed \textbf{FTRL} with \textbf{U}pper \textbf{O}ccupancy \textbf{B}ound (\texttt{HT-FTRL-UOB}) in \Cref{sec:unknown-alg}, which adopts a pessimistic estimator with skipping-loss adjustment and biased importance sampling via upper occupancy bounds. We establish instance-dependent regret bounds for \texttt{HT-FTRL-UOB} in self-bounding regimes and instance-independent regret bounds in adversarial regimes, which demonstrate the BoBW performance.
Our main contributions are summarized as follows: 
\begin{itemize}[nosep,itemsep=2pt]
    \item For HTMDPs with known transitions, we propose framework \texttt{HT-FTRL-OM} (\Cref{alg:ht-ftrl-om}), which simultaneously achieves an instance-independent regret bound $\wt{\gO}(\sigma T^{1/\alpha})$, minimax-optimal in $\sigma,T$ up to $\poly(H,S,A)$ and log factors, in adversarial regimes and an instance-dependent regret bound $\gO(\log(T))$ in stochastic regimes (see \Cref{thm:bobw-known}).
    \item We extend this framework to the unknown transition setting and devise algorithm \texttt{HT-FTRL-UOB} (\Cref{alg:ht-ftrl-uob}), which, under a mild truncative nonnegativity condition on the loss distributions (\Cref{ass:truncative-nonnegativity}), achieves an instance-independent regret bound $\wt{\gO}(\sigma(T^{1/\alpha} + \sqrt{T}))$, minimax-optimal in $\sigma,T$ up to $\poly(H,S,A)$ and log factors, in adversarial regimes and the first logarithmic instance-dependent regret bound $\gO(\log(T)^2)$ for stochastic environments (see \Cref{thm:bobw-unknown}). 
    \item Our analysis overcomes key barriers in heavy-tailed RL. \textit{i)} We develop a novel refinement framework for shifted losses under heavy-tailed feedback. \textit{ii)} We establish a suboptimal mass propagation principle that bridges occupancy deviations across layers. \textit{iii)} For the unknown transition setting, we design a pessimistic estimator and derive an innovative regret decomposition scheme that carefully isolates transition uncertainty from heavy-tailed estimation errors and the skipping bias. See \Cref{sec:challenge-and-insights,sec:unknown-result} for more details.
\end{itemize}

%% file: tex_file/related_works.tex
\section{Related Works}
\label{sec:related_works}

\textbf{Heavy-tailed Online Decision Making. }
Heavy-tailed feedback is well-studied in bandits via robust mean estimation \citep{bubeck2013banditsa,medina2016noregret,shao2018almost,xue2021nearly,wang2025heavytailed,lu2019optimal,raychowdhury2019bayesian}.
In episodic MDPs, \citet{zhuang2021noregret} established minimax regret bound $\wt{\gO}\left(T^{1/\alpha}+\sqrt{T}\right)$ in stochastic environments, illustrating that heavy-tailed feedback introduces significant costs in the worst-case regret.
Subsequent works extended this to linear MDPs \citep{huang2023tackling} and variance-aware guarantees \citep{huang2023tackling,li2024variance}.
Other directions include robust offline RL and differential privacy \citep{park2024heavytailed,zhu2024robust,yang2024towards,wu2023differentially}.
Crucially, existing HTMDP results focus almost exclusively on stochastic environments, lacking adaptivity to adversarial regimes (BoBW) and often yielding polynomial rather than logarithmic regret in stochastic settings.

\textbf{BoBW Heavy-Tailed MABs. }
The BoBW properties for heavy-tailed online learning have only been studied in Multi-Armed Bandits (MABs).
\citet{huang2022adaptive} gave the first BoBW algorithms for heavy-tailed MABs by incorporating a truncative nonnegativity assumption to control stability under negative losses.
\citet{cheng2024taming} further strengthened the guarantees by removing the non-negativity assumption on loss distribution and providing high-probability bounds. \citet{zhao2025heavytailed} extended the BoBW guarantees to heavy-tailed linear bandits and broader adversarial heavy-tailed bandit models.
\citet{chen2025uniinf} further considered parameter-free heavy-tailed MABs that do not require prior knowledge of tail parameters $(\sigma, \alpha)$.
In contrast to bandits, our setting is an episodic HTMDP where feedback is coupled through state visitation and transition dynamics, requiring new robust estimators and stability control under occupancy constraints.

\textbf{BoBW MDPs. }
For standard MDPs with bounded losses, BoBW guarantees are typically achieved via FTRL over occupancy measures. \citet{jin2020simultaneously} established the first result for known transitions, which \citet{jin2021best} extended to unknown transitions. Further research explores adversarial transitions \citep{jin2023noregret,ito2025adapting}, complementary policy optimization \citep{dann2023best}, and linear function approximation \citep{liu2023optimal}.
However, all these results fundamentally rely on bounded or sub-Gaussian feedback. We address the open problem of achieving BoBW guarantees in episodic MDPs under heavy-tailed feedback.

%% file: tex_file/preliminaries.tex
\section{Preliminaries}
\label{sec:preliminaries}

\paragraph{Notations.} 
We write $[n]:=\{1,\ldots,n\}$ for $n\ge1$, and let $\Delta(\mathcal X)$ denote the probability simplex over a finite set $\mathcal X$.
We use $\gO$ and $\Omega$ for upper and lower bounds up to constants, and $\wt{\gO}$ to further hide logarithmic factors. $\poly(\cdot)$ denotes an unspecified polynomial.
$\log(\cdot)$ is the natural logarithm unless stated otherwise.
Vectors are denoted by bold symbols (e.g., $\vx$) with entries $x_i$. 
Finally, $\{\mathcal F_t\}_{t=0}^T$ is the natural filtration, where $\mathcal F_t$ is the $\sigma$-algebra generated by all observations up to episode $t$.

\subsection{Episodic MDPs with Heavy-Tailed Losses} 
We consider a tabular and episodic finite-horizon Markov decision process (MDP), denoted by $\gM = (\gS\cup\{s_{H+1}\}, \gA, \sP, H, T, \{\bm\nu_t\}_{t=1}^T)$. Here $\gS$ and $\gA$ are the state space and action space, respectively. $\sP: \gS \times \gA \to \Delta(\gS\cup\{s_{H+1}\})$ is the transition function where $s_{H+1}$ represents the terminal state.  $H$ is the horizon length. $T$ is the number of episodes. $\bm\nu_t = \{\nu_t(s,a)\}_{s\in\gS, a\in\gA}$ is the loss distribution for each episode $t \in [T]$.
We use $S$ and $A$ to represent the numbers of states and actions, respectively. 

Without loss of generality, we assume that the state space $\gS$ can be partitioned into $H$ disjoint layers $\gS = \bigsqcup_{h=1}^{H} \gS_h$, where $\gS_1 = \{s_1\}$ consists of only a fixed initial state. The transition can only happen between adjacent layers, i.e., for any action $a \in \gA$, $\sP[\cdot|s_h,a]$ is supported on $\gS_{h+1}$ for all $s_h \in \gS_h$. Moreover, $\sP[s_{H+1} \mid s_H,a] = 1$ for all $s_H \in\gS_H$.

The learner interacts with the MDP for $T$ episodes.
In the beginning of each episode $t \in [T]$, the environment chooses a loss distribution $\nu_t(s,a)$ for each state-action pair $(s,a)$.\footnote{$\nu_t(s,a)$ can be fixed in stochastic regimes, or time-varying and history-dependent in adversarial regimes.}
In each episode $t$, the learner chooses a policy $\pi_t : \gS \to \Delta(\gA)$.
The interaction proceeds as follows: starting in an initial state $s_1^t = s_1$, for each step $h \in [H]$, the learner observes state $s_h^t$, samples action $a_h^t \sim \pi_{t}(\cdot|s_h^t)$, observes loss $\ell_t(s_h^t, a_h^t) \sim \nu_t(s_h^t, a_h^t)$, and the state transitions to $s_{h+1}^t \sim \sP[\cdot|s_h^t, a_h^t]$.

\textbf{Heavy-Tailed Losses. } We consider the heavy-tailed loss distributions whose variance may be unbounded. Following previous works \citep{bubeck2013banditsa,zhuang2021noregret}, we assume that the loss distributions have a bounded $\alpha$-moment.
Specifically, there exist constants $\alpha \in (1, 2]$ and $\sigma > 0$ such that for all $t \in [T]$ and $(s,a) \in \gS \times \gA$, 
$\E_{\ell \sim \nu_t(s,a)}[|\ell|^\alpha] \le \sigma^\alpha.$
This assumption covers both the finite variance case ($\alpha=2$) and infinite variance case ($1<\alpha<2$). For the known transition setting (\Cref{sec:known}), our analysis imposes no further sign restriction and the loss is allowed to be negative. For the unknown transition setting (\Cref{sec:unknown}), we additionally introduce a mild truncative nonnegativity condition (\Cref{ass:truncative-nonnegativity}), which still permits negative losses but ensures that the truncated expectation remains nonnegative; this extra requirement is needed to control the sign of the skipping bias under transition uncertainty.

\textbf{Regret Metric. } We first define the value and Q-value functions for policy $\pi$ and transition kernel $\sP$. Given loss functions $\bm\ell = \{\ell(s,a)\}_{s\in\gS, a\in\gA}$, for any step $h \in [H]$ and $(s_h, a) \in \gS_h \times \gA$, we define
\begin{equation*}
\begin{aligned}
    Q^{\sP,\pi}(s_h,a; \bm\ell)\! &:=\! \ell(s_h,a)\! +\!\!\! \sum_{s' \in \gS_{h+1}} \sP[s'|s_h,a] V^{\sP,\pi}(s';  \bm\ell), \\
    V^{\sP,\pi}(s_h; \bm\ell)\! &:=\! \sum_{a \in \gA} \pi(a|s_h) Q^{\sP,\pi}(s_h,a; \bm\ell),
\end{aligned}
\end{equation*}
with $Q^{\sP,\pi}(s_{H+1},a; \bm\ell) := 0$ and $V^{\sP,\pi}(s_{H+1}; \bm\ell) := 0$. Then, $\E[V^{\sP,\pi_t}(s_1; \bm\ell_t)]$ represents the expected loss of policy $\pi_t$ in episode $t$, where the expectation is taken over the randomness of the loss distributions $\bm\ell_t \sim \bm\nu_t$.
The performance of the learner is measured by the regret against a fixed deterministic benchmark policy $\ringpi$:
\begin{align*}
    \Reg_T(\ringpi) = \E\left[\sum_{t=1}^T \left( V^{\sP, \pi_t}(s_1; \bm\ell_t) - V^{\sP, \ringpi}(s_1; \bm\ell_t) \right)\right].
\end{align*}
We choose $\ringpi$ to be the optimal deterministic policy that minimizes the expected cumulative loss $\E\left[\sum_{t=1}^T V^{\sP, \ringpi}(s_1; \bm\ell_t)\right]$. 

\textbf{Occupancy Measure. } We further introduce the state-action occupancy measure $\bm\rho^{\sP,\pi}$, given transition kernel $\sP$ and policy $\pi$. In step $h \in [H]$, we denote the probability of visiting state-action pair $(s_h, a)$ with policy $\pi$ under $\sP$ as $\rho^{\sP, \pi}(s_h,a)$ for any $s_h \in \gS_h$ and $a \in \gA$. Therefore, we can write $J_t^\sP(\pi) = \E[\sum_{s\in\gS, a\in\gA} \rho^{\sP, \pi}(s,a) \ell_t(s,a)]$. 
Moreover, for any fixed $\sP$, we define the occupancy measure set $\gQ(\sP)$ as
\begin{align}\label{eq:def-Q-sP}
    \gQ(\sP):=\left\{\bm{\rho}\ge 0: \exists \pi \ \ \mathrm{s.t.} \ \ \bm{\rho} = \bm{\rho}^{\sP, \pi}  \right\},
\end{align}
which is a convex polytope characterized by linear flow constraints \citep{jin2020learning,jin2021best}.
For each feasible occupancy measure $\bm\rho \in \gQ(\sP)$, we denote $\rho(s) = \sum_{a\in\gA} \rho(s,a)$. Then it corresponds to a unique policy $\pi$ under $\sP$, which can be derived by $\pi(a| s) = \rho(s,a) / \sum_{a' \in \gA} \rho(s,a')$.\footnote{When $\rho(s)=0$, the state $s$ is never visited under $\pi$ and $\pi(\cdot\mid s)$ can be chosen arbitrarily without affecting $\bm\rho^{\sP,\pi}$.}

\subsection{Adversarial and Self-Bounding Regimes.}
In this section, we introduce the regimes considered in this paper. Note that our algorithms do not have any prior knowledge of regime types.

\textbf{Adversarial Regime.}
In the adversarial setting, the environment may choose $\bm\nu_t$ adaptively: for each episode $t$, the loss distributions $\bm\nu_t$ are allowed to be $\mathcal F_{t-1}$-measurable.

\textbf{Self-Bounding Regime.}
To model benign (e.g., stochastic) or nearly benign environments, we adopt the following self-bounding condition, which is standard in the BoBW literature \citep{jin2021best,zimmert2021tsallis,zhao2025heavytailed}.

\begin{definition}[Self-Bounding Constraint]
\label{def:self-bounding}
An environment satisfies the $(\ringpi, \Delta, C_{\mathrm{sb}})$-self-bounding constraint if there exists a gap function $\Delta: \gS \times \gA \to \R_{\ge 0}$ with $\Delta(s, \ringpi(s)) = 0$ for all $s \in \gS$ and a constant $C_{\mathrm{sb}} \ge 0$, such that the regret against $\ringpi$ is lower bounded by the cumulative expected gap:
\begin{align*}
    \Reg_T(\ringpi) \ge \E\left[ \sum_{t=1}^T \sum_{s \in \gS, a \in \gA} \rho^{\sP, \pi_t}(s,a) \Delta(s,a) \right]\! -\! C_{\mathrm{sb}}.
\end{align*}
\end{definition}
Intuitively, the above inequality states that up to an additive constant, the regret must be paid for sampling suboptimal state-action pairs, quantified by the gap function $\Delta$.
The (i.i.d.) stochastic regime, where $\bm\nu_t \equiv \bm\nu$ and losses are drawn independently across episodes, satisfies this condition with $C_{\mathrm{sb}}=0$ by taking $\ringpi$ as an optimal policy.
Moreover, the stochastic regime with adversarial corruptions (with total corruption budget $C_{\mathrm{corr}}$) also falls into this framework \citep{lykouris2018stochastic}, and one can take $C_{\mathrm{sb}} = \gO(C_{\mathrm{corr}})$ with the same gap function.

%% file: tex_file/known_transition.tex
\section{HTMDPs with Known Transition: Challenges and Core Technical Insights}
\label{sec:known}

In this section, we start by considering the MDPs with known transition kernel $\sP$, which allows us to focus on analyzing how to deal with the heavy-tailed losses in the MDPs. 
We first present a novel algorithm \textbf{H}eavy-\textbf{T}ailed \textbf{FTRL} over \textbf{O}ccupancy \textbf{M}easure (\texttt{HT-FTRL-OM}, \Cref{alg:ht-ftrl-om}) which achieves the Best-of-Both-Worlds (BoBW) guarantees in adversarial and self-bounding (including the stochastic regime) environments. Then we highlight the core technical challenges and insights in handling heavy-tailed losses in MDPs over occupancy measures.

\subsection{Algorithm and Results with Known Transition}
\label{sec:known-alg}

We introduce algorithm \texttt{HT-FTRL-OM} (\Cref{alg:ht-ftrl-om}) for HTMDPs with known transitions. Our approach is built upon the FTRL framework over the polytope $\gQ(\sP)$ equipped with a $1/\alpha$-Tsallis entropy regularizer. Crucially, we design a novel skipping loss estimator tailored for occupancy measures. While skipping techniques have been explored in heavy-tailed bandits \citep{huang2022adaptive,zhao2025heavytailed}, our design marks the first successful adaptation of these mechanisms to HTMDPs, enabling robust estimation in the presence of state-transition structures.

\begin{algorithm}[!t]
    \caption{\texttt{HT-FTRL-OM}}
    \label{alg:ht-ftrl-om}
    \begin{algorithmic}[1]
    \REQUIRE Heavy-tail parameters $\alpha$ and $\sigma$, skipping threshold constant $C > 0$, and learning rate constant $\beta > 0$.
    \FOR{$t = 1, 2, \dots, T$}
        \STATE Compute occupancy distribution
        \begin{equation}
            \vx_t \gets \argmin_{\vx \in \gQ(\sP)} \left\langle \vx, \sum_{t' < t} \left(\wt{\bm\ell}_{t'}^{\Skip} - \vb_{t'}^\Skip\right) \right\rangle + \Psi_t(\vx),
        \end{equation}
        where $\gQ(\sP)$ is the occupancy measure defined in \Cref{eq:def-Q-sP} and $\Psi_t(\vx) = - \frac{1}{\eta_t}\sum_{(s,a)\in\gS\times\gA} x(s,a)^{\frac{1}{\alpha}}$ is the $1/\alpha$-Tsallis entropy regularizer with inverse learning rate $\eta_t := \beta/(\sigma\cdot t^{1/\alpha})$.
        \STATE Derive policy $\pi_t(a| s) \gets x_t(s,a) / \sum_{a' \in \gA} x_t(s,a')$.
        \STATE Roll out episode $t$ using $\pi_t$, and observe trajectory $\{s_h^t, a_h^t\}_{h\in[H]}$ and losses $\ell_t(s_h^t, a_h^t)$.
        
        \STATE For each $(s,a)$, construct the importance sampling estimator:
        \begin{equation}
        \label{eq:def-wt-ell-skip}
            \wt{\ell}^\Skip_t(s,a) \gets  \frac{{\ell}^\Skip_t(s,a)}{{x_t(s,a)}} \cdot \mathbbm{1}_t[(s,a)].
        \end{equation}
        Here $\ell^\Skip_t(s,a)$ is the skipped loss with ${\ell}^\Skip_t(s,a) = \ell_t(s,a)\cdot \mathbbm{1}[|\ell_t(s,a)| \le \tau_t(s,a)]$ and $\tau_t(s,a)\gets C \cdot \sigma t^{1/\alpha} x_t(s,a)^{1/\alpha}$. $\mathbbm{1}_t[(s,a)]$ takes the value $1$ if $(s,a)\in\{(s_h^t, a_h^t)\}_{h=1}^H$ and $0$ otherwise.
        \STATE Compute skipping-bonus $b_t^\Skip(s,a) \gets C^{1-\alpha}\cdot \sigma t^{1/\alpha - 1}x_t(s,a)^{1/\alpha - 1}$.
    \ENDFOR
    \end{algorithmic}
\end{algorithm}

In Line 2 of \Cref{alg:ht-ftrl-om}, we compute the occupancy measure $\vx_t$ by solving the FTRL problem over the occupancy measure polytope $\gQ(\sP)$ with $1/\alpha$-Tsallis entropy regularizer $\Psi_t(\vx)$. This is a convex optimization problem that can be solved in polynomial time. Then we use the occupancy measure $\vx_t$ to derive the policy $\pi_t$ in Line 3. In Line 4, we roll out the episode $t$ using the policy $\pi_t$ and observe the trajectory and losses. Line 5 calculates the novel skipping loss estimator $\wt{\ell}^\Skip_t(s,a)$ by tuning the losses with a skipping trick, subtracting a skipping bonus, and importance sampling. Line 6 computes the skipping bonus $b_t^\Skip(s,a)$ to compensate for the bias and skipping error. 

Our novel analysis illustrates that \texttt{HT-FTRL-OM} achieves the BoBW regret bounds in the adversarial and self-bounding (including the stochastic regime) environments, as shown by the following theorem.
\begin{theorem}[BoBW Guarantees for the Known Transition Setting]\label{thm:bobw-known}
    By setting constant $C$ and $\beta$ in \Cref{eq:C-bound,eq:def-beta}, \texttt{HT-FTRL-OM} (\Cref{alg:ht-ftrl-om}) achieves BoBW regret bounds in the adversarial and self-bounding regimes simultaneously:

    \textbf{1. Adversarial Regime: } For any adversarial loss distributions and deterministic benchmark policy $\ringpi$,
    \begin{equation*}
        \Reg_T(\ringpi) = \gO\left(\poly(H,S,A) \sigma  (T^{1/\alpha} + \log(T)) \right).
    \end{equation*}

    \textbf{2. Self-Bounding Regime: } If the environment satisfies the $(\ringpi, \Delta, C_{\mathrm{sb}})$ self-bounding constraint (\Cref{def:self-bounding}), then we have
    \begin{equation*}
        \Reg_T(\ringpi)\!=\! \gO\!\left(\poly(H,S,A) \left(W + C_{\mathrm{sb}}^{\frac{1}{\alpha}}\cdot W^{\frac{\alpha-1}{\alpha}} + U\right)\!\right),
    \end{equation*}
    where we denote $W := \sigma^{\frac{\alpha}{\alpha-1}}\omega_\Delta(\alpha)\log(T)$ with $\omega_\Delta(\alpha) := \sum_{(s,a):a\neq\ringpi(s)} \Delta(s,a)^{-\frac{1}{\alpha - 1}}$, and $U:= \sigma\log(T)$.
\end{theorem}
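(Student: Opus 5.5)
We follow the standard FTRL-over-occupancy-measure analysis of \citet{jin2020simultaneously,jin2021best}, adapted to heavy tails as in \citet{huang2022adaptive}. Write $\mathring{\vx}=\bm\rho^{\sP,\ringpi}$. The first step is a decomposition of the regret into an estimation/bias part and an FTRL part:
\begin{align*}
\Reg_T(\ringpi)=\E\Big[\sum_t\langle \vx_t-\mathring{\vx},\bar{\bm\ell}_t-\E_t[\wt{\bm\ell}^\Skip_t]+\vb^\Skip_t\rangle\Big]+\E\Big[\sum_t\langle \vx_t-\mathring{\vx},\wt{\bm\ell}^\Skip_t-\vb^\Skip_t\rangle\Big],
\end{align*}
where $\bar\ell_t(s,a)$ is the mean loss.

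\textbf{Skipping bias.} Because $\E_t[\wt\ell^\Skip_t(s,a)]=\E[\ell\,\1[|\ell|\le\tau_t]]$, the skipping bias satisfies $|\bar\ell_t-\E_t\wt\ell^\Skip_t|\le\sigma^\alpha\tau_t^{1-\alpha}=C^{1-\alpha}\sigma t^{1/\alpha-1}x_t^{1/\alpha-1}=b^\Skip_t$. Hence the bonus dominates the bias. The $\vx_t$-side contributes at most $2\langle\vx_t,\vb_t\rangle$, and the $\mathring{\vx}$-side contributes a term with the favorable sign. Both are of the form $\sum_t \eta_t^{-1}$-like quantities, i.e. $\sigma t^{1/\alpha-1}\sum x_t^{1/\alpha}$.

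\textbf{FTRL part.} The standard FTRL bound gives a penalty term $\sum_t(\eta_t^{-1}-\eta_{t-1}^{-1})(\Psi(\mathring{\vx})-\Psi(\vx_t))$ plus stability terms. We bound stability using the local norm of the $1/\alpha$-Tsallis Hessian, whose inverse is $\asymp\eta_t x^{2-1/\alpha}$. The negative losses and the subtraction of $\vb_t$ are the main obstacle, since the usual nonnegativity trick fails. To handle them, we shift the loss estimator by a state-dependent constant using the occupancy flow constraints (the inner product over $\gQ(\sP)$ is invariant to $Q$-value shifts). We then show that the truncation $|\ell^\Skip|\le\tau_t\propto t^{1/\alpha}x_t^{1/\alpha}$ keeps $\eta_t|\wt\ell_t|x_t^{1-1/\alpha}\le C\beta$. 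This is exactly why $C$ and $\beta$ must satisfy \Cref{eq:C-bound,eq:def-beta}: the step stays in a local region where the Hessian is comparable between $\vx_t$ and the intermediate point. Under that condition, stability per $(s,a)$ is $\lesssim\eta_t x_t^{2-1/\alpha}\E_t[(\ell^\Skip)^2/x_t^2\cdot\1_t]\le\eta_t x_t^{1-1/\alpha}\sigma^\alpha\tau_t^{2-\alpha}\asymp\sigma t^{1/\alpha-1}x_t^{1/\alpha}$. The bonus contributes similarly. All terms therefore collapse to
\[
\Reg_T\lesssim\poly(H,S,A)\,\E\Big[\sum_t\sigma t^{1/\alpha-1}\sum_{(s,a)\in\Gamma}x_t(s,a)^{1/\alpha}\Big]+\poly\cdot U,
\]
where $\Gamma$ is a suitable set. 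The additive $\sigma\log T$ comes from the first few rounds and the shift. The key refinement is that, since $\mathring{\vx}$ is deterministic, the terms on $(s,\ringpi(s))$ can be rewritten via $x_t(s,\ringpi(s))^{1/\alpha}-\mathring x(s,\ringpi(s))^{1/\alpha}\lesssim$ suboptimal mass. This is done with the suboptimal-mass propagation principle: $\sum_{s\in\gS_h}|x_t(s)-\mathring x(s)|\le\sum_{h'<h}\sum_{a\ne\ringpi(s)}x_t(s,a)$. It lets us take $\Gamma=\{(s,a):a\ne\ringpi(s)\}$, at the cost of $\poly(H,S,A)$ factors.

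\textbf{Adversarial regime.} We use Hölder: $\sum x^{1/\alpha}\le (SA)^{1-1/\alpha}H^{1/\alpha}$, so the bound is $\sum_t\sigma t^{1/\alpha-1}=\gO(\sigma T^{1/\alpha})$.

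\textbf{Self-bounding regime.} Write $\Reg=(1+\lambda)\Reg-\lambda\Reg$. By \Cref{def:self-bounding}, this is at most $\E\sum_t\sum_{(s,a)\in\Gamma}[(1+\lambda)c\sigma t^{1/\alpha-1}x^{1/\alpha}-\lambda\Delta x]+\lambda C_{\mathrm{sb}}$. Maximizing $a x^{1/\alpha}-\lambda\Delta x$ over $x$ gives $\asymp (a)^{\alpha/(\alpha-1)}(\lambda\Delta)^{-1/(\alpha-1)}$ with $a^{\alpha/(\alpha-1)}\propto\sigma^{\alpha/(\alpha-1)}t^{-1}$. Summing over $t$ yields $\lesssim(1+\lambda)^{\alpha/(\alpha-1)}\lambda^{-1/(\alpha-1)}W$. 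Optimizing $\lambda$ then gives $W+C_{\mathrm{sb}}^{1/\alpha}W^{(\alpha-1)/\alpha}$, plus $U$.

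The main risk lies in the stability step with signed, bonus-shifted losses. There, the local-control argument must ensure that the optimal-action coordinates do not contribute $\sqrt{\cdot}$-type terms that resist self-bounding.
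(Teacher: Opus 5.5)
Your overall architecture matches the paper: the skipping-bias/FTRL split, $1/\alpha$-Tsallis local stability under a magnitude condition enforced by $C,\beta$, suboptimal-mass propagation plus H\"older for the penalty, H\"older for the adversarial bound, and a Young/self-bounding trade-off. However, the stability step has a genuine gap that breaks the self-bounding bound.

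You compute the per-coordinate stability with the \emph{unshifted} importance-weighted estimator, $\eta_t x_t^{2-1/\alpha}\E_t[(\ell^\Skip_t)^2\1_t/x_t^2]\asymp\beta C^{2-\alpha}\sigma t^{1/\alpha-1}x_t(s,a)^{1/\alpha}$, and you do this for every $(s,a)$, including $(s,\ringpi(s))$. When $\vx_t$ approaches $\bm\rho^{\sP,\ringpi}$, the optimal-action coordinates sum to $\Omega(1)$ in each layer, so these terms contribute $\Omega(\sigma T^{1/\alpha})$. Your ``key refinement'' cannot remove them. The stability term $\langle\vx_t-\vx_{t+1},\cdot\rangle-D_{\Psi_t}(\vx_{t+1},\vx_t)$ contains no $\mathring{\bm x}$ at all, so there is no $x_t^{1/\alpha}-\mathring x^{1/\alpha}$ difference for mass propagation to act on; that rewriting only works for the penalty term.

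What is missing is exactly what the loss shift is for. You must run the stability analysis on $\wt\ell^{\Shift}_t=\wt Q_t-\wt V_t$ and prove two facts about it:
\begin{enumerate}
\item A pointwise bound $|\wt\ell^{\Shift}_t(s,a)|\le\poly(H,S,A)\,C\sigma t^{1/\alpha}x_t(s,a)^{1/\alpha-1}$, so that the local Tsallis lemma still applies. Your magnitude condition is stated only for the unshifted estimator. Extending it is nontrivial, because $\wt\ell^{\Shift}_t(s,a)$ aggregates importance-weighted skipped losses from later layers with possibly much smaller occupancy. The paper controls these via reach-probability weights and $x_t(s',a')\ge x_t(s,a)\,\sP^{s,a}_{\pi_t}(s_{h'}=s')\,\pi_t(a'|s')$.
\item A second-moment bound carrying the factor $(1-\pi_t(a|s))$, namely $\E[\wt\ell^{\Shift}_t(s,a)^2\mid\gF_{t-1}]\lesssim H^2(1-\pi_t(a|s))C^{2-\alpha}\sigma^2t^{2/\alpha-1}x_t(s,a)^{2/\alpha-2}$. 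Combined with $(1-\pi_t(\mathring a|s))\,x_t(s,\mathring a)^{1/\alpha}\le\sum_{a\ne\mathring a}x_t(s,a)^{1/\alpha}$, this removes the optimal actions from the stability term.
\end{enumerate}
Your closing remark flags this risk, but the proposal does not resolve it.

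A smaller issue of the same kind appears in the skipping-bias step. You discard the $\mathring{\bm x}$ side because of its favorable sign and keep $2\langle\vx_t,\vb^\Skip_t\rangle$. That term again contains $2C^{1-\alpha}\sigma t^{1/\alpha-1}x_t(s,\ringpi(s))^{1/\alpha}$, which is $\Omega(\sigma t^{1/\alpha-1})$ once summed over the states of a layer. Keep both sides instead. Since $\bar\ell_t-\E_t\ell^\Skip_t+b^\Skip_t\in[0,2b^\Skip_t]$, the optimal coordinate contributes at most $2(x_t(s,\mathring a)-\mathring x(s))_+\,b^\Skip_t(s,\mathring a)$. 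Because $b^\Skip_t\propto x_t^{1/\alpha-1}$, you then have $(x_t(s,\mathring a)-\mathring x(s))_+x_t(s,\mathring a)^{1/\alpha-1}\le(x_t(s)-\mathring x(s))_+^{1/\alpha}$, and H\"older with mass propagation finishes the argument.

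One minor correction: the additive $U=\sigma\log T$ comes from the bonus part of stability, $\eta_t x_t^{2-1/\alpha}(b^\Skip_t)^2=\beta C^{2-2\alpha}\sigma t^{1/\alpha-2}x_t^{1/\alpha}$, not from ``the first few rounds and the shift.''
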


To the best of our knowledge, \Cref{thm:bobw-known} shows that \texttt{HT-FTRL-OM} is the first algorithm that achieves BoBW guarantees in the Heavy-Tailed MDPs with known transition. We refer readers to \Cref{thm:bobw-known-formal} for the formal theorem, with detailed proof in Appendix~\ref{app:proof-known}. 

\Cref{thm:bobw-known} demonstrates that \texttt{HT-FTRL-OM} achieves an instance-independent regret bound of $\gO(\sigma T^{1/\alpha})$ in adversarial environments, and an instance-dependent regret bound of $\gO(\sigma^{\frac{\alpha}{\alpha-1}}\omega_\Delta(\alpha)\log(T) + C_{\mathrm{sb}}^{\frac{1}{\alpha}}\cdot \sigma \omega_\Delta(\alpha)^{\frac{\alpha-1}{\alpha}}\log(T)^{\frac{\alpha-1}{\alpha}})$ in self-bounding regimes, which implies a regret of $\gO(\sigma^{\frac{\alpha}{\alpha-1}}\omega_\Delta(\alpha)\log(T))$ in stochastic regimes.

\begin{remark}[Tightness via MAB Reduction]\label{rem:lb-known}
When $H=1$, $|\gS|=1$, and $|\gA|=A$, HTMDPs reduce to heavy-tailed MABs with $A$ arms. \citet{bubeck2013banditsa} established the instance-independent lower bound $\Omega(\sigma T^{1/\alpha})$ in adversarial environments and the instance-dependent lower bound $\Omega\!\bigl(\sigma^{\alpha/(\alpha-1)}\sum_{i\neq i^*}\Delta_i^{-1/(\alpha-1)}\log T\bigr)$ in stochastic environments. The regret bounds of \texttt{HT-FTRL-OM} in \Cref{thm:bobw-known} match these lower bounds in $\sigma$, $T$, and the suboptimal gaps $\Delta$, up to $\poly(H,S,A)$ and log factors. In the sub-Gaussian case ($\alpha=2$), our bounds also recover the results of \citet{jin2020simultaneously,jin2021best} up to $\poly(H,S,A)$ factors.
\end{remark}

\subsection{Challenges and Core Technical Insights}
\label{sec:challenge-and-insights}

Our results are accomplished by integrating the FTRL framework over occupancy measures and self-bounding techniques \citep{luo2021policy,zimmert2019optimala,jin2020learning} with the loss tuning technique in heavy-tailed bandits with Tsallis entropy regularizer \citep{huang2022adaptive,zhao2025heavytailed}. 
However, extending these methods to heavy-tailed MDPs faces significant challenges due to the interaction between the state-action structure of occupancy measures and the heavy-tailed nature of the losses. Below we provide a roadmap of our analysis and highlight the core technical insights.

Following the standard FTRL analysis, we decompose the regret into three terms: (i) a \emph{stability term} that captures the stability of the FTRL update, (ii) a \emph{penalty term} that arises from the time-varying regularizer $\Psi_t$, and (iii) a \emph{skipping error} that accounts for the bias introduced by loss truncation.

The central challenge is to bound these three terms while achieving both the adversarial rate $\gO(T^{1/\alpha})$ and self-bounding rate $\gO(\log T)$. To address these challenges, we develop two core technical contributions: (1) a novel \emph{heavy-tailed loss shifting} analysis for bounding the stability term under heavy-tailed noise, and (2) a \emph{suboptimal mass propagation} lemma for bounding the penalty and skipping terms by restricting to suboptimal actions only.

\paragraph{Heavy-Tailed Loss Shifting.}
Loss shifting is a powerful technique proposed by \cite{jin2021best} which considers a shifted loss function
\begin{align*}
    \ell^\Shift_t(s,a) := Q^{\sP, \pi_t}(s,a; \bm\ell) - V^{\sP, \pi_t}(s; \bm\ell),
\end{align*}
instead of the original loss $\ell_t(s,a)$ as an invariant transformation to maintain the FTRL update rule: $\argmin_{\vx \in \gQ(\sP)} \left\langle \vx, \sum_{t' < t} {\bm\ell}_{t'}^{\Shift} \right\rangle + \Psi_t(\vx)$ is exactly the same as $\argmin_{\vx \in \gQ(\sP)} \left\langle \vx, \sum_{t' < t} {\bm\ell}_{t'} \right\rangle + \Psi_t(\vx)$. 
This shifting enables lower variance control over the shifted loss functions and simplifies the analysis of the FTRL update over occupancy measures by controlling the second-moment of the shifted loss to a probability term locally (e.g., $(1 - \pi_t(a|s))/x_t(s,a)$ in \citet{jin2021best}).

However, standard analyses for shifted losses \citep{jin2021best,ito2025adapting} heavily rely on the properties of bounded losses to establish the pointwise and second-moment control of losses, which fails under heavy-tailed noise. Moreover, simply combining the skipping loss tuning with the loss shifting technique \citep{huang2022adaptive} does not solve the problem. Since each skipping loss $\ell^\Skip_t(s,a)$ is bounded by $\tau_t(s,a) = \gO(t^{1/\alpha}x_t(s,a)^{1/\alpha})$, the shifted skipping loss, which sums up the skipping loss in ``future'' steps, introduces significant difficulties in producing the pointwise uniform bound and second-moment control in local state-action pairs $(s,a)$.

Our novel analysis presents a new technique in deriving a \textit{local} control of the shifted loss $\wt{\ell}^\Shift_t(s,a)$, which is defined as the advantage function of skipping loss $\wt{\ell}^\Skip_t(s,a)$:
$$ \wt{\ell}^\Shift_t(s,a) := {Q}^{\sP, \pi_t}(s,a; \wt{\bm\ell}^\Skip_t) - {V}^{\sP, \pi_t}(s; \wt{\bm\ell}^\Skip_t). $$
To be specific, we decompose the shifted loss into a centered loss term and a differential value function term. Then, by careful calculation of the reaching probability, we can derive the pointwise uniform bound:
\begin{lemma}[Pointwise Uniform Bound of Shifted Loss]\label{lem:shifted-loss-uniform-bound} For any $(s,a) \in \gS \times \gA$, we have
    $|\wt{\ell}^\Shift_t(s,a)| \le \poly(H,S,A)  \sigma t^{\frac{1}{\alpha}} x_t(s,a)^{\frac{1}{\alpha} - 1}$.
\end{lemma}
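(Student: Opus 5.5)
The plan is to bound $Q^{\sP,\pi_t}(s,a;\wt{\bm\ell}^\Skip_t)$ and $V^{\sP,\pi_t}(s;\wt{\bm\ell}^\Skip_t)$ separately in absolute value. Each bound is driven by one inequality between reaching probabilities and occupancy measures. The triangle inequality then gives $|\wt{\ell}^\Shift_t(s,a)|\le |Q^{\sP,\pi_t}(s,a;\wt{\bm\ell}^\Skip_t)|+|V^{\sP,\pi_t}(s;\wt{\bm\ell}^\Skip_t)|$.

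\emph{Step 1: the pointwise bound on the estimator.} Fix $s\in\gS_h$. First, by \Cref{eq:def-wt-ell-skip} and the skipping threshold, every entry satisfies
\[
|\wt{\ell}^\Skip_t(s',a')|\le \frac{\tau_t(s',a')}{x_t(s',a')}\mathbbm{1}_t[(s',a')] = C\sigma t^{1/\alpha}x_t(s',a')^{1/\alpha-1}\mathbbm{1}_t[(s',a')].
\]
Second, the realized trajectory visits exactly one pair $(s^t_{h'},a^t_{h'})$ in each layer. Let $q_t(s',a'\mid s,a)$ be the probability of reaching $(s',a')$ under $\pi_t$ and $\sP$, starting from $(s,a)$. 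Unrolling the Bellman recursion then gives
\[
Q^{\sP,\pi_t}(s,a;\wt{\bm\ell}^\Skip_t)=\wt{\ell}^\Skip_t(s,a)+\sum_{h'=h+1}^{H} q_t(s^t_{h'},a^t_{h'}\mid s,a)\,\wt{\ell}^\Skip_t(s^t_{h'},a^t_{h'}),
\]
so this sum has at most $H$ nonzero terms.

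\emph{Step 2: the key reaching inequality.} Reaching $(s',a')$ through $(s,a)$ is only one way to reach it, so $x_t(s',a')\ge x_t(s,a)\,q_t(s',a'\mid s,a)$. Because $1/\alpha-1<0$, this yields $x_t(s',a')^{1/\alpha-1}\le (x_t(s,a)q_t)^{1/\alpha-1}$. Substituting into each future term gives
\[
q_t\,|\wt{\ell}^\Skip_t(s',a')|\le C\sigma t^{1/\alpha}\,q_t^{1/\alpha}\,x_t(s,a)^{1/\alpha-1}\le C\sigma t^{1/\alpha}x_t(s,a)^{1/\alpha-1}.
\]
The immediate term $\wt{\ell}^\Skip_t(s,a)$ is bounded directly by Step 1. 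Hence $|Q^{\sP,\pi_t}(s,a;\wt{\bm\ell}^\Skip_t)|\le H C\sigma t^{1/\alpha}x_t(s,a)^{1/\alpha-1}$.

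\emph{Step 3: the value term.} Write $V^{\sP,\pi_t}(s;\cdot)=\sum_b \pi_t(b\mid s)Q^{\sP,\pi_t}(s,b;\cdot)$ and use $\pi_t(b\mid s)=x_t(s,b)/x_t(s)$, where $x_t(s)=\sum_{b}x_t(s,b)$ denotes the state occupancy.
\begin{itemize}
\item For the immediate term, $\pi_t(b\mid s)|\wt{\ell}^\Skip_t(s,b)|\le C\sigma t^{1/\alpha}x_t(s,b)^{1/\alpha}/x_t(s)\le C\sigma t^{1/\alpha}x_t(s)^{1/\alpha-1}$.
\item For the future terms, the same reaching argument as in Step 2 gives $x_t(s')\ldots$; precisely, $x_t(s',a')\ge x_t(s)\pi_t(b\mid s)q_t(s',a'\mid s,b)$. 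Each term is then bounded by $C\sigma t^{1/\alpha}(\pi_t q_t)^{1/\alpha}x_t(s)^{1/\alpha-1}\le C\sigma t^{1/\alpha}x_t(s)^{1/\alpha-1}$.
\end{itemize}
Summing over $b\in\gA$ and the at most $H$ layers gives $|V^{\sP,\pi_t}(s;\wt{\bm\ell}^\Skip_t)|\le HAC\sigma t^{1/\alpha}x_t(s)^{1/\alpha-1}$. Finally, $x_t(s)\ge x_t(s,a)$ and the exponent is negative, so $x_t(s)^{1/\alpha-1}\le x_t(s,a)^{1/\alpha-1}$. Combining with Step 2, $|\wt{\ell}^\Shift_t(s,a)|\le (A+1)HC\sigma t^{1/\alpha}x_t(s,a)^{1/\alpha-1}$, and $C$ is an absolute constant absorbed into $\poly(H,S,A)$.

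The main obstacle is Step 2. A naive bound of the future skipped losses by their thresholds would involve $x_t(s',a')^{1/\alpha-1}$ at downstream pairs, which can be much smaller than $x_t(s,a)$ and would make the bound blow up. The fix is to keep the reaching probability $q_t$ attached to each future term. The inequality $x_t(s',a')\ge x_t(s,a)q_t$ then converts $q_t\,x_t(s',a')^{1/\alpha-1}$ into $q_t^{1/\alpha}x_t(s,a)^{1/\alpha-1}$, which is at most $x_t(s,a)^{1/\alpha-1}$. I would verify this flow inequality carefully from the layered structure of the MDP.
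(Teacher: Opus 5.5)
Your proof is correct and is essentially the paper's argument. The paper splits $\wt{\ell}^\Shift_t(s,a)$ into a centered immediate loss plus a differential value term $\Delta V_t(s,a)$; after the triangle inequality on the differential reachability weights, this is just a regrouping of your separate $Q$- and $V$-bounds. Both rest on the same reaching inequality $x_t(s',a')\ge x_t(s,b)\,q_t(s',a'\mid s,b)$ with $x_t(s,b)=x_t(s)\pi_t(b\mid s)$.

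The one real difference is that you use the fact that $\wt{\bm\ell}^\Skip_t$ is nonzero on at most one pair per layer. This gives the sharper constant $(A+1)HC$ in place of the paper's $(1+HSA(1+A^{1-1/\alpha}))C$. Also, $C$ is not an absolute constant: it depends on $H,S,A,\alpha$. It satisfies $C\le 1$ under \Cref{eq:C-bound}, so dropping it from the stated bound is still harmless.
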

Further employing the decomposition trick in \citep{jin2021best}, we present the second-moment control:
\begin{lemma}[Second Moment Bound of Shifted Loss]\label{lem:shifted-loss-second-moment} For any $(s,a) \in \gS \times \gA$, we have
$\E\left[\wt{\ell}^\Shift_t(s,a)^2\mid \gF_{t-1}\right] \le \poly(H,S,A)  (1-\pi_t(a|s))\sigma^2 t^{\frac{2}{\alpha} - 1} x_t(s,a)^{\frac{2}{\alpha} - 2}$.
\end{lemma}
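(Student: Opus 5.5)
We follow the decomposition of \citet{jin2021best} and use \Cref{lem:shifted-loss-uniform-bound} as a building block. Fix $t$ and condition on $\gF_{t-1}$, so that $\vx_t$, $\pi_t$, and the thresholds $\tau_t(s,a)$ are deterministic. For $s\in\gS_h$ we write
\[
\wt{\ell}^\Shift_t(s,a) = Q^{\sP,\pi_t}(s,a;\wt{\bm\ell}^\Skip_t) - \sum_{a'}\pi_t(a'|s)\,Q^{\sP,\pi_t}(s,a';\wt{\bm\ell}^\Skip_t) = \sum_{a'\neq a}\pi_t(a'|s)\bigl(Q_t(s,a)-Q_t(s,a')\bigr),
\]
where $Q_t(s,\cdot)$ abbreviates $Q^{\sP,\pi_t}(s,\cdot;\wt{\bm\ell}^\Skip_t)$. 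By Jensen (or Cauchy--Schwarz) applied to the weights $\pi_t(a'|s)$, $a'\neq a$, whose total mass is $1-\pi_t(a|s)$,
\[
\wt{\ell}^\Shift_t(s,a)^2 \le (1-\pi_t(a|s))\sum_{a'\neq a}\pi_t(a'|s)\,2\bigl(Q_t(s,a)^2+Q_t(s,a')^2\bigr).
\]
This already produces the factor $1-\pi_t(a|s)$. It then suffices to bound $\E[Q_t(s,a)^2\mid\gF_{t-1}]$ by $\poly(H,S,A)\,\sigma^2t^{2/\alpha-1}x_t(s,a)^{2/\alpha-2}$, together with $\sum_{a'\neq a}\pi_t(a'|s)\E[Q_t(s,a')^2]$ by the same quantity.

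\textbf{Second moment of a single $Q$.} Expand $Q_t(s,a)=\sum_{(s',a')}q_t(s',a'\mid s,a)\,\wt\ell^\Skip_t(s',a')$, where $q_t(\cdot\mid s,a)$ is the occupancy from $(s,a)$ onward under $\pi_t$ (so $q_t(s,a\mid s,a)=1$). Since at most $H$ terms are nonzero per trajectory, Cauchy--Schwarz gives $Q_t^2\le H\sum q_t(s',a'\mid s,a)^2\,\wt\ell^\Skip_t(s',a')^2$. For each pair, the skipping truncation yields
\[
\E\bigl[\wt\ell^\Skip_t(s',a')^2\bigr]\le \frac{\E[\ell^2\1[|\ell|\le\tau]]}{x_t(s',a')}\le \frac{\sigma^\alpha\tau_t(s',a')^{2-\alpha}}{x_t(s',a')} = C^{2-\alpha}\sigma^2t^{2/\alpha-1}x_t(s',a')^{2/\alpha-2}.
\]
Now use $x_t(s',a')\ge x_t(s)\,q_t(s',a'\mid s,a)\,\pi_t(a|s)=x_t(s,a)\,q_t(s',a'\mid s,a)$. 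Since $2/\alpha-2\le0$, we get $q^2x_t(s',a')^{2/\alpha-2}\le q^{2/\alpha}x_t(s,a)^{2/\alpha-2}\le x_t(s,a)^{2/\alpha-2}$. Summing over at most $SA$ pairs gives the desired single-$Q$ bound with a $\poly(H,S,A)$ factor.

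\textbf{Main obstacle: the $a'\neq a$ terms.} Here the natural bound involves $x_t(s,a')^{2/\alpha-2}$ rather than $x_t(s,a)^{2/\alpha-2}$. The weighting $\pi_t(a'|s)$ must compensate for this, but the exponent mismatch is exactly where a naive approach fails. Writing $\pi_t(a'|s)=x_t(s,a')/x_t(s)$ gives
\[
\pi_t(a'|s)\,x_t(s,a')^{2/\alpha-2}=x_t(s,a')^{2/\alpha-1}/x_t(s)\le x_t(s)^{2/\alpha-2},
\]
because $2/\alpha-1\ge0$ and $x_t(s,a')\le x_t(s)$. Finally, $x_t(s)^{2/\alpha-2}\le x_t(s,a)^{2/\alpha-2}$ since $x_t(s,a)\le x_t(s)$ and the exponent is nonpositive. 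The same reasoning handles the downstream terms of $Q_t(s,a')$, using $x_t(s'',a'')\ge x_t(s,a')\,q$. Collecting constants (including $C^{2-\alpha}$, treated as fixed) completes the bound.
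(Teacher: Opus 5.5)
Your argument is correct. It shares the paper's core ingredients: the truncated second moment $\E[\wt\ell^{\Skip}_t(s',a')^2\mid\gF_{t-1}]\le C^{2-\alpha}\sigma^2t^{2/\alpha-1}x_t(s',a')^{2/\alpha-2}$, Cauchy--Schwarz across the at most $H$ visited layers, and the domination $x_t(s',a')\ge x_t(s,a)\,q_t(s',a'\mid s,a)$. Where it differs is in how the factor $1-\pi_t(a|s)$ is produced and how the off-action terms are handled.

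The paper splits $\wt\ell^{\Shift}_t(s,a)=(1-\pi_t(a|s))\wt Q_t(s,a)-\sum_{b\neq a}\pi_t(b|s)\wt Q_t(s,b)$ and uses $(1-\pi_t(a|s))^2\le 1-\pi_t(a|s)$ on the first piece. For the second piece, it expands the whole mixture $\sum_{b\neq a}\pi_t(b|s)\wt Q_t(s,b)$ as a single combination of downstream skipped losses with aggregated weights $\sum_{b\neq a}\pi_t(b|s)\,q_t(s',a'\mid s,b)$. It then dominates $x_t(s',a')\ge\sum_{b\neq a}x_t(s,b)\,q_t(s',a'\mid s,b)$, so that $\sum_{b\neq a}\pi_t(b|s)=1-\pi_t(a|s)$ falls out of the computation.

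Your pairwise-difference form plus Jensen gets the factor in one line. It then lets you reuse the single-$Q$ bound for each $a'$, with $\pi_t(a'|s)\,x_t(s,a')^{2/\alpha-2}\le x_t(s)^{2/\alpha-2}\le x_t(s,a)^{2/\alpha-2}$ doing the reweighting. This is more modular, but it has a cost:
\begin{itemize}
\item Bounding each $\E[Q_t(s,a')^2\mid\gF_{t-1}]$ separately and summing over $a'\neq a$ loses a factor of $A$ (at best $A^{2-2/\alpha}$, via concavity of $u\mapsto u^{2/\alpha-1}$). The paper's aggregation avoids this loss.
\item Your count of $SA$ pairs in the single-$Q$ step is cruder than necessary. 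The per-layer bound $\sum_{(s',a')\in\gS_{h'}\times\gA}q_t(s',a'\mid s,a)^{2/\alpha}\le 1$ gives $H$ instead.
\end{itemize}
As a result you end with a constant of order $HSA^2C^{2-\alpha}$ rather than the paper's $4H^2C^{2-\alpha}$. This does not matter for the lemma as stated with $\poly(H,S,A)$. It would, however, propagate into the stability part of the coefficient $M$ and change the explicit $H,S,A$ exponents in the formal theorems.
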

We discuss in detail and prove the above two lemmas in Appendix~\ref{app:loss-shifting} (see \Cref{lem:shifted-loss-uniform-bound-formal,lem:shifted-loss-second-moment-formal}).

\paragraph{Suboptimal Mass Propagation.}
A major approach towards BoBW guarantees is ``excluding the optimal action'' from the summation of regret terms. To be specific, we want to reform the total regret as the summation of some contributions on suboptimal actions, thereby leading to an instance-dependent bound of  $\log(T)$ order.

In MABs, this is straightforward as the optimal policy is a Dirac distribution, which is concentrated on the single optimal action. 
However, it becomes difficult in MDPs when we consider the occupancy measure. Different from the case in MABs, the structure of polytope $\gQ(\sP)$ is more complex due to the state-action structure. Specifically, for each $h \in [H]$, the occupancy measure must satisfy $\sum_{s_h \in \gS_h} \rho(s_h) = 1$ and $\sum_{s_h \in \gS_h, a\in\gA} \sP[s_{h+1}| s_h, a] \rho(s_h,a) = \rho(s_{h+1})$ for any $s_{h+1} \in \gS_{h+1}$.
Therefore, the reference policy's occupancy measure is distributed across the state space, which prevents us from simply excluding a single optimal action as in the MAB case. 

In previous works focusing on achieving BoBW with bounded losses \citep{jin2021best,ito2025adapting}, their analysis relies on the special structure of the $1/2$-Tsallis entropy regularizer in analyzing the penalty term (arising from the time-varying regularizer) and the $(1-\pi_t(a|s))$ term in the second-moment control of the shifted loss. However, for heavy-tailed losses, we consider the $1/\alpha$-Tsallis entropy regularizer and involve additional skipping error terms as the payload of skipping loss tuning, which poses significant challenges on analysis.

We address this via a \textit{suboptimal mass propagation} analysis. We show that the total occupancy mass of the reference policy at any layer is upper-bounded by the cumulative mass of suboptimal actions in previous layers. 
\begin{lemma}[Suboptimal Mass Propagation]\label{lem:suboptimal-mass-prop}
    Let $\pi$ be any policy and $\pi^\dagger$ be a deterministic policy. Let $\rho^{\sP,\pi}(s) = \sum_{a \in \gA} \rho^{\sP,\pi}(s,a)$ and $\rho^{\sP,\pi^\dagger}(s) = \sum_{a \in \gA} \rho^{\sP,\pi^\dagger}(s,a)$ be the state marginals. Then,
    \begin{equation*}
        \sum_{s \in \gS} (\rho^{\sP,\pi}(s) - \rho^{\sP,\pi^\dagger}(s))_+ \le H \!\! \sum_{(s,a): a \neq \pi^\dagger(s)} \rho^{\sP,\pi}(s,a).
    \end{equation*}
\end{lemma}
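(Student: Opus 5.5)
I would prove the lemma layer by layer, using the flow constraints of $\gQ(\sP)$ and an induction over $h$. For each layer $h\in[H]$ define the excess mass
\begin{equation*}
D_h := \sum_{s\in\gS_h}\bigl(\rho^{\sP,\pi}(s)-\rho^{\sP,\pi^\dagger}(s)\bigr)_+,
\end{equation*}
and the suboptimal mass at layer $h$,
\begin{equation*}
M_h:=\sum_{s\in\gS_h}\sum_{a\neq\pi^\dagger(s)}\rho^{\sP,\pi}(s,a).
\end{equation*}
The target is the one-step recursion $D_{h+1}\le D_h+M_h$. Since $D_1=0$ (both marginals equal $1$ at $s_1$), unrolling gives $D_h\le\sum_{h'<h}M_{h'}$. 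Summing over $h\in[H]$ then yields $\sum_h D_h\le \sum_{h'}(H-h')M_{h'}\le H\sum_{(s,a):a\neq\pi^\dagger(s)}\rho^{\sP,\pi}(s,a)$, which is the claim.

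To get the recursion, fix $s'\in\gS_{h+1}$ and write both marginals through the flow equations. For $\pi^\dagger$ we have $\rho^{\sP,\pi^\dagger}(s')=\sum_{s\in\gS_h}\sP[s'|s,\pi^\dagger(s)]\rho^{\sP,\pi^\dagger}(s)$. For $\pi$, split the actions at each $s\in\gS_h$ into $\pi^\dagger(s)$ and the rest:
\begin{equation*}
\rho^{\sP,\pi}(s')=\sum_{s}\sP[s'|s,\pi^\dagger(s)]\rho^{\sP,\pi}(s,\pi^\dagger(s))+\sum_s\sum_{a\neq\pi^\dagger(s)}\sP[s'|s,a]\rho^{\sP,\pi}(s,a).
\end{equation*}
Use $\rho^{\sP,\pi}(s,\pi^\dagger(s))\le \rho^{\sP,\pi}(s)\le \rho^{\sP,\pi^\dagger}(s)+(\rho^{\sP,\pi}(s)-\rho^{\sP,\pi^\dagger}(s))_+$. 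Subtracting the two expressions gives
\begin{equation*}
\rho^{\sP,\pi}(s')-\rho^{\sP,\pi^\dagger}(s')\le \sum_s \sP[s'|s,\pi^\dagger(s)](\rho^{\sP,\pi}(s)-\rho^{\sP,\pi^\dagger}(s))_+ +\sum_s\sum_{a\neq\pi^\dagger(s)}\sP[s'|s,a]\rho^{\sP,\pi}(s,a).
\end{equation*}
The right-hand side is nonnegative, so the same bound holds for the positive part. Summing over $s'\in\gS_{h+1}$ and using $\sum_{s'}\sP[s'|s,a]=1$ gives $D_{h+1}\le D_h+M_h$.

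The only delicate point is the step bounding $\rho^{\sP,\pi}(s,\pi^\dagger(s))$ by the positive part. This is where determinism of $\pi^\dagger$ matters: its whole mass at $s$ sits on the single action $\pi^\dagger(s)$, so the comparison involves only that one transition kernel. Everything else is routine bookkeeping, and the factor $H$ is the crude bound on how many later layers each $M_{h'}$ is counted in.
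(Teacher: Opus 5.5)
Your proposal is correct and follows essentially the same route as the paper: you derive the one-step recursion $D_{h+1}\le D_h+M_h$ from the flow equations by splitting off the action $\pi^\dagger(s)$, then unroll from $D_1=0$ and sum over layers. The only cosmetic difference is that you bound $\rho^{\sP,\pi}(s,\pi^\dagger(s))$ by $\rho^{\sP,\pi^\dagger}(s)$ plus the positive part before subtracting, whereas the paper subtracts first and then applies $(a+b)_+\le a_+ + b_+$; both yield the same recursion.
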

This result allows us to bound the penalty terms and the skipping errors via the summation over suboptimal actions, which enables us to achieve an instance-dependent $O(\log T)$ rate in the self-bounding regime. See \Cref{lem:suboptimal-mass-prop-formal} for a formal proof.

%% file: tex_file/unknown_transition.tex
\section{HTMDPs with Unknown Transitions}\label{sec:unknown}
In this section, we extend our framework to the more challenging setting where the transition kernel $\sP$ is unknown, and establish the first BoBW regret guarantees for HTMDPs with unknown transitions. To handle the additional complications arising from transition estimation under heavy-tailed losses, we introduce the following mild regularity condition on the loss distributions:
\begin{assumption}[Truncative Nonnegativity]\label{ass:truncative-nonnegativity}
    For each time step $t \in [T]$, state-action pair $(s,a) \in \gS \times \gA$ and any positive constant $M > 0$, the loss distribution $\nu_t(s,a)$ has positive truncated expectation at level $M$, i.e.,  $\E_{\ell\sim\nu_t(s,a)}[ \ell \cdot \mathbbm{1}[|\ell| \le M] ] \ge 0$.
\end{assumption}

\Cref{ass:truncative-nonnegativity} requires that the expected value of the loss, when truncated at any level $M > 0$, remains nonnegative. This is a mild condition satisfied by a broad class of distributions, including all nonnegative-valued distributions (e.g., the $[0,1]$-bounded losses widely assumed in standard RL \citep{jin2020learning,jin2021best}) and distributions that may take negative values but have sufficiently positive means. Compared to the strict nonnegativity $\ell \ge 0$ commonly required in RL, \Cref{ass:truncative-nonnegativity} is weaker, as it permits the loss to attain negative values. An analogous condition has been employed in the heavy-tailed MAB literature to establish BoBW guarantees \citep{huang2022adaptive,genalti2024varepsilon}, where it serves a similar role in controlling the sign of the skipping bias. Importantly, this assumption is only required for the unknown transition case: our results for the known transition setting (\Cref{sec:known}) hold without any sign restriction on the losses. The necessity of this additional condition stems from the interaction between the skipping loss estimator and the Upper Occupancy Bound (UOB) in the unknown transition setting. Specifically, the UOB $u_t(s,a)$ serves as an optimistic proxy for the true occupancy measure $\rho^{\sP, \pi_t}(s,a)$, and the resulting discrepancy introduces an additional bias term whose sign must be controlled to maintain the pessimism of our estimator (see the analysis of $\textsc{SkipErr}$ in \Cref{app:proof-unknown}).

Under this assumption, we present algorithm \texttt{HT-FTRL-UOB} (\Cref{alg:ht-ftrl-uob}) equipped with our novel pessimistic estimator to address the heavy-tailed losses in the unknown transition case, and then establish BoBW regret guarantees by developing innovative techniques.

\subsection{Algorithm for the Unknown Transition Case}
\label{sec:unknown-alg}
Building upon the foundation of \Cref{alg:ht-ftrl-om}, we further develop our robust estimation framework for the challenging unknown-transition setting, proposing \texttt{HT-FTRL-UOB} (\Cref{alg:ht-ftrl-uob}). While this algorithm adopts the doubling-epoch transition estimation from prior literature \citep{jin2020learning,jin2021best}, its core innovation lies in the integration of a pessimistic skipping loss estimator with Upper Occupancy Bounds (UOB).

\begin{algorithm}[!t]
    \caption{\texttt{HT-FTRL-UOB}}
    \label{alg:ht-ftrl-uob}
    \begin{algorithmic}[1]
    \REQUIRE Heavy-tail parameters $\alpha$ and $\sigma$, skipping constant $C > 0$, learning rate constant $\beta > 0$, bonus constant $D$, and confidence parameter $\delta$.
    \STATE Initialize epoch index $i=1$, $t_i=1$, counters $m(s_h,a) = 0, m(s_h,a,s_{h+1}) = 0$, snapshot $m_{old}(s_h,a) = 0$, empirical model $\wh{\sP}_1[s_{h+1}|s_h,a] = 1/|\gS|$, and confidence width $B_1(s_h,a,s_{h+1}) = 1$ for all $(s_h,a,s_{h+1})\in\gS_h\times\gA\times\gS_{h+1}$,
    \FOR{$t = 1, \dots, T$}
        \STATE Set epoch index $i(t) \gets i$.
        \STATE Compute occupancy $\vx_t$ on the empirical model $\wh{\sP}_{i}$:
        \begin{equation}
            \vx_t \gets \argmin_{\vx \in \gQ(\wh{\sP}_{i})} \left\langle \vx, \sum_{\tau=t_i}^{t-1} \wh{\bm\ell}_\tau \right\rangle + \Psi_t(\vx),
        \end{equation}
        where $\Psi_t(\vx) = - \frac{1}{\eta_t} \sum_{(s,a)} x(s,a)^{1/\alpha}$ and $\eta_t = \beta/(\sigma (t-t_i+1)^{1/\alpha})$.
        \STATE Derive policy $\pi_t(a| s) \gets x_t(s,a) / \sum_{a' \in \gA} x_t(s,a')$.
        \STATE Roll out episode $t$ using $\pi_t$, and observe trajectory $\{s_h^t, a_h^t\}_{h\in[H]}$ and losses $\ell_t(s_h^t, a_h^t)$. Update counters $m(s_h^t, a_h^t)$ and $m(s_h^t, a_h^t, s_{h+1}^t)$.
        \STATE Compute  $u_t(s,a) = \max_{\widehat{\sP} \in \wh{\gP}_i} \rho^{\widehat{\sP}, \pi_t}(s,a)$ via the \texttt{Comp-UOB} procedure in \citet{jin2020learning}.
        \STATE Construct pessimistic estimator $\wh{\ell}_t(s,a)$ for all $(s,a)$:
        \begin{equation}
        \label{eq:unknown-def-hat-ell}
        \begin{aligned}
            \wh{\ell}_t(s,a) &\gets \frac{{\ell}_t^\Skip(s,a)}{u_t(s,a)}\cdot \mathbbm{1}_t[(s,a)]  - b_t^\Skip(s,a) \\
            &\quad \  - D \cdot B_i(s,a),
        \end{aligned}
        \end{equation}
        with ${\ell}_t^\Skip(s,a)$ defined in \Cref{eq:unknown-def-ell-skip}, $b_t^\Skip(s,a)$ defined in \Cref{eq:unknown-def-b-skip}, $\mathbbm{1}_t[(s,a)]=\mathbbm{1}[(s,a) \in \{s_h^t, a_h^t\}_{h\in[H]}]$, and $B_i(s,a) = \min\{1, \sum_{s'} B_i(s,a,s')\}$ with $B_i(s,a,s')$ defined in \Cref{eq:unknown-def-bonus}.
        \IF{$\exists (s,a), m(s,a) \ge \max\{1, 2 \cdot m_{old}(s,a)\}$}
            \STATE Start new epoch: $i \leftarrow i+1$, $t_i \leftarrow t+1$ and save the snapshot: $m_{old} = m$.
            \STATE For each $(s_h,a,s_{h+1})\in\gS_h\times\gA\times\gS_{h+1}$, update empirical model $\wh{\sP}_i[s_{h+1}|s_h,a] \gets{m(s_h,a,s_{h+1})}/{\max\{1, m(s_h,a)\}}$.
            \STATE Update confidence set $\wh{\gP}_i \gets \{ \widehat{\sP} : |\widehat{\sP}[s_{h+1}|s_h,a] - \wh{\sP}_i[s_{h+1}|s_h,a]|\le B_i(s,a,s'), \forall(s_h,a,s_{h+1})\in\gS_h\times\gA\times\gS_{h+1}\}$.
        \ENDIF
    \ENDFOR
    \end{algorithmic}
\end{algorithm}

In Line 4, \Cref{alg:ht-ftrl-uob} computes the occupancy $\vx_t$ on the empirical model $\wh{\sP}_i$ by FTRL with a pessimistic estimator $\wh{\ell}_t(s,a)$ in \Cref{eq:unknown-def-hat-ell} and a $1/\alpha$-Tsallis entropy regularizer $\Psi_t(\vx)$. Notice that the decrease of learning rate $\eta_t$ is based on the length of the epoch $t - t_i + 1$. Lines 5 and 6 are the policy derivation, trajectory rollout, and counter update, which are the same as in the known transition case. In Line 7, the Upper Occupancy Bound (UOB) $u_t(s,a)$ is computed via the \texttt{Comp-UOB} procedure \citep{jin2020learning} in polynomial time.
We construct the pessimistic estimator $\wh{\ell}_t(s,a)$ in Line 8, by the biased importance sampling of skipped loss $\ell_t^\Skip(s,a)$ via the UOB $u_t(s,a)$, and penalizing the skipping bonus $b_t^\Skip(s,a)$ and exploration bonus $D \cdot B_i(s,a)$. 
Then, in Lines 9-12, we update the empirical model and confidence set for the new epoch using the standard doubling epoch schedule. 

\paragraph{Pessimistic Estimator.} To achieve BoBW regret bounds for heavy-tailed losses under unknown transition, we develop our novel pessimistic estimator $\wh{\ell}_t(s,a)$ in \Cref{eq:unknown-def-hat-ell} with the \emph{epoch-level} skipping loss 
\begin{equation}
    \label{eq:unknown-def-ell-skip}
    \ell^\Skip_t(s,a) := \ell_t(s,a) \cdot \mathbbm{1}[|\ell_t(s,a)| \le \tau_t(s,a)],
\end{equation}
where the skipping threshold $\tau_t(s,a)$ and skipping bonus $b_t^\Skip(s,a)$ are defined as
\begin{equation}
    \label{eq:unknown-def-tau}
    \tau_t(s,a)\!:=\! C \cdot \sigma (t\!-\!t_{i(t)}+1)^{1/\alpha} x_t(s,a)^{1/\alpha}.
\end{equation}
\begin{equation}
    \label{eq:unknown-def-b-skip}
    b_t^\Skip(s,a)\! :=\! C^{1-\alpha} \cdot \sigma (t\!-\!t_{i(t)}\!+\!1)^{1/\alpha-1} x_t(s,a)^{1/\alpha-1}.
\end{equation}
Then we penalize estimators by the confidence width $B_i(s,a) = \min\{1, \sum_{s'} B_i(s,a,s')\}$ scaling with a constant $D$ to account for the transition uncertainty, where the Bernstein-type confidence width $B_i(s,a,s')$ is defined as
\begin{equation}
    \label{eq:unknown-def-bonus}
    B_i(s,a,s')\! =\! \min\left\{\! 2\sqrt{\frac{\wh{\sP}_i[s'|s,a]\log(\iota)}{m(s,a)}} + \frac{14\log(\iota)}{3m(s,a)}, 1\! \right\},
\end{equation}
where $\iota = {HSAT}/{\delta}$. Our novel loss estimator with epoch-level skipping and adjusted skipping bonus allows us to avoid extreme outlier losses that can cause instability in heavy-tailed MDPs and compensate for the skipping bias in the unknown transition case.

\subsection{Main Result}
\label{sec:unknown-result}

The following theorem demonstrates that \texttt{HT-FTRL-UOB} achieves the BoBW regret bounds, which is the first BoBW algorithm for heavy-tailed MDPs with unknown transitions to the best of our knowledge.
\begin{theorem}[BoBW Guarantee for the Unknown Transition Case]\label{thm:bobw-unknown}
    By setting constant $\delta = 1/T^3$, $C$ in \Cref{eq:C-bound}, $\beta$ in \Cref{eq:def-beta}, and $D = H\sigma$, \texttt{HT-FTRL-UOB} (\Cref{alg:ht-ftrl-uob}) achieves BoBW regret bounds in the adversarial and self-bounding regimes simultaneously under \Cref{ass:truncative-nonnegativity}:

    \textbf{1. Adversarial Regime: } For any adversarial loss distributions and deterministic benchmark policy $\ringpi$, 
    \begin{align*}
        &\Reg_T(\ringpi) = \wt{\gO}\bigg(\poly(H,S,A)  \sigma  \left(T^{1/\alpha} + \sqrt{T}\right) \bigg) 
    \end{align*}

    \textbf{2. Self-Bounding Regime: } If the environment satisfies the $(\ringpi, \Delta, C_{\mathrm{sb}})$ self-bounding constraint (\Cref{def:self-bounding}) and $\Delta_{\min} = \min_{(s,a):a\neq\ringpi(s)} \Delta(s,a)$ is the minimal suboptimal gap, then we have
    \begin{align*}
        &\Reg_T(\ringpi) = \gO\bigg(\poly(H,S,A)  \cdot \Big(U + W_2 + W_{\min} + W_\alpha  \\
        &\qquad + {C_{\mathrm{sb}}}^{\frac{1}{2}} {W_2}^{\frac{1}{2}} + {C_{\mathrm{sb}}}^{\frac{1}{2}}{W_{\min}}^{\frac{1}{2}} + C_{\mathrm{sb}}^{\frac{1}{\alpha}} W_\alpha^{1 - \frac{1}{\alpha}} \Big) \bigg) 
    \end{align*}
    where we denote $W_2 := \sigma^2\omega_\Delta(2)\log(T)$, $W_{\min} := \sigma^2\Delta_{\min}^{-1}\log(T)$, $W_\alpha := \sigma^{\frac{\alpha}{\alpha-1}}\omega_\Delta(\alpha)\log(T)^2$, $\omega_\Delta(x) := \sum_{(s,a):a\neq\ringpi(s)}  \Delta(s,a)^{-\frac{1}{x - 1}}$, and $U = \sigma\log(T)^2 $.
\end{theorem}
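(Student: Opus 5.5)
The plan is to follow the epoch-wise decomposition of \citet{jin2021best}, adapted to heavy tails. Since transitions are known only up to the confidence sets $\wh{\gP}_i$, I would first condition on the good event $\gE$ that $\sP \in \wh{\gP}_i$ for every epoch $i$. By Bernstein concentration with $\delta = 1/T^3$, the complement contributes $\gO(\poly(H)\sigma)$. On $\gE$ we have $u_t(s,a) \ge \rho^{\sP,\pi_t}(s,a)$. The doubling schedule gives at most $\gO(SA\log T)$ epochs, and FTRL restarts at each one, so the regret against $\ringpi$ is analyzed epoch by epoch and summed. I would write the per-episode regret $\langle \rho^{\sP,\pi_t} - \rho^{\sP,\ringpi}, \bm\ell_t\rangle$ as a sum of four terms:
\begin{itemize}
\item $\textsc{Err}_1 = \langle \rho^{\sP,\pi_t} - \vx_t, \bar{\bm\ell}_t\rangle$, the transition error for the learner;
\item $\textsc{Err}_2 = \langle \vx^{\ringpi}_{i} - \rho^{\sP,\ringpi}, \bar{\bm\ell}_t\rangle$, the same error for the benchmark, where $\vx^{\ringpi}_i$ is the occupancy of $\ringpi$ under $\wh{\sP}_i$;
\item an FTRL term $\langle \vx_t - \vx^{\ringpi}_i, \wh{\bm\ell}_t\rangle$ on $\gQ(\wh{\sP}_i)$;
\item $\textsc{SkipErr} = \langle \vx_t - \vx_i^{\ringpi}, \bar{\bm\ell}_t - \E[\wh{\bm\ell}_t\mid\gF_{t-1}]\rangle$.
\end{itemize}
Here $\bar{\bm\ell}_t$ denotes the mean loss.

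\textbf{Step 1 (FTRL term).} Within an epoch, the problem is exactly the known-transition setting over $\gQ(\wh{\sP}_i)$. I would reuse the stability and penalty analysis behind \Cref{thm:bobw-known}, including the shifted-loss bounds \Cref{lem:shifted-loss-uniform-bound,lem:shifted-loss-second-moment} and \Cref{lem:suboptimal-mass-prop}, both applied under $\wh{\sP}_i$. The one new difficulty is that the importance weight is $u_t \ge x_t$ rather than $x_t$. This only shrinks second moments by a factor $\rho^{\sP,\pi_t}/u_t \le 1$, so the stability bounds carry over. The $-D B_i$ shift is deterministic and the $-b^{\Skip}_t$ shift is small, so both are handled as in the known case. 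This step gives $\gO(\sigma T^{1/\alpha})$ per epoch in the adversarial regime, and in the stochastic regime sums of the form $\sum_{a\ne\ringpi(s)} \sigma t^{1/\alpha-1}x_t(s,a)^{1/\alpha}$ up to a $\log$ factor. The extra $\log T$ comes from the number of epochs, which is why $W_\alpha$ carries $\log(T)^2$.

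\textbf{Step 2 (SkipErr, using pessimism).} The conditional mean of the skipped importance-weighted term is
\[
\frac{\rho^{\sP,\pi_t}}{u_t}\,\E\big[\ell\,\mathbbm{1}[|\ell|\le\tau_t]\big].
\]
By \Cref{ass:truncative-nonnegativity} the truncated mean is nonnegative, and $\rho^{\sP,\pi_t}/u_t \le 1$, so this is at most $\E[\ell\,\mathbbm{1}[|\ell|\le\tau_t]] \le \bar\ell_t + \sigma^\alpha\tau_t^{1-\alpha}$. Subtracting $b^{\Skip}_t$ cancels the last term, which makes $\wh{\bm\ell}_t$ optimistic in expectation. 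The $\vx_t$ side of $\textsc{SkipErr}$ is therefore bounded by the $u_t-\rho^{\sP,\pi_t}$ gap plus the skipping bias; the latter is controlled by $b^{\Skip}_t$ and reduced to suboptimal actions via \Cref{lem:suboptimal-mass-prop}. The benchmark side carries the sign $-\vx_i^{\ringpi}$, and pessimism makes it favorable.

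\textbf{Step 3 (transition errors, the main obstacle).} For $\textsc{Err}_1$ the losses are unbounded, so the standard occupancy-difference lemma of \citet{jin2021best} has to be applied to the mean losses, which satisfy $|\bar\ell_t| \le \sigma$ by the $\alpha$-moment condition. The $u_t - \rho^{\sP,\pi_t}$ residual from Step 2 is bounded the same way. Both are dominated by terms of the form $\sigma\sum_t\sum_{s,a}\rho^{\sP,\pi_t}(s,a)\big(\sqrt{S/m}+1/m\big)$. $\textsc{Err}_2$ is absorbed by the pessimistic $-D B_i$ term with $D=H\sigma$, using a simulation lemma: $V$ under $\wh{\sP}$ differs from $V$ under $\sP$ by at most $H\sigma\sum \rho\, B_i$.

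For the adversarial regime, the counting bound $\sum_t\sum_{s,a}\rho/\sqrt{m} = \wt{\gO}(\sqrt{SAHT})$ gives the $\sigma\sqrt T$ term. For the self-bounding regime, I would follow \citet{jin2021best} and split the $\sqrt{\rho/m}$ terms into visits of $\ringpi$ and suboptimal mass. For the $\ringpi$-visited part I would use \Cref{lem:suboptimal-mass-prop} to convert $(\rho^{\sP,\pi_t}-\rho^{\sP,\ringpi})_+$ into suboptimal mass, and then apply AM-GM against $\Delta$. This yields $W_2$ and $W_{\min}$, the latter from the optimal-action part bounded through $\Delta_{\min}$.

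\textbf{Step 4 (combining).} Collect all bounds into the form $\E[\sum_t\sum_{a\ne\ringpi(s)}(\cdots)]$. In the adversarial case, sum them directly. In the self-bounding case, write $\Reg = (1+\lambda)\Reg - \lambda\Reg$, lower-bound $\lambda\Reg$ by \Cref{def:self-bounding}, and optimize $\lambda$. Each square-root-type term then produces $C_{\mathrm{sb}}^{1/2}W^{1/2}$, and each $1/\alpha$-type term produces $C_{\mathrm{sb}}^{1/\alpha}W_\alpha^{1-1/\alpha}$.

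I expect the delicate part to be Step 3: keeping the heavy-tail skipping bias separate from the transition error, so that the UOB gap never multiplies the $t^{1/\alpha}$-scale skipped losses and only ever multiplies mean losses of size $\sigma$.
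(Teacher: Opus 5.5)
Your adversarial-regime argument is sound and is essentially the paper's. Your four terms regroup its decomposition: $\textsc{Err}_2-\langle \vx_i^{\ringpi}, D B_i\rangle$ is exactly \textsc{PessErr}, and $\textsc{Err}_1+\langle \vx_t, D B_i\rangle$ is \textsc{TransErr}. Steps~1--2 mirror \textsc{EstReg} and the paper's $\texttt{I}_t/\texttt{J}_t$ treatment of \textsc{SkipErr}.

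The gap is in the self-bounding part of Step~3. You spend $\textsc{Err}_2$ against the benchmark's bonus $-\langle \vx_i^{\ringpi}, D B_i\rangle$. The learner-side transition terms you keep ($\textsc{Err}_1$ and $\langle \vx_t, D B_i\rangle$) then carry weight $\rho^{\sP,\pi_t}(s,\ringpi(s))$ on $\ringpi$'s own pairs. \Cref{lem:suboptimal-mass-prop} only controls the excess $(\rho^{\sP,\pi_t}(s)-\rho^{\sP,\ringpi}(s))_+$. The remainder, e.g.\ $\sum_t\sum_s x_t(s,\ringpi(s))\,D\,B_{i(t)}(s,\ringpi(s))$, sums to order $\sigma\sqrt{T}$ (up to $\poly(H,S,A)$ and logarithmic factors) even if $\pi_t=\ringpi$. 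In that case the true regret is zero precisely because $\textsc{Err}_1=-\textsc{Err}_2$.

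The missing idea is to keep the learner's and the benchmark's transition errors together. The paper bounds $\textsc{TransErr}+\textsc{PessErr}$ jointly via the decomposition of \citet[Corollary~C.1]{jin2021best}, built on the path-dependent occupancy $q_t^\star(s,a)=\pi_t(a\mid s)\rho^{\sP,\ringpi}(s)$. This produces four terms:
\begin{itemize}
\item \textsc{ErrSub}, weighted by $q_t$ on suboptimal pairs and controlled by $\sG_1$, which gives $W_2$;
\item \textsc{ErrOpt}, weighted by $|q_t(s,\ringpi(s))-q_t^\star(s,\ringpi(s))|$ and controlled by $\sG_2$, which gives $W_{\min}$;
\item \textsc{OccDiff}, controlled by $\sG_3$;
\item \textsc{Bias}, the only place pessimism is used, via \Cref{lem:pessimistic-value-function}.
\end{itemize}

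A second step is also missing. Your Step~1--2 bounds are stated in terms of the empirical occupancy $x_t=\rho^{\wh\sP_{i(t)},\pi_t}$. H\"older and Young therefore leave you with $\gamma\sum_t\sum_{a\neq\ringpi(s)}\Delta(s,a)\,x_t(s,a)$. But \Cref{def:self-bounding} controls this sum only with the true occupancy $\rho^{\sP,\pi_t}$, and your Step~4 silently identifies the two. The paper bridges them in \Cref{lem:unknown-empirical-self-bounding}. After clipping $\Delta$ at $\gO(H\sigma)$, it bounds $\sum_t\sum_{a\neq\ringpi(s)}\Delta(s,a)\,|x_t(s,a)-\rho^{\sP,\pi_t}(s,a)|$ by $\gO(H\sigma)\,\sG_3(\log\iota)$ plus lower-order terms, then self-bounds this via \Cref{lem:bound-sG-3}.

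Finally, the weight objection above applies equally to your $u_t-\rho^{\sP,\pi_t}$ residual. Once the benchmark side is dropped, $\sum_t\sum_{(s,a)}|u_t(s,a)-\rho^{\sP,\pi_t}(s,a)|$ is itself of order $\sqrt{T}$ when $\pi_t\approx\ringpi$, so mass propagation cannot make it logarithmic. You will not find a fix to import here. The paper's \Cref{eq:unknown-self-term1} bounds that same all-pairs sum by $\sG_3$, whose targets are only suboptimal pairs. So there too the benchmark side cannot simply be discarded at $\ringpi$'s pairs.
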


We refer to the formal proof in Appendix~\ref{app:proof-unknown}. This theorem shows that our proposed \texttt{HT-FTRL-UOB} achieves the BoBW regret bounds in the adversarial regime with instance-independent bound $\wt{\gO}(\sigma(T^{1/\alpha} + \sqrt{T}))$ and self-bounding regime with instance-dependent bound $\gO\left(\poly(\log(T))\right)$. Specifically, in the stochastic regime, the instance-dependent bound is exactly $\gO(\poly(H,S,A)\cdot(\sigma^{\alpha/(\alpha-1)}\omega_\Delta(\alpha)\log(T)^2))$. The additional $\log(T)$ factor compared to the known transition case (\Cref{thm:bobw-known}) arises from the doubling-epoch schedule for transition estimation, which incurs $O(SA\log T)$ epochs and corresponding union bounds.

\begin{remark}[Tightness under Unknown Transitions]\label{rem:lb-unknown}
For HTMDPs with unknown transitions, \citet{zhuang2021noregret} established the instance-independent lower bound $\Omega(\sqrt{T}+T^{1/\alpha})$, where the $\sqrt{T}$ term arises from transition estimation and the $T^{1/\alpha}$ term from heavy-tailed losses. The adversarial regret bound of \texttt{HT-FTRL-UOB} in \Cref{thm:bobw-unknown} matches this lower bound in $\sigma$ and $T$ up to $\poly(H,S,A)$ and log factors. To the best of our knowledge, \Cref{thm:bobw-unknown} also provides the first $\poly(\log T)$ instance-dependent regret bound for heavy-tailed MDPs with unknown transitions. In the sub-Gaussian case ($\alpha=2$), our bounds recover the results of \citet{jin2021best,ito2025adapting} in $T$ and the suboptimal gaps $\Delta$ up to $\poly(H,S,A)$ factors.
\end{remark}

\textbf{Technical Innovations. } To analyze the regret bound for \Cref{alg:ht-ftrl-uob}, we derive an innovative regret decomposition scheme for HTMDPs with unknown transitions. Our novel pessimistic estimator allows us to decompose the regret into four terms: transition error, estimation error, skipping error, and pessimism error. This decomposition is different from previous works with similar algorithm framework \citep{jin2020learning,jin2021best,ito2025adapting}, since the heavy-tailed losses and skipping tuning technique considered in heavy-tailed settings introduce extra skipping error terms. Controlling this term requires careful design of the transition kernel and UOB deviation term. To describe our innovative regret decomposition scheme for HTMDPs with unknown transitions, we first introduce the auxiliary loss function $\bm\ell_t^\Pess$:
\begin{equation}\label{eq:unknown-def-ell-pess}
    \ell_t^\Pess(s,a) := \ell_t(s,a) - D\cdot B_i(s,a).
\end{equation}
This term is crucial for regret decomposition, since we need to carefully arrange the decomposition order of the extra operations we did in Line 8 of \Cref{alg:ht-ftrl-uob} to construct the pessimistic estimator $\wh{\ell}_t(s,a)$. Specifically, we have the following regret decomposition:
$\Reg_T(\ringpi) = \E\left[ \sum_{t=1}^T V^{\sP, \pi_t}(s_1; \bm\ell_t) - V^{\sP, \ringpi}(s_1; \bm\ell_t) \right] = \textsc{TransErr} + \textsc{EstReg} + \textsc{SkipErr} + \textsc{PessErr}, $
where we denote these four terms as:
\begin{align*}
    \textsc{TransErr}\! &=\! \E\Bigg[ \sum_{t=1}^T {V}^{\sP, \pi_t}(s_1; \bm{\ell}_t) \!-\! V^{\wh\sP_{i(t)}, \pi_t}(s_1; \bm{\ell}^\Pess_t) \Bigg], \\
    \textsc{EstReg}\! &=\! \E\Bigg[ \sum_{t=1}^T V^{\wh\sP_{i(t)}, \pi_t}(s_1; \wh{\bm\ell}_t) \!-\! V^{\wh\sP_{i(t)}, \ringpi}(s_1; \wh{\bm\ell}_t) \Bigg], \\
    \textsc{SkipErr}\! &=\! \E\Bigg[\sum_{t=1}^T V^{\wh\sP_{i(t)}, \pi_t}(s_1; \bm{\ell}^\Pess_t \!-\! \wh{\bm\ell}_t)\\
    &\qquad \qquad \qquad \quad  - V^{\wh\sP_{i(t)}, \ringpi}(s_1; \bm{\ell}^\Pess_t- \wh{\bm\ell}_t)\Bigg], \\
    \textsc{PessErr}\! &=\! \E\Bigg[ \sum_{t=1}^T V^{\wh\sP_{i(t)}, \ringpi}(s_1; \bm{\ell}^\Pess_t) \!-\! V^{\sP, \ringpi}(s_1; {\bm\ell}_t) \Bigg].
\end{align*}

This decomposition order is crucial, since we can handle \textsc{EstReg} by the similar methods we conducted for the known transition case, and handle \textsc{PessErr} by the Bernstein-type concentration argument (given in \citet{jin2020learning}). Therefore, we can focus on bounding the \textsc{TransErr} and \textsc{SkipErr} terms. The $\textsc{TransErr}$ term can be solved by applying similar techniques introduced by \citet{jin2021best}. We highlight the procedure of bounding the $\textsc{TransErr}$ term here. 

We first note that the FTRL result $\vx_t$ is the occupancy measure $\rho^{\wh{\sP}_{i(t)}, \pi_t}$, which does not match the true occupancy measure $\rho^{\sP, \pi_t}$ in the unknown transition case. Therefore, we can only handle the skipping error term $\textsc{SkipErr}$ by incorporating the skipping bonus to compensate for the skipping bias under the empirical model $\wh{\sP}_{i(t)}$ given the definition of $b_t^\Skip(s,a)$ in \Cref{eq:unknown-def-b-skip}, which provides the high-level intuition of constructing this regret decomposition. Note that in Line 8 of \Cref{alg:ht-ftrl-uob}, the skipping loss $\ell_t^\Skip(s,a)$ is weighted by the optimistic importance sampling factor $u_t(s,a)$, which is not the exact occupancy $x_t(s,a)$ in the known transition case. Thus, we should involve the difference on UOB term when bounding the deviation between the "true" loss $\bm\ell_t$ and estimated loss $\wh{\bm\ell}_t$.
\begin{lemma}
    \label{lem:unknown-skiperr}
    With high probability, we have 
    \begin{align*}
        &\textup{\textsc{SkipErr}} \le \E\left[\gO\left(\sum_{t=1}^T \sigma t^{1/\alpha - 1} \!\!\!\!\!\!\sum_{(s,a):a\neq\ringpi(s)}x_t(s,a)^{1/\alpha}\right)\right] \\
        &\qquad \quad \ \  + \E\left[\gO\left(\sum_{t=1}^T \sum_{(s,a)}\sigma|u_t(s,a) - \rho^{\sP, \pi_t}(s,a)|\right)\right].
    \end{align*}
\end{lemma}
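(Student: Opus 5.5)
Since $\bm\ell^\Pess_t-\wh{\bm\ell}_t=\bm\ell_t-\frac{\ell^\Skip_t}{u_t}\mathbbm{1}_t+\bm b^\Skip_t$ (the $D\cdot B_i$ terms cancel), I will write \textsc{SkipErr} in occupancy form under $\wh\sP_{i(t)}$. The occupancy of $\pi_t$ there is $\vx_t$; that of $\ringpi$ is $\vy_t:=\rho^{\wh\sP_{i(t)},\ringpi}$. This gives
\begin{equation*}
\textsc{SkipErr}=\E\Big[\sum_{t}\sum_{(s,a)}(x_t(s,a)-y_t(s,a))\,\E_t\big[\ell_t(s,a)-\tfrac{\ell^\Skip_t(s,a)}{u_t(s,a)}\mathbbm{1}_t[(s,a)]+b^\Skip_t(s,a)\big]\Big].
\end{equation*}
The conditional expectation of the importance-weighted term is $\frac{\rho^{\sP,\pi_t}(s,a)}{u_t(s,a)}\E[\ell^\Skip_t(s,a)]$. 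I split each inner bracket as
\begin{equation*}
\big(\E[\ell_t-\ell_t^\Skip]\big)+\Big(1-\tfrac{\rho^{\sP,\pi_t}}{u_t}\Big)\E[\ell^\Skip_t]+b^\Skip_t .
\end{equation*}
The first piece is the truncation bias. Using $|\ell|\mathbbm{1}[|\ell|>\tau]\le|\ell|^\alpha\tau^{1-\alpha}$, it is bounded by $\sigma^\alpha\tau_t^{1-\alpha}=C^{1-\alpha}\sigma(t-t_i+1)^{1/\alpha-1}x_t^{1/\alpha-1}=b^\Skip_t$. So the bias plus bonus lies in $[0,2b^\Skip_t]$.

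Next I handle the signs. On the good event that $\sP\in\wh\gP_i$ (which holds w.h.p.), we have $u_t\ge\rho^{\sP,\pi_t}$. So $1-\rho^{\sP,\pi_t}/u_t\ge0$, and by \Cref{ass:truncative-nonnegativity}, $\E[\ell^\Skip_t]\ge0$. Hence every part of the bracket is nonnegative. This lets me drop the $-y_t$ contributions wherever the bracket is nonnegative, and the assumption is used exactly here. For the $x_t$ side, the middle term contributes $x_t(1-\rho^{\sP,\pi_t}/u_t)\E[\ell^\Skip_t]$. I bound $\E[\ell^\Skip_t]\le\E|\ell_t|\le\sigma$ and $x_t/u_t\le1$. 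The latter holds because $\wh\sP_i\in\wh\gP_i$, so $u_t\ge x_t$. The middle term is therefore at most $\sigma|u_t-\rho^{\sP,\pi_t}|$, which is the second term of the lemma.

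The remaining bonus/bias part is $\sum_{(s,a)}(x_t-y_t)\cdot(\text{bias}+b^\Skip_t)$, and I must reduce it to suboptimal pairs. For $a\ne\ringpi(s)$, $y_t(s,a)=0$, and $x_t\cdot 2b_t^\Skip=\gO(\sigma(t-t_i+1)^{1/\alpha-1}x_t^{1/\alpha})$. Since $t-t_i+1\le t$ would go the wrong direction, I keep the epoch-local index. The stated $t^{1/\alpha-1}$ is then meant per epoch, which is consistent with the epoch-restart convention. For $a=\ringpi(s)$, I use $x_t(s,\ringpi(s))-y_t(s,\ringpi(s))\le (x_t(s)-y_t(s))_+$ and \Cref{lem:suboptimal-mass-prop} applied under $\wh\sP_{i(t)}$, which holds for any kernel. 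This is the same route as the known-transition penalty and skipping-error analysis.

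The hardest step is that $b^\Skip_t(s,\ringpi(s))$ scales like $x_t^{1/\alpha-1}$, not like a constant. I would first convert to $(x_t(s)-y_t(s))_+\,x_t(s,\ringpi(s))^{1/\alpha-1}$ and use $x_t(s,\ringpi(s))\ge x_t(s)-\sum_{a\ne\ringpi(s)}x_t(s,a)$. I would then split into two cases. If $x_t(s,\ringpi(s))\ge x_t(s)/2$, the product is $\lesssim (x_t(s)-y_t(s))_+^{1/\alpha}$, and suboptimal mass propagation plus Jensen/Hölder give $\poly(H,S,A)\sum_{a\ne\ringpi}x_t^{1/\alpha}$. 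Otherwise the suboptimal mass at $s$ dominates $x_t(s)$, and the term is $\lesssim x_t(s)^{1/\alpha}\lesssim\sum_{a\ne\ringpi(s)}x_t(s,a)^{1/\alpha}$.

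Finally, I sum the pieces and take expectations. The failure of the good event (probability $\delta=T^{-3}$) contributes only lower-order terms.
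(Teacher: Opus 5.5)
\textbf{Same route as the paper.} Your argument follows the paper's proof. You cancel the $D\cdot B_i$ terms and split the bracket into truncation bias plus bonus (the paper's $\texttt{I}_t$) and the UOB ratio term $(1-\rho^{\sP,\pi_t}/u_t)\E[\ell^\Skip_t]$ (the paper's $\texttt{J}_t$). You drop the $-y_t$ weight in the ratio term using $u_t\ge\rho^{\sP,\pi_t}$ on $\gE_{i(t)}$ together with \Cref{ass:truncative-nonnegativity}, then finish it with $x_t/u_t\le 1$ and $\E[\ell^\Skip_t]\le\sigma$. You reduce the $\ringpi(s)$ bonus term to suboptimal pairs via \Cref{lem:suboptimal-mass-prop} under $\wh\sP_{i(t)}$. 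You are also right that only the epoch-local factor $(t-t_{i(t)}+1)^{1/\alpha-1}$ is obtained, which is what the formal version states.

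\textbf{Case 2 fails as written.} You first replace $(x_t(s,\ringpi(s))-y_t(s))_+$ by $(x_t(s)-y_t(s))_+$. After that, the quantity $(x_t(s)-y_t(s))_+\,x_t(s,\ringpi(s))^{1/\alpha-1}$ is not $\lesssim x_t(s)^{1/\alpha}$. Take $y_t(s)=0$, $x_t(s)=1$, $x_t(s,\ringpi(s))=\epsilon\to 0$: it diverges like $\epsilon^{1/\alpha-1}$. The conversion discards exactly the fact you need, namely $(x_t(s,\ringpi(s))-y_t(s))_+\le x_t(s,\ringpi(s))$.

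\textbf{How to repair it.} In case 2 you must use the unconverted term, which is at most $x_t(s,\ringpi(s))^{1/\alpha}\le x_t(s)^{1/\alpha}\le 2^{1/\alpha}\sum_{a\neq\ringpi(s)}x_t(s,a)^{1/\alpha}$. The paper avoids the case split entirely by converting only a $1/\alpha$ fraction. With $x=x_t(s,\ringpi(s))$ and $y=y_t(s)$,
\[
(x-y)_+\,x^{1/\alpha-1}=(x-y)_+^{1/\alpha}\Bigl(\tfrac{(x-y)_+}{x}\Bigr)^{1-1/\alpha}\le \bigl(x_t(s)-y_t(s)\bigr)_+^{1/\alpha}.
\]
Hölder and suboptimal mass propagation then finish the argument exactly as in your case 1.

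\textbf{The failure event is asserted, not shown.} Off $\gE_{i(t)}$ you lose $u_t\ge\rho^{\sP,\pi_t}$, and with it the sign argument. The $y_t$-weighted importance term $y_t\,\rho^{\sP,\pi_t}/u_t$ is not controlled by $u_t\ge x_t$, since $x_t$ can be tiny while $y_t$ is not. You need an a priori lower bound on $u_t$ before the small failure probability helps. The paper invokes $u_t(s)\ge 1/(St)$, which gives a $\poly(T)\cdot\delta$ contribution. If you read ``with high probability'' as restricting to the good event, this point is moot.
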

Here the first term is the desired form that excludes the reference action term $\ringpi(s)$, which can further derive instance-dependent and instance-independent regret bounds. The second term is the UOB deviation term, which can be bounded by the residual UOB argument in \citet{jin2021best}. We refer to the formal version of this lemma in \Cref{lem:unknown-skiperr-formal}.

%% file: tex_file/conclusion.tex
\section{Conclusion}
\label{sec:conclusion}
In this paper, we present the first Best-of-Both-Worlds (BoBW) framework for episodic Markov decision processes with heavy-tailed losses (HTMDPs). For settings with known transitions, we propose \texttt{HT-FTRL-OM}, an algorithm leveraging the FTRL framework with $1/\alpha$-Tsallis entropy regularization and a novel skipping loss estimator. By introducing innovative techniques such as local control for heavy-tailed shifted losses and suboptimal mass propagation, we establish that \texttt{HT-FTRL-OM} achieves an instance-independent regret $\widetilde{\mathcal{O}}(\sigma T^{1/\alpha})$, minimax-optimal in $\sigma,T$ up to $\poly(H,S,A)$ and log factors, in adversarial regimes and logarithmic instance-dependent regret $\mathcal{O}(\log T)$ in self-bounding environments.

We further extend our results to the unknown transition setting with \texttt{HT-FTRL-UOB}, which incorporates a pessimistic estimator and upper occupancy bounds under a mild truncative nonnegativity condition (\Cref{ass:truncative-nonnegativity}). Through a novel regret decomposition that isolates transition uncertainty from heavy-tailed estimation errors, we prove that this algorithm attains a regret bound of $\widetilde{\mathcal{O}}(T^{1/\alpha} + \sqrt{T})$ in adversarial environments and $\mathcal{O}(\log^2 T)$ in stochastic ones. Our work bridges the gap between heavy-tailed RL and adaptive algorithm design, and is particularly relevant for RL applications with intrinsically heavy-tailed feedback, such as network traffic routing, financial decision-making, and image signal processing. Several limitations of our work suggest important future directions: tightening the bounds with respect to $S,A$ and $H$ factors, relaxing or removing the truncative nonnegativity condition in the unknown transition case (as achieved by \citet{zhao2025heavytailed} in the bandit setting), and extending these guarantees to the function approximation setting.

%% file: tex_file/appendix_related_works.tex
\section{Related Works}
\label{sec:appendix_related_works}

\subsection{Heavy-Tailed Feedback in Online Decision}

\subsubsection{Heavy-Tailed Multi-Armed Bandits}
The study of heavy-tailed feedback in online learning dates back to stochastic bandits, where the payoff distribution may only admit a bounded $\alpha$-moment ($\alpha \in (1,2]$) and classical sub-Gaussian concentration no longer applies. 
\citet{bubeck2013banditsa} pioneered the use of truncated mean estimators in UCB-style algorithms for heavy-tailed stochastic MABs. 
\citet{medina2016noregret} extended robust estimation ideas to heavy-tailed \emph{linear} bandits and established sublinear regret without requiring boundedness. 
For stochastic linear bandits, \citet{shao2018almost} and \citet{xue2021nearly} provided (near-)optimal regret rates under heavy-tailed payoffs via carefully designed median-of-means/dynamic truncation estimators. 
More recently, \citet{wang2025heavytailed} developed a Huber-regression based approach with one-pass updates for heavy-tailed linear bandits, further illustrating the broad utility of robust estimation in heavy-tailed online decision-making.
\citet{lu2019optimal} studied Lipschitz bandits with heavy-tailed rewards and derived optimal regret rates under moment conditions. 
Beyond bandits, \citet{raychowdhury2019bayesian} investigated Bayesian optimization under heavy-tailed payoffs by combining truncation with GP-UCB style exploration.
Overall, these works highlight that robust mean estimation is indispensable under heavy tails and typically yields worst-case regret scaling as a polynomial in $T$ (e.g., $T^{1/\alpha}$), in sharp contrast to the $\wt{\gO}(\sqrt{T})$ rates under light tails.
At a technical level, our skipping loss estimator can be viewed as an occupancy-adaptive form of truncation, but it is embedded into FTRL over occupancy measures and is tailored to control stability/second-order terms that arise in BoBW analyses for MDPs.

\subsubsection{Heavy-Tailed Markov Decision Processes}
For episodic RL with heavy-tailed rewards, \citet{zhuang2021noregret} established no-regret guarantees for tabular MDPs with heavy-tailed feedback and proposed robust variants of optimistic algorithms, achieving near-optimal \emph{instance-independent} rates under moment assumptions (and matching lower bounds up to logarithmic factors in several regimes). 
\citet{park2024heavytailed} also studied heavy-tailed RL from the perspective of robust estimation and proposed a penalized robust estimator framework.
Moving beyond tabular models, \citet{huang2023tackling} studied heavy-tailed RL with (linear) function approximation and derived minimax-optimal regret bounds together with instance-dependent guarantees; \citet{li2024variance} further developed variance-aware algorithms for linear MDPs under heavy-tailed rewards.
These \emph{instance-dependent} guarantees are typically expressed in terms of problem-dependent quantities such as conditional variances or instance measures, and their dependence on the number of episodes remains polynomial (e.g., scaling like $T^{1/\alpha}$ in heavy-tailed regimes), rather than logarithmic in $T$.
Moreover, existing HTMDP results almost exclusively focus on (stationary) stochastic environments and do not address best-of-both-worlds adaptivity when the regime is unknown \emph{a priori}.
Orthogonal to online regret minimization, another line of work studies robustness in \emph{offline} RL under heavy-tailed rewards or data corruption \citep{zhu2024robust,yang2024towards}, and privacy-constrained heavy-tailed RL \citep{wu2023differentially}. 
In contrast, our goal is to achieve best-of-both-worlds guarantees for HTMDPs, i.e., instance-independent regret minimax-optimal in $\sigma,T$ up to $\poly(H,S,A)$ and log factors in adversarial regimes, while simultaneously attaining \emph{logarithmic} instance-dependent regret in self-bounding (including stochastic) regimes (e.g., $\gO(\log T)$ for known transitions and $\gO(\log^2 T)$ for unknown transitions in our results).

\subsection{Best-of-Both-Worlds (BoBW) Algorithms for MDPs}
The best-of-both-worlds goal in online episodic MDPs is to design a single algorithm that achieves near-minimax regret in the worst case (e.g., $\wt{\gO}(\sqrt{T})$ against adversarial losses) while simultaneously enjoying much smaller regret in benign regimes (e.g., polylogarithmic regret under stochastic/self-bounding losses), without knowing the regime \emph{a priori}.
A unifying backbone behind the first BoBW results for episodic MDPs is to cast learning as online optimization over \emph{occupancy measures} and apply FTRL with carefully chosen regularization.
In particular, for tabular MDPs with \emph{bounded} losses and bandit feedback, \citet{jin2020simultaneously} gave the first BoBW result for the known-transition setting by running FTRL over occupancy measures with a carefully designed hybrid regularizer, achieving $\wt{\gO}(\sqrt{T})$ regret in the adversarial regime while attaining $\gO(\log T)$ regret when losses are i.i.d.\ stochastic (and more generally, graceful degradation under corrupted stochastic losses).
\citet{jin2021best} extended this line to the unknown-transition setting, again within the occupancy-measure FTRL paradigm, introducing a loss-shifting analysis and a key argument that controls transition-estimation error by the regret itself in stochastic regimes, leading to $\wt{\gO}(\sqrt{T})$ adversarial regret and $\gO(\log^2 T)$ stochastic regret.

Going beyond fixed dynamics, \citet{jin2023noregret} studied online RL when both losses and transitions may be adversarial; since fully adversarial transitions make no-regret learning impossible in general, they obtain guarantees that scale with a transition-adversariality/corruption measure, thereby interpolating between learnable and impossible regimes within an online-learning framework for MDPs.
With more restrictive feedback, \citet{ito2025adapting} studied episodic MDPs with \emph{aggregate} bandit feedback (only the episode-level cumulative loss is observed) and developed BoBW adaptivity between stochastic and adversarial loss regimes under this limited observation model, highlighting additional connections to partial-feedback/online-learning tools beyond the standard step-wise bandit setting.

Complementary to the occupancy-measure viewpoint, \citet{dann2023best} provided BoBW guarantees for policy-optimization style algorithms in tabular MDPs by combining entropy-type regularization, exploration bonuses, and learning-rate schedules, achieving $\sqrt{T}$-type worst-case regret while attaining polylogarithmic regret in stochastic regimes; under known transitions, they further derive first-order bounds in the adversarial regime via log-barrier regularization.
Finally, while not BoBW, \citet{liu2023optimal} sharpened the worst-case baseline by studying adversarial \emph{linear} MDPs with bandit feedback and proving improved (rate-optimal for an inefficient algorithm) regret bounds.
At a broader methodological level, \citet{ito2022nearlya} established nearly optimal BoBW algorithms for online learning with feedback graphs, which provides a general toolkit for partial-feedback settings that also inform subsequent developments in online RL.

All the above BoBW-MDP results crucially rely on bounded (light-tailed) losses and do not address robustness to heavy-tailed feedback, which is the focus of our work.

\subsection{BoBW Algorithms with Heavy-Tailed Feedback}
Best-of-both-worlds learning under heavy-tailed feedback has been studied primarily in bandit models.
For heavy-tailed multi-armed bandits (MABs) where losses/rewards may be unbounded but admit only a bounded $\alpha$-moment, \citet{huang2022adaptive} introduced the first BoBW algorithms that simultaneously achieve minimax-optimal polynomial regret in adversarial environments (scaling as $T^{1/\alpha}$ up to factors depending on the number of arms and logs) and logarithmic (gap-dependent) regret in stochastic environments, with additional adaptivity to unknown tail parameters.
Recently, \citet{cheng2024taming} further strengthened this line by developing BoBW guarantees for adversarial bandits with heavy-tailed losses under mild moment assumptions, providing high-probability bounds and relaxing technical distributional conditions used in earlier work.
On the parameter-free front (where the tail parameter $\alpha, \sigma$ are unknown), \citet{chen2025uniinf} proposed \textsc{uniINF}, a BoBW algorithm for heavy-tailed MABs that do not require prior knowledge of tail parameters and achieves near-optimal regret in both stochastic and adversarial regimes via adaptive learning-rate and robust loss-control mechanisms.
Beyond classical MABs, \citet{zhao2025heavytailed} extended BoBW methodology to heavy-tailed \emph{linear} bandits and broader adversarial heavy-tailed bandit problems via an FTRL-based framework with data-dependent stability--penalty matching. They also achieve BoBW without any further nonnegativity assumption on loss distribution.

In contrast to these bandit works, our setting involves episodic MDPs where feedback is coupled across time through state visitation and transition dynamics.
This coupling introduces additional stability/second-order challenges (e.g., through occupancy-measure constraints) that do not arise in bandits, and necessitates a robust loss estimator that can be embedded into BoBW analyses for MDPs.

%% file: tex_file/appendix_known_transition.tex
\newpage
\section{Proof of BoBW Guarantee for Known Transition Case}
\label{app:proof-known}

In this section, we give the proof of the BoBW guarantee for \Cref{alg:ht-ftrl-om}. We first state the formal version of \Cref{thm:bobw-known} in \Cref{thm:bobw-known-formal}.

\begin{theorem}[Best-of-Both-Worlds Regret Guarantee]\label{thm:bobw-known-formal}
    Set constant $C$ in \Cref{eq:C-bound} and $\beta$ in \Cref{eq:def-beta}. Then for known heavy-tailed parameter $(\sigma, \alpha)$ and transition kernel $\sP$, \texttt{HT-FTRL-OM} (\Cref{alg:ht-ftrl-om}) achieves the following regret bounds simultaneously in the adversarial and self-bounding regimes:
    \begin{enumerate}
        \item \textbf{Adversarial Regime:} For any adversarial sequence of loss functions $\{\ell_t\}_{t=1}^T$, and any given reference policy $\ringpi$ (usually the optimal adversarial policy), we have
        \begin{equation}
            \Reg_T(\ringpi) \le \frac{8\alpha^2}{\alpha-1}\beta C^{2-2\alpha} SA^{1-1/\alpha} \cdot \sigma(1+\log(T))  + 2\alpha H^{1/\alpha}S^{1-1/\alpha}A^{1-1/\alpha}M\cdot \sigma T^{1/\alpha}.
        \end{equation}
        where $M > 0$ is a constant depending on $H, S, A, C, \beta, \alpha$:
        \begin{equation}\label{eq:def-M}
            M := \frac{32\alpha^2}{\alpha-1} H^2\beta C^{2-\alpha}  + \frac{2 + 2H^{1/\alpha}S^{1-1/\alpha}}{\beta} + 2C^{1-\alpha}(1+H^{1/\alpha}S^{1-1/\alpha}).
        \end{equation}
        After substituting $C$ and $\beta$ from \Cref{eq:C-bound,eq:def-beta}, the coefficient $M$ admits the explicit two-piece $\gO$ form
        \begin{equation}\label{eq:def-M-explicit}
            M = \gO\bigl(H^{2-1/\alpha}\cdot S^{-1/\alpha}\cdot A^{-(2\alpha-1)/\alpha^2}\bigr)
              + \gO\bigl(H\cdot S^{2-2/\alpha}\cdot A^{2-3/\alpha+1/\alpha^2}\bigr).
        \end{equation}

        Combined with the definition of $C$ and $\beta$ in \Cref{eq:C-bound,eq:def-beta}, we have
        \begin{equation}
        \begin{aligned}
            \Reg_T(\ringpi) &\le \gO\bigg( H^{1-\frac{1}{\alpha}}S^{2-\frac{1}{\alpha}}A^{3-\frac{4}{\alpha}+\frac{1}{\alpha^2}}\cdot \sigma\cdot\log(T) +  H^{1+\frac{1}{\alpha}}S^{3 - \frac{3}{\alpha}}A^{3- \frac{4}{\alpha}+\frac{1}{\alpha^2}}\cdot \sigma \cdot T^{1/\alpha} \\
            &\quad +  H^{2}\cdot S^{1-\frac{2}{\alpha}}\cdot A^{1-\frac{3}{\alpha}+\frac{1}{\alpha^2}}\cdot \sigma \cdot T^{1/\alpha} \bigg)
        \end{aligned}
        \end{equation}
        \item \textbf{Self-Bounding Regime:} If the environment satisfies the $(\ringpi, \Delta, C_{\mathrm{sb}})$ self-bounding constraint (\Cref{def:self-bounding}), then we have
        \begin{equation}
        \begin{aligned}
            \Reg_T(\ringpi) & \le \gO\Bigg(\frac{\alpha^2}{\alpha-1}\beta C^{2-2\alpha}SA^{1-\frac{1}{\alpha}}\cdot \sigma (1+\log(T))  + 2^{\frac{1}{\alpha - 1}}M^{\frac{\alpha}{\alpha - 1}}\cdot \sigma^{\frac{\alpha}{\alpha - 1}}\omega_\Delta(\alpha)(1+\log(T))\\
            &\quad  + C_{\mathrm{sb}}^{1/\alpha}\cdot M\cdot \sigma \omega_\Delta(\alpha)^{\frac{\alpha - 1}{\alpha}}\cdot (1+\log(T))^{\frac{\alpha - 1}{\alpha}}\Bigg),
        \end{aligned}
        \end{equation}
        where $\omega_\Delta(\alpha) := \sum_{(s,a):a\neq\ringpi(s)} \Delta(s,a)^{-\frac{1}{\alpha - 1}}$ and $M$ defined in \Cref{eq:def-M}. We can calculate the order:
        \begin{equation}
        \begin{aligned}
        \Reg_T(\ringpi) \le \gO\Bigg(
            &H^{1-\frac{1}{\alpha}}S^{2-\frac{1}{\alpha}}A^{3-\frac{4}{\alpha}+\frac{1}{\alpha^2}}\cdot \sigma(1+\log(T)) \\
            & + H^{\frac{\alpha}{\alpha-1}}S^{2}A^{2-\frac{1}{\alpha}}\cdot \sigma^{\frac{\alpha}{\alpha-1}}\omega_\Delta(\alpha)(1+\log(T)) \\
            & + H \cdot S^{2-\frac{2}{\alpha}}A^{2-\frac{3}{\alpha}+\frac{1}{\alpha^2}}\cdot C_{\mathrm{sb}}^{\frac{1}{\alpha}} \cdot \sigma\omega_\Delta(\alpha)^{1-\frac{1}{\alpha}}\bigl(1+\log(T)\bigr)^{1-\frac{1}{\alpha}} \\
            & + H^{\frac{2\alpha-1}{\alpha-1}}\cdot S^{-\frac{1}{\alpha-1}}\cdot A^{-\frac{2\alpha-1}{\alpha(\alpha-1)}}\cdot \sigma^{\frac{\alpha}{\alpha-1}}\omega_\Delta(\alpha)(1+\log(T)) \\
            & + H^{2-\frac{1}{\alpha}}\cdot S^{-\frac{1}{\alpha}}\cdot A^{-\frac{2\alpha-1}{\alpha^2}}\cdot C_{\mathrm{sb}}^{\frac{1}{\alpha}}\cdot \sigma\omega_\Delta(\alpha)^{1-\frac{1}{\alpha}}\bigl(1+\log(T)\bigr)^{1-\frac{1}{\alpha}}
        \Bigg)
        \end{aligned}
        \end{equation}
    \end{enumerate}
    In particular, the algorithm achieves best-of-both-worlds performance automatically under adversarial regime and self-bounding regime, which includes the stochastic regime as a special case.
\end{theorem}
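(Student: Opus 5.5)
The plan follows the standard FTRL-with-self-bounding template, carrying the skipping loss through the loss-shifting machinery. First, decompose the regret against $\ringpi$, with $\mathring{\vx}=\bm\rho^{\sP,\ringpi}$, as
\[
\E\Big[\sum_t\langle \vx_t-\mathring{\vx},\bm\ell_t\rangle\Big]=\E\Big[\sum_t\langle \vx_t-\mathring{\vx},\wt{\bm\ell}^{\Skip}_t-\vb^{\Skip}_t\rangle\Big]+\E\Big[\sum_t\langle \vx_t-\mathring{\vx},\bm\ell_t-\wt{\bm\ell}^{\Skip}_t+\vb^{\Skip}_t\rangle\Big].
\]
The first piece is the FTRL regret on the estimated losses. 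The second is the skipping error. Conditionally on $\gF_{t-1}$, $\E[\wt\ell^\Skip_t(s,a)]=\E[\ell_t\mathbbm 1[|\ell_t|\le\tau_t]]$. By the $\alpha$-moment bound, the bias satisfies
\[
|\E[\ell_t\mathbbm 1[|\ell_t|>\tau_t]]|\le\sigma^\alpha\tau_t^{1-\alpha}=C^{1-\alpha}\sigma t^{1/\alpha-1}x_t(s,a)^{1/\alpha-1}=b^\Skip_t(s,a).
\]
Hence $x_t(s,a)(\ell_t-\wt\ell^\Skip_t+b^\Skip_t)$ has nonnegative conditional mean, bounded by $2x_tb^\Skip_t=\gO(\sigma t^{1/\alpha-1}x_t^{1/\alpha})$. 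On the benchmark side, $-\mathring x(\cdots)\le0$ in expectation for the $+b$ direction, but a term $\mathring x(s,a)\,b_t^\Skip(s,a)$ remains.

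To remove the benchmark-action contributions, bound $\sum_{(s,a)}(x_t(s,a)^{1/\alpha}\ \text{vs}\ \mathring x\,x_t^{1/\alpha-1})$ by splitting on $a=\ringpi(s)$. For $a=\ringpi(s)$ we get $x_t(s)^{1/\alpha-1}(x_t(s,\ringpi(s))-\mathring x(s))$ type terms. These are controlled by $(x_t(s)-\mathring x(s))_+$ plus $x_t(s)-x_t(s,\ringpi(s))$, and \Cref{lem:suboptimal-mass-prop} bounds both by $H\sum_{a\ne\ringpi(s)}x_t(s,a)$. Thus the skipping error becomes $\gO(\sigma t^{1/\alpha-1}H\sum_{a\neq\ringpi(s)}x_t(s,a)^{1/\alpha})$ up to $\poly$ factors, using $x^{1/\alpha}\ge x$.

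For the FTRL regret, apply loss shifting: replace $\wt{\bm\ell}^\Skip_t$ by $\wt{\bm\ell}^\Shift_t$, which leaves the iterates unchanged and shifts the regret by a constant. Then use the standard bound stability$+$penalty. The penalty is
\[
\sum_t(\eta_{t}^{-1}-\eta_{t-1}^{-1})\sum(x_t^{1/\alpha}-\mathring x^{1/\alpha})\lesssim\frac{\sigma}{\beta}\sum_t t^{1/\alpha-1}\sum(x_t^{1/\alpha}-\mathring x^{1/\alpha}).
\]
Here again only suboptimal mass survives, via $\sum_a x_t(s,a)^{1/\alpha}-x_t(s)^{1/\alpha}$ concavity together with \Cref{lem:suboptimal-mass-prop}, which is where the $H^{1/\alpha}S^{1-1/\alpha}$ in $M$ comes from. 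The stability term needs a local-norm bound: $\eta_t\,\ell^2\,x^{2-1/\alpha}$. This requires $\eta_t|\wt\ell^\Shift_t|x_t^{1-1/\alpha}\le$ const so the Hessian is comparable across the step. \Cref{lem:shifted-loss-uniform-bound} gives exactly $\eta_t|\wt\ell^\Shift_t|x_t^{1-1/\alpha}\lesssim\beta\poly(H,S,A)$, so choosing $C,\beta$ as in \Cref{eq:C-bound,eq:def-beta} makes it at most $1/2$. \Cref{lem:shifted-loss-second-moment} then bounds the stability by
\[
\frac{\beta}{\sigma t^{1/\alpha}}\poly\cdot(1-\pi_t(a|s))\sigma^2t^{2/\alpha-1}x_t^{1/\alpha}=\gO(\beta\sigma t^{1/\alpha-1}(1-\pi_t(a|s))x_t(s,a)^{1/\alpha}).
\]
The factor $(1-\pi_t(a|s))$ kills the benchmark action up to $\sum_{a\ne\ringpi}$ mass. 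A residual bias term from $\vb^\Skip$ appearing in the FTRL loss gives the $\beta C^{2-2\alpha}\sigma\log T$ term.

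Altogether, $\Reg_T\le \gO(\sigma\log T)+M\sigma\,\E\sum_t t^{1/\alpha-1}\sum_{a\ne\ringpi(s)}x_t(s,a)^{1/\alpha}$. For the adversarial regime, apply Hölder: $\sum x^{1/\alpha}\le (HS A)^{1-1/\alpha}H^{1/\alpha}$-type bounds, and $\sum_t t^{1/\alpha-1}\le\alpha T^{1/\alpha}$. For the self-bounding regime, write $\Reg=(1+\lambda)\Reg-\lambda\Reg$ and use \Cref{def:self-bounding}. Then optimize pointwise with Young's inequality,
\[
M\sigma t^{1/\alpha-1}x^{1/\alpha}-\lambda\Delta x\le c\,(M\sigma)^{\frac{\alpha}{\alpha-1}}(\lambda\Delta)^{-\frac1{\alpha-1}}t^{-1},
\]
summing to $\omega_\Delta(\alpha)\log T$, and finally optimize $\lambda$ to produce the $C_{\mathrm{sb}}^{1/\alpha}$ term. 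The main obstacle is the stability step: verifying that the shifted skipping loss is small enough relative to the local Tsallis curvature, including possibly negative estimates, so that the second-order FTRL bound applies. I would rely on the uniform bound from \Cref{lem:shifted-loss-uniform-bound} with a sufficiently small $\beta$ to handle both signs.
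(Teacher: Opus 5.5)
Your overall route matches the paper's:
\begin{itemize}
\item the split into FTRL regret on $\wt{\bm\ell}^\Skip_t-\vb^\Skip_t$ plus skipping error;
\item loss shifting, and local Tsallis stability driven by \Cref{lem:shifted-loss-uniform-bound,lem:shifted-loss-second-moment}, with the $(1-\pi_t(a|s))$ factor removing the benchmark action;
\item the penalty via concavity, H\"older over states and \Cref{lem:suboptimal-mass-prop};
\item a Young-type self-bounding step. Your $(1+\lambda)\Reg_T-\lambda\Reg_T$ with per-pair Young is an equivalent variant of the paper's H\"older with $\Delta$-weights, then Young on $y_t$, then rearranging.
\end{itemize}

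The one step that fails as written is the benchmark-action part of the skipping error. The conditional mean of $\ell_t-\wt\ell^\Skip_t+b^\Skip_t$ lies in $[0,2b^\Skip_t]$. After using this, the surviving term at $a=\ringpi(s)$ is
\[
2C^{1-\alpha}\sigma t^{1/\alpha-1}\bigl(x_t(s,\ringpi(s))-\rho^{\sP,\ringpi}(s)\bigr)_+\,x_t(s,\ringpi(s))^{1/\alpha-1}.
\]
Its weight is $x_t(s,\ringpi(s))^{1/\alpha-1}$. Replacing it by $x_t(s)^{1/\alpha-1}$ goes the wrong way, because the exponent is negative. This weight is at least $1$ and unbounded. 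So bounding the positive part linearly by suboptimal mass via \Cref{lem:suboptimal-mass-prop}, then invoking $x\le x^{1/\alpha}$, does not control the product. (That lemma also only controls the sum over states, not each state.)

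That route would give an $S$-free bound, which is impossible. Suppose one suboptimal action at $s_1$, played with probability $\epsilon$, fans out uniformly to $n$ states not visited by $\ringpi$, where $\pi_t$ then follows $\ringpi$. Up to the common prefactor, the displayed terms sum to $n(\epsilon/n)^{1/\alpha}=n^{1-1/\alpha}\epsilon^{1/\alpha}$. Meanwhile $\sum_{a\neq\ringpi(s)}x_t(s,a)^{1/\alpha}=\epsilon^{1/\alpha}$.

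The missing move is the one you already use for the penalty. The positive part is at most $x_t(s,\ringpi(s))$, so the weight is absorbed via
\[
\bigl(x_t(s,\ringpi(s))-\rho^{\sP,\ringpi}(s)\bigr)_+x_t(s,\ringpi(s))^{1/\alpha-1}\le\bigl(x_t(s)-\rho^{\sP,\ringpi}(s)\bigr)_+^{1/\alpha}.
\]
Then H\"older over the $S$ states, \Cref{lem:suboptimal-mass-prop}, and subadditivity of $u\mapsto u^{1/\alpha}$ give $H^{1/\alpha}S^{1-1/\alpha}\sum_{a\neq\ringpi(s)}x_t(s,a)^{1/\alpha}$. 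This is exactly the $2C^{1-\alpha}(1+H^{1/\alpha}S^{1-1/\alpha})$ piece of $M$ in \Cref{lem:skipping-bound}.

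Some smaller points on the stability step:
\begin{itemize}
\item The condition of \Cref{lem:tsallis-stability} reads $\alpha\eta_t|\wt\ell^\Shift_t(s,a)-b^\Skip_t(s,a)|\le\frac{\alpha-1}{4\alpha}x_t(s,a)^{1/\alpha-1}$.
\item This threshold must shrink with $\alpha-1$. A constant such as $1/2$ lets the unconstrained step rescale $x$ by up to $2^{\alpha/(\alpha-1)}$.
\item The condition must include the bonus. Its contribution $\alpha\eta_t b^\Skip_t x_t^{1-1/\alpha}=\alpha\beta C^{1-\alpha}/t$ is why \Cref{eq:def-beta} scales $\beta$ against $C^{1-\alpha}$ as well.
\end{itemize}
Finally, the adversarial H\"older step is $\sum_{(s,a)}x_t(s,a)^{1/\alpha}\le H^{1/\alpha}(SA)^{1-1/\alpha}$, since there are $SA$ pairs of total mass $H$. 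Dropping your extra $H^{1-1/\alpha}$ is what the stated constant requires.
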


To ensure the BoBW guarantees, we need to set the skipping threshold scaling parameter $C$ and learning rate scaling parameter $\beta$ by:
\begin{equation}
    \label{eq:C-bound}
    C := \left(\frac{(1+HSA+HSA^{2-1/\alpha})}{\alpha-1}\right)^{-1/\alpha}.
\end{equation}
\begin{equation}
    \label{eq:def-beta}
    \beta := \frac{\alpha-1}{4\alpha^2}\cdot \alpha^{-1}(\alpha-1)^{1-1/\alpha}(1 + HSA + HSA^{2-1/\alpha})^{1/\alpha - 1}.
\end{equation}

We first introduce the regret decomposition technique in the following sections. Then we illustrate the key components we need to control in the regret analysis. The formal proof of the theorem is provided in Appendix~\ref{app:proof-thm-known}.

\subsection{Regret Decomposition}

We first apply FTRL Regret decomposition in investigating the performance of \Cref{alg:ht-ftrl-om}.

Given a fixed deterministic policy $\ringpi$, we denote $\bm{\rho}^{\sP, \ringpi}$ as the occupancy measure induced by $\ringpi$. We conduct the regret decomposition as follows:

\begin{align*}
    \Reg_T(\ringpi) &= \E\left[ \sum_{t=1}^T V_{t}^{\sP, \pi_t}(s_1) - V_{t}^{\sP, \ringpi}(s_1) \right] \\
    &= \E\left[ \sum_{t=1}^T \left\langle \bm{\rho}^{\sP, \pi_t} - \bm{\rho}^{\sP, \ringpi}, \bm{\ell}_t  \right\rangle \right] \\
    &= \E\left[\sum_{t=1}^T \left\langle \vx_t - \bm{\rho}^{\sP, \ringpi}, {\bm\ell}^\Skip_t - \vb_t^\Skip \right\rangle\right] + \E\left[\sum_{t=1}^T \left\langle \vx_t - \bm{\rho}^{\sP, \ringpi}, \bm\ell_t - {\bm\ell}^\Skip_t + \vb_t^\Skip \right\rangle\right] \\
    &= \underbrace{\E\left[\sum_{t=1}^T \left\langle \vx_t - \bm{\rho}^{\sP, \ringpi}, \wt{\bm\ell}^\Skip_t - \vb_t^\Skip \right\rangle\right]}_{\text{FTRL Regret}} + \underbrace{\E\left[\sum_{t=1}^T \left\langle \vx_t - \bm{\rho}^{\sP, \ringpi}, \bm\ell_t - {\bm\ell}^\Skip_t + \vb_t^\Skip \right\rangle\right]}_{\text{Skipping Error}},
\end{align*}
where the last equality holds since $\wt{\bm\ell}^\Skip_t$ is the importance sampling over occupancies measure of $\bm{\ell}^\Skip_t$.

Denote the Bregman Divergence $D_\Psi(\vx, \vy)$ as
\begin{equation}
    D_\Psi(\vy, \vx) = \Psi(\vy) - \Psi(\vx) - \left\langle\nabla \Psi(\vx), \vy-\vx\right\rangle.
\end{equation}

We verify that the FTRL update is invariant under the substitution $\widetilde\ell^{\Skip}_t \to \widetilde\ell^{\Shift}_t$. For any $\bm\rho \in \gQ(\sP)$, define $W_t(s) := V^{\sP, \pi_t}(s; \widetilde{\bm\ell}^{\Skip}_t)$. Direct expansion gives
\begin{align*}
    \langle \bm\rho, \widetilde\ell^{\Shift}_t - \widetilde\ell^{\Skip}_t \rangle
    &= \sum_{(s,a)} \rho(s,a)\Big[\sum_{s'} \sP[s'\mid s,a]\,W_t(s') - W_t(s)\Big].
\end{align*}
By the layered flow constraints of $\gQ(\sP)$, $\sum_{(s,a) \in \gS_h\times\gA} \rho(s,a)\sP[s'\mid s,a] = \rho(s')$ for $s' \in \gS_{h+1}$. The sum telescopes over layers $h = 1, \ldots, H$, leaving boundary terms $\sum_{s \in \gS_{H+1}} \rho(s)W_t(s) - \rho(s_1)W_t(s_1) = -W_t(s_1)$, where we used $W_t(s_{H+1}) = 0$ and $\rho(s_1) = 1$. Hence $\langle \bm\rho, \widetilde\ell^{\Shift}_t\rangle = \langle\bm\rho, \widetilde\ell^{\Skip}_t\rangle - V^{\sP,\pi_t}(s_1; \widetilde{\bm\ell}^{\Skip}_t)$, where the second term is a constant on $\gQ(\sP)$ (independent of $\bm\rho$). Adding a constant to the linearized loss does not change the FTRL $\arg\min$, so the iterate $\vx_t$ is unchanged under the shifted-loss substitution.

We further decompose the FTRL Regret via standard analysis for FTRL, using the shifted loss:
\begin{align*}
    &\quad \text{FTRL Regret} = \E\left[\sum_{t=1}^T \left\langle \vx_t - \bm{\rho}^{\sP, \ringpi}, \wt{\bm\ell}^{\Shift}_t - \vb_t^\Skip \right\rangle\right] \\
    &\le \underbrace{\E\left[\sum_{t=1}^T \left\langle \vx_t - \vx_{t+1}, \wt{\bm\ell}^{\Shift}_t - \vb_t^\Skip \right\rangle - D_{\Psi_t}(\vx_{t+1}, \vx_t) \right]}_{\text{Stability Term}} + \underbrace{\E\left[\sum_{t=1}^T \left(\Psi_t(\bm\rho^{\sP, \ringpi}) - \Psi_{t-1}(\bm\rho^{\sP, \ringpi}) \right) - \left(\Psi_t(\vx_t) - \Psi_{t-1}(\vx_t)\right)\right]}_{\text{Penalty Term}}
\end{align*}
We further define $\texttt{stab}_t$, $\texttt{pena}_t$, and $\texttt{skip}_t$ as:
\begin{align*}
    \texttt{stab}_t &:= \left\langle \vx_t - \vx_{t+1}, \wt{\bm\ell}^{\Shift}_t - \vb_t^\Skip \right\rangle - D_{\Psi_t}(\vx_{t+1}, \vx_t), \\
    \texttt{pena}_t &:= \left(\Psi_t(\bm\rho^{\sP, \ringpi}) - \Psi_{t-1}(\bm\rho^{\sP, \ringpi}) \right) - \left(\Psi_t(\vx_t) - \Psi_{t-1}(\vx_t)\right), \\
    \texttt{skip}_t &:= \left\langle \vx_t - \bm{\rho}^{\sP, \ringpi}, \bm\ell_t - {\bm\ell}^\Skip_t + \vb_t^\Skip \right\rangle.
\end{align*}

By the decompositions above, we have
\begin{align*}
    \Reg_T(\ringpi) &\le \text{Stability Term} + \text{Penalty Term} + \text{Skipping Error} \\
    &\le \E\left[\sum_{t=1}^T \texttt{stab}_t\right] + \E\left[\sum_{t=1}^T \texttt{pena}_t\right] + \E\left[\sum_{t=1}^T \texttt{skip}_t\right].
\end{align*}

To control the regret, we need to bound the stability term, penalty term, and skipping error terms separately. We illustrate the analysis in the following sections.

\subsection{Loss Shifting Technique}
\label{app:loss-shifting}
Inspired by \citet{jin2021best}, we introduce the shifted loss technique to simplify the analysis.

We first consider the skipping value function and Q function defined as:
\begin{align}
    &\wt{V}_{t}(s_{H+1}) := 0,& \forall s_{H+1} \in \gS_{H+1}, \\
    &\wt{Q}_{t}(s_h,a) := \wt{\ell}^\Skip_t(s_h,a) + \sum_{s_{h+1}\in\gS_{h+1}} \sP_h[s_{h+1}\mid s_h,a] \wt{V}_{t}(s_{h+1}), &\forall s_h \in \gS_h, a \in \gA, \\
    &\wt{V}_{t}(s_h) := \sum_{a\in\gA} \pi_t(a\mid s_h) \wt{Q}_{t}(s_h,a), &\forall s_h \in \gS_h.
\end{align}
We define the shifted loss as:
\begin{equation}\label{eq:def-shifted-loss}
\begin{aligned}
    \wt{\ell}^{\Shift}_t(s_h,a) &:= \wt{Q}_t(s_h,a) - \wt{V}_t(s_h) \\
    &=\wt{\ell}^\Skip_t(s_h,a) + \sum_{s_{h+1}\in\gS_{h+1}} (\sP_h[s_{h+1}\mid s_h,a] \wt{V}_{t}(s_{h+1})) - \wt{V}_{t}(s_h).
\end{aligned}
\end{equation}
We can further write the shifted value function as the recursive definition:
\begin{equation}
\wt{V}_{t,h}(s) = \sum_{a' \in \gA} \pi_t(a'|s) \left( \wt{\ell}^\Skip_t(s, a') + \sum_{s' \in \gS_{h+1}} \sP[s'|s,a'] \wt{V}_{t,h+1}(s') \right), \quad \forall s \in \gS_h\label{eq:def-shifted-V-recursive}
\end{equation}
Following the loss-shifting trick from \citet{jin2021best}, we observe that the shifted loss $\wt{\bm\ell}^{\Shift}_t$ (defined in \Cref{eq:def-shifted-loss}) satisfies:
\begin{equation}\label{eq:shift-cancel}
    \E\left[\left\langle \vx_t - \bm{\rho}^{\sP, \ringpi}, \wt{\bm\ell}^{\Shift}_t \right\rangle \middle| \gF_{t-1}\right] = \E\left[\left\langle \vx_t - \bm{\rho}^{\sP, \ringpi}, \wt{\bm\ell}^\Skip_t \right\rangle \middle| \gF_{t-1}\right].
\end{equation}
This is because the shifting function telescopes: by the performance difference lemma, the additional value function terms cancel out when taking the inner product with the occupancy difference $\vx_t - \bm{\rho}^{\sP, \ringpi}$. Therefore, for the purpose of bounding the FTRL regret, we can equivalently analyze using the shifted loss $\wt{\bm\ell}^{\Shift}_t$, which has the advantage that its variance is concentrated on suboptimal actions.

Another key property is that the shifted loss is concentrated on a subset of actions excluding the reference policy. To be specific, for the reference action $\mathring{a} = \ringpi(s)$, the shifted loss satisfies $\wt{\ell}^{\Shift}_t(s, \mathring{a}) \approx 0$ when $\pi_t \approx \ringpi$, as the terms approximately cancel by the performance difference argument.

Different from \citet{jin2021best} where they consider $\ell_t \in [0, 1]$, involving loss-shifting techniques faces significant challenges in handling the heavy-tailed loss. 

Our novel analysis presents a new technique in bounding the first and second order moments of the shifted loss, which is crucial for the analysis of FTRL with Tsallis entropy regularizer.

\begin{lemma}[The Uniform Bound on the Shifted Loss]\label{lem:shifted-loss-uniform-bound-formal}
    We can bound the shifted loss $\wt{\ell}^{\Shift}_t(s,a)$ by:
    \begin{align*}
        |\wt{\ell}^{\Shift}_t(s,a)| \le \left( 1 + HSA(1+A^{1-1/\alpha}) \right) \cdot C \cdot \sigma t^{1/\alpha} x_t(s,a)^{1/\alpha - 1}.
    \end{align*}
\end{lemma}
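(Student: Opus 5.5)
The plan is to expand the shifted loss in \Cref{eq:def-shifted-loss} into three pieces, and to bound each piece by reducing it to the local occupancy $x_t(s,a)$:
\begin{equation*}
\wt{\ell}^{\Shift}_t(s,a) = \wt{\ell}^\Skip_t(s,a) + \sum_{s'}\sP[s'\mid s,a]\,\wt V_t(s') - \wt V_t(s).
\end{equation*}

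\textbf{Pointwise bound on the estimator.} By the skipping rule, $|\ell^\Skip_t(s,a)|\le \tau_t(s,a)=C\sigma t^{1/\alpha}x_t(s,a)^{1/\alpha}$. Hence, for every pair,
\begin{equation*}
|\wt{\ell}^\Skip_t(s,a)|\le C\sigma t^{1/\alpha}x_t(s,a)^{1/\alpha-1}.
\end{equation*}
This immediately handles the first piece and accounts for the leading constant $1$.

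\textbf{The next-state value term.} Unrolling \Cref{eq:def-shifted-V-recursive} gives
\begin{equation*}
\sum_{s'}\sP[s'\mid s,a]\,\wt V_t(s')=\sum_{(s'',a'')} r(s'',a'')\,\wt\ell^\Skip_t(s'',a''),
\end{equation*}
where $r(s'',a'')\le 1$ is the probability of reaching $(s'',a'')$ under $\pi_t$ after taking $(s,a)$. The key inequality is $x_t(s'',a'')\ge x_t(s,a)\,r(s'',a'')$: one way to reach $(s'',a'')$ is through $(s,a)$. Because the exponent $1/\alpha-1$ is negative, this lower bound on $x_t(s'',a'')$ gives
\begin{equation*}
r\,x_t(s'',a'')^{1/\alpha-1}\le r\,(r\,x_t(s,a))^{1/\alpha-1}=r^{1/\alpha}x_t(s,a)^{1/\alpha-1}\le x_t(s,a)^{1/\alpha-1}.
\end{equation*}
Summing over at most $HSA$ downstream pairs bounds this piece by $HSA\cdot C\sigma t^{1/\alpha}x_t(s,a)^{1/\alpha-1}$.

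\textbf{The current-state value term $\wt V_t(s)$.} I split it into the immediate part $\sum_{a'}\pi_t(a'\mid s)\wt\ell^\Skip_t(s,a')$ and the continuation part.

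For the immediate part, write $x_t(s,a')=x_t(s)\pi_t(a'\mid s)$. Then
\begin{equation*}
\pi_t(a'\mid s)\,x_t(s,a')^{1/\alpha-1}=\pi_t(a'\mid s)^{1/\alpha}x_t(s)^{1/\alpha-1}\le \pi_t(a'\mid s)^{1/\alpha}x_t(s,a)^{1/\alpha-1},
\end{equation*}
using $x_t(s)\ge x_t(s,a)$ and the negative exponent. By Hölder, $\sum_{a'}\pi_t(a'\mid s)^{1/\alpha}\le A^{1-1/\alpha}$.

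For the continuation part, the argument is the same as for the next-state term. Let $q$ denote the reaching probability from $s$, so that $x_t(s'',a'')\ge x_t(s)\,q$. Then
\begin{equation*}
q\,x_t(s'',a'')^{1/\alpha-1}\le q^{1/\alpha}x_t(s)^{1/\alpha-1}\le x_t(s,a)^{1/\alpha-1}.
\end{equation*}
This contributes at most $HSA$ terms, each possibly carrying an additional $A^{1-1/\alpha}$ from the action mixing at intermediate states.

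\textbf{Collecting constants.} Combining the three pieces gives a total bounded by $(1+HSA+HSA\cdot A^{1-1/\alpha})\,C\sigma t^{1/\alpha}x_t(s,a)^{1/\alpha-1}$, which is the constant in the statement.

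\textbf{Main obstacle.} The delicate step is the conversion from downstream occupancies back to $x_t(s,a)$. A naive bound by $x_t(s'',a'')^{1/\alpha-1}$ blows up when downstream pairs are rarely visited. The proof hinges on pairing each such factor with its reaching-probability weight, so that the power $r^{1/\alpha}\le 1$ absorbs the blow-up. I would carefully check that the reaching probabilities used are exactly the weights appearing in the value recursion, and that the factor $x_t(s)$ versus $x_t(s,a)$ is compared in the correct direction.
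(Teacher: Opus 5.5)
Your argument is correct in substance, and it rests on the same key inequality as the paper. Each downstream importance weight $x_t(s'',a'')^{1/\alpha-1}$ is paired with its reaching probability $r$, and $x_t(s'',a'')\ge x_t(s,a)\,r$ turns $r\,x_t(s'',a'')^{1/\alpha-1}$ into $r^{1/\alpha}x_t(s,a)^{1/\alpha-1}$. The grouping of terms differs, however.

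\textbf{How the paper groups.} It splits $\wt\ell^{\Shift}_t(s,a)$ into two parts.
\begin{itemize}
\item A centered loss $\wt\ell^\Skip_t(s,a)-\sum_{a'}\pi_t(a'|s)\wt\ell^\Skip_t(s,a')$. It bounds this by $1\cdot C\sigma t^{1/\alpha}x_t(s,a)^{1/\alpha-1}$, using that at most one action is taken at $s$ in episode $t$.
\item A differential continuation term. It bounds this pair by pair, lower-bounding $x_t(s',a')$ separately for each $a''$ in the mixture $\sum_{a''}\pi_t(a''|s)\sP^{s,a''}_{\pi_t}$. That separate treatment is where its per-pair factor $1+A^{1-1/\alpha}$ comes from.
\end{itemize}

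\textbf{How you group.} You treat the mixture as a single reaching probability $q$ from $s$ and use $x_t(s'',a'')\ge x_t(s)\,q$. This costs no $A^{1-1/\alpha}$ per downstream pair, so your hedge ``each possibly carrying an additional $A^{1-1/\alpha}$'' is unnecessary. The price is that you give up the centering and pay $A^{1-1/\alpha}$ once on the immediate part of $\wt V_t(s)$.

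\textbf{The final tally does not add up as written.} Your pieces sum to $1+HSA+A^{1-1/\alpha}+HSA$, or to $1+HSA+A^{1-1/\alpha}+HSA\cdot A^{1-1/\alpha}$ with the hedge. The stray $A^{1-1/\alpha}$ from the immediate part is not dominated by the stated constant $1+HSA+HSA\cdot A^{1-1/\alpha}$ when $A^{1-1/\alpha}$ is close to $1$, for example with $\alpha$ near $1$ or $A=1$. Either of two fixes works:
\begin{itemize}
\item Count downstream pairs correctly. The layers partition $\gS$, so there are at most $SA$ such pairs, and none when $H=1$. Your total is then at most $1+A^{1-1/\alpha}+2SA$, which is at most $1+HSA(1+A^{1-1/\alpha})$ for $H\ge 2$; for $H=1$ it is just $1+A^{1-1/\alpha}$.
\item Center as the paper does. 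This replaces $1+A^{1-1/\alpha}$ by $1$ and gives $1+2HSA\le 1+HSA(1+A^{1-1/\alpha})$.
\end{itemize}
Combining your state-level reaching probability with the paper's centering gives the sharper constant $1+2SA$.
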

\begin{proof}

    \textbf{Decomposition of the shifted loss. }
    Denote 
    \begin{equation}\label{eq:def-center-loss}
        \bar{\ell}^\Skip_t(s) := \sum_{a'} \pi_t(a'|s) \wt{\ell}^\Skip_t(s, a'),
    \end{equation}
    as the policy-weighted loss at state $s$, and
    \begin{equation}\label{eq:def-center-value-function}
        \bar{V}_{t,h+1}(s) := \sum_{a'} \pi_t(a'|s) \sum_{s'} \sP[s'|s,a'] \wt{V}_{t,h+1}(s')
    \end{equation}
    as the policy-weighted value function at state $s$.
    Then, we have the decomposition of shifted loss $\wt{\ell}^{\Shift}_t(s,a)$
    \begin{equation}\label{eq:shifted-decomp}
        \wt{\ell}^{\Shift}_t(s,a) = \underbrace{\left(\wt{\ell}^\Skip_t(s,a) - \bar{\ell}^\Skip_t(s)\right)}_{=: \wt{\ell}^{\mathrm{center}}_t(s,a)} + \underbrace{\left(\sum_{s'} \sP[s'|s,a] \wt{V}_{t,h+1}(s') - \bar{V}_{t,h+1}(s)\right)}_{=: \Delta V_t(s,a)},
    \end{equation}
    where we set $\wt{\ell}^{\mathrm{center}}_t(s,a) := \wt{\ell}^\Skip_t(s,a) - \bar{\ell}^\Skip_t(s)$ and $\Delta V_t(s,a) := \sum_{s'} \sP[s'|s,a] \wt{V}_{t,h+1}(s') - \bar{V}_{t,h+1}(s)$.

    Equipped with this decomposition, we can bound the first order moments of the shifted loss by the first order moments of the center loss and the value function difference, separately. 
    
    In the following argument, we fix $(s, a)$ and assume that $(s, a')$ is visited in the $t$-th episode for some $a' \in \gA$. Notice that $\wt{\ell}^\Skip_t(s,a)\ne 0$ if and only if $(s,a) = (s,a')$.

    \textbf{Bounding $|\wt{\ell}^{\mathrm{center}}_t(s,a)|$. } If $a' = a$, we have
    \begin{align*}
        |\wt{\ell}^{\mathrm{center}}_t(s,a)| &= |\wt{\ell}^\Skip_t(s,a) - \pi_t(a|s) \wt{\ell}^\Skip_t(s,a)| \\
        &\le (1 - \pi_t(a|s)) \frac{\tau_t(s,a)}{x_t(s,a)} \\
        &\le (1 - \pi_t(a|s)) C \cdot \sigma t^{1/\alpha} x_t(s,a)^{1/\alpha - 1} 
    \end{align*}
    Otherwise, if $a' \neq a$, we have
    \begin{align*}
        |\wt{\ell}^{\mathrm{center}}_t(s,a)| &= |\pi_t(a'|s) \wt{\ell}^\Skip_t(s,a')| \\
        &\le \pi_t(a'|s) \frac{\tau_t(s,a')}{x_t(s,a')} \\
        &\le \pi_t(a'|s) C \cdot \sigma t^{1/\alpha} x_t(s,a')^{1/\alpha - 1} 
    \end{align*}
    Notice that $x_t(s,a') = x_t(s) \cdot \pi_t(a'|s)$ where $x_t(s) = \sum_{a''\in\gA} x_t(s,a'')$ is the marginal occupancy at state $s$. Therefore, we have
    \begin{align*}
        |\wt{\ell}^{\mathrm{center}}_t(s,a)| \le C \cdot \sigma t^{1/\alpha} \cdot \pi_t(a'|s)^{1/\alpha} x_t(s)^{1/\alpha - 1}.
    \end{align*}
    Notice that $1/\alpha - 1 \in [-1/2, 0)$, we have $x_t(s) \ge x_t(s,a)$, which implies
    \begin{align*}
        \pi_t(a'|s)^{1/\alpha} x_t(s)^{1/\alpha - 1} \le \pi_t(a'|s)^{1/\alpha} x_t(s,a)^{1/\alpha - 1} \le x_t(s,a)^{1/\alpha - 1}
    \end{align*}
    Therefore, we have
    \begin{align*}
        |\wt{\ell}^{\mathrm{center}}_t(s,a)| \le C \cdot \sigma t^{1/\alpha} \cdot x_t(s,a)^{1/\alpha - 1}.
    \end{align*}

    \textbf{Bounding $|\Delta V_t(s,a)|$. } Assume $s \in \gS_h$ for some $h$, then $\Delta V_t(s,a)$ can be rewritten as:
    \begin{align*}
        \Delta V_t(s,a) &= \sum_{s' \in \gS_{h+1}} \left( \sP[s'|s,a] - \sum_{a'\in\gA} \pi_t(a'|s) \sP[s'|s,a'] \right) \wt{V}_{t,h+1}(s').
    \end{align*}
    Notice that the weights $\left( \sP[s'|s,a] - \sum_{a'} \pi_t(a'|s) \sP[s'|s,a'] \right)$ sum to zero over $s'$. Expanding $\wt{V}_{t,h+1}(s')$ using \Cref{eq:def-shifted-V-recursive}, we get a telescoping sum:
    \begin{align}\label{eq:deltaV-expansion}
        \Delta V_t(s,a) = \sum_{h'=h+1}^{H} \sum_{s' \in \gS_{h'}, a' \in \gA} \omega_t^{h \to h'}(s,a; s', a') \cdot \wt{\ell}^\Skip_t(s', a'),
    \end{align}
    where $\omega_t^{h \to h'}(s,a; s', a')$ is the differential reachability weight from $(s,a)$ at layer $h$ to $(s', a')$ at layer $h'$, defined as:
    \[
        \omega_t^{h \to h'}(s,a; s', a') := \left( \sP_{\pi_t}^{s,a}(s_{h'} = s') - \sum_{a''} \pi_t(a'' | s)\sP_{\pi_t}^{s,a''}(s_{h'} = s') \right) \cdot \pi_t(a'|s'),
    \]
    where $\sP_{\pi_t}^{s,a}(s_{h'} = s')$ is the probability of reaching state $s'$ from state $s$ under policy $\pi_t$ starting from action $a$. We further have the following identity: for every $s' \in \gS_{h'}, a' \in \gA$, and $h' > h$,
    \begin{align*}
        x_t(s', a') = x_t(s') \cdot \pi_t(a'|s')
         = \sum_{s'' \in \gS_h, a''\in\gA} x_t(s'', a'') \cdot \sP_{\pi_t}^{s'', a''}(s_{h'} = s') \cdot \pi_t(a' | s').
    \end{align*}
    Recall the definition of $\omega_t^{h \to h'}(s,a; s', a')$, we have $|\omega_t^{h \to h'}(s,a; s', a')| \le 2\pi_t(a'|s')$. Therefore, we have
    \begin{align*}
        |\Delta V_t(s,a)| &\le \sum_{h'=h+1}^H \sum_{s'\in\gS_{h'}, a'\in\gA} |\omega_t^{h \to h'}(s,a; s', a')| \cdot |\wt{\ell}^\Skip_t(s', a')| \\
        &\le \sum_{h'=h+1}^H \sum_{s'\in\gS_{h'}, a'\in\gA} |\omega_t^{h \to h'}(s,a; s', a')| \cdot C \cdot \sigma t^{1/\alpha} x_t(s', a')^{1/\alpha - 1}.
    \end{align*}
    We need to bound
    \begin{align*}
        \sum_{h'=h+1}^H \sum_{s'\in\gS_{h'}, a'\in\gA} |\omega_t^{h \to h'}(s,a; s', a')| \cdot x_t(s', a')^{1/\alpha - 1}.
    \end{align*}
    By triangle inequality, and $u \mapsto u^{1/\alpha - 1}$ is decreasing on $u \in [0, 1]$, we have
    \begin{align*}
        &\quad |\omega_t^{h \to h'}(s,a; s', a')| \cdot x_t(s', a')^{1/\alpha - 1}\\
         &\le \pi_t(a'|s')\cdot \sP_{\pi_t}^{s,a}(s_{h'} = s') \cdot \left(x_t(s,a) \cdot \sP_{\pi_t}^{s,a}(s_{h'} = s')\cdot \pi_t(a'|s')\right)^{1/\alpha - 1} \\
        &\quad + \sum_{a'' \in \gA} \pi_t(a'|s') \cdot \pi_t(a''|s)\cdot \sP_{\pi_t}^{s,a''}(s_{h'} = s') \cdot \left(x_t(s,a'') \cdot \sP_{\pi_t}^{s,a''}(s_{h'} = s')\cdot \pi_t(a'|s')\right)^{1/\alpha - 1} \\
        &= x_t(s,a)^{1/\alpha - 1} \cdot \left(\pi_t(a'|s')\cdot \sP^{s,a}_{\pi_t}(s_{h'}=s')\right)^{1/\alpha} \\
        &\quad + \sum_{a'' \in \gA} \pi_t(a''|s) \cdot  x_t(s,a'')^{1/\alpha - 1} \cdot \left(\pi_t(a'|s')\cdot \sP^{s,a''}_{\pi_t}(s_{h'}=s')\right)^{1/\alpha} .
    \end{align*}
    Since $u\mapsto u^{1/\alpha}$ is increasing on $u \in [0, 1]$, we have,
    \begin{align*}
        |\omega_t^{h \to h'}(s,a; s', a')| \cdot x_t(s', a')^{1/\alpha - 1} &\le x_t(s,a)^{1/\alpha - 1} + \sum_{a'' \in \gA} \pi_t(a''|s) \cdot  x_t(s,a'')^{1/\alpha - 1} \\
        &\le x_t(s,a)^{1/\alpha - 1} + \sum_{a'' \in \gA} \pi_t(a''|s) \cdot  \left(x_t(s)\cdot \pi_t(a''|s)\right)^{1/\alpha - 1} \\
        &= x_t(s,a)^{1/\alpha - 1} + \sum_{a'' \in \gA} \pi_t(a''|s)^{1/\alpha} \cdot  x_t(s)^{1/\alpha - 1} \\
        &\le x_t(s,a)^{1/\alpha - 1} + \sum_{a'' \in \gA} \pi_t(a''|s)^{1/\alpha} \cdot  x_t(s, a)^{1/\alpha - 1} \\
        &\le (1 + A^{1-1/\alpha}) x_t(s,a)^{1/\alpha - 1}.
    \end{align*}
    The last inequality is derived by maximizing over $\pi_t(s)\in\Delta(\gA)$, and the maximum is achieved when $\pi_t(s)$ is a uniform distribution. Finally, we can bound $|\Delta V_t(s,a)|$ by:
    \begin{align*}
        |\Delta V_t(s,a)| &\le  C \cdot \sigma t^{1/\alpha} \sum_{h'=h+1}^H \sum_{s'\in\gS_{h'}, a'\in\gA} |\omega_t^{h \to h'}(s,a; s', a')| \cdot x_t(s', a')^{1/\alpha - 1} \\
        &\le HSA(1+A^{1-1/\alpha}) \cdot C \cdot \sigma t^{1/\alpha}x_t(s,a)^{1/\alpha - 1}.
    \end{align*}

    Therefore, we can bound the first order moments of the shifted loss by:
    \begin{align*}
        |\wt{\ell}^{\Shift}_t(s,a)| &\le |\wt{\ell}^{\mathrm{center}}_t(s,a)| + |\Delta V_t(s,a)| \\
        &\le C \cdot \sigma t^{1/\alpha} \cdot x_t(s,a)^{1/\alpha - 1} + HSA(1+A^{1-1/\alpha}) \cdot C \cdot \sigma t^{1/\alpha}x_t(s,a)^{1/\alpha - 1} \\
        &\le (1 + HSA(1+A^{1-1/\alpha})) \cdot C \cdot \sigma t^{1/\alpha} \cdot x_t(s,a)^{1/\alpha - 1}.
    \end{align*}
\end{proof}

We highlight that this decomposition is crucial for the analysis of heavy-tailed shifted loss $\wt{\ell}^{\Shift}_t(s,a)$. This technique could be of independent interest.

Now we analyze the second-order moments of the shifted loss. Following the analysis in \citet[Lemma A.1.3]{jin2021best}, we have the lemma:
\begin{lemma}[Second-order Moments of Shifted Loss]\label{lem:shifted-loss-second-moment-formal}
    We can bound the second-order moments of the shifted loss by
    \begin{equation}
        \E[\wt{\ell}^{\Shift}_t(s,a)^2\mid \gF_{t-1}] \le 4H^2 (1 - \pi_t(a|s))\cdot C^{2-\alpha} \sigma^2 t^{2/\alpha - 1} x_t(s,a)^{2/\alpha - 2}.
    \end{equation}
\end{lemma}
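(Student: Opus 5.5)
We follow the route of \citet[Lemma A.1.3]{jin2021best}, replacing the bounded-loss estimates by a conditional moment bound on the skipped loss. The starting point is the single-visit moment computation. Conditioned on $\gF_{t-1}$, the quantities $x_t$ and $\pi_t$ are fixed, and each $(s',a')$ is visited at most once per episode. Since $|\ell^\Skip_t|\le\tau_t$ and $\E|\ell_t|^\alpha\le\sigma^\alpha$, we get
\[
\E\big[\wt\ell^\Skip_t(s',a')^2\mid\gF_{t-1}\big]=\frac{\E[\ell^\Skip_t(s',a')^2]}{x_t(s',a')}\le\frac{\tau_t(s',a')^{2-\alpha}\sigma^\alpha}{x_t(s',a')}=C^{2-\alpha}\sigma^2t^{\frac{2}{\alpha}-1}x_t(s',a')^{\frac{2}{\alpha}-2}.
\]
Every term in the target inequality has exactly this form, so the task reduces to carrying this estimate through the shifted-loss structure.

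We use a cleaner variant of the decomposition \eqref{eq:shifted-decomp}, as in Jin et al. Write $\wt Q_t(s,a)=\wt\ell^\Skip_t(s,a)+\sum_{s'}\sP[s'|s,a]\wt V_{t,h+1}(s')$ and $\wt\ell^\Shift_t(s,a)=\wt Q_t(s,a)-\sum_{b}\pi_t(b|s)\wt Q_t(s,b)$. Then
\[
\wt\ell^\Shift_t(s,a)=(1-\pi_t(a|s))\wt Q_t(s,a)-\sum_{b\ne a}\pi_t(b|s)\wt Q_t(s,b).
\]
Applying $(u-v)^2\le 2u^2+2v^2$ and Jensen's inequality to the convex combination with weights $\pi_t(b|s)/(1-\pi_t(a|s))$, $b\ne a$, yields
\[
\wt\ell^\Shift_t(s,a)^2\le2(1-\pi_t(a|s))^2\wt Q_t(s,a)^2+2(1-\pi_t(a|s))\sum_{b\ne a}\pi_t(b|s)\wt Q_t(s,b)^2.
\]
This isolates the factor $(1-\pi_t(a|s))$. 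It then suffices to prove
\[
\E[\wt Q_t(s,b)^2\mid\gF_{t-1}]\le H^2C^{2-\alpha}\sigma^2t^{\frac{2}{\alpha}-1}x_t(s,b)^{\frac{2}{\alpha}-2}
\]
for every $b$, up to the constant. With this bound in hand, the first term contributes the target directly (since $(1-\pi)^2\le 1-\pi$). In the second term the weight satisfies $\pi_t(b|s)x_t(s,b)^{2/\alpha-2}=x_t(s)^{2/\alpha-2}\pi_t(b|s)^{2/\alpha-1}\le x_t(s)^{2/\alpha-2}\le x_t(s,a)^{2/\alpha-2}$, and summing over $b$ costs at most a factor $\sum_b\pi_t(b|s)\le 1$ when this is organized correctly. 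Both pieces together give the constant $4H^2$.

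The main obstacle is the bound on $\E[\wt Q_t(s,b)^2]$. First expand $\wt Q_t(s,b)=\sum_{h'\ge h}\sum_{(s',a')}p^{s,b}(s',a')\wt\ell^\Skip_t(s',a')$, where $p^{s,b}$ is the probability of reaching $(s',a')$ from $(s,b)$ under $\pi_t$. Cauchy--Schwarz over the $H$ layers gives a factor $H$. Within a layer, at most one pair is visited, so the cross terms vanish and
\[
\E\Big[\Big(\sum_{(s',a')\in\gS_{h'}\times\gA}p\,\wt\ell^\Skip\Big)^2\Big]=\sum_{(s',a')}p^2\,\E[(\wt\ell^\Skip)^2].
\]
Using $x_t(s',a')\ge x_t(s,b)\,p^{s,b}(s',a')$, each summand is bounded by
\[
C^{2-\alpha}\sigma^2t^{\frac{2}{\alpha}-1}\,p^{\frac{2}{\alpha}}\,x_t(s,b)^{\frac{2}{\alpha}-2}\le C^{2-\alpha}\sigma^2t^{\frac{2}{\alpha}-1}\,p\,x_t(s,b)^{\frac{2}{\alpha}-2},
\]
and $\sum_{(s',a')}p^{s,b}(s',a')=1$ within the layer. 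Summing over the $H$ layers contributes another factor $H$, for a total of $H^2$. The delicate point is ensuring that the exponent $2/\alpha\ge1$ lets $p^{2/\alpha}\le p$ absorb the reaching probabilities; this is exactly where the heavy-tail scaling of $\tau_t$ is essential.
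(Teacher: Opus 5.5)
\textbf{There is a genuine gap, in the Jensen step on the second term.} Your conditional moment computation, the decomposition $\wt\ell^{\Shift}_t(s,a)=(1-\pi_t(a\mid s))\wt Q_t(s,a)-\sum_{b\ne a}\pi_t(b\mid s)\wt Q_t(s,b)$, and your bound on $\E[\wt Q_t(s,b)^2\mid\gF_{t-1}]$ are correct and match the paper. The problem comes after them. Write $K:=C^{2-\alpha}\sigma^2t^{2/\alpha-1}$. After Jensen and the per-$b$ bound you are left with
\[
2(1-\pi_t(a\mid s))H^2K\sum_{b\ne a}\pi_t(b\mid s)\,x_t(s,b)^{\frac{2}{\alpha}-2}
=2(1-\pi_t(a\mid s))H^2K\,x_t(s)^{\frac{2}{\alpha}-2}\sum_{b\ne a}\pi_t(b\mid s)^{\frac{2}{\alpha}-1}.
\]
The last sum is not controlled by $\pi_t(a\mid s)^{2/\alpha-2}$. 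The single factor $\pi_t(b\mid s)$ in the weight has already been spent cancelling part of $x_t(s,b)^{2/\alpha-2}$, so nothing is left to make the sum over $b$ cost only $\sum_b\pi_t(b\mid s)$. The ratio to the target is $\pi_t(a\mid s)^{2-2/\alpha}\sum_{b\ne a}\pi_t(b\mid s)^{2/\alpha-1}$, which can be of order $A^{2-2/\alpha}$ (order $A$ at $\alpha=2$ as $\pi_t(a\mid s)\to 1$).

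\textbf{Why no later bookkeeping can fix it.} The loss happens at Jensen itself. Take $H=S=1$, $\alpha=2$, and losses $\pm\sigma$ with $\tau_t(s,b)\ge\sigma$. Your post-Jensen term is then $2(1-\pi_t(a\mid s))\sum_{b\ne a}\E[\ell^{\Skip}_t(s,b)^2]=2(1-\pi_t(a\mid s))(A-1)\sigma^2$. The target is $4(1-\pi_t(a\mid s))\sigma^2/\pi_t(a\mid s)$, which is smaller once $\pi_t(a\mid s)$ is near $1$ and $A\ge 4$. The true second moment is only $\sum_{b\ne a}\pi_t(b\mid s)^2\E[\wt\ell^{\Skip}_t(s,b)^2]=(1-\pi_t(a\mid s))\sigma^2$.

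\textbf{What is missing and how the paper repairs it.} Jensen discards the disjointness of the visit indicators of the distinct pairs $(s,b)$, and of their downstream pairs within each layer. That disjointness is what produces $\pi_t(b\mid s)^2$ rather than $(1-\pi_t(a\mid s))\pi_t(b\mid s)$. The paper keeps it by not splitting the second term into separate $\wt Q_t(s,b)$'s, and instead writes it as one linear combination:
\[
\sum_{b\ne a}\pi_t(b\mid s)\wt Q_t(s,b)=\sum_{h'\ge h}\sum_{(s',a')\in\gS_{h'}\times\gA}w_t(s',a')\,\wt\ell^{\Skip}_t(s',a'),\qquad w_t(s',a'):=\sum_{b\ne a}\pi_t(b\mid s)\,p^{s,b}(s',a').
\]
It then reruns your layerwise Cauchy--Schwarz and within-layer orthogonality on this single combination. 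The key facts are:
\begin{itemize}
\item $x_t(s',a')\ge\sum_{b\ne a}x_t(s,b)\,p^{s,b}(s',a')=x_t(s)\,w_t(s',a')$ and $w_t\le 1$, so each summand is at most $K\,x_t(s)^{2/\alpha-2}w_t(s',a')^{2/\alpha}\le K\,x_t(s)^{2/\alpha-2}w_t(s',a')$.
\item $\sum_{(s',a')\in\gS_{h'}\times\gA}w_t(s',a')=1-\pi_t(a\mid s)$ in every layer.
\end{itemize}
This gives $H^2(1-\pi_t(a\mid s))K\,x_t(s)^{2/\alpha-2}\le H^2(1-\pi_t(a\mid s))K\,x_t(s,a)^{2/\alpha-2}$. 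Adding your correct first term yields the stated constant $4H^2$.
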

\begin{proof}
    We utilize the definition of the shifted loss in \Cref{eq:def-shifted-loss}. We have $\wt{\ell}^{\Shift}_t(s,a) = \wt{Q}_t(s,a) - \wt{V}_t(s) = (1 - \pi_t(a|s)) \wt{Q}_t(s,a) - \sum_{b\neq a} \pi_t(b|s) \wt{Q}_t(s,b)$. 
    
    Inspired by \citet[Lemma A.1.3]{jin2021best}, we first bound the second-order moments of the shifted Q function. 
    
    Fixed state-action pair $(s,a)$, and denote $h$ to be the index of the layer that $(s,a)$ is at, i.e., $s \in \gS_h$. Define $q_t(s',a'|s,a)$ is the probability of visiting $(s',a')$ after taking action $a$ from $s$ and following the policy $\pi_t$. We have
    \begin{align*}
        \E\left[\wt{Q}_t(s,a)^2\mid \gF_{t-1}\right] &= \E\left[\left(\sum_{h'\ge h} \sum_{s'\in\gS_{h'}, a'\in\gA} q_t(s',a'|s,a) \wt{\ell}^\Skip_t(s',a')\right)^2\mid \gF_{t-1}\right] \\
        &\le H \cdot \sum_{h'\ge h} \sum_{s'\in\gS_{h'}, a'\in\gA} q_t(s',a'|s,a)^2 \E\left[\wt{\ell}^\Skip_t(s',a')^2\cdot \mathbbm{1}[(s',a') \text{ is visited}] \mid \gF_{t-1}\right] \\
        &= H \cdot \sum_{h'\ge h} \sum_{s'\in\gS_{h'}, a'\in\gA} \frac{q_t(s',a'|s,a)^2}{x_t(s',a')^2} \E\left[{\ell}^\Skip_t(s',a')^2 \cdot \mathbbm{1}[(s',a') \text{ is visited}]\mid \gF_{t-1}\right] \\
        &\le H \cdot \sum_{h'\ge h} \sum_{s'\in\gS_{h'}, a'\in\gA} \frac{q_t(s',a'|s,a)^2}{x_t(s',a')} \E\left[|{\ell}_t(s',a')|^\alpha \tau_t(s',a')^{2-\alpha} \mid \gF_{t-1}\right],
    \end{align*}
    where the first inequality is due to Cauchy-Schwarz across layers (giving the factor $H$ from at most $H$ layers) and the orthogonality of single-layer visit indicators: within each layer $h'$, $\mathbbm{1}[(s',a') \text{ is visited}] \cdot \mathbbm{1}[(s'',a'') \text{ is visited}] =0$ for distinct $(s',a') \neq (s'',a'')$ at layer $h'$ (only one state-action is visited per layer). Notice that we can bound the second-order moments of the skipped loss by:
    \begin{align*}
        \E\left[|{\ell}_t(s',a')|^\alpha \tau_t(s',a')^{2-\alpha} \mid \gF_{t-1}\right] \le C^{2-\alpha} \sigma^2 t^{2/\alpha - 1} x_t(s',a')^{2/\alpha - 1}.
    \end{align*}
    Therefore, we get
    \begin{align*}
        \E\left[\wt{Q}_t(s,a)^2\mid \gF_{t-1}\right] \le H \cdot C^{2-\alpha} \sigma^2 t^{2/\alpha - 1} \sum_{h'\ge h} \sum_{s'\in\gS_{h'}, a'\in\gA} q_t(s',a'|s,a)^2 x_t(s',a')^{2/\alpha - 2}.
    \end{align*}
    Notice that 
    \begin{align*}
        x_t(s',a') = \sum_{s''\in\gS_h, a''} x_t(s'', a'')\cdot q_t(s',a'|s'', a'') \ge x_t(s,a)q_t(s',a'|s,a),
    \end{align*}
    and $u \mapsto u^{2/\alpha - 2}$ is decreasing on $u \in [0, 1]$ since $2/\alpha -2 \in [-1, 0)$, we have
    \begin{align*}
        \E\left[\wt{Q}_t(s,a)^2\mid \gF_{t-1}\right] &\le H \cdot C^{2-\alpha} \sigma^2 t^{2/\alpha - 1} \sum_{h'\ge h} \sum_{s'\in\gS_{h'}, a'\in\gA} q_t(s',a'|s,a)^{2/\alpha} x_t(s,a)^{2/\alpha - 2} \\
        &\le H^2 \cdot C^{2-\alpha} \sigma^2 t^{2/\alpha - 1} x_t(s,a)^{2/\alpha - 2},
    \end{align*}
    where the last step uses $\sum_{s' \in \gS_{h'}, a' \in \gA} q_t(s',a'|s,a)^{2/\alpha} \le \sum_{s', a'} q_t(s',a'|s,a) = 1$ per layer (since $q_t \le 1$ and $2/\alpha \ge 1$ for $\alpha \in (1,2]$ implies $q^{2/\alpha} \le q$), and summing over $H - h + 1 \le H$ layers from $h$ to $H$ gives the additional $H$ factor.
    For the second-order moments of the shifted value function, we have
    \begin{align*}
        &\quad \E\left[\left(\sum_{b\neq a} \pi_t(b|s) \wt{Q}_t(s,b)\right)^2\middle|  \gF_{t-1}\right]
        = \E\left[ \left( \sum_{h'\ge h}\sum_{s'\in\gS_{h'}, a'\in\gA} \left(\sum_{b\neq a} \pi_t(b|s) q_t(s',a'|s,b)\right) \cdot \wt{\ell}^\Skip_t(s',a')\right)^2 \mid  \gF_{t-1}\right] \\
        &\le H \cdot \sum_{h'\ge h} \sum_{s'\in\gS_{h'}, a'\in\gA} \left(\sum_{b\neq a} \pi_t(b|s) q_t(s',a'|s,b)\right)^2 \E\left[\wt{\ell}^\Skip_t(s',a')^2 \cdot \mathbbm{1}[(s',a') \text{ is visited}] \mid \gF_{t-1}\right],
    \end{align*}
    where this inequality follows from the same Cauchy-Schwarz across layers and per-layer indicator orthogonality argument as above. Consider the term $\E\left[\wt{\ell}^\Skip_t(s',a')^2 \cdot \mathbbm{1}[(s',a') \text{ is visited}] \mid \gF_{t-1}\right]$, we have
    \begin{align*}
        \E\left[\wt{\ell}^\Skip_t(s',a')^2 \cdot \mathbbm{1}[(s',a') \text{ is visited}] \mid \gF_{t-1}\right] \le x_t(s',a')^{-1} \cdot C^{2-\alpha} \sigma^2 t^{2/\alpha - 1} x_t(s',a')^{2/\alpha - 1}.
    \end{align*}
    Therefore, we have
    \begin{align*}
        &\quad \E\left[\left(\sum_{b\neq a} \pi_t(b|s) \wt{Q}_t(s,b)\right)^2\mid \gF_{t-1}\right] \\
        &\le H \cdot C^{2-\alpha} \sigma^2 t^{2/\alpha - 1} \sum_{h'\ge h} \sum_{s'\in\gS_{h'}, a'\in\gA} \left(\sum_{b\neq a} \pi_t(b|s) q_t(s',a'|s,b)\right)^2 x_t(s',a')^{2/\alpha - 2}
    \end{align*}
    By the decomposition of $x_t(s',a')$, we have
    \begin{align*}
        &\quad \sum_{h'\ge h} \sum_{s'\in\gS_{h'}, a'\in\gA} \left(\sum_{b\neq a} \pi_t(b|s) q_t(s',a'|s,b)\right)^2 x_t(s',a')^{2/\alpha - 2} \\
        &\le \sum_{h'\ge h} \sum_{s'\in\gS_{h'}, a'\in\gA} \left(\sum_{b\neq a} \pi_t(b|s) q_t(s',a'|s,b)\right)^2 \cdot \left(\sum_{b\neq a} x_t(s,b) q_t(s',a'|s,b)\right)^{2/\alpha - 2} \\
        &= \frac{1}{x_t(s)} \sum_{h'\ge h} \sum_{s'\in\gS_{h'}, a'\in\gA} \left(\sum_{b\neq a} x_t(s,b) q_t(s',a'|s,b)\right)^{2/\alpha - 1} \cdot \left(\sum_{b\neq a} \pi_t(b|s) q_t(s',a'|s,b)\right) \\
        &\le \frac{1}{x_t(s)} \sum_{h'\ge h} \sum_{s'\in\gS_{h'}, a'\in\gA} \left(\sum_{b\neq a} x_t(s,b)\right)^{2/\alpha - 1}\cdot \left(\sum_{b\neq a} \pi_t(b|s) q_t(s',a'|s,b)\right) \\
        &\le \frac{1}{x_t(s)} \sum_{h'\ge h} \sum_{s'\in\gS_{h'}, a'\in\gA} (x_t(s))^{2/\alpha - 1} \cdot \left(\sum_{b\neq a} \pi_t(b|s) q_t(s',a'|s,b) \right) \\
        &= \sum_{h'\ge h} \sum_{s'\in\gS_{h'}, a'\in\gA} x_t(s)^{2/\alpha - 2} \cdot \left(\sum_{b\neq a} \pi_t(b|s) q_t(s',a'|s,b) \right) \\
        &\le H \cdot x_t(s,a)^{2/\alpha - 2} \cdot \sum_{b \neq a} \pi_t(b|s) \\
        &= H \cdot (1 - \pi_t(a|s)) x_t(s,a)^{2/\alpha - 2}.
    \end{align*}
    Finally, we have
    \begin{align*}
        \E\left[\left(\sum_{b\neq a} \pi_t(b|s) \wt{Q}_t(s,b)\right)^2\mid \gF_{t-1}\right] \le (1 - \pi_t(a|s)) \cdot H^2 C^{2-\alpha} \sigma^2 t^{2/\alpha - 1} x_t(s,a)^{2/\alpha - 2}.
    \end{align*}
    This induced the bound on the second-order moments of the shifted loss:
    \begin{align*}
        \E\left[\wt{\ell}^{\Shift}_t(s,a)^2\mid \gF_{t-1}\right] &\le 2\E\left[(1 - \pi_t(a|s))^2\wt{Q}_t(s,a)^2\mid \gF_{t-1}\right] + 2\E\left[\left( \sum_{b\neq a} \pi_t(b|s) \wt{Q}_t(s,b)\right)^2\mid \gF_{t-1}\right] \\
        &\le 4H^2 (1 - \pi_t(a|s))\cdot C^{2-\alpha} \sigma^2 t^{2/\alpha - 1} x_t(s,a)^{2/\alpha - 2}.
    \end{align*}

\end{proof}

\subsection{Bounding Stability Term}

To begin with, we first state the local stability of Tsallis entropy, which is a famous result for $\alpha'$-Tsallis entropy, introduced in \citet[Lemma 9]{ito2024adaptive}. We rewrite it as follows for completeness.
\begin{lemma}[Local stability of Tsallis entropy]\label{lem:tsallis-stability}
    Consider the $\alpha'$-Tsallis entropy $\psi(\vx) = -\frac{1}{\alpha'} \sum_{i} (x_i^{\alpha'} - x_i)$ defined on the simplex, with $\alpha' \in [1/2, 1)$. 
    Let $\vx$ be a point in the simplex and $\vg$ be the vector of the gradient of $\psi(\cdot)$. 
    Let $\tilde{\vy}$ be the unconstrained update defined by $\nabla \psi(\tilde{\vy}) = \nabla \psi(\vx) - \vg$.
    If the gradient is small enough such that $|g_i| \le \frac{1-\alpha'}{4} x_i^{\alpha'-1}$ for all indices $i$, then
    \begin{equation}\label{eq:tsallis-stability-core}
        \langle \vg, \vx - \tilde{\vy} \rangle - D_\psi(\tilde{\vy}, \vx) \le \frac{2}{1-\alpha'} \sum_{i} x_i^{2-\alpha'} g_i^2.
    \end{equation}
\end{lemma}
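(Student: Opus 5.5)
The bound follows by comparing the unconstrained Tsallis update with a second-order Taylor expansion, after checking that the iterate moves by at most a constant factor in every coordinate. The argument is coordinatewise, and the simplex is not used except to say that $\vx$ has positive entries. Write $\psi'(u) = -u^{\alpha'-1} + 1/\alpha'\cdot$const, so up to the linear part, $\psi'(u) = -\frac{1}{\alpha'}(\alpha' u^{\alpha'-1} - 1)$. The gradient map is separable and strictly increasing because $\psi''(u) = (1-\alpha') u^{\alpha'-2} > 0$. Hence $\tilde{\vy}$ is well-defined coordinatewise by
\[
\tilde y_i^{\alpha'-1} = x_i^{\alpha'-1} + g_i .
\]
The $-x_i$ term in $\psi$ only shifts the gradient by a constant, which cancels in the update. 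First we check $\tilde y_i>0$. Since $|g_i| \le \frac{1-\alpha'}{4}x_i^{\alpha'-1} \le \frac18 x_i^{\alpha'-1}$, the right side lies in $[\tfrac78,\tfrac98]\,x_i^{\alpha'-1}$. This gives $\tilde y_i \in [c_1 x_i, c_2 x_i]$, where $c_1=(9/8)^{1/(\alpha'-1)}$ and $c_2=(7/8)^{1/(\alpha'-1)}$. Because $1/(1-\alpha')\le 2$ for $\alpha'\in[1/2,1)$... wait, $1/(1-\alpha')$ is unbounded as $\alpha'\to1$.

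This is where the specific form of the threshold matters, and I expect it to be the main obstacle. A crude bound on $|g_i|$ loses control, so I will use the exact threshold $\frac{1-\alpha'}{4}$. Write $\tilde y_i = x_i(1+\epsilon_i)^{-1/(1-\alpha')}$, where $\epsilon_i = g_i x_i^{1-\alpha'}$ satisfies $|\epsilon_i|\le (1-\alpha')/4$. Then
\[
(1+\epsilon_i)^{-1/(1-\alpha')}\in\bigl[(1+\tfrac{1-\alpha'}{4})^{-1/(1-\alpha')},\,(1-\tfrac{1-\alpha'}{4})^{-1/(1-\alpha')}\bigr]\subseteq[e^{-1/4},\,e^{1/2}],
\]
using $\log(1+z)\le z$ and $-\log(1-z)\le 2z$ for $z\le 1/8$. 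So every point $\xi$ between $x_i$ and $\tilde y_i$ satisfies $\xi\in[e^{-1/4}x_i,\,e^{1/2}x_i]$.

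Next comes the key identity. Using $\nabla\psi(\vx)-\nabla\psi(\tilde{\vy})=\vg$, the standard three-point computation gives
\[
\langle \vg,\vx-\tilde{\vy}\rangle - D_\psi(\tilde{\vy},\vx) = D_\psi(\vx,\tilde{\vy}) .
\]
Then $D_\psi(\vx,\tilde{\vy})$ is the Bregman divergence of $\psi^*$ between the dual points. Equivalently, it is $\sum_i \frac{g_i^2}{2\psi''(\zeta_i)}$ for some $\zeta_i$ between $x_i$ and $\tilde y_i$, obtained by expanding the conjugate, whose second derivative is $1/\psi''$. A cleaner route is a one-dimensional mean-value argument on $\phi_i(v)=\psi_i^*(v)$: we have $D_{\psi_i}(x_i,\tilde y_i)=\phi_i(\theta_i-g_i)-\phi_i(\theta_i)+g_i\phi_i'(\theta_i)=\tfrac12 g_i^2\phi_i''(\bar\theta)$ with $\phi_i''=1/\psi''(\zeta_i)$, where $\zeta_i$ lies between $x_i$ and $\tilde y_i$ by monotonicity.

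Finally, $1/\psi''(\zeta_i)=\zeta_i^{2-\alpha'}/(1-\alpha') \le e^{(2-\alpha')/2}x_i^{2-\alpha'}/(1-\alpha')\le e^{3/4}x_i^{2-\alpha'}/(1-\alpha')$. Therefore
\[
D_\psi(\vx,\tilde{\vy})\le \frac{e^{3/4}}{2(1-\alpha')}\sum_i x_i^{2-\alpha'}g_i^2\le \frac{2}{1-\alpha'}\sum_i x_i^{2-\alpha'}g_i^2,
\]
since $e^{3/4}/2<2$. Summing over coordinates gives \eqref{eq:tsallis-stability-core}.
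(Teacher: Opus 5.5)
Your proof is correct, but it takes a different route from the paper's. The paper stays in the primal. It Taylor-expands $D_\psi(\tilde{\vy},\vx)$ around $\vx$ and uses the closeness claim $|\tilde y_i-x_i|\le\frac12 x_i$ to lower-bound the Hessian at the intermediate point by $\frac{1-\alpha'}{2}x_i^{\alpha'-2}$. It then absorbs the linear term $\langle\vg,\vx-\tilde{\vy}\rangle$ with Young's inequality, using weight $\lambda_i=\frac{1-\alpha'}{2}x_i^{\alpha'-2}$, so that the quadratic terms cancel and the constant is $\frac{1}{1-\alpha'}$.

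You instead use the exact identity $\langle\vg,\vx-\tilde{\vy}\rangle-D_\psi(\tilde{\vy},\vx)=D_\psi(\vx,\tilde{\vy})$. You then bound the reverse divergence by a one-dimensional Taylor expansion of the conjugate, with $(\psi_i^*)''=1/\psi_i''$ evaluated at a point between $x_i$ and $\tilde y_i$, which gives $\frac{e^{3/4}}{2(1-\alpha')}$.

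Your route replaces the paper's two estimates (Hessian lower bound plus Young) by a single curvature comparison on a quantity that is exactly a divergence. The paper's route avoids conjugates entirely. Both arguments need only the upper end of the multiplicative window $\tilde y_i\lesssim x_i$, because $\psi''$ is decreasing. You verify that window explicitly as $[e^{-1/4},e^{1/2}]$, via $\tilde y_i=x_i(1+\epsilon_i)^{-1/(1-\alpha')}$ with $|\epsilon_i|\le(1-\alpha')/4$; the paper only asserts that ``one can verify'' $|\tilde y_i-x_i|\le\frac12x_i$.

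One cleanup: delete the abandoned sentence defining $c_1,c_2$. As you noticed, the crude bound $|g_i|\le\frac18x_i^{\alpha'-1}$ gives a window that degenerates as $\alpha'\to1$, so the $(1-\alpha')$-scaled threshold is genuinely what makes the argument work.
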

\begin{proof}    
    Since $\nabla \psi(\cdot)_i$ is the map $u \mapsto \frac{1}{\alpha'} - u^{\alpha'-1}$, the optimality condition $\nabla \psi(\tilde{\vy})_i = \nabla \psi(\vx)_i - g_i$ implies $\tilde{y}_i^{\alpha'-1} = x_i^{\alpha'-1} + g_i$.
    The condition $|g_i| \le \frac{1-\alpha'}{4} x_i^{\alpha'-1}$ implies
    \[
        \left(1 - \frac{1-\alpha'}{4}\right) x_i^{\alpha'-1} \le \tilde{y}_i^{\alpha'-1} \le \left(1 + \frac{1-\alpha'}{4}\right) x_i^{\alpha'-1}.
    \]
    Since $\alpha' < 1$, the function $u \mapsto u^{\alpha'-1}$ is strictly decreasing. Using elementary inequalities for the power function with exponents in $(-\infty, -2]$ (as implied by $\alpha' \in [1/2, 1)$), one can verify that the multiplicative bounds above imply $|\tilde{y}_i - x_i| \le \frac{1}{2} x_i$.

    By Taylor's theorem, $D_\psi(\tilde{\vy}, \vx) = \frac{1}{2} (\tilde{\vy} - \vx)^\top \nabla^2 \psi(\vz) (\tilde{\vy} - \vx)$ for some $\vz$ on the segment connecting $\vx$ and $\tilde{\vy}$.
    Since $|\tilde{y}_i - x_i| \le \frac{1}{2} x_i$, we have $z_i \le \frac{3}{2} x_i$. The Hessian is diagonal with entries $\nabla^2 \psi(\vz)_{ii} = (1-\alpha') z_i^{\alpha'-2}$. Since $\alpha' - 2 < 0$, we have $z_i^{\alpha'-2} \ge (\frac{3}{2})^{\alpha'-2} x_i^{\alpha'-2} \ge \frac{1}{2} x_i^{\alpha'-2}$.
    Thus, 
    \[
        D_\psi(\tilde{\vy}, \vx) \ge \frac{1-\alpha'}{4} \sum_i x_i^{\alpha'-2} (\tilde{y}_i - x_i)^2.
    \]
    Applying Young's inequality $ab \le \frac{a^2}{2\lambda} + \frac{\lambda b^2}{2}$ with $\lambda_i = \frac{1-\alpha'}{2} x_i^{\alpha'-2}$ to the term $\langle \vg, \vx - \tilde{\vy} \rangle$ yields \Cref{eq:tsallis-stability-core}.
\end{proof}

Equipped with the local stability of Tsallis entropy, we now bound the stability term via the following lemma.

\begin{lemma}\label{lem:stability-bound-local}
    The stability term is bounded as
    \[
        \texttt{stab}_t \le \frac{4\alpha^2}{\alpha-1} \eta_t \sum_{(s,a)\in\gS\times\gA} x_t(s,a)^{2-1/\alpha}\left((\wt{\ell}^{\Shift}_t(s,a) )^2 + (b_t^\Skip(s,a))^2 \right).
    \]
\end{lemma}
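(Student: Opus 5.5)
The plan is to reduce $\texttt{stab}_t$ to the local stability estimate \Cref{lem:tsallis-stability}, applied coordinatewise to the rescaled vector $\vg_t := \wt{\bm\ell}^{\Shift}_t - \vb^\Skip_t$.

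\textbf{Step 1 (drop the polytope constraint).} Write $F(\vy) := \langle \vx_t - \vy, \vg_t\rangle - D_{\Psi_t}(\vy,\vx_t)$. By definition, $\texttt{stab}_t = F(\vx_{t+1})$, and $\vx_{t+1} \in \gQ(\sP)$ has nonnegative entries. So I would bound
\[
\texttt{stab}_t \le \sup_{\vy \ge 0} F(\vy).
\]
The map $F$ is concave, because $D_{\Psi_t}(\cdot,\vx_t)$ is convex. Its stationarity condition is $\nabla\Psi_t(\tilde\vy) = \nabla\Psi_t(\vx_t) - \vg_t$, so whenever this has a positive solution $\tilde\vy$, the supremum equals $F(\tilde\vy)$. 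Step 3 will show that such a solution exists coordinatewise.

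\textbf{Step 2 (rescale to the normalized Tsallis entropy).} Let $\psi(\vx) = -\alpha\sum_i (x_i^{1/\alpha} - x_i)$; this is the entropy of \Cref{lem:tsallis-stability} with $\alpha' = 1/\alpha \in [1/2,1)$. Then $\Psi_t = \frac{1}{\alpha\eta_t}\psi + (\text{linear})$. Linear terms cancel in Bregman divergences, so $D_{\Psi_t} = \frac{1}{\alpha\eta_t} D_\psi$. Setting $\vg' := \alpha\eta_t \vg_t$, the point $\tilde\vy$ also satisfies $\nabla\psi(\tilde\vy) = \nabla\psi(\vx_t) - \vg'$, and
\[
F(\tilde\vy) = \frac{1}{\alpha\eta_t}\Big(\langle \vg', \vx_t - \tilde\vy\rangle - D_\psi(\tilde\vy,\vx_t)\Big).
\]
Suppose the condition of \Cref{lem:tsallis-stability} holds. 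Then $F(\tilde\vy) \le \frac{1}{\alpha\eta_t}\cdot\frac{2}{1-1/\alpha}\sum x_t^{2-1/\alpha}(g')^2$, which equals
\[
\frac{2\alpha^2}{\alpha-1}\,\eta_t\sum_{(s,a)} x_t(s,a)^{2-1/\alpha}\, g_t(s,a)^2 .
\]
Using $(u-v)^2 \le 2u^2 + 2v^2$ then gives exactly the constant $\frac{4\alpha^2}{\alpha-1}$ in the statement. The proof of \Cref{lem:tsallis-stability} is coordinatewise and never uses the simplex, so it applies on the orthant.

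\textbf{Step 3 (verify the gradient condition; main obstacle).} We need $\alpha\eta_t|g_t(s,a)| \le \frac{\alpha-1}{4\alpha}x_t(s,a)^{1/\alpha-1}$ for every $(s,a)$.

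\begin{itemize}
\item For the shifted-loss part, \Cref{lem:shifted-loss-uniform-bound-formal} gives $|\wt\ell^{\Shift}_t| \le K C\sigma t^{1/\alpha}x_t^{1/\alpha-1}$ with $K = 1+HSA+HSA^{2-1/\alpha}$. Since $\eta_t = \beta/(\sigma t^{1/\alpha})$, this part contributes at most $\alpha\beta K C\, x_t^{1/\alpha-1}$.
\item For the bonus part, $\alpha\eta_t b^\Skip_t = \alpha\beta C^{1-\alpha} t^{-1} x_t^{1/\alpha-1} \le \alpha\beta C^{1-\alpha} x_t^{1/\alpha-1}$.
\end{itemize}

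It therefore suffices that $\alpha\beta(KC + C^{1-\alpha}) \le \frac{\alpha-1}{4\alpha}$. With $C = (K/(\alpha-1))^{-1/\alpha}$, both $KC = K^{1-1/\alpha}(\alpha-1)^{1/\alpha}$ and $C^{1-\alpha} = (K/(\alpha-1))^{1-1/\alpha}$ are of order $K^{1-1/\alpha}$. The choice of $\beta$ in \Cref{eq:def-beta} contains the factor $K^{1/\alpha-1}$ to cancel this growth.

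I expect this constant bookkeeping to be the main obstacle. If the stated $\beta$ turns out to be slightly too large, I would first absorb the discrepancy by using $\alpha \le 2$ and $(\alpha-1)^{\cdot}\le 1$. Failing that, I would note that only a constant-factor adjustment of $\beta$ is needed, which does not affect the form of the lemma.

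Once the condition holds, the multiplicative control from \Cref{lem:tsallis-stability} ensures $\tilde y_i \in [\tfrac12 x_i, \tfrac32 x_i]$. In particular, $\tilde\vy$ exists and is positive, which closes Step 1.
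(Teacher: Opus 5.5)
Your proposal is correct and follows essentially the same route as the paper. Both rescale $\Psi_t$ to the normalized $1/\alpha$-Tsallis entropy, apply \Cref{lem:tsallis-stability} at the unconstrained maximizer with $\vg=\alpha\eta_t(\wt{\bm\ell}^{\Shift}_t-\vb^\Skip_t)$, check the gradient condition via \Cref{lem:shifted-loss-uniform-bound-formal} and the bonus definition, and finish with $(u-v)^2\le 2u^2+2v^2$. Your concavity argument for dropping the polytope constraint, and your remark that the local-stability lemma never uses the simplex, make explicit two points the paper only asserts.

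The constant check you flagged as the main obstacle closes exactly, so no fallback is needed. From your own computations, $KC+C^{1-\alpha}=\alpha(\alpha-1)^{1/\alpha-1}K^{1-1/\alpha}$. Multiplying by the $\beta$ of \Cref{eq:def-beta} gives $\alpha\beta(KC+C^{1-\alpha})=\frac{\alpha-1}{4\alpha}=\frac{1-\alpha'}{4}$, which is precisely the threshold in \Cref{lem:tsallis-stability}.
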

\begin{proof}
    Recall the regularizer $\Psi_t(\vx) = -\frac{1}{\eta_t} \sum_{(s,a)} x(s,a)^{1/\alpha}$ where $\alpha \in (1,2]$.
    
    Define $\alpha' := 1/\alpha \in [1/2, 1)$. The standard $\alpha'$-Tsallis entropy is $\psi(\vx) = -\frac{1}{\alpha'} \sum_{(s,a)} ( x(s,a)^{\alpha'} - x(s,a) )$.
    We have $\Psi_t(\vx) = \frac{1}{\alpha\eta_t} \psi(\vx) + \text{linear terms}$, which implies
    \[
        D_{\Psi_t}(\vx, \vy) = \frac{1}{\alpha\eta_t} D_\psi(\vx, \vy) = \frac{\alpha'}{\eta_t} D_\psi(\vx, \vy).
    \]
    
    We apply \Cref{lem:tsallis-stability} with $\vx=\vx_t$ and the scaled loss gradient $\vg = \alpha \eta_t (\wt{\bm\ell}_t^{\Shift} - \vb_t^\Skip)$.
    First, we verify the condition $|g(s,a)| \le \frac{1-\alpha'}{4} x_t(s,a)^{\alpha'-1}$.

    Here we use the uniform bound on the shifted loss \Cref{lem:shifted-loss-uniform-bound} and the definition of the skipping bonus $\vb_t^\Skip$:
    \begin{align*}
        |g(s,a)| &\le \alpha\eta_t \cdot \left(|\wt{\ell}_t^\Shift(s,a)| + |b_t^\Skip(s,a)|\right) \\
        &\le \alpha \eta_t \cdot (1 + HSA(1+A^{1-1/\alpha})) \cdot C \cdot \sigma t^{1/\alpha} \cdot x_t(s,a)^{1/\alpha - 1} + \alpha \eta_t \cdot C^{1-\alpha} \cdot \sigma t^{1/\alpha - 1} \cdot x_t(s,a)^{1/\alpha - 1} \\
        &= \alpha \cdot \beta \cdot (C + HSA(1+A^{1-1/\alpha})C + C^{1-\alpha}/t) \cdot x_t(s,a)^{1/\alpha - 1},
    \end{align*}
    Since we set $C = \left(\frac{(1+HSA+HSA^{2-1/\alpha})}{\alpha-1}\right)^{-1/\alpha}$ in \Cref{eq:C-bound}, denoting $R := 1 + HSA + HSA^{2-1/\alpha}$ so $C = ((\alpha-1)/R)^{1/\alpha}$, we have
    \begin{align*}
        &\quad \beta \cdot (C + HSA(1+A^{1-1/\alpha})C + C^{1-\alpha}/t) \\
        &\le \beta \cdot \left(\alpha (\alpha - 1)^{1/\alpha - 1}\cdot R^{1-1/\alpha}\right)\\
        &\le \frac{\alpha - 1}{4\alpha^2},
    \end{align*}
    by the definition of $\beta$ in \Cref{eq:def-beta}.
    Therefore, we get $|g(s,a)| \le \frac{1-\alpha'}{4} x_t(s,a)^{\alpha'-1}$.
    
    The FTRL stability term is $\texttt{stab}_t = \langle \wt{\bm\ell}_t^{\Shift} - \vb_t^\Skip, \vx_t - \vx_{t+1} \rangle - D_{\Psi_t}(\vx_{t+1}, \vx_t)$.
    Using the relation $D_{\Psi_t} = \frac{\alpha'}{\eta_t} D_\psi$, we write
    \[
        \texttt{stab}_t = \frac{\alpha'}{\eta_t} \left[ \left\langle \frac{\eta_t}{\alpha'} (\wt{\bm\ell}_t^{\Shift} - \vb_t^\Skip), \vx_t - \vx_{t+1} \right\rangle - D_\psi(\vx_{t+1}, \vx_t) \right].
    \]
    Let $F(\vy) := \langle \vg, \vx_t - \vy \rangle - D_\psi(\vy, \vx_t)$. The quantity in the brackets is $F(\vx_{t+1})$. If $\tilde{\vy}$ (the unconstrained update) maximizes $F$, we have $F(\vx_{t+1}) \le F(\tilde{\vy})$.
    Applying \Cref{lem:tsallis-stability}:
    \[
        \texttt{stab}_t \le \frac{\alpha'}{\eta_t} \cdot \frac{2}{1-\alpha'} \sum_{(s,a)} x_t(s,a)^{2-\alpha'}  \frac{\eta_t^2}{\alpha'^2}\left(\wt{\ell}_t^{\Shift}(s,a) - b_t^\Skip(s,a)\right)^2.
    \]
    
    Simplifying with $\alpha' = 1/\alpha$ and $\frac{1}{\alpha'(1-\alpha')} = \frac{\alpha^2}{\alpha-1}$. Notice that $(x - y)^2 \le 2x^2 + 2y^2$, we get
    \begin{align*}
        \texttt{stab}_t \le \frac{4\alpha^2}{\alpha-1} \eta_t \sum_{(s,a)\in\gS\times\gA} x_t(s,a)^{2-1/\alpha}\left((\wt{\ell}^{\Shift}_t(s,a) )^2 + (b_t^\Skip(s,a))^2 \right),
    \end{align*}
    which completes the proof.
\end{proof}

In the following statement, we separately control the second moment of $\wt{\ell}^\Shift_t(s,a)$ and $b_t^\Skip(s,a)$ weighted by $x_t(s,a)^{2-1/\alpha}$ and finally derive the control on the expectations of $\texttt{stab}_t$. The key challenge is to provide the summation over $(s,a) \in \gS\times\gA$ without the reference action $\ringpi(s)$. We denote

\begin{align}
    \texttt{loss}_t &:= \eta_t \sum_{(s,a)\in\gS\times\gA} x_t(s, a)^{2-1/\alpha} \cdot \left(\wt{\ell}^\Shift_t(s,a)\right)^2, \label{eq:def-loss-t} \\
    \texttt{bonus}_t &:= \eta_t \sum_{(s,a)\in\gS\times\gA} x_t(s,a)^{2-1/\alpha} \left(b_t^\Skip(s,a)\right)^2. \label{eq:def-bonus-t}
\end{align}

\subsubsection{Control on the Second Moment of the Shifted Loss}
\begin{lemma}\label{lem:exp-stab-loss}
    For the shifted loss part of the stability term $\texttt{stab}_t$, we have
    \begin{align*}
        \E\left[ \texttt{loss}_t \mid \gF_{t-1}\right]
        \le 8H^2 \beta C^{2 - \alpha} \cdot \sigma t^{\frac{1}{\alpha} - 1}\sum_{(s,a) \neq (s,\ringpi(s))} x_t(s,a)^{1/\alpha}.
    \end{align*}
\end{lemma}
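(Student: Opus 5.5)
The plan is to take the conditional expectation of $\texttt{loss}_t$ term by term, bound each summand with \Cref{lem:shifted-loss-second-moment-formal}, and then use the factor $(1-\pi_t(a|s))$ to remove the reference action. Since $\vx_t$, $\pi_t$ and $\eta_t$ are $\gF_{t-1}$-measurable, linearity gives
\begin{align*}
\E[\texttt{loss}_t\mid\gF_{t-1}] = \eta_t\sum_{(s,a)} x_t(s,a)^{2-1/\alpha}\,\E\bigl[\wt{\ell}^\Shift_t(s,a)^2\mid\gF_{t-1}\bigr].
\end{align*}
By \Cref{lem:shifted-loss-second-moment-formal}, each conditional second moment is at most $4H^2(1-\pi_t(a|s))C^{2-\alpha}\sigma^2 t^{2/\alpha-1}x_t(s,a)^{2/\alpha-2}$.

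Next I will simplify the exponents and the learning rate. The powers of $x_t(s,a)$ combine as $(2-1/\alpha)+(2/\alpha-2)=1/\alpha$. Substituting $\eta_t=\beta/(\sigma t^{1/\alpha})$ turns $\eta_t\sigma^2t^{2/\alpha-1}$ into $\beta\sigma t^{1/\alpha-1}$. This yields
\begin{align*}
\E[\texttt{loss}_t\mid\gF_{t-1}]\le 4H^2\beta C^{2-\alpha}\sigma t^{1/\alpha-1}\sum_{(s,a)}(1-\pi_t(a|s))\,x_t(s,a)^{1/\alpha}.
\end{align*}

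The remaining step, and the only nontrivial one, is to show that for every state $s$,
\begin{align*}
\sum_{a}(1-\pi_t(a|s))\,x_t(s,a)^{1/\alpha}\le 2\sum_{a\ne\ringpi(s)}x_t(s,a)^{1/\alpha}.
\end{align*}
This inequality produces the constant $8=2\cdot 4$ in the statement.
\begin{itemize}
\item For $a\neq\ringpi(s)$, I simply use $1-\pi_t(a|s)\le 1$.
\item For the reference action $\mathring a=\ringpi(s)$, write $p=\pi_t(\mathring a|s)$ and note $x_t(s,\mathring a)=p\,x_t(s)$. Then
\begin{align*}
(1-p)\,x_t(s,\mathring a)^{1/\alpha}\le (1-p)\,x_t(s)^{1/\alpha}\le (1-p)^{1/\alpha}x_t(s)^{1/\alpha}=\Bigl(\sum_{b\ne\mathring a}x_t(s,b)\Bigr)^{1/\alpha}\le\sum_{b\ne\mathring a}x_t(s,b)^{1/\alpha}.
\end{align*}
Here the second inequality uses $1-p\in[0,1]$ and $1/\alpha\le1$. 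The equality uses $(1-p)\,x_t(s)=\sum_{b\ne\mathring a}x_t(s,b)$. The last inequality is subadditivity of the concave map $u\mapsto u^{1/\alpha}$ with value $0$ at $0$.
\end{itemize}

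Adding the two cases over $a$ for each state gives the displayed per-state inequality. Summing over $s$ and inserting it into the previous bound gives the claim with constant $8H^2\beta C^{2-\alpha}$.
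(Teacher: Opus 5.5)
Your proposal is correct and follows the paper's proof essentially line by line. You condition on $\gF_{t-1}$, plug in \Cref{lem:shifted-loss-second-moment-formal}, simplify $\eta_t\sigma^2 t^{2/\alpha-1}x_t(s,a)^{2-1/\alpha}x_t(s,a)^{2/\alpha-2}$ to $\beta\sigma t^{1/\alpha-1}x_t(s,a)^{1/\alpha}$, and absorb the reference-action term into the suboptimal sum to obtain the factor $2$. The only cosmetic difference is in that last step: the paper writes $1-\pi_t(\ringpi(s)|s)=\sum_{b\neq\ringpi(s)}\pi_t(b|s)$ and bounds termwise via $\pi_t(b|s)\,x_t(s)^{1/\alpha}=\pi_t(b|s)^{1-1/\alpha}x_t(s,b)^{1/\alpha}\le x_t(s,b)^{1/\alpha}$, whereas you use $(1-p)\le(1-p)^{1/\alpha}$ and then subadditivity of $u\mapsto u^{1/\alpha}$. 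Both give the same bound.
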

\begin{proof}
    We analyze the stability term with the shifted loss $\wt{\ell}^{\Shift}_t$. The key insight is that the shifted loss has a special structure that eliminates the contribution from reference actions. We apply the bounds for second-order moment of shifted loss in \Cref{lem:shifted-loss-second-moment} and get
    \begin{align*}
        &\quad \E\left[ \eta_t\cdot \sum_{(s,a)\in\gS\times\gA} x_t(s,a)^{2-1/\alpha}(\wt{\ell}^{\Shift}_t(s,a))^2 \mid \gF_{t-1}\right] \\
        &= \eta_t \cdot \sum_{(s,a)\in\gS\times\gA} x_t(s,a)^{2-1/\alpha} \E\left[\left(\wt{\ell}^\Shift_t(s,a)\right)^2\mid\gF_{t-1}\right] \\
        &\leq 4H^2 \cdot \eta_t \cdot C^{2-\alpha} \cdot \sigma^2t^{2/\alpha - 1} \cdot  \sum_{(s,a)\in\gS\times \gA}x_t(s,a)^{2-1/\alpha} \cdot (1 - \pi_t(a|s))x_t(s,a)^{2/\alpha - 2} \\
        &\leq 4H^2 \cdot \beta C^{2-\alpha} \cdot \sigma t^{1/\alpha - 1} \cdot \sum_{(s,a)\in\gS\times\gA} (1 - \pi_t(a|s)) x_t(s,a)^{1/\alpha}.
    \end{align*}
    We note that for fixed $s$ and $\mathring{a} = \ringpi(s)$,
    \begin{align*}
        (1 - \pi_t(\mathring{a}|s)) x_t(s,\mathring{a})^{1/\alpha} &\le \sum_{a\neq \mathring{a}} \pi_t(a|s) x_t(s)^{1/\alpha} \\
        &\le \sum_{a\neq \mathring{a}}  x_t(s,a)^{1/\alpha} \cdot \pi_t(a|s)^{1-1/\alpha} \\
        &\le \sum_{a\neq \mathring{a}}  x_t(s,a)^{1/\alpha}.
    \end{align*}
    Therefore, we get the result
    \begin{align*}
        \E\left[ \eta_t\cdot \sum_{(s,a)\in\gS\times\gA} x_t(s,a)^{2-1/\alpha}(\wt{\ell}^{\Shift}_t(s,a))^2 \mid \gF_{t-1}\right]\le 8H^2 \cdot \beta C^{2-\alpha} \cdot \sigma t^{1/\alpha - 1} \cdot \sum_{(s,a) \neq (s,\ringpi(s))} x_t(s,a)^{1/\alpha}.
    \end{align*}

\end{proof}

\subsubsection{Control on the Second Moment of the Skipping Bonus}
\begin{lemma}\label{lem:exp-stab-bonus}
    For the skipping bonus part of the stability term $\texttt{stab}_t$, we have
    \begin{align*}
        \E\left[ \sum_{t=1}^T \texttt{bonus}_t \right]
        \le  2SA^{1-1/\alpha}\beta C^{2-2\alpha} \cdot \sigma \cdot (1+\log(T))
    \end{align*}
\end{lemma}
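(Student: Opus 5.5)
The plan is a direct computation: substitute the definitions of $\eta_t$ and $b_t^\Skip$ into $\texttt{bonus}_t$, bound the resulting sum over state-action pairs deterministically, and then sum over $t$. No expectation argument is needed, since every quantity involved is $\gF_{t-1}$-measurable.

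\textbf{Step 1: substitute the definitions.} With $\eta_t = \beta/(\sigma t^{1/\alpha})$ and $b_t^\Skip(s,a)^2 = C^{2-2\alpha}\sigma^2 t^{2/\alpha-2}x_t(s,a)^{2/\alpha-2}$, each summand becomes
\begin{align*}
\eta_t\, x_t(s,a)^{2-1/\alpha}\, b_t^\Skip(s,a)^2 = \beta C^{2-2\alpha}\sigma\, t^{1/\alpha-2}\, x_t(s,a)^{1/\alpha},
\end{align*}
since the exponents combine as $(2-1/\alpha)+(2/\alpha-2)=1/\alpha$. Hence
\begin{align*}
\texttt{bonus}_t = \beta C^{2-2\alpha}\sigma\, t^{1/\alpha-2}\sum_{(s,a)} x_t(s,a)^{1/\alpha}.
\end{align*}

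\textbf{Step 2: bound the occupancy sum.} Split the sum over $(s,a)$ by layers $h\in[H]$. Within a layer, $\sum_{s\in\gS_h,a}x_t(s,a)=1$. By Hölder's inequality (concavity of $u\mapsto u^{1/\alpha}$),
\begin{align*}
\sum_{s\in\gS_h,a}x_t(s,a)^{1/\alpha}\le (|\gS_h|A)^{1-1/\alpha}.
\end{align*}
To reach the stated $SA^{1-1/\alpha}$ rather than $H^{1/\alpha}(SA)^{1-1/\alpha}$, I would apply Hölder per state instead:
\begin{align*}
\sum_a x_t(s,a)^{1/\alpha}\le A^{1-1/\alpha}x_t(s)^{1/\alpha}\le A^{1-1/\alpha}.
\end{align*}
Summing over the $S$ states gives $\sum_{(s,a)}x_t(s,a)^{1/\alpha}\le SA^{1-1/\alpha}$. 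This is crude but matches the claim.

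\textbf{Step 3: sum over $t$.} Because $\alpha\in(1,2]$, the exponent satisfies $1/\alpha-2\le -1$, so $t^{1/\alpha-2}\le t^{-1}$. Therefore
\begin{align*}
\sum_{t=1}^T t^{1/\alpha-2}\le \sum_{t=1}^T \frac{1}{t}\le 1+\log T.
\end{align*}
Combining the three steps gives the bound $\beta C^{2-2\alpha}SA^{1-1/\alpha}\sigma(1+\log T)$, which is within the stated constant $2$. I expect the paper's factor $2$ to come from the $(x-y)^2\le 2x^2+2y^2$ split being folded in, or simply to be slack.

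\textbf{Main obstacle.} There is no genuine obstacle. The only care needed is the exponent bookkeeping in Step 1 and choosing per-state Hölder in Step 2 so that no $H$ factor appears.
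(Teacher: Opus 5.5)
Your proposal is correct and follows essentially the same direct computation as the paper: substitute $\eta_t$ and $b_t^{\Skip}$ to get $\beta C^{2-2\alpha}\sigma t^{1/\alpha-2}\sum_{(s,a)}x_t(s,a)^{1/\alpha}$, bound the power-sum deterministically by $SA^{1-1/\alpha}$, and finish with $t^{1/\alpha-2}\le t^{-1}$ and the harmonic sum. The only difference is cosmetic. The paper applies H\"older per layer, getting $\sum_h(|\gS_h|A)^{1-1/\alpha}\le SA^{1-1/\alpha}$, so contrary to your aside the per-layer route does not force an $H^{1/\alpha}$ factor. You apply it per state instead, which gives the same bound. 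The factor $2$ in the statement is pure slack, not a leftover from the $(x-y)^2\le 2x^2+2y^2$ split.
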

\begin{proof}
    Recall the definition of the skipping bonus:
    \[
        b_t^\Skip(s,a) = C^{1-\alpha}\cdot \sigma t^{1/\alpha - 1}x_t(s,a)^{1/\alpha - 1}.
    \]
    Substituting this into the definition of $\texttt{bonus}_t$:
    \begin{align*}
        \texttt{bonus}_t &= C^{2-2\alpha} \cdot \eta_t \cdot \sigma^2t^{2/\alpha -2 } \sum_{(s,a)\in\gS\times\gA} x_t(s,a)^{1/\alpha} \\
        &\le \beta C^{2-2\alpha} \cdot \sigma t^{1/\alpha - 2} \cdot \sum_{h = 1}^H(S_hA)^{1 - 1/\alpha}, \\
        &\le \beta C^{2-2\alpha} \cdot \sigma t^{1/\alpha - 2} \cdot SA^{1-1/\alpha}.
    \end{align*}
    The first inequality is obtained by observing that the sum reaches its maximum when $x_t$ is uniform in each layer. Notice that $1/\alpha - 2 \in [-3/2, -1)$ for $\alpha \in (1, 2]$. Since $t^{1/\alpha - 1} \le 1$ for $t \ge 1$, we have $t^{1/\alpha - 2} \le t^{-1}$, hence
    \begin{align*}
        \sum_{t=1}^T \texttt{bonus}_t &\le SA^{1-1/\alpha} \cdot \beta C^{2-2\alpha} \cdot \sigma \sum_{t=1}^T t^{1/\alpha - 2} \\
        &\le SA^{1-1/\alpha} \cdot \beta C^{2-2\alpha} \cdot \sigma \sum_{t=1}^T t^{-1} \\
        &\le 2SA^{1-1/\alpha} \cdot \beta C^{2-2\alpha} \cdot \sigma (1+\log(T)).
    \end{align*}
\end{proof}

\subsection{Bounding Penalty Term}

To begin with this section, we need to analyze the occupancy difference layer by layer. 
Following Lemma 20 in \citet{jin2021best}, we derive the following lemma.
    
\begin{lemma}[Suboptimal Mass Propagation]\label{lem:suboptimal-mass-prop-formal}
    Let $\pi$ be any policy and $\pi^\dagger$ be a deterministic policy. Let $q^{\pi}(s,a)$ and $q^{\pi^\dagger}(s,a)$ be the state-action occupancy measures, and $q^{\pi}(s) = \sum_{a \in \gA} q^{\pi}(s,a)$ and $q^{\pi^\dagger}(s) = \sum_{a \in \gA} q^{\pi^\dagger}(s,a)$ be the state marginals. Then,
    \begin{equation}
        \sum_{s \in \gS} (q^{\pi}(s) - q^{\pi^\dagger}(s))_+ \le H \cdot \sum_{(s,a): a \neq \pi^\dagger(s)} q^{\pi}(s,a).
    \end{equation}
\end{lemma}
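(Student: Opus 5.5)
I will prove the bound layer by layer, controlling the positive part of the state-marginal difference at layer $h+1$ by the corresponding quantity at layer $h$ plus the suboptimal mass at layer $h$. Write $q := q^{\pi}$, $q^\dagger := q^{\pi^\dagger}$, and $a^\dagger(s) := \pi^\dagger(s)$. Define the suboptimal mass at layer $h$ as $m_h := \sum_{s\in\gS_h}\sum_{a\neq a^\dagger(s)} q(s,a)$, and the excess mass as $E_h := \sum_{s\in\gS_h}(q(s)-q^\dagger(s))_+$. At layer $1$ we have $q(s_1)=q^\dagger(s_1)=1$, so $E_1=0$.

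The key step is the one-layer recursion $E_{h+1}\le E_h + m_h$. For $s'\in\gS_{h+1}$, the flow constraints give
\begin{equation*}
q(s') = \sum_{s\in\gS_h}\sum_{a} q(s,a)\sP[s'|s,a].
\end{equation*}
Since $\pi^\dagger$ is deterministic, the same constraints give
\begin{equation*}
q^\dagger(s') = \sum_{s\in\gS_h} q^\dagger(s)\sP[s'|s,a^\dagger(s)].
\end{equation*}
Splitting $q(s)=q(s,a^\dagger(s))+\sum_{a\ne a^\dagger(s)}q(s,a)$ and using $q(s,a^\dagger(s))\le q(s)$, I obtain
\begin{equation*}
q(s')-q^\dagger(s')\le \sum_{s\in\gS_h}\bigl(q(s)-q^\dagger(s)\bigr)\sP[s'|s,a^\dagger(s)] + \sum_{s\in\gS_h}\sum_{a\neq a^\dagger(s)} q(s,a)\sP[s'|s,a].
\end{equation*}
The right-hand side only increases if I replace $q(s)-q^\dagger(s)$ by its positive part; after that replacement it is nonnegative, so it also bounds $(q(s')-q^\dagger(s'))_+$. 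Summing over $s'\in\gS_{h+1}$ and using $\sum_{s'}\sP[s'|s,a]=1$ then yields $E_{h+1}\le E_h+m_h$.

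Unrolling the recursion gives $E_h\le\sum_{h'<h} m_{h'}\le \sum_{(s,a):a\ne\pi^\dagger(s)} q(s,a)$ for every $h$. Summing over the $H$ layers then produces the factor $H$, which completes the argument.

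The only delicate point is the sign handling when passing to positive parts. I expect it to go through because the transition weights are nonnegative and the suboptimal contribution is nonnegative. The inequality $q(s,a^\dagger(s))\le q(s)$ is exactly what removes any dependence on $\pi$ at the reference actions.
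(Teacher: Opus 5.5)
Your proof is correct and follows the paper's argument essentially step for step. Like the paper, you split off the reference action using $q(s,\pi^\dagger(s))\le q(s)$, pass to positive parts to obtain the one-layer recursion $E_{h+1}\le E_h+m_h$ with $E_1=0$, and then unroll and sum over the $H$ layers to get the factor $H$.
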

\begin{proof}
    Define $f(s) := (q^{\pi}(s) - q^{\pi^\dagger}(s))_+$. Since $\pi^\dagger$ is deterministic, $q^{\pi^\dagger}(s,a) = 0$ for all $a \neq \pi^\dagger(s)$.
    For any state $s$ at layer $h > 0$, we decompose the state occupancy based on the previous layer:
    \begin{align*}
        q^{\pi}(s) - q^{\pi^\dagger}(s) &= \sum_{s' \in \gS_{h-1}} \sum_{a' \in \gA} \sP[s|s',a'] q^{\pi}(s',a') - \sum_{s' \in \gS_{h-1}} \sP[s|s', \pi^\dagger(s')] q^{\pi^\dagger}(s') \\
        &\le \sum_{s' \in \gS_{h-1}} \sP[s|s', \pi^\dagger(s')] (q^{\pi}(s') - q^{\pi^\dagger}(s')) + \sum_{s' \in \gS_{h-1}} \sum_{a' \neq \pi^\dagger(s')} \sP[s|s',a'] q^{\pi}(s',a').
    \end{align*}
    Here, we used the substitution $q^{\pi}(s', \pi^\dagger(s')) = q^{\pi}(s') - \sum_{a' \neq \pi^\dagger(s')} q^{\pi}(s', a')$, and $q^{\pi}(s', \pi^\dagger(s'))  \le q^{\pi}(s')$. 
    Taking the positive part $f(s) = (\cdot)_+$ and using $(a+b)_+ \le a_+ + b_+$, we get the recursion:
    \begin{equation}\label{eq:f-recursion-clean}
        f(s) \le \sum_{s' \in \gS_{h-1}} \sP[s|s', \pi^\dagger(s')] f(s') + \sum_{s' \in \gS_{h-1}} \sum_{a' \neq \pi^\dagger(s')} \sP[s|s',a'] q^{\pi}(s',a').
    \end{equation}
    This recursion implies that the occupancy difference propagates from layer $h-1$ or is generated by suboptimal actions (relative to $\pi^\dagger$).
    
    Summing over all states $s \in \gS_h$ at layer $h$:
    \begin{align*}
        \sum_{s \in \gS_h} f(s) &\le \sum_{s' \in \gS_{h-1}} f(s') \underbrace{\sum_{s \in \gS_h} \sP[s|s', \pi^\dagger(s')]}_{=1} + \sum_{s' \in \gS_{h-1}} \sum_{a' \neq \pi^\dagger(s')} q^{\pi}(s',a') \underbrace{\sum_{s \in \gS_h} \sP[s|s',a']}_{=1} \\
        &= \sum_{s' \in \gS_{h-1}} f(s') + \sum_{s' \in \gS_{h-1}} \sum_{a' \neq \pi^\dagger(s')} q^{\pi}(s',a').
    \end{align*}
    Unrolling this recurrence from $h=1$ to $H$ (with $f(s_1) = 0$), we obtain:
    \begin{equation*}
        \sum_{h=1}^H \sum_{s \in \gS_h} f(s) \le \sum_{h=1}^H \sum_{k=0}^{h-1} \left( \sum_{s' \in \gS_k} \sum_{a' \neq \pi^\dagger(s')} q^{\pi}(s',a') \right) \le H \sum_{(s,a): a \neq \pi^\dagger(s)} q^{\pi}(s,a).
    \end{equation*}
\end{proof}

\begin{lemma}\label{lem:penalty-bound}
The penalty term $\texttt{pena}_t$ can be bounded as following:
\begin{align*}
\texttt{pena}_t \le \frac{2+2H^{1/\alpha}S^{1-1/\alpha}}{\beta} \cdot \sigma t^{1/\alpha - 1} \sum_{(s,a) \neq (s,\ringpi(s))} x_t(s,a)^{1/\alpha}.
\end{align*}
\end{lemma}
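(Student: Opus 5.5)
Write $\mathring{x}(s,a) := \rho^{\sP,\ringpi}(s,a)$. Since $\Psi_t(\vx) = -\frac{1}{\eta_t}\sum_{(s,a)} x(s,a)^{1/\alpha}$, the penalty term at round $t$ is
\[
\texttt{pena}_t = \Bigl(\frac{1}{\eta_t}-\frac{1}{\eta_{t-1}}\Bigr)\sum_{(s,a)}\Bigl(x_t(s,a)^{1/\alpha} - \mathring{x}(s,a)^{1/\alpha}\Bigr),
\]
with the convention $1/\eta_0 = 0$. By the definition of $\eta_t$,
\[
\frac{1}{\eta_t}-\frac{1}{\eta_{t-1}} = \frac{\sigma}{\beta}\bigl(t^{1/\alpha}-(t-1)^{1/\alpha}\bigr) \le \frac{\sigma}{\beta}\, t^{1/\alpha-1},
\]
which holds by concavity of $u\mapsto u^{1/\alpha}$ since $1/\alpha \le 1$. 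This already produces the prefactor $\sigma t^{1/\alpha-1}/\beta$. It therefore remains to show
\[
\sum_{(s,a)}\bigl(x_t(s,a)^{1/\alpha}-\mathring{x}(s,a)^{1/\alpha}\bigr) \le \bigl(2+2H^{1/\alpha}S^{1-1/\alpha}\bigr)\sum_{a\neq\ringpi(s)} x_t(s,a)^{1/\alpha}.
\]

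Split the sum into suboptimal pairs ($a\neq\ringpi(s)$) and reference pairs ($a=\ringpi(s)$).

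For suboptimal pairs, $\mathring{x}(s,a)=0$ because $\ringpi$ is deterministic. Their contribution is exactly $\sum_{a\neq\ringpi(s)} x_t(s,a)^{1/\alpha}$, which is covered by the constant term.

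For a reference pair, write $\mathring{a}=\ringpi(s)$. Since $u\mapsto u^{1/\alpha}$ is increasing and subadditive,
\[
x_t(s,\mathring{a})^{1/\alpha}-\mathring{x}(s)^{1/\alpha} \le \bigl(x_t(s,\mathring{a})-\mathring{x}(s)\bigr)_+^{1/\alpha} \le \bigl(x_t(s)-\mathring{x}(s)\bigr)_+^{1/\alpha}.
\]
Summing over the $S$ states and applying Hölder's (power-mean) inequality gives
\[
\sum_s \bigl(x_t(s)-\mathring{x}(s)\bigr)_+^{1/\alpha} \le S^{1-1/\alpha}\Bigl(\sum_s \bigl(x_t(s)-\mathring{x}(s)\bigr)_+\Bigr)^{1/\alpha}.
\]
Now invoke \Cref{lem:suboptimal-mass-prop-formal} with $\pi=\pi_t$ and $\pi^\dagger=\ringpi$. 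This bounds the inner sum by $H\sum_{a\neq\ringpi(s)} x_t(s,a)$. Because $1/\alpha\le 1$, subadditivity gives
\[
\Bigl(\sum_{a\neq\ringpi(s)} x_t(s,a)\Bigr)^{1/\alpha} \le \sum_{a\neq\ringpi(s)} x_t(s,a)^{1/\alpha}.
\]
Hence the reference pairs contribute at most $H^{1/\alpha}S^{1-1/\alpha}\sum_{a\neq\ringpi(s)} x_t(s,a)^{1/\alpha}$. The stated factor of $2$ is slack that absorbs any looseness, for instance if one bounds the positive and negative parts separately or treats the $t=1$ term without the concavity step.

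The main obstacle is the reference-action term. The naive bound $x_t(s,\mathring{a})^{1/\alpha}\le 1$ would give an $O(T^{1/\alpha})$ penalty even in the stochastic regime. The argument above avoids this by routing the reference pair through state marginals so that the propagation lemma applies. Two points need care: $\mathring{x}(s,\mathring{a})=\mathring{x}(s)$, and the pairing of $(s,\ringpi(s))$ for $x_t$ with the same pair for $\mathring{x}$ is legitimate even when $\mathring{x}(s)=0$, since the bound $(x_t(s)-0)_+^{1/\alpha}$ still holds there.
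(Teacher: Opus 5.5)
Your proof is correct and follows the paper's argument step for step. You drop the suboptimal pairs since $\rho^{\sP,\ringpi}(s,a)=0$ there, bound each reference-pair term by $(x_t(s)-\rho^{\sP,\ringpi}(s))_+^{1/\alpha}$, and then apply H\"older, \Cref{lem:suboptimal-mass-prop-formal}, and subadditivity of $u\mapsto u^{1/\alpha}$, exactly as the paper does. The only difference is your increment bound $t^{1/\alpha}-(t-1)^{1/\alpha}\le t^{1/\alpha-1}$, which holds for all $t\ge 1$ because $(1-1/t)^{1/\alpha}\ge 1-1/t$; it is sharper than the paper's mean-value bound $\frac{2}{\alpha}t^{1/\alpha-1}$ for $t\ge 2$ and removes the need for the paper's separate $t=1$ case, so the factor $2$ is indeed pure slack.
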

\begin{proof}
    \textbf{Base case ($t=1$).} By convention we set $\Psi_0 \equiv 0$. Then
    \[
        \texttt{pena}_1 = \Psi_1(\bm\rho^{\sP,\ringpi}) - \Psi_1(\vx_1) = \frac{\sigma}{\beta}\Big(\sum_{(s,a)} \vx_1(s,a)^{1/\alpha} - \sum_{(s,a)} \rho^{\sP,\ringpi}(s,a)^{1/\alpha}\Big),
    \]
    which is bounded by the same expression in the per-step formula evaluated at $t=1$ (the time factor $(t-1)^{1/\alpha-1}$ is replaced by $1$, equivalently the value at $t=1$). The induction for $t \ge 2$ proceeds below.

    We follow a recursive analysis similar to \citet{jin2021best} (Lemma 20). Recall the penalty term from the FTRL decomposition:
    \begin{equation}\label{eq:pena-def}
        \texttt{pena}_t = \left(\Psi_t({\bm\rho}^{\sP, \ringpi}) - \Psi_{t-1}({\bm\rho}^{\sP, \ringpi})\right) - \left(\Psi_t(\vx_t) - \Psi_{t-1}(\vx_t)\right).
    \end{equation}
    
    Recall the Tsallis-type regularizer $\Psi_t(\vx) = -\eta_t^{-1} \sum_{(s,a)} x(s,a)^{1/\alpha}$ with $\eta_t = \beta/(\sigma t^{1/\alpha})$, i.e., $\eta_t^{-1} = \sigma t^{1/\alpha}/\beta$. The increment in the inverse learning rate is:
    \begin{equation}\label{eq:eta-increment}
        \eta_t^{-1} - \eta_{t-1}^{-1} = \frac{\sigma}{\beta}\left(t^{1/\alpha} - (t-1)^{1/\alpha}\right) \le \frac{1}{\beta} \cdot \frac{2\sigma}{\alpha} t^{1/\alpha - 1},
    \end{equation}
    where the inequality uses the mean value theorem: $t^{1/\alpha} - (t-1)^{1/\alpha} \le \frac{1}{\alpha} (t-1)^{1/\alpha - 1} \le 2\cdot \frac{1}{\alpha}t^{1/\alpha -1}$ for $t \ge 2$.
    
    For any occupancy measure $\vq$, we have:
    \begin{align*}
        \Psi_t(\vq) - \Psi_{t-1}(\vq) &= -(\eta_t^{-1} - \eta_{t-1}^{-1}) \sum_{(s,a)} q(s,a)^{1/\alpha}.
    \end{align*}
    Thus, substituting into \Cref{eq:pena-def}:
    \begin{align}\label{eq:pena-expand}
        \texttt{pena}_t &= (\eta_t^{-1} - \eta_{t-1}^{-1}) \left( \sum_{(s,a)} x_t(s,a)^{1/\alpha} - \sum_{(s,a)} {\rho}^{\sP, \ringpi}(s,a)^{1/\alpha} \right).
    \end{align}

    Since ${\rho}^{\sP, \ringpi}(s,a) = 0$ for $a \neq \ringpi(s)$, we have
    \begin{align}\label{eq:pena-decomp}
        \sum_{(s,a)} x_t(s,a)^{1/\alpha} - \sum_{(s,a)} {\rho}^{\sP, \ringpi}(s,a)^{1/\alpha} = \sum_{(s,a) \neq (s,\ringpi(s))} x_t(s,a)^{1/\alpha} + \sum_s \left( x_t(s,\ringpi(s))^{1/\alpha} - {\rho}^{\sP, \ringpi}(s)^{1/\alpha} \right).
    \end{align}
    
    The first sum is exactly the suboptimal term we want. The key challenge is to bound the second sum by suboptimal terms. We use the concavity of $u \mapsto u^{1/\alpha}$ and the layer structure of the MDP.
    
    For each state $s$ at layer $h$, define $x_t(s) := \sum_{a \in \gA} x_t(s,a)$ as the state marginal under $x_t$. We have the recursive relation:
    \begin{equation*}
        x_t(s) = \sum_{s' \in \gS_{h-1}} \sum_{a' \in \gA} \sP[s|s',a'] x_t(s',a').
    \end{equation*}
    
    The second term in the decomposition is $\sum_s (x_t(s,\ringpi(s))^{1/\alpha} - {\rho}^{\sP, \ringpi}(s)^{1/\alpha})$. Consider the set of states where $x_t(s,\ringpi(s)) \ge {\rho}^{\sP, \ringpi}(s)$ (otherwise the term is negative).
    Using the concavity of $u \mapsto u^{1/\alpha}$, we have:
    \begin{align*}
    x_t(s,\ringpi(s))^{1/\alpha} - {\rho}^{\sP, \ringpi}(s)^{1/\alpha} &\le (x_t(s,\ringpi(s)) - \rho^{\sP, \ringpi}(s))_+^{1/\alpha} 
    \end{align*}
    Summing over all states and applying H\"{o}lder's inequality with $p=\alpha, q=\frac{\alpha}{\alpha-1}$:
    \begin{align*}
        \sum_s (x_t(s,\ringpi(s))^{1/\alpha} - {\rho}^{\sP, \ringpi}(s)^{1/\alpha})_+ &\le \sum_s (x_t(s) - {\rho}^{\sP, \ringpi}(s))_+^{1/\alpha} \\
        &\le \left( \sum_s (x_t(s) - {\rho}^{\sP, \ringpi}(s))_+ \right)^{1/\alpha} \left( \sum_s 1  \right)^{1-1/\alpha}.
    \end{align*}
    Using \Cref{lem:suboptimal-mass-prop} with $q^{\pi} = x_t$ and $q^{\pi^\dagger} = {\rho}^{\sP, \ringpi}$, we have
    \begin{align*}
        \text{RHS} &\le  \left( H \sum_{(s,a) \neq (s, \ringpi(s))} x_t(s,a) \right)^{1/\alpha} S^{1-1/\alpha} \\
        &= H^{1/\alpha}S^{1-1/\alpha} \left( \sum_{(s,a) \neq (s, \ringpi(s))} x_t(s,a) \right)^{1/\alpha} \\
        &\le H^{1/\alpha}S^{1-1/\alpha} \sum_{(s,a) \neq (s, \ringpi(s))} x_t(s,a)^{1/\alpha},
    \end{align*}
    where the last step uses the inequality $(\sum z_i)^{1/\alpha} \le \sum z_i^{1/\alpha}$ for $\alpha \ge 1$.

    Substituting this back into \Cref{eq:pena-decomp}:
    \begin{align*}
        \sum_{(s,a)} x_t(s,a)^{1/\alpha} - \sum_{(s,a)} {\rho}^{\sP, \ringpi}(s,a)^{1/\alpha} &\le \sum_{(s,a) \neq (s,\ringpi(s))} x_t(s,a)^{1/\alpha} + H^{1/\alpha}S^{1-1/\alpha} \cdot \sum_{(s,a) \neq (s, \ringpi(s))} x_t(s,a)^{1/\alpha} \\
        &\le \left(1 + H^{1/\alpha}S^{1-1/\alpha}\right) \sum_{(s,a) \neq (s,\ringpi(s))} x_t(s,a)^{1/\alpha}.
    \end{align*}
    Finally, multiplying by the learning rate increment $\eta_t^{-1} - \eta_{t-1}^{-1} \le \frac{1}{\beta} \cdot \frac{2\sigma}{\alpha} t^{1/\alpha - 1}$ gives the result:
    \begin{align*}
        \texttt{pena}_t &\le \frac{1}{\beta} \cdot \frac{2+2H^{1/\alpha}S^{1-1/\alpha}}{\alpha} \cdot \sigma t^{1/\alpha - 1} \sum_{(s,a) \neq (s,\ringpi(s))} x_t(s,a)^{1/\alpha} \\
        &\le  \frac{2+2H^{1/\alpha}S^{1-1/\alpha}}{\beta} \cdot \sigma t^{1/\alpha - 1} \sum_{(s,a) \neq (s,\ringpi(s))} x_t(s,a)^{1/\alpha}.
    \end{align*}
\end{proof}
\subsection{Bound the Skipping Term}

\begin{lemma}\label{lem:skipping-bound}
    \begin{align*}
        \E\left[ \texttt{skip}_t \mid \gF_{t-1} \right] \le  2C^{1-\alpha} (1+H^{1/\alpha}S^{1-1/\alpha}) \cdot  \sigma t^{1/\alpha - 1} \sum_{(s,a)\neq (s,\ringpi(s))} x_t(s,a)^{1/\alpha}.
    \end{align*}
\end{lemma}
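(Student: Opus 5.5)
Write $\texttt{skip}_t = \langle \vx_t - \bm\rho^{\sP,\ringpi}, \bm\ell_t - \bm\ell^\Skip_t\rangle + \langle \vx_t - \bm\rho^{\sP,\ringpi}, \vb^\Skip_t\rangle$ and take conditional expectations given $\gF_{t-1}$. Given $\gF_{t-1}$, $\vx_t$, $\tau_t$ and $\vb^\Skip_t$ are fixed, and the loss distribution $\nu_t$ is $\gF_{t-1}$-measurable. So the only randomness is in $\ell_t(s,a)-\ell^\Skip_t(s,a) = \ell_t(s,a)\mathbbm{1}[|\ell_t(s,a)|>\tau_t(s,a)]$.

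The first step is the standard heavy-tail truncation bound. Using $\mathbbm{1}[|\ell|>\tau]\le |\ell|^{\alpha-1}/\tau^{\alpha-1}$,
\[
\bigl|\E[\ell_t(s,a)-\ell^\Skip_t(s,a)\mid\gF_{t-1}]\bigr| \le \frac{\sigma^\alpha}{\tau_t(s,a)^{\alpha-1}} = C^{1-\alpha}\sigma t^{1/\alpha-1}x_t(s,a)^{1/\alpha-1} = b^\Skip_t(s,a).
\]
This is exactly why $b^\Skip_t$ was chosen as it was. For every $(s,a)$ the combined quantity $e_t(s,a):=\E[\ell_t-\ell^\Skip_t\mid\gF_{t-1}](s,a)+b^\Skip_t(s,a)$ therefore lies in $[0,2b^\Skip_t(s,a)]$. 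No sign assumption on the losses is needed.

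The second step splits the inner product into two pieces.
\begin{itemize}
\item The $\vx_t$-part: $\sum_{(s,a)}x_t(s,a)e_t(s,a)\le 2C^{1-\alpha}\sigma t^{1/\alpha-1}\sum_{(s,a)}x_t(s,a)^{1/\alpha}$.
\item The $\bm\rho^{\sP,\ringpi}$-part: $-\sum_s \rho^{\sP,\ringpi}(s)e_t(s,\ringpi(s))\le 0$, so it can be dropped. Here I use $e_t\ge 0$ and that $\ringpi$ is deterministic.
\end{itemize}
The whole bound thus reduces to controlling $\sum_{(s,a)}x_t(s,a)^{1/\alpha}$ by suboptimal terms only. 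This is the main obstacle, because the reference-action terms $x_t(s,\ringpi(s))^{1/\alpha}$ are not small in themselves.

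The reference terms cannot simply be dropped as stated. My plan is to use the nonpositive $\bm\rho$-part more carefully instead of discarding it. I keep $-\sum_s\rho^{\sP,\ringpi}(s)e_t(s,\ringpi(s))$ and, on the reference action, pair $x_t(s,\ringpi(s))e_t(s,\ringpi(s))$ with it. This gives $(x_t(s,\ringpi(s))-\rho^{\sP,\ringpi}(s))_+\,e_t(s,\ringpi(s))\le 2C^{1-\alpha}\sigma t^{1/\alpha-1}(x_t(s,\ringpi(s))-\rho^{\sP,\ringpi}(s))_+ \, x_t(s,\ringpi(s))^{1/\alpha-1}$. Since $0\le (x-\rho)_+\le x$ and $1/\alpha-1<0$, this is at most $(x_t(s,\ringpi(s))-\rho^{\sP,\ringpi}(s))_+^{1/\alpha}\le (x_t(s)-\rho^{\sP,\ringpi}(s))_+^{1/\alpha}$. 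I then proceed exactly as in the proof of \Cref{lem:penalty-bound}. H\"older with exponents $(\alpha,\alpha/(\alpha-1))$ over at most $S$ states, followed by \Cref{lem:suboptimal-mass-prop} with $\pi=\pi_t$ and $\pi^\dagger=\ringpi$, and then subadditivity of $u\mapsto u^{1/\alpha}$, give
\[
\sum_s (x_t(s)-\rho^{\sP,\ringpi}(s))_+^{1/\alpha}\le H^{1/\alpha}S^{1-1/\alpha}\sum_{(s,a)\neq(s,\ringpi(s))}x_t(s,a)^{1/\alpha}.
\]
Adding the suboptimal-action part of the $\vx_t$ sum, which contributes coefficient $1$, gives the claimed constant $2C^{1-\alpha}(1+H^{1/\alpha}S^{1-1/\alpha})$.
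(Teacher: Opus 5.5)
Your proposal is correct and follows essentially the same route as the paper. You bound the truncation bias by $b_t^{\Skip}$, pair each reference-action term with $-\rho^{\sP,\ringpi}(s)$ to obtain $(x_t(s,\ringpi(s))-\rho^{\sP,\ringpi}(s))_+$, reduce $(x-\rho)_+\,x^{1/\alpha-1}$ to $(x_t(s)-\rho^{\sP,\ringpi}(s))_+^{1/\alpha}$, and finish with H\"older plus \Cref{lem:suboptimal-mass-prop}. The only difference is cosmetic: you use the sign of the conditional mean, $e_t\in[0,2b_t^{\Skip}]$, directly, whereas the paper bounds the truncation part in absolute value and merges it with the signed bonus part via $|y|+y=2y_+$, which gives the same constant.
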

\begin{proof}
    We follow a recursive layer-by-layer analysis similar to \Cref{lem:penalty-bound}. The skipping error is $\ell_t(s,a) - {\ell}^\Skip_t(s,a) = \ell_t(s,a) \mathbbm{1}[|\ell_t(s,a)| > \tau_t(s,a)]$. We denote this term as $\Delta^\Skip_t(s,a)$.
    For any $(s,a)$, we have
    \begin{equation}\label{eq:skip-moment-bound}
    \begin{aligned}
        \E[|\Delta_t^\Skip(s,a)|\mid\gF_{t-1}] &:= \E[|\ell_t(s,a)| \cdot \mathbbm{1}[|\ell_t(s,a)| > \tau_t(s,a)]\mid\gF_{t-1}] \\
        &\le \E[|\ell_t(s,a)|^\alpha \cdot \tau_t(s,a)^{1-\alpha}\mid\gF_{t-1}] \\
        &\le \sigma^\alpha \cdot  C^{1-\alpha} \sigma^{1-\alpha}t^{1/\alpha - 1}x_t(s,a)^{1/\alpha - 1} \\
        &\le C^{1-\alpha} \cdot \sigma \cdot t^{1/\alpha - 1}x_t(s,a)^{1/\alpha - 1}
    \end{aligned}
    \end{equation}
    
    We decompose the skipping error into contributions from suboptimal actions and the optimal action. Since ${\rho}^{\sP, \ringpi}(s,a) \ne 0 \iff a = \ringpi(s)$, we have
    \begin{align*}
        &\quad \E\left[\sum_{(s,a)}(x_t(s,a) - {\rho}^{\sP, \ringpi}(s,a))\cdot \left(\Delta_t^\Skip(s,a) + b^\Skip_t(s,a)\right) \mid \gF_{t-1}\right] \\
        &\le \E\left[\sum_{(s,a)}|x_t(s,a) - {\rho}^{\sP, \ringpi}(s,a)|\cdot |\Delta_t^\Skip(s,a)| \mid \gF_{t-1} \right] + \E\left[\sum_{(s,a)}(x_t(s,a) - {\rho}^{\sP, \ringpi}(s,a))\cdot b_t^\Skip(s,a) \mid \gF_{t-1}\right] \\
        &= \sum_{(s,a)\neq (s,\ringpi(s))} x_t(s,a) \cdot \E[|\Delta_t^\Skip(s,a)|\mid \gF_{t-1}] + \sum_{s} |x_t(s,\ringpi(s)) - {\rho}^{\sP, \ringpi}(s)| \cdot \E[|\Delta_t^\Skip(s, \ringpi(s))|\mid \gF_{t-1}] \\
        &\quad + \sum_{(s,a)\neq (s,\ringpi(s))} x_t(s,a) \cdot b_t^\Skip(s,a) + \sum_{s} (x_t(s,\ringpi(s)) - {\rho}^{\sP, \ringpi}(s)) \cdot b_t^\Skip(s, \ringpi(s)).
    \end{align*}
    
    Notice that by definition, we have
    \begin{align*}
        b_t^\Skip(s,a) = C^{1-\alpha}\cdot \sigma t^{1/\alpha - 1}x_t(s,a)^{1/\alpha - 1},
    \end{align*}
    we further have
    \begin{align*}
        &\quad \E\left[\sum_{(s,a)}(x_t(s,a) - {\rho}^{\sP, \ringpi}(s,a))\cdot \left(\Delta_t^\Skip(s,a) + b^\Skip_t(s,a)\right) \mid \gF_{t-1}\right] \\ 
        &\le 2\sum_{(s,a)\neq (s,\ringpi(s))} x_t(s,a) \cdot (C^{1-\alpha}\cdot \sigma t^{1/\alpha - 1}x_t(s,a)^{1/\alpha - 1}) \\
        &\quad + \sum_{s} |x_t(s,\ringpi(s)) - {\rho}^{\sP, \ringpi}(s)| \cdot (C^{1-\alpha}\cdot \sigma t^{1/\alpha - 1}x_t(s,\ringpi(s))^{1/\alpha - 1}) \\
        &\quad + \sum_{s} (x_t(s,\ringpi(s)) - {\rho}^{\sP, \ringpi}(s)) \cdot (C^{1-\alpha}\cdot \sigma t^{1/\alpha - 1}x_t(s,\ringpi(s))^{1/\alpha - 1}) \\
        &= 2C^{1-\alpha}\cdot \sigma t^{1/\alpha - 1}\sum_{(s,a)\neq (s,\ringpi(s))}  x_t(s,a)^{1/\alpha} \\
        &\quad + 2C^{1-\alpha}\cdot \sigma t^{1/\alpha - 1}\sum_{s} \left(x_t(s,\ringpi(s)) - {\rho}^{\sP, \ringpi}(s)\right)_+ \cdot (x_t(s,\ringpi(s))^{1/\alpha - 1})
    \end{align*}
    
    For the optimal action $\mathring{a}=\ringpi(s)$, we only need to bound the terms where $x_t(s,\mathring{a}) > {\rho}^{\sP, \ringpi}(s)$ (as negative terms vanished). In this regime, $x_t(s,\mathring{a}) > {\rho}^{\sP, \ringpi}(s)$. Thus,
    \begin{align*}
        \sum_{s} (x_t(s,\mathring{a}) - {\rho}^{\sP, \ringpi}(s))_+ \cdot x_t(s,\mathring{a})^{1/\alpha - 1} &\le  \sum_{s} (x_t(s) - {\rho}^{\sP, \ringpi}(s))_+^{1/\alpha}\cdot (x_t(s,\mathring{a}) - {\rho}^{\sP, \ringpi}(s))_+^{1-1/\alpha}x_t(s,\mathring{a})^{1/\alpha-1} \\
        &\le \sum_{s} (x_t(s) - {\rho}^{\sP, \ringpi}(s))_+^{1/\alpha} \cdot \left(\frac{(x_t(s,\mathring{a}) - {\rho}^{\sP, \ringpi}(s))_+}{x_t(s,\mathring{a})}\right)^{1 - 1/\alpha}\\
        &\le \sum_{s} (x_t(s) - {\rho}^{\sP, \ringpi}(s))_+^{1/\alpha}.
    \end{align*}

    Using the same argument as in \Cref{lem:penalty-bound} (the H\"{o}lder inequality with \Cref{lem:suboptimal-mass-prop} ), we have
    \begin{align*}
        \sum_{s} (x_t(s) - {\rho}^{\sP, \ringpi}(s))_+^{1/\alpha} \le H^{1/\alpha}S^{1-1/\alpha} \sum_{(s,a)\neq (s,\ringpi(s))} x_t(s,a)^{1/\alpha}.
    \end{align*}
    Summing up the bounds, we final get the result:
    \begin{align*}
        &\quad \E\left[ \sum_{(s,a)}(x_t(s,a) - {\rho}^{\sP, \ringpi}(s,a))\cdot (\ell_t(s,a) - {\ell}^\Skip_t(s,a) + b^\Skip_t(s,a)) \right] \\
        &\le 2C^{1-\alpha} (1+H^{1/\alpha}S^{1-1/\alpha}) \sigma t^{1/\alpha - 1} \sum_{(s,a)\neq (s,\ringpi(s))} x_t(s,a)^{1/\alpha}.
    \end{align*}
\end{proof}

\subsection{Proof of \Cref{thm:bobw-known-formal}}
\label{app:proof-thm-known}

We now combine the bounds established in the previous sections to derive the final regret guarantee for \Cref{alg:ht-ftrl-om}.
\begin{proof}[Proof of \Cref{thm:bobw-known-formal}]
    From the regret decomposition and the bounds on stability (\Cref{lem:stability-bound-local,lem:exp-stab-loss,lem:exp-stab-bonus}), penalty (\Cref{lem:penalty-bound}), and skipping error (\Cref{lem:skipping-bound}), we have:
    \begin{align}\label{eq:regret-master-sum}
        \Reg_T(\ringpi) &\le \sum_{t=1}^T \E[\texttt{stab}_t + \texttt{pena}_t + \texttt{skip}_t] \nonumber \\
        &\le \frac{8\alpha^2}{\alpha-1}SA^{1-1/\alpha}\beta C^{2-2\alpha} \cdot \sigma(1+\log(T)) + M\cdot  \E\!\left[\sum_{t=1}^T \sigma t^{1/\alpha - 1} \sum_{(s,a) \neq (s, \ringpi(s))} x_t(s,a)^{1/\alpha}\right],
    \end{align}
    where we summarize the term $M$ as
    \begin{align}
        M = \frac{32\alpha^2}{\alpha-1} H^2\beta C^{2-\alpha}  + \frac{2 + 2H^{1/\alpha}S^{1-1/\alpha}}{\beta} + 2C^{1-\alpha}(1+H^{1/\alpha}S^{1-1/\alpha}).
    \end{align}

    \paragraph{Adversarial Bound.}
    Using H\"older's inequality, we bound the summation over state-action pairs:
    \begin{equation*}
        \sum_{(s,a) \neq (s, \ringpi(s))} x_t(s,a)^{1/\alpha} \le \left(\sum_{(s,a)} x_t(s,a)\right)^{1/\alpha} (SA)^{1-1/\alpha} \le H^{1/\alpha} (SA)^{1-1/\alpha}.
    \end{equation*}
    Substituting this into \Cref{eq:regret-master-sum} and summing over $t$:
    \begin{align*}
        \sum_{t=1}^T \sigma t^{1/\alpha - 1} \sum_{(s,a) \neq (s, \ringpi(s))} x_t(s,a)^{1/\alpha} \le  \sigma H^{1/\alpha} (SA)^{1-1/\alpha} \sum_{t=1}^T t^{1/\alpha - 1}
    \end{align*}
    where we used $\sum_{t=1}^T t^{1/\alpha - 1} \le \int_0^T \tau^{1/\alpha - 1} d\tau = \alpha T^{1/\alpha}$. Then, we have the adversarial regret bound:
    \begin{align*}
        \Reg_T(\ringpi) 
        &\le \frac{8\alpha^2}{\alpha-1}\beta C^{2-2\alpha} SA^{1-1/\alpha} \cdot \sigma(1+\log(T))  + 2\alpha H^{1/\alpha}S^{1-1/\alpha}A^{1-1/\alpha}M \cdot \sigma T^{1/\alpha}.
    \end{align*}
    By the definition of $C$ and $\beta$ in \Cref{eq:C-bound,eq:def-beta} (see also \Cref{eq:def-M-explicit}), we have
    \[
        M = \gO\bigl(H^{2-\frac{1}{\alpha}}\cdot S^{-\frac{1}{\alpha}}\cdot A^{-\frac{2\alpha-1}{\alpha^2}}\bigr) + \gO\bigl(H\cdot S^{2-\frac{2}{\alpha}}\cdot A^{2-\frac{3}{\alpha}+\frac{1}{\alpha^2}}\bigr).
    \]
    Both contributions enter the adversarial bound additively below. Therefore, we get:
    \begin{align*}
        \Reg_T(\ringpi) &\le \gO\bigg( H^{1-\frac{1}{\alpha}}S^{2-\frac{1}{\alpha}}A^{3-\frac{4}{\alpha}+\frac{1}{\alpha^2}}\cdot \sigma\cdot\log(T) +  H^{1+\frac{1}{\alpha}}S^{3 - \frac{3}{\alpha}}A^{3- \frac{4}{\alpha}+\frac{1}{\alpha^2}}\cdot \sigma \cdot T^{1/\alpha} \\
        &\quad +  H^{2}\cdot S^{1-\frac{2}{\alpha}}\cdot A^{1-\frac{3}{\alpha}+\frac{1}{\alpha^2}}\cdot \sigma \cdot T^{1/\alpha} \bigg)
    \end{align*}

    \paragraph{Self-Bounding Bound.}
    We define 
    $$y_t := \sum_{(s,a)\in\gS\times\gA} \Delta(s,a) x_t(s,a),$$
    and
    $$\omega_\Delta(\alpha) := \sum_{(s,a):a \ne \ringpi(s)} \Delta(s,a)^{-\frac{1}{\alpha-1}}.$$
    By \Cref{def:self-bounding}, in expectation we have
    $$ \E\!\left[\sum_{t=1}^T y_t\right] \le \Reg_T(\ringpi) + C_{\mathrm{sb}}.$$

    By H\"older's inequality with conjugate exponents $\alpha$ and $\frac{\alpha}{\alpha-1}$:
    \begin{align*}
        \sum_{\substack{(s,a): a\neq \ringpi(s)}} x_t(s,a)^{1/\alpha} &= \sum_{\substack{(s,a): a\neq \ringpi(s)}} (x_t(s,a) \Delta(s,a))^{1/\alpha} \Delta(s,a)^{-1/\alpha} \\
        &\le \left(\sum_{(s,a)} x_t(s,a) \Delta(s,a)\right)^{1/\alpha} \left(\sum_{\substack{(s,a): a\neq \ringpi(s)}} \Delta(s,a)^{-\frac{1}{\alpha-1}}\right)^{\frac{\alpha-1}{\alpha}} \\
        &= y_t^{1/\alpha} \omega_\Delta(\alpha)^{\frac{\alpha-1}{\alpha}}.
    \end{align*}
    Plugging this pointwise bound into the regret bound in \Cref{eq:regret-master-sum} (taking the outer expectation and using Fubini to interchange $\E$ and $\sum_t$):
    \[
        \Reg_T(\ringpi) \le \frac{8\alpha^2}{\alpha-1}\beta C^{2-2\alpha} SA^{1-1/\alpha} \cdot \sigma(1+\log(T)) + M \cdot \omega_\Delta(\alpha)^{\frac{\alpha-1}{\alpha}} \sum_{t=1}^T \sigma t^{1/\alpha - 1} \E[y_t^{1/\alpha}].
    \]
    We apply Young's inequality $$xy \le \frac{x^\alpha}{\alpha} + \frac{y^{\frac{\alpha}{\alpha-1}}}{\frac{\alpha}{\alpha-1}},$$
    pointwise with a parameter $\lambda > 0$:
    \begin{align*}
        M \sigma t^{1/\alpha - 1} y_t^{1/\alpha} \omega_\Delta(\alpha)^{\frac{\alpha-1}{\alpha}} &= \left( \lambda^{1/\alpha} y_t^{1/\alpha} \right) \cdot \left( M \lambda^{-1/\alpha} \sigma t^{1/\alpha - 1} \omega_\Delta(\alpha)^{\frac{\alpha-1}{\alpha}} \right) \\
        &\le \frac{\lambda}{\alpha} y_t + \frac{\alpha-1}{\alpha} \left( M \lambda^{-1/\alpha} \sigma t^{1/\alpha - 1} \omega_\Delta(\alpha)^{\frac{\alpha-1}{\alpha}} \right)^{\frac{\alpha}{\alpha-1}} \\
        &= \frac{\lambda}{\alpha} y_t + \frac{\alpha-1}{\alpha} M^{\frac{\alpha}{\alpha-1}} \lambda^{-\frac{1}{\alpha-1}} \sigma^{\frac{\alpha}{\alpha-1}} t^{-1} \omega_\Delta(\alpha).
    \end{align*}
    Taking expectation, summing over $t$, and using the self-bounding constraint $\E[\sum_t y_t] \le \Reg_T(\ringpi) + C_{\mathrm{sb}}$ (note: the $M^{\alpha/(\alpha-1)}\lambda^{-1/(\alpha-1)}\sigma^{\alpha/(\alpha-1)} t^{-1} \omega_\Delta(\alpha)$ terms are deterministic, so $\E[\cdot]$ acts trivially on them; we also use $\sum_{t=1}^T t^{-1} \le 1 + \log T$):
    \begin{align*}
        \Reg_T(\ringpi) &\le \frac{\lambda}{\alpha} (\Reg_T(\ringpi) + C_{\mathrm{sb}}) + \frac{8\alpha^2}{\alpha-1}\beta C^{2-2\alpha}SA^{1-\frac{1}{\alpha}} \cdot \sigma(1+\log(T)) + \frac{\alpha - 1}{\alpha} M^{\frac{\alpha}{\alpha-1}}\lambda^{-\frac{1}{\alpha-1}} \sigma^{\frac{\alpha}{\alpha-1}} \omega_\Delta(\alpha) (1+\log(T)).
    \end{align*}
    Let $\gamma = \frac{\lambda}{\alpha} > 0$. The inequality becomes
    \begin{align}\label{eq:regret-lambda}
        \Reg_T(\ringpi) &\le \gamma \Reg_T(\ringpi) + \gamma C_{\mathrm{sb}} + \frac{8\alpha^2}{\alpha-1}\beta C^{2-2\alpha}SA^{1-\frac{1}{\alpha}} \sigma(1+\log(T)) + \gamma^{-\frac{1}{\alpha-1}} \alpha^{-\frac{1}{\alpha-1}} \frac{\alpha - 1}{\alpha} M^{\frac{\alpha}{\alpha-1}} \sigma^{\frac{\alpha}{\alpha-1}} \omega_\Delta(\alpha) (1+\log(T)) \nonumber \\
        &\le \gamma \Reg_T(\ringpi) + \gamma C_{\mathrm{sb}} + \kappa + \gamma^{-\frac{1}{\alpha - 1}} \cdot M^{\frac{\alpha}{\alpha - 1}}\sigma^{\frac{\alpha}{\alpha - 1}}\omega_\Delta(\alpha) (1+\log(T)) \nonumber \\
        &= \gamma \Reg_T(\ringpi) + \gamma C_{\mathrm{sb}} + \kappa + \gamma^{-\frac{1}{\alpha-1}} \Lambda,
    \end{align}
    where $\kappa = \frac{8\alpha^2}{\alpha-1}\beta C^{2-2\alpha}SA^{1-\frac{1}{\alpha}} \sigma(1+\log(T))$ and $\Lambda =  M^{\frac{\alpha}{\alpha-1}} \sigma^{\frac{\alpha}{\alpha-1}} \omega_\Delta(\alpha) (1+\log(T))$.
    
    Set $\gamma \in (0, 1)$. We multiply both sides by $1/(1-\gamma)$ in both sides and rearrange the terms,
    \begin{align*}
        \Reg_T(\ringpi) \le \frac{\gamma}{1 - \gamma}C_{\mathrm{sb}} + \frac{1}{1-\gamma}\kappa + \frac{\gamma^{ - \frac{1}{\alpha-1}}}{1-\gamma}\Lambda  \le  \frac{\gamma}{1-\gamma}(C_{\mathrm{sb}} + \kappa) + \kappa + \frac{\gamma^{ - \frac{1}{\alpha-1}}}{1-\gamma}\Lambda
    \end{align*}
    
    Now we consider two cases. If $C_{\mathrm{sb}} = 0$, setting $\gamma = 1/2$ leads to
    \begin{equation}\label{eq:regret-sb-0}
        \Reg_T(\ringpi) \le 2\kappa + 2^{\frac{1}{\alpha-1} + 1} \Lambda.
    \end{equation}
    If $C_{\mathrm{sb}} > 0$, restricting $\gamma \in (0, 1/2]$ so that $1/(1-\gamma) \le 2$, we have from \Cref{eq:regret-lambda}:
    \begin{align*}
        \Reg_T(\ringpi) \le  2\gamma C_{\mathrm{sb}} + 2\kappa + 2\gamma^{-\frac{1}{\alpha-1}}\Lambda.
    \end{align*}
    We choose $\gamma$ to balance ${\gamma} C_{\mathrm{sb}}$ and ${\gamma^{-\frac{1}{\alpha-1}}} \Lambda$, capped at $1/2$:
    \[
        \gamma = \min\!\left\{\frac{1}{2},\; \left(\frac{\Lambda}{C_{\mathrm{sb}}}\right)^{\frac{\alpha-1}{\alpha}}\right\}.
    \]
    \textbf{Sub-case 2a (non-saturating):} $\left(\Lambda/C_{\mathrm{sb}}\right)^{(\alpha-1)/\alpha} \le 1/2$. With $\gamma = (\Lambda/C_{\mathrm{sb}})^{(\alpha-1)/\alpha}$, we have $\gamma C_{\mathrm{sb}} = C_{\mathrm{sb}}^{1/\alpha} \Lambda^{(\alpha-1)/\alpha} = \gamma^{-1/(\alpha-1)} \Lambda$, hence
    \begin{equation}\label{eq:regret-sb-pos}
         \Reg_T(\ringpi) \le  \gO\!\left(C_{\mathrm{sb}}^{1/\alpha} \Lambda^{\frac{\alpha-1}{\alpha}} + \kappa\right).
    \end{equation}
    \textbf{Sub-case 2b (saturated):} $\left(\Lambda/C_{\mathrm{sb}}\right)^{(\alpha-1)/\alpha} > 1/2$. With $\gamma = 1/2$, the bound from \Cref{eq:regret-lambda} becomes
    \[
        \Reg_T(\ringpi) \le 2\kappa + 2^{1/(\alpha-1)+1}\Lambda + C_{\mathrm{sb}},
    \]
    which (combining with \Cref{eq:regret-sb-pos}) is dominated by \Cref{eq:regret-sb-0} plus the mixed term.
    Combining \Cref{eq:regret-sb-0} and \Cref{eq:regret-sb-pos}, and substituting $\Lambda$, we obtain the final bound:
    \begin{align*}
        \Reg_T(\ringpi) &\le \gO\left(2\kappa + 2^{\frac{1}{\alpha-1}+1}\Lambda + C_{\mathrm{sb}}^{1/\alpha} \Lambda^{\frac{\alpha - 1}{\alpha}}\right)\\
        &= \gO\Big(\frac{16\alpha^2}{\alpha-1}\beta C^{2-2\alpha}SA^{1-\frac{1}{\alpha}}\cdot \sigma (1+\log(T))  + 2^{\frac{1}{\alpha - 1}+1}M^{\frac{\alpha}{\alpha - 1}}\cdot \sigma^{\frac{\alpha}{\alpha - 1}}\omega_\Delta(\alpha)(1+\log(T)) \\
        &\qquad + C_{\mathrm{sb}}^{1/\alpha}\cdot M\cdot \sigma \omega_\Delta(\alpha)^{\frac{\alpha - 1}{\alpha}}\cdot (1+\log(T))^{\frac{\alpha - 1}{\alpha}}\Big)
    \end{align*}
    Recall from \Cref{eq:def-M-explicit} that $M = \gO\bigl(H^{2-1/\alpha}S^{-1/\alpha}A^{-(2\alpha-1)/\alpha^2}\bigr) + \gO\bigl(HS^{2-2/\alpha}A^{2-3/\alpha+1/\alpha^2}\bigr)$. Both contributions enter the $M^{\alpha/(\alpha-1)}$ Young term (via $(a+b)^p \le 2^{p-1}(a^p+b^p)$ for $p = \alpha/(\alpha-1) \ge 1$) and the $C_{\mathrm{sb}}^{1/\alpha} M$ mixed term additively. We obtain the regret bound for self-bounding regime:
    \begin{align*}
        \Reg_T(\ringpi) \le \gO\Bigg(
            &H^{1-\frac{1}{\alpha}}S^{2-\frac{1}{\alpha}}A^{3-\frac{4}{\alpha}+\frac{1}{\alpha^2}}\cdot \sigma(1+\log(T)) \\
            & + H^{\frac{\alpha}{\alpha-1}}S^{2}A^{2-\frac{1}{\alpha}}\cdot \sigma^{\frac{\alpha}{\alpha-1}}\omega_\Delta(\alpha)(1+\log(T)) \\
            & + H \cdot S^{2-\frac{2}{\alpha}}A^{2-\frac{3}{\alpha}+\frac{1}{\alpha^2}}\cdot C_{\mathrm{sb}}^{\frac{1}{\alpha}} \cdot \sigma\omega_\Delta(\alpha)^{1-\frac{1}{\alpha}}\bigl(1+\log(T)\bigr)^{1-\frac{1}{\alpha}} \\
            & + H^{\frac{2\alpha-1}{\alpha-1}}\cdot S^{-\frac{1}{\alpha-1}}\cdot A^{-\frac{2\alpha-1}{\alpha(\alpha-1)}}\cdot \sigma^{\frac{\alpha}{\alpha-1}}\omega_\Delta(\alpha)(1+\log(T)) \\
            & + H^{2-\frac{1}{\alpha}}\cdot S^{-\frac{1}{\alpha}}\cdot A^{-\frac{2\alpha-1}{\alpha^2}}\cdot C_{\mathrm{sb}}^{\frac{1}{\alpha}}\cdot \sigma\omega_\Delta(\alpha)^{1-\frac{1}{\alpha}}\bigl(1+\log(T)\bigr)^{1-\frac{1}{\alpha}}
        \Bigg)
    \end{align*}
\end{proof}

%% file: tex_file/appendix_unknown_transition.tex
\newpage
\section{Proof of Regret Bounds for Unknown Transition}
\label{app:proof-unknown}

In this section, we provide the detailed proof of the best-of-both-worlds guarantees for \texttt{HT-FTRL-UOB} (\Cref{alg:ht-ftrl-uob}). The formal version of \Cref{thm:bobw-unknown} is provided below:
\begin{theorem}[BoBW Guarantee for Unknown Transition Case]
    \label{thm:bobw-unknown-formal}
    By setting constant $\delta = 1/T^3$, $C$ in \Cref{eq:C-bound}, $\beta$ in \Cref{eq:def-beta}, and $D = H\sigma$, \texttt{HT-FTRL-UOB} (\Cref{alg:ht-ftrl-uob}) achieves BoBW regret bounds in the adversarial and self-bounding regimes simultaneously under \Cref{ass:truncative-nonnegativity}:

    \textbf{1. Adversarial Regime: } For any adversarial loss distributions and deterministic benchmark policy $\ringpi$,
    \begin{align*}
        \Reg_T(\ringpi) &\le \wt{\gO}\left(H^{1+\frac{1}{\alpha}} S^{4-\frac{3}{\alpha}}A^{4-\frac{4}{\alpha}+\frac{1}{\alpha^2}} \cdot \sigma T^{1/\alpha}  + H^{2} S^{2-\frac{2}{\alpha}}A^{2-\frac{3}{\alpha}+\frac{1}{\alpha^2}} \cdot \sigma T^{1/\alpha}  + H\sigma S\sqrt{AT}\right)
    \end{align*}

    \textbf{2. Self-Bounding Regime: } If the environment satisfies the $(\ringpi, \Delta, C_{\mathrm{sb}})$ self-bounding constraint (\Cref{def:self-bounding}) and $\Delta_{\min} = \min_{(s,a):a\neq\ringpi(s)} \Delta(s,a)$ is the minimal suboptimal gap, then we have
    \begin{align*}
        \Reg_T(\ringpi) \le \gO\Big( &H^4\sigma^2S^3A^2\log^2(\iota) + H^{1-\frac{1}{\alpha}} S^{3-\frac{1}{\alpha}} A^{4-\frac{4}{\alpha}+\frac{1}{\alpha^2}} \sigma \log^2(T)\\
        &+ H^6\sigma^2S^2 \omega_\Delta(2) \log(\iota) + H^6\sigma^2S^2\Delta_{\min}^{-1}\log(\iota) + H^{\frac{2\alpha-1}{\alpha-1}}S^3 A^{3-\frac{1}{\alpha}}\cdot \sigma^{\frac{\alpha}{\alpha-1}} \omega_\Delta(\alpha) \log(T)^2 \\
        &+ H^3\sigma S \sqrt{\omega_\Delta(2) C_\mathrm{sb} \log(\iota)} + H^3\sigma S \sqrt{\Delta_{\min}^{-1} C_\mathrm{sb} \log(\iota)} \\
        &+ C_\mathrm{sb}^{1/\alpha} \cdot H^{2-\frac{1}{\alpha}} \cdot S^{3 - \frac{3}{\alpha}} A^{3-\frac{4}{\alpha}+\frac{1}{\alpha^2}}\cdot \sigma \cdot \omega_\Delta(\alpha)^{\frac{\alpha-1}{\alpha}} \log^{\frac{2(\alpha-1)}{\alpha}}(T) \Big).
    \end{align*}
    where $\omega_\Delta(x) := \sum_{(s,a):a\neq\ringpi(s)}  \Delta(s,a)^{-\frac{1}{x - 1}}$, and $\iota = HSAT/\delta$.
\end{theorem}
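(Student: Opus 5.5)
We will follow the four-term decomposition from \Cref{sec:unknown-result}, $\Reg_T(\ringpi)=\textsc{TransErr}+\textsc{EstReg}+\textsc{SkipErr}+\textsc{PessErr}$. First we work on the high-probability event $\gE$ that $\sP\in\wh{\gP}_i$ for every epoch $i$, with the Bernstein widths in \Cref{eq:unknown-def-bonus}. With $\delta=1/T^3$ this event fails with probability $\gO(1/T^2)$. Off $\gE$ we still need some control on the regret, since losses are unbounded. We would use $\E|\ell|\le\sigma$ together with the deterministic bounds on the skipped estimators, which gives a contribution of order $\poly(H,S,A)\sigma T^{1+1/\alpha}\cdot T^{-3}=\gO(\sigma)$. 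We also use that there are $N=\gO(SA\log T)$ epochs, and we handle each epoch as a fresh copy of the known-transition FTRL on the empirical model $\wh{\sP}_i$, with learning-rate clock $t-t_i+1$.

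\textbf{Step 1 (EstReg).} Inside epoch $i$, the iterates $\vx_t$ are occupancy measures in $\gQ(\wh{\sP}_i)$. We repeat the analysis of \Cref{app:proof-known} with the shift defined through $\wh{\sP}_i$, using \Cref{lem:shifted-loss-uniform-bound-formal,lem:shifted-loss-second-moment-formal,lem:stability-bound-local,lem:penalty-bound}. There are two changes from the known-transition case. (a) The importance weight is $u_t\ge \rho^{\sP,\pi_t}$ on $\gE$. Hence the conditional second moment of $\ell^\Skip/u_t$ is at most $\E[(\ell^\Skip)^2]/u_t$, and this is dominated by the $x_t$-based bounds once we use $u_t\ge x_t$, since $\wh{\sP}_i\in\wh{\gP}_i$. (b) The bonus $-D B_i$ is deterministic, so it only enters the stability term. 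We would absorb it either via loss shifting or by bounding $\eta_t x^{2-1/\alpha}D^2B_i^2$ and summing it into the $\sqrt{T}$/self-bounding budget. The uniform-bound condition of \Cref{lem:tsallis-stability} must still hold with the additional $DB_i\le H\sigma$ term. This would follow because $x_t^{1/\alpha-1}\ge1$ and $\beta$ is small enough, although $\beta$ may need an extra factor of $H$. Summing over epochs, each epoch contributes $M\sum_t\sigma(t-t_i+1)^{1/\alpha-1}\sum_{a\ne\ringpi(s)}x_t^{1/\alpha}$ plus $\gO(\sigma\log T)$, which multiplied by $N$ gives $\gO(\log^2 T)$.

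\textbf{Step 2 (SkipErr and PessErr).} For SkipErr we prove \Cref{lem:unknown-skiperr}. Conditionally, $\E[\ell^\Skip\mathbbm 1_t]/u_t=\E[\ell^\Skip]\rho^{\sP,\pi_t}/u_t$. We write $\ell^\Pess-\wh\ell=(\ell-\ell^\Skip)+\ell^\Skip(1-\rho/u)+b^\Skip$. The first and third pieces are handled exactly as in \Cref{lem:skipping-bound}, using \Cref{lem:suboptimal-mass-prop-formal} under $\wh{\sP}_i$. The middle piece is where \Cref{ass:truncative-nonnegativity} enters. Since $\E[\ell^\Skip]\ge0$ and $1-\rho/u\ge0$, this piece is signed. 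Under $\pi_t$ it is at most $\sigma|u_t-\rho|$. Under $\ringpi$ it enters with a minus sign, so it is nonpositive and can be dropped. For PessErr, on $\gE$ the penalty $-DB_i$ with $D=H\sigma$ makes $V^{\wh{\sP},\ringpi}(\ell^\Pess)\le V^{\sP,\ringpi}(\ell)$. This follows from the simulation lemma, since $|V|\le H\sigma$ in expectation, so PessErr $\le \gO(\sigma)$.

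\textbf{Step 3 (TransErr and UOB deviation; main obstacle).} This is the delicate part. By the simulation lemma, TransErr is at most $\sum_t\langle\rho^{\sP,\pi_t},|(\sP-\wh{\sP}_i)V|\rangle$ minus the pessimism $D\langle x_t,B_i\rangle$. Together with $\sigma\sum|u_t-\rho^{\sP,\pi_t}|$, this reduces to the residual-UOB arguments of \citet{jin2021best}, but with value functions bounded only in expectation by $H\sigma$. We would try first to condition on $\gF_{t-1}$ so that only expected losses, bounded by $\sigma$ in absolute value, appear in $V$. Then \citet{jin2021best} Lemmas (Bernstein plus the double-sum trick over $(1-\pi_t(a|s))$-type mass) give a bound of order $\sigma H S\sqrt{A\sum_t\dots}$ plus lower-order terms. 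We then split the occupancy as in \citet{jin2021best} into mass on $\ringpi$-consistent trajectories versus suboptimal mass, using \Cref{lem:suboptimal-mass-prop-formal}. This yields $\sqrt{\E\sum_t\sum_{a\ne\ringpi(s)}\rho^{\sP,\pi_t}(s,a)\cdot\log\iota}$-type terms. In the adversarial regime these are $\gO(\sqrt T)$. In the self-bounding regime, AM-GM against $\Delta$ gives the $\omega_\Delta(2)$, $\Delta_{\min}^{-1}$ and $\sqrt{C_{\mathrm{sb}}}$ terms.

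\textbf{Final assembly.} We combine the $T^{1/\alpha}$ terms from Step 1 with the self-bounding Young/H\"older step from the proof of \Cref{thm:bobw-known-formal}, applied per epoch with $\log T$ replaced by $\log^2T$. The $\sqrt{\cdot}$ terms are balanced with the same $\gamma$-trick. Choosing $\gamma$ as in that proof gives both claimed bounds.
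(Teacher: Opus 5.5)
Your four-term decomposition, the per-epoch FTRL treatment of \textsc{EstReg}, the split of \textsc{SkipErr} into a truncation part and the part $\E[\ell^{\Skip}_t](1-\rho^{\sP,\pi_t}/u_t)$ signed by \Cref{ass:truncative-nonnegativity}, and the pessimism argument for \textsc{PessErr} all follow the paper. The self-bounding assembly, however, has a genuine gap.

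The heavy-tailed main term from Step 1 and from the truncation part of Step 2 is $M\sum_t\sigma(t-t_{i(t)}+1)^{1/\alpha-1}\sum_{a\neq\ringpi(s)}x_t(s,a)^{1/\alpha}$. Here $x_t=\rho^{\wh\sP_{i(t)},\pi_t}$ is the \emph{empirical} occupancy. Running the H\"older/Young step of \Cref{thm:bobw-known-formal} on this term leaves $\gamma\,\E[\sum_t\sum_{a\neq\ringpi(s)}\Delta(s,a)x_t(s,a)]$. But \Cref{def:self-bounding} only bounds $\E[\sum_t\sum_{a\neq\ringpi(s)}\Delta(s,a)\rho^{\sP,\pi_t}(s,a)]$ by $\Reg_T(\ringpi)+C_{\mathrm{sb}}$. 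With known transitions these two sums coincide; here they do not, so the inequality you feed into the $\gamma$-trick is not established.

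You need an extra reduction, which is the paper's \Cref{lem:unknown-empirical-self-bounding}:
\begin{itemize}
\item Normalize the gaps so that $\Delta\le\gO(H\sigma)$; clipping keeps the self-bounding inequality valid.
\item Bound $|x_t(s,a)-\rho^{\sP,\pi_t}(s,a)|$ by the layered residual-occupancy expansion of \citet{jin2021best}. This is valid on $\gE$, since both $\wh\sP_{i(t)}$ and $\sP$ lie in $\wh{\gP}_{i(t)}$. It gives $\E[\sum_t\sum_{a\neq\ringpi(s)}\Delta(s,a)|x_t(s,a)-\rho^{\sP,\pi_t}(s,a)|]\le\gO(H\sigma\,\E[\sG_3(\log\iota)]+H^3\sigma S^3A^2\log^2\iota)$.
\item Apply \Cref{lem:bound-sG-3} with its parameters rescaled by $1/(H\sigma\gamma')$.
\end{itemize}
This step is what gives the heavy-tailed part of the bound its additional $\omega_\Delta(2)$, $\Delta_{\min}^{-1}$ and $\sqrt{C_{\mathrm{sb}}}$-type terms. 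It must be done before the final balancing.

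There are three smaller corrections.

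\emph{Sign of the bonus in Step 3.} You have it reversed: $V^{\sP,\pi_t}(s_1;\bm\ell_t)-V^{\wh\sP_{i(t)},\pi_t}(s_1;\bm\ell^\Pess_t)=V^{\sP,\pi_t}(s_1;\bm\ell_t)-V^{\wh\sP_{i(t)},\pi_t}(s_1;\bm\ell_t)+D\langle\vx_t,B_{i(t)}\rangle$. So the bonus is a cost inside \textsc{TransErr}; it is the price of $\textsc{PessErr}\le 0$. It must be bounded itself: by $\wt\gO(H\sigma S\sqrt{AT})$ in the adversarial regime, and through the $\sG_1$ and $\sG_2$ terms (the latter built on the path-dependent occupancy $q_t^\star$) in the self-bounding regime.

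\emph{The $-DB_i$ term in Step 1.} Putting it into the local-norm bound requires the extra $\alpha\eta_tH\sigma$ to fit within the margin of \Cref{lem:tsallis-stability}. The prescribed $\beta$ does not provide this, and shrinking $\beta$ changes the stated $H$-exponents. Instead, split the term off. Since $B_i$ is constant within an epoch, $\sum_{t=t_i}^{t_{i+1}-1}\langle\vx_t-\vx_{t+1},-DB_i\rangle$ telescopes to $\gO(DH)$ per epoch. Then apply \Cref{lem:tsallis-stability} only to $\ell^\Shift_t-b^\Skip_t$; its bound dominates $\langle\vx_t-\vy,\vg\rangle-D_{\Psi_t}(\vy,\vx_t)$ for every $\vy$, in particular $\vy=\vx_{t+1}$.

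\emph{\textsc{PessErr} with exactly $D=H\sigma$.} The two-sided bound $|\E V|\le H\sigma$ is not enough when $B_i(s,a)=1$; it only gives $2H\sigma$. You need $\E[V^{\sP,\pi}(s;\bm\ell_t)\mid\gF_{t-1}]\in[0,H\sigma]$, and the lower bound there is a second use of \Cref{ass:truncative-nonnegativity}.
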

We detailed the proof for adversarial regime in Appendix~\ref{app:unknown-proof-adv}, and self-bounding regime in Appendix~\ref{app:unknown-proof-self-bounding}. We then provide the formal proof of this theorem in the following sections.

\subsection{Regret Decomposition}

We decompose the regret $\Reg_T(\ringpi)$ against a reference policy $\ringpi$ into four parts: the transition error due to the unknown transition model ($\textsc{TransErr}$), the estimation error due to the FTRL algorithm on the estimated MDPs ($\textsc{EstReg}$), the pessimism error ($\textsc{PessErr}$), and the skipping error ($\textsc{SkipErr}$). Recall the definition of $\ell_t^\Pess(s,a) = \ell_t(s,a) - D\cdot B_i(s,a)$. The total regret for given reference policy $\ringpi$ can be decomposed as:
\begin{align*}
    &\quad \Reg_T(\ringpi) = \E\left[ \sum_{t=1}^T V^{\sP, \pi_t}(s_1; \bm\ell_t) - V^{\sP, \ringpi}(s_1; \bm\ell_t) \right] \\
    &= \E\left[ \sum_{t=1}^T V^{\sP, \pi_t}(s_1; {\bm\ell}_t)\! -\! V^{\wh\sP_{i(t)}, \pi_t}(s_1; \bm{\ell}^\Pess_t)\! +\! V^{\wh\sP_{i(t)}, \pi_t}(s_1; \bm{\ell}^\Pess_t)\! -\! V^{\wh\sP_{i(t)}, \ringpi}(s_1; \bm{\ell}^\Pess_t)\! +\! V^{\wh\sP_{i(t)}, \ringpi}(s_1; \bm{\ell}^\Pess_t)\! -\! V^{\sP, \ringpi}(s_1; {\bm\ell}_t) \right] \\
    &= \E\left[ \sum_{t=1}^T V^{\sP, \pi_t}(s_1; {\bm\ell}_t) - V^{\wh\sP_{i(t)}, \pi_t}(s_1; \bm{\ell}^\Pess_t)\right] + \E\left[\sum_{t=1}^T V^{\wh\sP_{i(t)}, \pi_t}(s_1; \wh{\bm{\ell}}_t) - V^{\wh\sP_{i(t)}, \ringpi}(s_1; \wh{\bm{\ell}}_t)\right] \\
    &\quad + \E\left[\sum_{t=1}^T V^{\wh\sP_{i(t)}, \pi_t}(s_1; \bm{\ell}^\Pess_t - \wh{\bm\ell}_t) - V^{\wh\sP_{i(t)}, \ringpi}(s_1; \bm{\ell}^\Pess_t- \wh{\bm\ell}_t)\right] 
    + \E\left[\sum_{t=1}^T V^{\wh\sP_{i(t)}, \ringpi}(s_1; \bm{\ell}^\Pess_t) - V^{\sP, \ringpi}(s_1; {\bm\ell}_t) \right] \\
    &= \underbrace{\E\left[ \sum_{t=1}^T {V}^{\sP, \pi_t}(s_1; \bm{\ell}_t) - V^{\wh\sP_{i(t)}, \pi_t}(s_1; \bm{\ell}^\Pess_t) \right]}_{\textsc{TransErr}} + \underbrace{\E\left[ \sum_{t=1}^T V^{\wh\sP_{i(t)}, \pi_t}(s_1; \wh{\bm\ell}_t) - V^{\wh\sP_{i(t)}, \ringpi}(s_1; \wh{\bm\ell}_t) \right]}_{\textsc{EstReg}}  \\
    &\quad + \underbrace{\E\left[\sum_{t=1}^T V^{\wh\sP_{i(t)}, \pi_t}(s_1; \bm{\ell}^\Pess_t - \wh{\bm\ell}_t) - V^{\wh\sP_{i(t)}, \ringpi}(s_1; \bm{\ell}^\Pess_t- \wh{\bm\ell}_t)\right]}_{\textsc{SkipErr}}
    +\underbrace{\E\left[ \sum_{t=1}^T V^{\wh\sP_{i(t)}, \ringpi}(s_1; \bm{\ell}^\Pess_t) - V^{\sP, \ringpi}(s_1; {\bm\ell}_t) \right]}_{\textsc{PessErr}}  
\end{align*}

\subsection{High-Probability Confidence Set}

We define $\gE_i$ as the event that the confidence set $\wh{\gP}_i$ contains the true transition probability $\sP$, and $\gE = \bigcap_{i=1}^N \gE_i$.

Notice that we estimate the transition probability $\wh{\sP}_i$ following the popular doubling epoch schedule framework and Bernstein-type confidence set construction \citep{jin2020learning}, we further can estimate the high-probability upper bound of the transition probability $\wh{\sP}_i$ by \citet[Lemma 2]{jin2020learning}.
\begin{lemma}[{\citet[Lemma 2]{jin2020learning}}]
With probability at least $1 - 4\delta$, we have event $\gE$ holds: $\Pr[\gE] \ge 1 - 4\delta$.
\end{lemma}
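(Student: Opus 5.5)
The statement is the cited high-probability event $\Pr[\gE]\ge 1-4\delta$. I would reproduce the argument of \citet[Lemma 2]{jin2020learning} in three steps: a per-entry Bernstein bound, a union bound over entries and sample sizes, and an inclusion step showing $\sP\in\wh{\gP}_i$ for every epoch.

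\textbf{Step 1: per-entry concentration.} Fix a triple $(s_h,a,s_{h+1})$ and a count $n\ge 1$. Use the standard ``stack of samples'' construction: the $k$-th visit to $(s_h,a)$ yields an independent draw from $\sP[\cdot\mid s_h,a]$. The empirical frequency after $n$ visits is then an average of $n$ i.i.d.\ Bernoulli variables with mean $p=\sP[s_{h+1}\mid s_h,a]$. Apply the empirical Bernstein inequality of Maurer and Pontil (or Bernstein followed by a variance-replacement step). This gives, with probability at least $1-\delta'$,
\[
|\wh p-p|\le 2\sqrt{\wh p\log(1/\delta')/n}+\tfrac{14}{3}\log(1/\delta')/n .
\]
This matches \Cref{eq:unknown-def-bonus} once $\log(1/\delta')$ is replaced by $\log\iota$.

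\textbf{Step 2: union bound.} Take a union bound over all $(s_h,a,s_{h+1})$ (at most $HS^2A$ triples) and all $n\in[T]$ (at most $T$ values). Choosing $\delta'$ of order $\delta/(HS^2AT)$ keeps the total failure probability at most a constant times $\delta$. I expect this step to be the main obstacle, because $\log(1/\delta')$ is then $\log(HS^2AT/\delta)$, which exceeds the stated $\log\iota=\log(HSAT/\delta)$. I would resolve this with the convention of \citet{jin2020learning}: the extra $S$ is absorbed into constants or into $\iota$, and the Bernstein variant is split into a few tail events, which accounts for the constant $4$ in $1-4\delta$.

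\textbf{Step 3: inclusion for every epoch.} At the start of epoch $i$, the empirical model $\wh{\sP}_i$ uses exactly $m(s_h,a)$ samples for each $(s_h,a)$. When $m(s_h,a)\ge 1$, Step 1 with $n=m(s_h,a)$ shows the true entry $\sP[s_{h+1}\mid s_h,a]$ lies within $B_i(s_h,a,s_{h+1})$ of the empirical entry. When $m(s_h,a)=0$, the width is truncated at $1$, so inclusion holds trivially since every probability lies in $[0,1]$. Because the good event from Step 2 covers all $n$ simultaneously, it covers every epoch $i\le N$ at once. Hence $\gE=\bigcap_i\gE_i$ holds with probability at least $1-4\delta$.
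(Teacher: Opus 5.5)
Your proposal is the standard argument behind the cited Lemma~2 of \citet{jin2020learning}: a per-triple empirical Bernstein bound, a union bound over triples and sample counts, and the stack-of-samples device so that the good event automatically covers the data-dependent epoch boundaries. The paper offers nothing beyond that citation, so the two routes coincide.

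The one soft spot is your Step~2. The extra $S$ cannot simply be ``absorbed into constants'', because the constants $2$ and $14/3$ in $B_i$ are fixed. The concrete device is $\log(2S^2AT/\delta)\le 2\log\iota$. Its factor $2$ is exactly the slack between Maurer--Pontil's constants $(\sqrt{2},7/3)$ and the width's $(2,14/3)$ when the denominators are $\max\{1,m-1\}$, as in \citet{jin2020learning}. With the $m$-denominators written here, your Step~1 has already spent that slack on $n-1\ge n/2$. So, strictly, you obtain $\gE$ only with slightly inflated numerical constants, or with failure probability $4S\delta/H$ instead of $4\delta$; this does not matter for any downstream use.
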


\subsection{Bounding the Pessimistic Error ($\textsc{PessErr}$)}
Notice that $\E[|\ell_t(s,a)|^\alpha] \le \sigma^\alpha$ for each $t,s,a$. Since $\alpha> 1$, by Jensen's inequality $\E[|\ell_t(s,a)| \mid \gF_{t-1}] \le (\E[|\ell_t(s,a)|^\alpha \mid \gF_{t-1}])^{1/\alpha} \le \sigma$. This will directly give the following two-sided control on the value function.
\begin{lemma}\label{lem:bound-exp-true-value-function}
Under \Cref{ass:truncative-nonnegativity}, for any policy $\pi$, step $h \in [H]$, and state $s \in \gS_h$,
$$0 \le \E[V^{\sP, \pi}(s; {\bm\ell}_t) \mid \gF_{t-1}] \le (H-h+1)\sigma \le H\sigma = D.$$
\begin{proof}
For each $(s',a)$, the upper bound on the moment yields $\E[\ell_t(s',a) \mid \gF_{t-1}] \le \E[|\ell_t(s',a)| \mid \gF_{t-1}] \le \sigma$. Conversely, \Cref{ass:truncative-nonnegativity} gives $\E_{\ell \sim \nu_t(s',a)}[\ell \cdot \mathbbm{1}[|\ell| \le M]] \ge 0$ for every $M > 0$; since $\E[|\ell_t(s',a)| \mid \gF_{t-1}] \le \sigma < \infty$, the dominated convergence theorem (with $|\ell| \in L^1$ as the dominating function) yields $\E[\ell_t(s',a) \mid \gF_{t-1}] = \lim_{M \to \infty} \E[\ell_t(s',a) \cdot \mathbbm{1}[|\ell_t(s',a)| \le M] \mid \gF_{t-1}] \ge 0$. Hence $\E[\ell_t(s',a) \mid \gF_{t-1}] \in [0, \sigma]$ for all $(s',a)$.

Iterating the Bellman equation,
$$\E[V^{\sP, \pi}(s; {\bm\ell}_t) \mid \gF_{t-1}] = \sum_{h'=h}^{H} \sum_{(s',a)} \Pr_{\sP, \pi}[(s_{h'},a_{h'}) = (s',a) \mid s_h = s] \cdot \E[\ell_t(s',a) \mid \gF_{t-1}],$$
where the marginal probabilities $\Pr_{\sP,\pi}[(s_{h'},a_{h'}) = (s',a) \mid s_h = s] \ge 0$ sum to $1$ over $(s',a)$ for each fixed $h' \ge h$. Combining with $\E[\ell_t(s',a) \mid \gF_{t-1}] \in [0,\sigma]$, the right-hand side lies in $[0, (H-h+1)\sigma] \subseteq [0, H\sigma]$.
\end{proof}
\end{lemma}

Then, we are ready to establish the pessimistic value function.
\begin{lemma}[Pessimistic Value Function]\label{lem:pessimistic-value-function}
    Let $\gE_{i(t)} := \{|\wh\sP_{i(t)}[s'|s,a] - \sP[s'|s,a]| \le B_{i(t)}(s,a,s') \text{ for all } (s,a,s')\}$ denote the per-epoch confidence event at time $t$. Since $\wh\sP_{i(t)}$ and $B_{i(t)}(\cdot, \cdot, \cdot)$ are constructed from data observed before epoch $i(t)$ begins, $\gE_{i(t)} \in \gF_{t-1}$ (i.e., $\mathbbm{1}[\gE_{i(t)}]$ is $\gF_{t-1}$-measurable). On the event $\gE_{i(t)}$, we have for any policy $\pi$ and any state $s$,
    \begin{equation}
        \E[V^{\wh{\sP}_{i(t)}, \pi}(s; {\bm\ell}^\Pess_t) \mid \gF_{t-1}] \le \E[V^{\sP, \pi}(s; {\bm\ell}_t) \mid \gF_{t-1}].
    \end{equation}
\begin{proof}
    To prove the lemma, we need to show that for each $h$, and any state $s_h \in \gS_h$, the pessimism for Q function holds:
    \begin{equation}
        \E[Q^{\wh{\sP}_{i(t)}, \pi}(s_h, a; {\bm\ell}^\Pess_t) \mid \gF_{t-1}] \le \E[Q^{\sP, \pi}(s_h, a; {\bm\ell}_t) \mid \gF_{t-1}].
    \end{equation}
    We prove this by induction on the horizon $h$.
    \begin{itemize}
        \item Base case: $h = H+1$, we have $Q^{\wh{\sP}_{i(t)}, \pi}(s_{H+1}, a; {\bm\ell}^\Pess_t) = 0 \le 0 = Q^{\sP, \pi}(s_{H+1}, a; {\bm\ell}_t)$ holds for any $a \in \gA$.
        \item Induction step: Suppose the lemma holds for $h+1$, we have:
        \begin{align*}
            &\quad \E[Q^{\wh{\sP}_{i(t)}, \pi}(s_h, a; {\bm\ell}^\Pess_t) \mid \gF_{t-1}] \\
            &= \E[{\ell}_t(s_h, a) - D\cdot B_i(s_h,a)\mid \gF_{t-1}] + \sum_{s' \in \gS_{h+1}} \wh{\sP}_{i(t)}[s'|s_h,a] \E[V^{\wh{\sP}_{i(t)}, \pi}(s'; {\bm\ell}^\Pess_t) \mid \gF_{t-1}] \\
            &\le \E[\ell_t(s_h, a) \mid \gF_{t-1}] + \sum_{s' \in \gS_{h+1}} \sP[s'|s_h,a] \E[V^{\sP, \pi}(s'; {\bm\ell}_t) \mid \gF_{t-1}] \\
            &\quad - D \cdot B_i(s_h, a) + \sum_{s' \in \gS_{h+1}} \left(\wh{\sP}_{i(t)}[s'|s_h,a] - \sP[s'|s_h,a]\right) \E[V^{\sP, \pi}(s'; {\bm\ell}_t) \mid \gF_{t-1}] \\
            &\le \E\left[\ell_t(s_h, a) \mid \gF_{t-1}\right]  + \sum_{s' \in \gS_{h+1}} \sP[s'|s_h,a] \E[V^{\sP, \pi}(s'; {\bm\ell}_t) \mid \gF_{t-1}] \\
            &= \E[Q^{\sP, \pi}(s_h, a; {\bm\ell}_t) \mid \gF_{t-1}],
        \end{align*}
        where the first inequality holds by induction hypothesis, and the second inequality follows from the sign argument below. By \Cref{lem:bound-exp-true-value-function}, $\E[V^{\sP, \pi}(s'; \bm\ell_t) \mid \gF_{t-1}] \in [0, D]$ for every $s' \in \gS_{h+1}$. Splitting the sum into positive and negative parts of $(\wh{\sP}_{i(t)} - \sP)[s'|s_h,a]$ and dropping the non-positive contribution from indices with $\wh{\sP}_{i(t)} < \sP$:
        \begin{align*}
            \sum_{s' \in \gS_{h+1}}\!\!\!\left(\wh{\sP}_{i(t)}[s'|s_h,a]\! -\! \sP[s'|s_h,a]\right) \E[V^{\sP, \pi}(s'; \bm\ell_t) \mid \gF_{t-1}]\! &\le \!\!\!\! \sum_{s' \in \gS_{h+1}}\!\!\!\!\left(\wh{\sP}_{i(t)}[s'|s_h,a]\! -\! \sP[s'|s_h,a]\right)_{+}\!\!\cdot D\\
            &\le D \cdot \min\Big\{1, \sum_{s' \in \gS_{h+1}} B_i(s_h,a,s') \Big\}\! =\! D \cdot B_i(s_h, a),
        \end{align*}
        where the second step uses two facts: $(i)$ $\sum_{s'}(\wh{\sP}_{i(t)} - \sP)_{+}[s'|s_h,a] \le \sum_{s'}\wh{\sP}_{i(t)}[s'|s_h,a] = 1$ since the positive part is bounded by $\wh{\sP}_{i(t)}[s'|s_h,a]$ itself, and $(ii)$ on the event $\gE_{i(t)}$, $(\wh{\sP}_{i(t)} - \sP)_{+}[s'|s_h,a] \le |\wh{\sP}_{i(t)} - \sP|[s'|s_h,a] \le B_i(s_h,a,s')$ for each $s'$, giving $\sum_{s'}(\wh{\sP}_{i(t)} - \sP)_{+}[s'|s_h,a] \le \sum_{s'} B_i(s_h,a,s')$. Combining yields the inequality.
    \end{itemize}
\end{proof}
\end{lemma}

Combined with the pessimism of estimated transition and loss function, we get:
\begin{lemma}\label{lem:unknown-pesserr}
    The pessimism error can be bounded by
    \begin{align*}
        \E[\textup{\textsc{PessErr}}] &\le O(H^2\sigma T\delta)
    \end{align*}
\begin{proof}
    Let $X_t := V^{\wh\sP_{i(t)}, \ringpi}(s_1; \bm\ell^\Pess_t) - V^{\sP, \ringpi}(s_1; \bm\ell_t)$, so $\E[\textsc{PessErr}] = \sum_{t=1}^T \E[X_t]$. For each $t$, we split by the \emph{per-epoch} confidence event $\gE_{i(t)}$ defined in \Cref{lem:pessimistic-value-function}, which is $\gF_{t-1}$-measurable. Since $\gE = \bigcap_i \gE_i \subseteq \gE_{i(t)}$, we have $\gE_{i(t)}^c \subseteq \gE^c$ and hence $\Pr[\gE_{i(t)}^c] \le \Pr[\gE^c] \le 4\delta$ (\citet[Lemma 2]{jin2020learning}). Therefore
    \begin{align*}
        \E[X_t] = \E\big[X_t \mathbbm{1}[\gE_{i(t)}]\big] + \E\big[X_t \mathbbm{1}[\gE_{i(t)}^c]\big].
    \end{align*}

    \textbf{The $\gE_{i(t)}$ part.} Since $\mathbbm{1}[\gE_{i(t)}] \in \gF_{t-1}$, the tower property and \Cref{lem:pessimistic-value-function} give
    \begin{align*}
        \E\big[X_t \mathbbm{1}[\gE_{i(t)}]\big] = \E\Big[\mathbbm{1}[\gE_{i(t)}] \cdot \E[X_t \mid \gF_{t-1}]\Big] \le \E\big[\mathbbm{1}[\gE_{i(t)}] \cdot 0\big] = 0.
    \end{align*}

    \textbf{The $\gE_{i(t)}^c$ part.} Under \Cref{ass:truncative-nonnegativity}, $\E[\ell_t(s,a) \mid \gF_{t-1}] \in [0, \sigma]$ (cf.\ \Cref{lem:bound-exp-true-value-function}), and $D B_{i(t)}(s,a) \in [0, D]$ since $B_{i(t)} \in [0,1]$. Hence $\E[\ell^\Pess_t(s,a) \mid \gF_{t-1}] = \E[\ell_t \mid \gF_{t-1}] - D B_{i(t)}(s,a) \in [-D, \sigma]$. Iterating the Bellman equation over the $\le H$ remaining steps with $\wh\sP_{i(t)}$:
    $$\big|\E[V^{\wh\sP_{i(t)}, \ringpi}(s_1; \bm\ell^\Pess_t) \mid \gF_{t-1}]\big| \le H \cdot \max\{\sigma, D\} = HD = H^2\sigma,$$
    using $D = H\sigma \ge \sigma$ for $H \ge 1$ and $\sigma \ge 0$. Combined with $|\E[V^{\sP, \ringpi}(s_1; \bm\ell_t) \mid \gF_{t-1}]| \le H^2\sigma$ from \Cref{lem:bound-exp-true-value-function}, we obtain $|\E[X_t \mid \gF_{t-1}]| \le 2H^2\sigma$. Since $\mathbbm{1}[\gE_{i(t)}^c] \in \gF_{t-1}$):
    \begin{align*}
        \big|\E[X_t \mathbbm{1}[\gE_{i(t)}^c]]\big| = \Big|\E\big[\mathbbm{1}[\gE_{i(t)}^c] \cdot \E[X_t \mid \gF_{t-1}]\big]\Big| \le 2H^2\sigma \cdot \Pr[\gE_{i(t)}^c] \le 8H^2\sigma\delta.
    \end{align*}

    \textbf{Combining both parts.} Summing over $t$:
    \begin{align*}
        \E[\textsc{PessErr}] = \sum_{t=1}^T \E[X_t] \le \sum_{t=1}^T 0 + \sum_{t=1}^T 8H^2\sigma\delta = 8H^2\sigma T\delta = \gO(H^2\sigma T\delta).
    \end{align*}
\end{proof}
\end{lemma}

\subsection{Bounding FTRL Regret ($\textsc{EstReg}$)}

Since the algorithm resets FTRL in each epoch, $\textsc{EstReg}$ is the sum of regrets over epochs. We define $\textsc{EstReg}_i$ as the regret over epoch $i$, which defined as
\begin{equation}
    \textsc{EstReg}_i = \E\left[ \sum_{t=t_{i}}^{t_{i+1}-1} \left\langle \bm{\rho}^{\wh{\sP}_i, \pi_t} - \bm{\rho}^{\wh{\sP}_i, \ringpi}, \wh{\bm\ell}_t  \right\rangle \right].
\end{equation}
A key observation is that the learned occupancy measure $\vx_t$ in \Cref{alg:ht-ftrl-uob} is exactly $\rho^{\wh{\sP}_i, \pi_t}$. Therefore, we apply similar techniques presented in Appendix~\ref{app:proof-known} to bound the $\textsc{EstReg}_i$.

We first consider the bound of $\wh{\ell}_t(s,a)$:
\begin{lemma}\label{lem:bound-wh-ell} Under good event $\gE_{i(t)}$, we have
    \begin{align*}
        |\wh{\ell}_t(s,a)| \le \left( C  + C^{1-\alpha}\right) \cdot  \sigma x_t(s,a)^{1/\alpha - 1} (t - t_{i(t)} + 1)^{1/\alpha}   + H\sigma
    \end{align*}
\begin{proof}
    By definition of $\wh{\ell}_t(s,a)$, and definition of $\tau_t(s,a)$ and $b_t^\Skip(s,a)$ in \Cref{eq:unknown-def-tau,eq:unknown-def-b-skip}, we have
    \begin{align*}
        |\wh{\ell}_t(s,a)| &= \left| \left(\frac{\ell_t^\Skip(s,a)}{u_t(s,a)}  \right)\cdot \mathbbm{1}_t[(s,a)]-  b_t^\Skip(s,a) - D \cdot B_{i(t)}(s,a)\right| \\
        &\le \left| \frac{\ell_t^\Skip(s,a)}{u_t(s,a)}\right| +  b_t^\Skip(s,a) + D \cdot B_{i(t)}(s,a) \\
        &\le \left| \frac{\tau_t(s,a)}{x_t(s,a)}  \right| + b_t^\Skip(s, a) + D \\
        &\le \sigma x_t(s,a)^{1/\alpha - 1} \cdot \left( C(t - t_{i(t)} + 1)^{1/\alpha}  + C^{1-\alpha}(t - t_{i(t)} + 1)^{1/\alpha - 1} \right) + H\sigma \\
        &\le\left( C  + C^{1-\alpha}\right) \cdot  \sigma x_t(s,a)^{1/\alpha - 1} (t - t_{i(t)} + 1)^{1/\alpha}   + H\sigma,
    \end{align*}
    where the second inequality holds since $u_t(s,a) \ge x_t(s,a)$ (by \Cref{lem:lower-bound-upper-occupancy-bound}), and the last step uses $(t-t_{i(t)}+1)^{1/\alpha-1} \le (t-t_{i(t)}+1)^{1/\alpha}$ for $t \ge t_{i(t)}$.
\end{proof}
\end{lemma}

\begin{lemma}[Failure-event control]\label{lem:gEc-bound-wh-ell}
Let
\[
    \wh\ell_t^{\mathrm{IS}}(s,a)
    := \frac{\ell_t^\Skip(s,a)}{u_t(s,a)}\mathbbm{1}_t[(s,a)]
\]
be the importance-weighted part of $\wh\ell_t(s,a)$. For any epoch-$i$ confidence event $\gE_i$ with $\mathbbm{1}_{\gE_i}\in\gF_{t-1}$ and $\Pr[\gE_i^c]\le4\delta$, and any nonnegative $\gF_{t-1}$-measurable weights $w_t(s,a)$ satisfying $\sum_{s,a}w_t(s,a)\le K$, we have
\begin{equation}\label{eq:failure-is-control}
    \E\left[\mathbbm{1}_{\gE_i^c}\sum_{s,a}w_t(s,a)\left|\wh\ell_t^{\mathrm{IS}}(s,a)\right|\right]
    \le \gO\left(KCS\sigma\,t\,(t-t_i+1)^{1/\alpha}\delta\right).
\end{equation}
Moreover,
\begin{equation}\label{eq:failure-rawloss-control}
    \E\left[\mathbbm{1}_{\gE_i^c}\sum_{s,a}w_t(s,a)|\ell_t(s,a)|\right]
    \le \gO(K\sigma\delta),
\end{equation}
and hence the same bound holds with $|\ell_t^\Skip(s,a)|$ in place of $|\ell_t(s,a)|$. Finally,
\begin{equation}\label{eq:failure-bonus-control}
    \E\left[\mathbbm{1}_{\gE_i^c}\sum_{s,a}w_t(s,a)D B_i(s,a)\right]
    \le \gO(KH\sigma\delta).
\end{equation}
\end{lemma}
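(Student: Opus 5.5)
The plan is to exploit the fact that $\mathbbm{1}_{\gE_i^c}$ and the weights $w_t(s,a)$ are $\gF_{t-1}$-measurable. For each of the three quantities I would condition on $\gF_{t-1}$ and derive a deterministic bound $G$ on the conditional expectation of the summand. The tower property then gives
\[
\E\big[\mathbbm{1}_{\gE_i^c}\,(\cdot)\big]=\E\big[\mathbbm{1}_{\gE_i^c}\,\E[(\cdot)\mid\gF_{t-1}]\big]\le G\cdot\Pr[\gE_i^c]\le 4\delta G .
\]
The last two claims are easy and I would do them first. For \Cref{eq:failure-rawloss-control}, Jensen's inequality with the $\alpha$-moment condition gives $\E[|\ell_t(s,a)|\mid\gF_{t-1}]\le\sigma$. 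Hence $\E[\sum_{s,a}w_t(s,a)|\ell_t(s,a)|\mid\gF_{t-1}]\le K\sigma$, which yields $\gO(K\sigma\delta)$. The statement for $\ell_t^\Skip$ follows from $|\ell_t^\Skip|\le|\ell_t|$. For \Cref{eq:failure-bonus-control}, $B_i(s,a)\le 1$ and $D=H\sigma$ give the deterministic bound $\sum_{s,a}w_t(s,a)DB_i(s,a)\le KH\sigma$, and hence $\gO(KH\sigma\delta)$.

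For \Cref{eq:failure-is-control}, I would first use the skipping threshold together with $x_t(s,a)\le 1$:
\[
|\ell_t^\Skip(s,a)|\le\tau_t(s,a)\le C\sigma(t-t_i+1)^{1/\alpha}.
\]
The visitation indicator satisfies $\E[\mathbbm{1}_t[(s,a)]\mid\gF_{t-1}]=\rho^{\sP,\pi_t}(s,a)$, and $\ell_t(s,a)$ is independent of the visit given $\gF_{t-1}$. It therefore remains to bound the ratio $\rho^{\sP,\pi_t}(s,a)/u_t(s,a)$ without the help of the confidence event. On $\gE_i$ this ratio is at most $1$, but on $\gE_i^c$ the true $\sP$ need not lie in $\wh{\gP}_i$. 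I would use only structural facts that hold always: $u_t\ge x_t=\rho^{\wh\sP_i,\pi_t}$ (since $\wh\sP_i\in\wh{\gP}_i$), and the confidence widths $B_i(s,a,s')\ge \min\{1,14\log(\iota)/(3m(s,a))\}$, which are bounded below by roughly $1/t$ because $m(s,a)\le t$. The point of the second fact is that the confidence set always contains a kernel that places mass of order $1/(St)$ on every next state.

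The crude estimate I would aim for is $\rho^{\sP,\pi_t}(s,a)/u_t(s,a)\le \gO(St)$. To obtain it, I would compare $u_t(s,a)$ with the occupancy under a kernel in $\wh{\gP}_i$ that routes probability into the predecessor of $s$. I expect this to be the main obstacle: the comparison has to be done carefully layer by layer so that the factor does not compound to $(St)^H$. Plausibly this works because the maximizing kernel may reallocate mass at the single last transition into $s$ while keeping $\pi_t$'s earlier path mass, which costs only one factor of order $St$. If this comparison fails, the fallback is to check whether the bound is only ever needed for $t$ in a regime where a weaker estimate suffices, or whether it can be rerouted through the $\gO(K\sigma\delta)$ raw-loss bound.

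Granting the ratio bound, I would multiply the three pieces: the threshold bound, the ratio bound, and $\sum_{s,a}w_t(s,a)\le K$. This gives a conditional bound of $\gO(KCS\sigma\,t\,(t-t_i+1)^{1/\alpha})$. Multiplying by $\Pr[\gE_i^c]\le 4\delta$ completes \Cref{eq:failure-is-control}.
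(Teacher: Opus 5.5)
Your proposal is correct and follows essentially the same route as the paper. You condition on $\gF_{t-1}$, use Jensen for the raw-loss part and the deterministic bound $B_i\le 1$, $D=H\sigma$ for the bonus, and for the IS part combine $|\ell_t^\Skip|\le\tau_t\le C\sigma(t-t_i+1)^{1/\alpha}$ with an $\gO(St)$ bound on the importance ratio. The paper gets that ratio bound by citing $u_t(s)\ge 1/(St)$ from \citet[Lemma~C.1.2]{jin2021best} together with the factorization $u_t(s,a)=u_t(s)\pi_t(a\mid s)$, which gives $\E[\mathbbm{1}_t[(s,a)]\mid\gF_{t-1}]/u_t(s,a)\le \rho^{\sP,\pi_t}(s)\,St\le St$. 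Your single-last-transition reallocation sketch is exactly why that cited lemma holds: the layer-$(h-1)$ mass sums to one, and each predecessor can be tilted within the confidence widths (which are at least of order $1/t$) to put mass at least $1/(St)$ on $s$. So there is no $(St)^H$ compounding and your fallback is not needed.
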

\begin{proof}
    Following \citet[Lemma~C.1.2]{jin2021best}, we have $u_t(s)\ge 1/(St)$. Since the policy is fixed when maximizing over transition kernels, the state-action UOB used by the algorithm satisfies
    \[
        u_t(s,a)=\max_{\widehat{\sP}\in\wh\gP_{i(t)}}\rho^{\widehat{\sP},\pi_t}(s,a)=u_t(s)\pi_t(a|s).
    \]
    Let $q_t(s,a):=\rho^{\sP,\pi_t}(s,a)=q_t(s)\pi_t(a|s)$. Using $q_t(s)\le1$ and the state-level lower bound,
    \[
        \frac{\mathbbm{1}_t[(s,a)]}{u_t(s,a)}
        = \frac{\mathbbm{1}_t[(s,a)]}{u_t(s)\pi_t(a|s)}
        \le St\cdot \frac{\mathbbm{1}_t[(s,a)]}{q_t(s)\pi_t(a|s)}
        = St\cdot \frac{\mathbbm{1}_t[(s,a)]}{q_t(s,a)},
    \]
    where if $\pi_t(a|s)=0$ then all ratios above are interpreted as zero. We also use the convention that the ratio is zero when $q_t(s,a)=0$. Since $|\ell_t^\Skip(s,a)|\le\tau_t(s,a)\le C\sigma(t-t_i+1)^{1/\alpha}$, we have
    \[
        |\wh\ell_t^{\mathrm{IS}}(s,a)|
        \le CS\sigma\,t\,(t-t_i+1)^{1/\alpha}\frac{\mathbbm{1}_t[(s,a)]}{q_t(s,a)}.
    \]
    The event $\gE_i^c$ is determined at the beginning of epoch $i$, so $\mathbbm{1}_{\gE_i^c}$ and $w_t$ are $\gF_{t-1}$-measurable. Moreover, $\E[\mathbbm{1}_t[(s,a)]/q_t(s,a)\mid \gF_{t-1}]\le1$ under the same zero-ratio convention, with equality whenever $q_t(s,a)>0$. Therefore,
    \[
        \E\left[\left.\mathbbm{1}_{\gE_i^c}\sum_{s,a}w_t(s,a)\frac{\mathbbm{1}_t[(s,a)]}{q_t(s,a)}\,\right|\,\gF_{t-1}\right]
        \le\mathbbm{1}_{\gE_i^c}\sum_{s,a}w_t(s,a).
    \]
    Taking expectation and using $\sum_{s,a}w_t(s,a)\le K$ and $\Pr[\gE_i^c]\le4\delta$ gives \Cref{eq:failure-is-control}. For \Cref{eq:failure-rawloss-control}, Jensen's inequality and the $\alpha$-moment assumption give $\E[|\ell_t(s,a)|\mid \gF_{t-1}]\le\sigma$, so
    \[
        \E\left[\left.\mathbbm{1}_{\gE_i^c}\sum_{s,a}w_t(s,a)|\ell_t(s,a)|\,\right|\,\gF_{t-1}\right]
        \le \mathbbm{1}_{\gE_i^c}K\sigma.
    \]
    The claim for $|\ell_t^\Skip|$ follows from $|\ell_t^\Skip|\le|\ell_t|$. Finally, \Cref{eq:failure-bonus-control} follows from $B_i(s,a)\le1$, $D=H\sigma$, and $\Pr[\gE_i^c]\le4\delta$.
\end{proof}

\begin{lemma}\label{lem:unknown-estreg-i}
    Define the per-step dominant coefficient
    \begin{equation}\label{eq:unknown-def-M}
        M := \underbrace{\gO\bigl(H^{2-\frac{1}{\alpha}} \cdot S^{-\frac{1}{\alpha}} \cdot A^{-\frac{2\alpha-1}{\alpha^2}}\bigr)}_{\text{stability contribution}} + \underbrace{\gO\bigl(H \cdot S^{2-\frac{2}{\alpha}} \cdot A^{2-\frac{3}{\alpha}+\frac{1}{\alpha^2}}\bigr)}_{\text{penalty contribution}}.
    \end{equation}
    We can control
    \begin{equation}\label{eq:unknown-estreg-i-bound}
    \begin{aligned}
        \textup{\textsc{EstReg}}_i &\le O\left(M\cdot \sum_{t=t_i}^{t_{i+1}-1}\sigma (t - t_{i(t)}+1)^{1/\alpha - 1} \cdot \sum_{(s,a)\neq(s, \ringpi(s))}x_t(s,a)^{1/\alpha}\right) \\
        &\quad + O\Bigl(H^{1-\frac{1}{\alpha}} S^{2-\frac{1}{\alpha}} A^{3-\frac{4}{\alpha}+\frac{1}{\alpha^2}} \sigma \log(t_{i+1}-t_i)\Bigr) \\
        &\quad + O\left(HAS\sigma + \bigl(H\sigma SA T^2+H^2\sigma\bigr)(t_{i+1} - t_i)\delta\right).
    \end{aligned}
    \end{equation}
\begin{proof}
    First we consider the case when the epoch confidence event $\gE_i$ does not happen.We use \Cref{lem:gEc-bound-wh-ell} with $w_t(s,a)=\rho^{\wh\sP_i,\pi_t}(s,a)+\rho^{\wh\sP_i,\ringpi}(s,a)$. Since $\sum_{s,a}w_t(s,a)\le2H$, the IS part and the transition-bonus part on $\gE_i^c$ contribute at most
    \begin{equation}\label{eq:estreg-i-gEc}
    \begin{aligned}
        &\quad \E\left[\sum_{t=t_i}^{t_{i+1}-1}\mathbbm{1}_{\gE_i^c}\sum_{s,a}w_t(s,a)\left(\left|\wh\ell_t^{\mathrm{IS}}(s,a)\right|+DB_i(s,a)\right)\right] \\
        &\le \gO\left(HCS\sigma T^{1+1/\alpha}(t_{i+1}-t_i)\delta + H^2\sigma(t_{i+1}-t_i)\delta\right) \\
        &\le \gO\left(\bigl(H\sigma SA T^2+H^2\sigma\bigr)(t_{i+1}-t_i)\delta\right),
    \end{aligned}
    \end{equation}
    where the last inequality uses $C\le1$, $\alpha\in(1,2]$, and $T^{1+1/\alpha}\le T^2$. The deterministic skipping-bonus part $b_t^\Skip$ is not controlled through the failure probability; it is bounded by the same Tsallis penalty/suboptimal-mass argument used below and is included in the main term of \Cref{eq:unknown-estreg-i-bound}.

    Next we consider the case under $\gE_i$. 
    Recall the definition of $\wh{\ell}_t(s,a)$, which can be seen as three parts: the importance sampling skipping loss $\wt{\ell}_t(s,a) = \ell^\Skip_t(s,a)\mathbbm{1}[(s,a) \text{ is visited}]/u_t(s,a)$, the skipping bonus term $b_t^\Skip(s,a)$, and the transition bonus term $DB_i(s,a)$. We first apply standard FTRL regret decomposition, and get:
    \begin{equation}\label{eq:ftrl-decomp-unknown}
    \begin{aligned}
        &\quad \sum_{t=t_{i}}^{t_{i+1}-1} \left\langle \vx_t - \bm{\rho}^{\wh{\sP}_i, \ringpi}, \wh{\bm\ell}_t  \right\rangle\\
        & \le \underbrace{\sum_{t=t_{i}}^{t_{i+1}-1} \left\langle \vx_t - \vx_{t+1}, \wh{\bm\ell}_t \right\rangle - D_{\Psi_t}(\vx_{t+1}, \vx_t)}_{\text{Stability Term}} + \underbrace{ \sum_{t=t_i}^{t_{i+1}-1} \left(\Psi_t(\bm\rho^{\wh{\sP}_i, \ringpi}) - \Psi_{t-1}(\bm\rho^{\wh{\sP}_i, \ringpi}) \right) - \left(\Psi_t(\vx_t) - \Psi_{t-1}(\vx_t)\right) }_{\text{Penalty Term}}
    \end{aligned}
    \end{equation}
    We further bound the stability term and penalty term separately.
    \paragraph{Bounding the Stability Term: }
    We apply similar methods in \Cref{lem:stability-bound-local}. First we consider the shifted loss $\ell^\Shift_t(s,a)$ defined as:
    \begin{align*}
        \ell^\Shift_t(s,a) := Q^{\wh{\sP}_{i(t)}, \pi_t}(s, a; \wt{\ell}_t(s,a)) - V^{\wh{\sP}_{i(t)}, \pi_t}(s; \wt{\ell}_t(s,a)),
    \end{align*}
    where $\wt{\ell}_t(s,a)$ is defined by
    \begin{align*}
        \wt{\ell}_t(s,a) := \frac{{\ell}_t^\Skip(s,a)\cdot \mathbbm{1}[(s,a) \text{ is visited}]}{u_t(s,a)}.
    \end{align*}
    Therefore, we have
    \begin{align*}
        &\quad \sum_{t=t_{i}}^{t_{i+1}-1} \left\langle \vx_t - \bm{\rho}^{\wh{\sP}_i, \ringpi}, \wh{\bm\ell}_t \right\rangle\\
        &= \sum_{t=t_{i}}^{t_{i+1}-1} \sum_{(s,a)\in\gS\times\gA} (x_t(s,a) - {\rho}^{\wh{\sP}_i, \ringpi}(s,a))\cdot \left(\ell^\Shift_t(s,a) - b_t^\Skip(s, a) - D\cdot B_i(s,a)\right).
    \end{align*}
    Similar analysis in \Cref{lem:shifted-loss-uniform-bound} shows that:
    \begin{align*}
        \left| \ell^\Shift_t(s,a) \right| \le  C(1 + HSA(1+A^{1-1/\alpha})) \cdot  \sigma x_t(s,a)^{1/\alpha - 1} (t - t_{i(t)} + 1)^{1/\alpha}
    \end{align*}
    where, compared to the known transition case (\Cref{lem:shifted-loss-uniform-bound-formal}), three substitutions apply: $(i)$ the imprtance sampling denominator is $u_t(s,a)$, but under $\gE_{i(t)}$ we have $u_t(s,a) \ge x_t(s,a)$, so the same pointwise upper bound $|\wt\ell_t(s,a)| \le \tau_t(s,a)/x_t(s,a)$ holds; $(ii)$ the occupancy $x_t = \rho^{\wh\sP_{i(t)}, \pi_t}$ is taken under the empirical transition, but the centered + $\Delta V$ decomposition has identical structure with $H, S, A$ as the only state-action quantities; $(iii)$ the reachability weights live in $\gQ(\wh\sP_{i(t)})$ rather than $\gQ(\sP)$, but the layer-by-layer bound $H \cdot \sum_{(s,a)} x_t(s,a)$ only uses the MDP layered structure, not the specific transition kernel.
    Therefore, by definition of $C$ and $\beta$ in \Cref{eq:C-bound} and \Cref{eq:def-beta}, we have:
    \begin{align*}
        &\quad \alpha \eta_t \cdot \left| \ell^\Shift_t(s,a) - b_t^\Skip(s, a)\right| \\
        &\le \alpha \beta \cdot (C(1 + HSA(1+A^{1-1/\alpha})) + C^{1-\alpha}) \cdot x_t(s,a)^{1/\alpha - 1} \\
        &\le \frac{\alpha - 1}{4\alpha} x_t(s,a)^{1/\alpha - 1}
    \end{align*}
    Then we can apply the methods in \Cref{lem:stability-bound-local}. By the definition of $\wh{\bm\ell}_t$ and \Cref{eq:ftrl-decomp-unknown}, we get:
    \begin{align*}
        \text{Stability Term} &\le \frac{8\alpha^2}{\alpha - 1} \sum_{t=t_i}^{t_{i+1} - 1} \eta_t \sum_{(s,a) \in \gS\times\gA} x_t(s,a)^{2-1/\alpha}\left(\left({\ell}^\Shift_t(s,a)\right)^2 + (b_t^\Skip(s, a))^2 \right) \\
        &\quad + \sum_{(s,a)\in\gS\times\gA} \left(x_{t_{i+1}}(s,a) - x_{t_i}(s,a)\right)\cdot D \cdot B_i(s,a),
    \end{align*}
    where the first part is by applying \Cref{lem:stability-bound-local} towards $({\bm\ell}^\Shift_t - \vb_t^\Skip)$, and the second term is the telescoping summation over $D \cdot B_i(s,a)$ for each $(s,a)\in\gS\times\gA$.
    Notice that under $\gE_{i(t)}$, we have $u_t(s,a) \ge \rho^{\sP, \pi_t}(s,a)$ and $u_t(s,a) \ge x_t(s,a) = \rho^{\wh\sP_{i(t)}, \pi_t}(s,a)$ (by \Cref{lem:lower-bound-upper-occupancy-bound}). Combining the conditional independence $\ell_t \perp \mathbbm{1}_t[(s,a)] \mid \gF_{t-1}$ and the truncated moment bound $\E[(\ell^\Skip_t(s,a))^2 \mid \gF_{t-1}] \le \sigma^\alpha \tau_t(s,a)^{2-\alpha} = C^{2-\alpha} \sigma^2 (t-t_{i(t)}+1)^{(2-\alpha)/\alpha} x_t(s,a)^{(2-\alpha)/\alpha}$, the pointwise importance sampling variance satisfies
    \begin{align*}
        \E[\wt\ell_t(s,a)^2 \mid \gF_{t-1}] = \frac{\rho^{\sP, \pi_t}(s,a)}{u_t(s,a)^2} \cdot \E[(\ell^\Skip_t(s,a))^2 \mid \gF_{t-1}] \le \frac{1}{x_t(s,a)} \cdot \E[(\ell^\Skip_t(s,a))^2 \mid \gF_{t-1}],
    \end{align*}
    where the inequality applies $\rho^{\sP, \pi_t} \le u_t$ once in the numerator and $u_t \ge x_t$ once in the remaining $1/u_t$ factor. The right-hand side has the same form as the bound in the known transition case, so the analysis in \Cref{lem:exp-stab-loss} applies verbatim with the substitutions $t \to t-t_{i(t)}+1$ and $x_t = \rho^{\sP, \pi_t} \to x_t = \rho^{\wh\sP_{i(t)}, \pi_t}$. The remaining steps of the bound (including the $(1-\pi_t(a\mid s))$ factor that removes the optimal-action contribution) use only $x_t(s,a) = x_t(s)\pi_t(a\mid s)$ from $x_t \in \gQ(\wh\sP_{i(t)})$, not the specific transition kernel. We obtain
    \begin{align}
        \E\left[\eta_t \sum_{(s,a) \in \gS\times\gA} x_t(s,a)^{2-1/\alpha} \left(\ell^\Shift_t(s,a)\right)^2\right] \le O\left( H^2\beta C^{2-\alpha}\cdot \sigma (t - t_{i(t)}+1)^{1/\alpha - 1} \cdot \sum_{(s,a)\neq(s, \ringpi(s))}x_t(s,a)^{1/\alpha}\right).
    \end{align}
    Moreover, for the skipping bonus term, by similar argument in \Cref{lem:exp-stab-bonus}, we have
    \begin{align}
        \E\left[\sum_{t=t_i}^{t_{i+1}-1}\eta_t \sum_{(s,a) \in \gS\times\gA} x_t(s,a)^{2-1/\alpha} \left(b_t^\Skip(s, a)\right)^2\right] \le O\left(SA^{1-1/\alpha}\beta C^{2-2\alpha}\cdot  \sigma \left(1 + \log(t_{i+1} - t_i)\right)\right).
    \end{align}
    Therefore, summing over $t$ in the epoch and incorporating the definitions of $C$ and $\beta$ in \Cref{eq:C-bound} and \Cref{eq:def-beta}, we get
    \begin{equation}\label{eq:unknown-stab-bound}
        \begin{aligned}
            \E\left[\text{Stability Term}\right] &\le O\left(H^2 \beta C^{2-\alpha}\cdot\sum_{t=t_i}^{t_{i+1}-1}\sigma (t - t_{i(t)}+1)^{1/\alpha - 1} \cdot \sum_{(s,a)\neq(s, \ringpi(s))}x_t(s,a)^{1/\alpha}\right) \\
            &\quad + O\left(SAH\sigma + SA^{1-1/\alpha}\beta C^{2-2\alpha}\cdot  \sigma \left(1 + \log(t_{i+1} - t_i)\right)\right)
        \end{aligned}
    \end{equation}
    \paragraph{Bounding the penalty term}
    We apply the same analysis as in \Cref{lem:penalty-bound}. Since the Tsallis penalty contribution scales as $\eta_t^{-1} = \sigma (t - t_{i(t)} + 1)^{1/\alpha} / \beta$ (cf.\ the convention in \Cref{eq:def-beta} and the proof of \Cref{lem:penalty-bound}), the $1/\beta$ factor must be retained:
    \begin{align}\label{eq:unknown-pena-bound}
        \E\left[\text{Penalty Term}\right] &\le O\left(\frac{1 + H^{\frac{1}{\alpha}} S^{1-\frac{1}{\alpha}}}{\beta} \cdot  \sum_{t=t_i}^{t_{i+1} - 1} \sigma (t - t_{i(t)}+1)^{1/\alpha - 1}  \cdot \sum_{(s,a)\neq(s, \ringpi(s))}x_t(s,a)^{1/\alpha}\right)
    \end{align}
    \paragraph{Combining stability and penalty via direct $\beta$-substitution}
    With both per-step coefficients now in hand (the $H^2\beta C^{2-\alpha}$ in \eqref{eq:unknown-stab-bound} and $(1+H^{1/\alpha}S^{1-1/\alpha})/\beta$ in \eqref{eq:unknown-pena-bound}), substituting $\beta = \gO((HSA^{2-1/\alpha})^{-(1-1/\alpha)})$ from \Cref{eq:def-beta} gives:
    \begin{itemize}
        \item Penalty per-step: 
        $$\dfrac{1+H^{1/\alpha}S^{1-1/\alpha}}{\beta} = \gO\bigl(H \cdot S^{2-2/\alpha}\cdot A^{2-3/\alpha+1/\alpha^2}\bigr),$$
        where the second term dominates for $S\ge 1$.
        \item Stability time-dependent per-step: 
        $$H^2\beta C^{2-\alpha} = \gO\bigl(H^{2-1/\alpha} S^{-1/\alpha} A^{-(2\alpha-1)/\alpha^2}\bigr).$$
        \item Stability skipping-bonus per-epoch: $$SA^{1-1/\alpha}\beta C^{2-2\alpha}\sigma\log(t_{i+1}-t_i) = \gO\bigl(H^{1-1/\alpha} S^{2-1/\alpha} A^{3-4/\alpha+1/\alpha^2}\sigma\log(t_{i+1}-t_i)\bigr),$$
        using $\beta C^{2-2\alpha} = \gO((HSA^{2-1/\alpha})^{1-1/\alpha})$.
    \end{itemize}
    Mirroring the known-transition convention in \Cref{eq:def-M}, we take
    \begin{equation*}
        M := \gO\bigl(H^{2-1/\alpha} \cdot S^{-1/\alpha} \cdot A^{-(2\alpha-1)/\alpha^2}\bigr) + \gO\bigl(H \cdot S^{2-2/\alpha} \cdot A^{2-3/\alpha+1/\alpha^2}\bigr),
    \end{equation*}
    as displayed in \Cref{eq:unknown-def-M}. The two contributions trade off across regimes: the stability term dominates when $H \gtrsim (SA)^{(2-1/\alpha)/(1-1/\alpha)}$, while the penalty term dominates otherwise. The skipping-bonus per-epoch term becomes a separate per-epoch additive contribution recorded in the second line of \Cref{eq:unknown-estreg-i-bound}. Summing \Cref{eq:estreg-i-gEc,eq:unknown-stab-bound,eq:unknown-pena-bound} then directly implies the result.
\end{proof}
\end{lemma}

\subsection{Bounding Skipping Error ($\textsc{SkipErr}$)}

We further rewrite the Skipping Error term
\begin{align*}
    \textsc{SkipErr} = \E\left[ \sum_{t=1}^T \left\langle \bm{\rho}^{\wh{\sP}_{i(t)}, \pi_t} - \bm{\rho}^{\wh{\sP}_{i(t)}, \ringpi}, \bm\ell_t^\Pess - \wh{\bm{\ell}}_t \right\rangle \right]
\end{align*}
\begin{lemma}
    \label{lem:unknown-skiperr-formal}
    Under \Cref{ass:truncative-nonnegativity}, with $M$ defined in \Cref{eq:unknown-def-M}, the unconditional expectation of the skipping error satisfies
    \begin{equation}
        \begin{aligned}
            \E\left[\textup{\textsc{SkipErr}}\right] &\le \E\left[O\left(M \cdot \sum_{t=1}^T \sigma (t - t_{i(t)}+1)^{1/\alpha - 1} \cdot \sum_{(s,a)\neq(s, \ringpi(s))}x_t(s,a)^{1/\alpha}\right)\right] \\
            &\quad + \E\left[\sigma \sum_{t=1}^T\sum_{(s,a)\in\gS\times\gA} |u_t(s,a) - \rho^{\sP, \pi_t}(s,a)|\right] + \gO\bigl(H^2\sigma SAT^3\delta\bigr).
        \end{aligned}
    \end{equation}
\begin{proof}
    Notice that by the definition of ${\bm\ell}_t^\Pess$ and $\wh{\bm\ell}_t$, the $D\cdot B_i(s,a)$ terms cancel and we have
    \begin{align*}
        \ell_t^\Pess(s,a) - \wh{\ell}_t(s,a) = \ell_t(s,a) - \frac{\ell_t^\Skip(s,a)}{u_t(s,a)}\cdot\mathbbm{1}_t[(s,a)] + b_t^\Skip(s,a).
    \end{align*}
    On the per-epoch event $\gE_{i(t)} \in \gF_{t-1}$, using $x_t(s,a) = \rho^{\wh{\sP}_{i(t)}, \pi_t}(s,a)$ and the conditional independence of $\ell_t(s,a)$ and $\mathbbm{1}_t[(s,a)]$ given $\gF_{t-1}$, we decompose the conditional expectation as
    \begin{align*}
        &\quad \E\left[ \left\langle \bm{\rho}^{\wh{\sP}_{i(t)}, \pi_t} - \bm{\rho}^{\wh{\sP}_{i(t)}, \ringpi}, \bm\ell_t^\Pess - \wh{\bm{\ell}}_t \right\rangle \mid \gF_{t-1} \right] \\
        &= \underbrace{\sum_{(s,a)\in\gS\times\gA} \left(x_t(s,a) - {\rho}^{\wh{\sP}_{i(t)}, \ringpi}(s,a) \right) \cdot \E\left[ \ell_t(s,a) - \ell^\Skip_t(s,a) + b_t^\Skip(s,a) \mid \gF_{t-1} \right]}_{=: \texttt{I}_t} \\
        &\quad + \underbrace{\sum_{(s,a)\in\gS\times\gA} \left(x_t(s,a) - {\rho}^{\wh{\sP}_{i(t)}, \ringpi}(s,a) \right) \cdot \E\left[ \ell_t^\Skip(s,a) - \frac{\ell_t^\Skip(s,a)}{u_t(s,a)}\cdot\mathbbm{1}_t[(s,a)] \mid \gF_{t-1} \right]}_{=: \texttt{J}_t},
    \end{align*}
    where we used $\E\left[\ell_t^\Skip(s,a)\cdot \mathbbm{1}_t[(s,a)]\mid \gF_{t-1}\right] = \rho^{\sP, \pi_t}(s,a)\cdot \E\left[\ell_t^\Skip(s,a)\mid \gF_{t-1}\right]$ in the decomposition.

    \paragraph{Bounding $\texttt{I}_t$:} The first term $\texttt{I}_t$ has the same structure as the skipping error in the known transition case (see \Cref{lem:skipping-bound}), with $\wh{\sP}_{i(t)}$ playing the role of $\sP$ and epoch-level time $t - t_{i(t)} + 1$ replacing $t$. The truncation moment bound (\Cref{eq:skip-moment-bound} in \Cref{lem:skipping-bound}) gives
    \begin{equation*}
        \E\left[|\ell_t(s,a) - \ell_t^\Skip(s,a) + b_t^\Skip(s,a)| \mid \gF_{t-1}\right] \le 2 C^{1-\alpha}\sigma(t-t_{i(t)}+1)^{1/\alpha-1} x_t(s,a)^{1/\alpha-1},
    \end{equation*}
    and the suboptimal mass propagation \Cref{lem:suboptimal-mass-prop} applies under $\wh{\sP}_{i(t)}$ (since it is purely a combinatorial layered-MDP argument independent of the specific transition kernel), yielding
    \begin{equation*}
        \sum_s (x_t(s) - \rho^{\wh\sP_{i(t)}, \ringpi}(s))_+^{1/\alpha} \le H^{1/\alpha} S^{1-1/\alpha} \cdot \sum_{(s,a)\neq(s, \ringpi(s))} x_t(s,a)^{1/\alpha}.
    \end{equation*}
    Combining with the truncation moment bound, the per-step constant is $2C^{1-\alpha}(1+H^{1/\alpha}S^{1-1/\alpha}) = \gO(H \cdot S^{2-2/\alpha} \cdot A^{2-3/\alpha+1/\alpha^2}) \le \gO(M)$ (using $C^{1-\alpha} = \gO((HSA^{2-1/\alpha})^{1-1/\alpha})$ from \Cref{eq:C-bound}, with $M$ defined in \Cref{eq:unknown-def-M}). Therefore
    \begin{equation}\label{eq:unknown-skiperr-bound-a}
        \sum_{t=1}^T \E\left[\texttt{I}_t \mid \gF_{t-1} \right] \le O\left(M \cdot \sum_{t=1}^T \sigma (t - t_{i(t)}+1)^{1/\alpha - 1} \cdot \sum_{(s,a)\neq(s, \ringpi(s))}x_t(s,a)^{1/\alpha}\right).
    \end{equation}
    
    \paragraph{Bounding $\texttt{J}_t$:} For the second term, we compute the conditional expectation:
    \begin{align*}
        \E\left[ \ell_t^\Skip(s,a) - \frac{\ell_t^\Skip(s,a)}{u_t(s,a)}\cdot\mathbbm{1}_t[(s,a)] \mid \gF_{t-1} \right] = \E\left[ \ell_t^\Skip(s,a) \mid \gF_{t-1}\right] \cdot \frac{u_t(s,a) - {\rho}^{\sP, \pi_t}(s,a)}{u_t(s,a)}.
    \end{align*}
    On the per-epoch event $\gE_{i(t)}$ (which guarantees $\sP \in \wh\gP_{i(t)}$), the UOB property gives $u_t(s,a) \ge \rho^{\sP, \pi_t}(s,a)$, so $\frac{u_t(s,a) - \rho^{\sP, \pi_t}(s,a)}{u_t(s,a)} \ge 0$. Moreover, \Cref{ass:truncative-nonnegativity} gives $\E\left[ \ell_t^\Skip(s,a) \mid \gF_{t-1} \right] = \E_{\ell \sim \nu_t(s,a)}[\ell \cdot \mathbbm{1}[|\ell| \le \tau_t(s,a)]] \ge 0$. Thus the product $\E\left[\ell_t^\Skip(s,a) \mid \gF_{t-1}\right] \cdot \frac{u_t(s,a) - \rho^{\sP, \pi_t}(s,a)}{u_t(s,a)}$ is nonnegative for every $(s,a)$, which means the contribution of $-\rho^{\wh\sP_{i(t)}, \ringpi}(s,a)$ in the $(x_t - \rho^{\wh\sP_{i(t)},\ringpi})$ factor is nonpositive. Dropping this nonpositive contribution yields the upper bound
    \begin{align*}
        \texttt{J}_t &\le \sum_{(s,a)\in\gS\times\gA} x_t(s,a) \cdot \E\left[ \ell_t^\Skip(s,a) \mid \gF_{t-1} \right] \cdot \frac{u_t(s,a) - {\rho}^{\sP, \pi_t}(s,a)}{u_t(s,a)}.
    \end{align*}
    Since $u_t(s,a) \ge x_t(s,a)$ by definition of the UOB, we have $\frac{x_t(s,a)}{u_t(s,a)} \le 1$. Furthermore, $0 \le \E\left[ \ell_t^\Skip(s,a) \mid \gF_{t-1} \right] \le \E\left[|\ell_t(s,a)| \mid \gF_{t-1}\right] \le \sigma$, where the last step follows from Jensen's inequality and $\E[|\ell_t(s,a)|^\alpha] \le \sigma^\alpha$. Thus,
    \begin{align}\label{eq:unknown-skiperr-bound-b}
        \texttt{J}_t \le \sigma \sum_{(s,a) \in\gS\times\gA} |u_t(s,a) - \rho^{\sP,\pi_t}(s,a)|.
    \end{align}
    \paragraph{Combining the bounds:} The bounds \Cref{eq:unknown-skiperr-bound-a,eq:unknown-skiperr-bound-b} on $\E[\texttt{I}_t + \texttt{J}_t \mid \gF_{t-1}]$ are derived on the per-epoch event $\gE_{i(t)}$, and the bounds themselves are non-negative. Since $\mathbbm{1}_{\gE_{i(t)}} \in \gF_{t-1}$, the tower property applies to the per-time indicator splitting:
    \begin{align*}
        \E\left[\textsc{SkipErr}\right] &= \E\left[\sum_{t=1}^T \mathbbm{1}_{\gE_{i(t)}} \E\bigl[\bigl\langle \bm{\rho}^{\wh{\sP}_{i(t)}, \pi_t} - \bm{\rho}^{\wh{\sP}_{i(t)}, \ringpi}, \bm\ell_t^\Pess - \wh{\bm{\ell}}_t \bigr\rangle \bigm| \gF_{t-1}\bigr]\right] \\
        &\quad + \E\left[\sum_{t=1}^T \mathbbm{1}_{\gE_{i(t)}^c}\bigl\langle \bm{\rho}^{\wh{\sP}_{i(t)}, \pi_t} - \bm{\rho}^{\wh{\sP}_{i(t)}, \ringpi}, \bm\ell_t^\Pess - \wh{\bm{\ell}}_t\bigr\rangle\right] \\
        &\le \E\left[\sum_{t=1}^T (\texttt{I}_t + \texttt{J}_t)\right] + \gO\bigl(H^2\sigma SAT^3 \delta\bigr),
    \end{align*}
    where the second line is bounded by \Cref{lem:gEc-bound-wh-ell} applied with the weights $w_t(s,a) = \rho^{\wh\sP_{i(t)},\pi_t}(s,a) + \rho^{\wh\sP_{i(t)},\ringpi}(s,a)$ (so $\sum_{s,a} w_t \le 2H$, giving $K = 2H$), summed over $t \in [T]$, analogous to the $\gE^c$ bookkeeping in \Cref{eq:estreg-i-gEc}. Summing \Cref{eq:unknown-skiperr-bound-a,eq:unknown-skiperr-bound-b} over $t$ yields the desired result.
\end{proof}
\end{lemma}

\subsection{Proof for Adversarial Regime}
\label{app:unknown-proof-adv}
We now decompose the transition error as
\begin{align*}
    \E[\textsc{TransErr}] &= \E\left[\sum_{t=1}^T \left\langle \bm\rho^{\sP, \pi_t}, \bm\ell_t\right\rangle - \left\langle \bm\rho^{\wh{\sP}_{i(t)}, \pi_t}, \bm\ell_t^\Pess\right\rangle \right] \\
    &\le \E\left[\sum_{t=1}^T \sum_{(s,a)\in\gS\times\gA}\left(\rho^{\sP, \pi_t}(s,a) - x_t(s,a)\right)\cdot \ell_t(s,a)\right] \\
    &\quad + \E\left[\sum_{t=1}^T \sum_{(s,a)\in\gS\times\gA}x_t(s,a)\cdot D\cdot B_{i(t)}(s,a)\right]
\end{align*}
By the analysis in \citet[Eqs.(36-38) in Appendix C.2]{jin2021best} we get the following
\begin{align*}
    \E\left[\sum_{t=1}^T \sum_{(s,a)\in\gS\times\gA}\left(\rho^{\sP, \pi_t}(s,a) - x_t(s,a)\right)\cdot \ell_t(s,a)\right] &\le \sigma \cdot \E\left[\sum_{t=1}^T \sum_{(s,a)\in\gS\times\gA}\left|\rho^{\sP, \pi_t}(s,a) - x_t(s,a)\right|\right]  \\
    &\le \gO\left(H\sigma S\sqrt{AT\log(\iota)} + H^2\sigma S^3A^3\log^2(\iota) + \sigma SAT\delta\right)
\end{align*}
\begin{align*}
    \E\left[\sum_{t=1}^T \sum_{(s,a)\in\gS\times\gA}x_t(s,a)\cdot D\cdot B_{i(t)}(s,a)\right] \le \gO\left(H\sigma S\sqrt{AT\log(\iota)} + H^2\sigma S^2 A \log^2(\iota)\right)
\end{align*}

Notice that by the doubling epoch argument \citep{jin2020learning}, we have the epoch number $N \le \gO(SA\log(T))$. Then, the estimation regret shown in \Cref{lem:unknown-estreg-i} implies:
\begin{align*}
    \E\left[\textsc{EstReg}\right] &= \E\left[\sum_{i=1}^N \textsc{EstReg}_i \right] \\
    &\le \E\left[\gO\left(M \cdot \sum_{t=1}^{T}\sigma (t - t_{i(t)}+1)^{1/\alpha - 1} \cdot \sum_{(s,a)\neq(s, \ringpi(s))}x_t(s,a)^{1/\alpha}\right)\right] \\
    &\quad + \gO\left(HS^2A^2\sigma\log(T) + H\sigma SA T^3\delta + H^2\sigma T\delta\right),
\end{align*}
where $M$ is the per-step dominant coefficient defined in \Cref{eq:unknown-def-M}.
For the first term, by H\"older's inequality (as in the proof of \Cref{thm:bobw-known-formal}), $\sum_{(s,a)\neq(s, \ringpi(s))}x_t(s,a)^{1/\alpha} \le H^{1/\alpha}(SA)^{1-1/\alpha}$. Combined with $\sum_{t=t_i}^{t_{i+1}-1}(t - t_i + 1)^{1/\alpha - 1} \le \alpha(t_{i+1} - t_i)^{1/\alpha} \le \alpha T^{1/\alpha}$ and $N \le \gO(SA\log(T))$, we now aggregate the two contributions of $M$ from \Cref{eq:unknown-def-M} separately:
\begin{itemize}
    \item \emph{Penalty contribution.} Using the second piece of $M$, namely $\gO(H\cdot S^{2-2/\alpha}\cdot A^{2-3/\alpha+1/\alpha^2})$,
    \begin{align*}
        &\quad H \cdot S^{2-\frac{2}{\alpha}} \cdot A^{2-\frac{3}{\alpha}+\frac{1}{\alpha^2}}\sum_{t=1}^{T}\sigma (t - t_{i(t)}+1)^{1/\alpha - 1} \cdot \sum_{(s,a)\neq(s, \ringpi(s))}x_t(s,a)^{1/\alpha} \\
        &\le H^{1+\frac{1}{\alpha}} \cdot S^{3-\frac{3}{\alpha}} \cdot A^{3-\frac{4}{\alpha}+\frac{1}{\alpha^2}} \sigma \cdot \alpha N T^{1/\alpha} = {\gO}\left( \sigma H^{1+\frac{1}{\alpha}} \cdot S^{4-\frac{3}{\alpha}} \cdot A^{4-\frac{4}{\alpha}+\frac{1}{\alpha^2}}T^{1/\alpha}\log(T) \right).
    \end{align*}
    \item \emph{Stability contribution.} Using the first piece of $M$, namely $\gO(H^{2-1/\alpha}\cdot S^{-1/\alpha}\cdot A^{-(2\alpha-1)/\alpha^2})$,
    \begin{align*}
        &\quad H^{2-\frac{1}{\alpha}}\cdot S^{-\frac{1}{\alpha}}\cdot A^{-\frac{2\alpha-1}{\alpha^2}} \sum_{t=1}^T \sigma(t-t_{i(t)}+1)^{1/\alpha-1}\cdot \sum_{(s,a)\neq(s,\ringpi(s))}x_t(s,a)^{1/\alpha} \\
        &\le H^{(2-\frac{1}{\alpha})+\frac{1}{\alpha}} \cdot S^{-\frac{1}{\alpha}+(1-\frac{1}{\alpha})+1} \cdot A^{-\frac{2\alpha-1}{\alpha^2}+(1-\frac{1}{\alpha})+1}\cdot \sigma\cdot\alpha T^{1/\alpha}\log(T) \\
        &= \gO\bigl( H^2 \cdot S^{2-\frac{2}{\alpha}} \cdot A^{2-\frac{3}{\alpha}+\frac{1}{\alpha^2}} \cdot \sigma T^{1/\alpha} \log(T)\bigr).
    \end{align*}
\end{itemize}
Therefore:
\begin{align*}
    \E\left[\textsc{EstReg}\right] \le {\gO}\Bigl( &\sigma H^{1+\frac{1}{\alpha}} \cdot S^{4-\frac{3}{\alpha}} \cdot A^{4-\frac{4}{\alpha}+\frac{1}{\alpha^2}}T^{1/\alpha}\log(T) + \sigma H^2\cdot S^{2-\frac{2}{\alpha}}\cdot A^{2-\frac{3}{\alpha}+\frac{1}{\alpha^2}} T^{1/\alpha}\log(T) \\
    &+ H\sigma SA T^3\delta + H^2\sigma T\delta \Bigr).
\end{align*}
The skipping error shown in \Cref{lem:unknown-skiperr-formal}, together with the failure-event control in \Cref{lem:gEc-bound-wh-ell}, gives the unconditional bound below. Here the $\ell_t-\ell_t^\Skip+b_t^\Skip$ component is handled by the same deterministic skipping-bias argument as in \Cref{lem:unknown-skiperr-formal}; only the $J_t$ component, namely $\ell_t^\Skip-\wh\ell_t^{\mathrm{IS}}$, needs the extra $\gE^c$ control. \Cref{eq:failure-rawloss-control} and \Cref{eq:failure-is-control} bound these two terms under $\gE^c$ with weights $x_t(s,a)+\rho^{\wh\sP_{i(t)},\ringpi}(s,a)$, whose total mass is at most $2H$.
\begin{align*}
    \E\left[\textsc{SkipErr}\right] &\le \E\left[O\left(M \cdot \sum_{i=1}^N\sum_{t=t_i}^{t_{i+1}-1} \sigma (t - t_{i(t)}+1)^{1/\alpha - 1} \cdot \sum_{(s,a)\neq(s, \ringpi(s))}x_t(s,a)^{1/\alpha}\right)\right] \\
    &\quad + \E\left[2\sigma \sum_{t=1}^T\sum_{(s,a)\in\gS\times\gA} |u_t(s,a) - \rho^{\sP, \pi_t}(s,a)|\right] + \gO(H\sigma SA T^3\delta) \\
    &\le {\gO}\Bigl( \sigma H^{1+\frac{1}{\alpha}} \cdot S^{4-\frac{3}{\alpha}} \cdot A^{4-\frac{4}{\alpha}+\frac{1}{\alpha^2}} T^{1/\alpha} \log(T) + \sigma H^2\cdot S^{2-\frac{2}{\alpha}}\cdot A^{2-\frac{3}{\alpha}+\frac{1}{\alpha^2}}T^{1/\alpha}\log(T) \\
    &\qquad + H\sigma SA T^3\delta \Bigr) + 2\sigma \cdot \E\left[\sum_{t=1}^T\sum_{(s,a)\in\gS\times\gA} |u_t(s,a) - \rho^{\sP, \pi_t}(s,a)|\right],
\end{align*}
where the aggregate calculation parallels that for $\E[\textsc{EstReg}]$ above (with $M$ as in \Cref{eq:unknown-def-M}), splitting into the penalty and stability contributions identically.
By the analysis in Appendix C.2 in \citet{jin2021best}, we further have
\begin{align*}
    \E\left[\sum_{t=1}^T\sum_{(s,a)\in\gS\times\gA} |u_t(s,a) - \rho^{\sP, \pi_t}(s,a)|\right] \le \gO(HS\sqrt{AT\log(\iota)} + H^2S^3A^2\log^2(\iota) + SAT\delta).
\end{align*}
Therefore, we have
\begin{align*}
    \E\left[\textsc{SkipErr}\right] &\le {\gO}\Bigl( \sigma H^{1+\frac{1}{\alpha}} \cdot S^{4-\frac{3}{\alpha}} \cdot A^{4-\frac{4}{\alpha}+\frac{1}{\alpha^2}} T^{1/\alpha}\log(T) + \sigma H^2 \cdot S^{2-\frac{2}{\alpha}} \cdot A^{2-\frac{3}{\alpha}+\frac{1}{\alpha^2}}T^{1/\alpha}\log(T) + H\sigma SA T^3\delta \Bigr) \\
    &\quad + \gO\left(H\sigma S\sqrt{AT\log(\iota)}+H^2\sigma S^3A^2\log^2(\iota) + \sigma SAT\delta\right).
\end{align*}

By \Cref{lem:unknown-pesserr}, we have
\begin{align*}
    \E[{\textsc{PessErr}}] &\le O(H^2\sigma T\delta).
\end{align*}

Combining the bounds for $\E[\textsc{EstReg}], \E[\textsc{SkipErr}], \E[\textsc{TransErr}]$, and $\E[\textsc{PessErr}]$ above: with $\delta = 1/T^3$, all $\delta$-dependent terms are independent of $T$ up to problem-dependent polynomial factors and are absorbed by the $\widetilde\gO$ notation. 
The remaining contributions are: (i) the leading $T^{1/\alpha}$ \emph{penalty} term $\widetilde\gO(\sigma H^{1+1/\alpha} S^{4-3/\alpha} A^{4-4/\alpha+1/\alpha^2} T^{1/\alpha})$ from EstReg and SkipErr; (ii) the $T^{1/\alpha}$ \emph{stability} term $\widetilde\gO(\sigma H^{2} S^{2-2/\alpha} A^{2-3/\alpha+1/\alpha^2} T^{1/\alpha})$ from EstReg and SkipErr; (iii) the $\sqrt{T}$ term $\widetilde\gO(H\sigma S\sqrt{AT})$ from the transition concentration in TransErr and SkipErr's UOB component; and (iv) the polylogarithmic-in-$T$ term $\gO(H^2\sigma S^3 A^3 \log^2\iota)$, which is absorbed by the displayed $\widetilde\gO$ bound. We obtain the regret bound in the adversarial regime:
\begin{align*}
    \Reg_T(\ringpi) &\le \wt{\gO}\left(H^{1+\frac{1}{\alpha}} S^{4-\frac{3}{\alpha}}A^{4-\frac{4}{\alpha}+\frac{1}{\alpha^2}} \cdot \sigma T^{1/\alpha}  + H^{2} S^{2-\frac{2}{\alpha}} A^{2-\frac{3}{\alpha}+\frac{1}{\alpha^2}}\cdot \sigma T^{1/\alpha}  + H\sigma S\sqrt{AT}\right)
\end{align*}

\subsection{Proof for Self-Bounding Regime}
\label{app:unknown-proof-self-bounding}

Following \citet[Appendix C]{jin2021best}, we introduce the \emph{path-dependent occupancy} $q_t^\star: \gS \times \gA \to [0,1]$:
\begin{equation}\label{eq:def-q-star}
    q_t^\star(s, a) := \pi_t(a \mid s) \cdot \rho^{\sP, \ringpi}(s), \qquad \text{where } \rho^{\sP, \ringpi}(s) := \sum_{a' \in \gA} \rho^{\sP, \ringpi}(s, a').
\end{equation}
Intuitively, $q_t^\star(s, a)$ is the probability of visiting $(s, a)$ when the trajectory follows $\ringpi$ up to (but not including) the layer of $s$, then takes action $a$ under the algorithm's policy $\pi_t$. Since $\pi_t$ is determined by $\wh\ell_1, \ldots, \wh\ell_{t-1}$ and $\ringpi$ is fixed, $q_t^\star \in \gF_{t-1}$.

For brevity we also write $q_t(s, a) := \rho^{\sP, \pi_t}(s, a)$ and $\wh{q}_t(s, a) := \rho^{\wh\sP_{i(t)}, \pi_t}(s, a)$.

Similar to the analysis in adversarial regime, by setting $\delta = T^{-3}$, the regret contribution from $\gE^c$ is independent of $T$ up to problem-dependent polynomial factors and is absorbed by the final $\gO(\cdot)$ terms. Therefore, we only consider the case when $\gE$ happens.

In self-bounding regime, we consider the $(\ringpi, \Delta, C_{\mathrm{sb}})$ self-bounding environment (\Cref{def:self-bounding}).
We first analyze the estimation regret term and skipping error term. By \Cref{lem:unknown-estreg-i,lem:unknown-skiperr-formal} and similar arguments in Appendix~\ref{app:unknown-proof-adv}, the master sum reads
\begin{align*}
    \E\left[\textsc{SkipErr} + \textsc{EstReg}\right] &\le \E\left[\gO\left(M \cdot \sum_{i=1}^N\sum_{t=t_i}^{t_{i+1}-1}\sigma (t - t_{i(t)}+1)^{1/\alpha - 1} \cdot \sum_{(s,a)\neq(s, \ringpi(s))}x_t(s,a)^{1/\alpha}\right)\right]\\
    &\quad + 2\sigma \cdot \E\left[\sum_{t=1}^T\sum_{(s,a)\in\gS\times\gA} |u_t(s,a) - \rho^{\sP, \pi_t}(s,a)|\right] \\
    &\quad + \gO\left(H^2\sigma SA T^3\delta + HS^2A^2\sigma\log(T) + H^{1-\frac{1}{\alpha}} S^{3-\frac{1}{\alpha}} A^{4-\frac{4}{\alpha}+\frac{1}{\alpha^2}}\sigma\log^2(T)\right),
\end{align*}
where the per-step dominant coefficient is from \Cref{eq:unknown-def-M}.
\[
    M := \gO\bigl(H^{2-\frac{1}{\alpha}}\cdot S^{-\frac{1}{\alpha}}\cdot A^{-\frac{2\alpha-1}{\alpha^2}}\bigr) + \gO\bigl(H \cdot S^{2-\frac{2}{\alpha}} \cdot A^{2-\frac{3}{\alpha}+\frac{1}{\alpha^2}}\bigr).
\]

The empirical master sum in the first line is controlled by \Cref{lem:unknown-empirical-self-bounding}. This lemma isolates the only point where the empirical occupancy $x_t=\rho^{\wh\sP_{i(t)},\pi_t}$ must be compared with the true occupancy $\rho^{\sP,\pi_t}$, and gives the bound in \Cref{eq:unknown-self-term2}.

Notice that \citet{jin2021best} gives the instance-dependent bound over the last term. By Appendix C.3 in \citet{jin2021best}, we have
\begin{equation}\label{eq:unknown-self-term1}
    \E\left[\sum_{t=1}^T\sum_{(s,a)\in\gS\times\gA} |u_t(s,a) - \rho^{\sP, \pi_t}(s,a)|\right] \le 16 \E[\sG_3(\log(\iota))] + \gO(H^2S^3A^2\log^2(\iota)),
\end{equation}
where $\sG_3(\iota)$ is bounded by \Cref{lem:bound-sG-3}.

Next, we use the definition of self-bounding terms and the related lemmas from \citet{jin2021best} to bound $\E[\textsc{TransErr} + \textsc{PessErr}]$. Especially, by the analysis in Appendix C.3 in \citet{jin2021best}, we can decompose
\begin{align*}
    &\quad \E[\textsc{TransErr} + \textsc{PessErr}] \\
    &= \E\left[\sum_{t=1}^T \sum_{(s,a): a \neq \ringpi(s)} q_t(s,a)\,\wh{E}^{\ringpi}_t(s,a)\right] \tag{\textsc{ErrSub}}\\
    &\quad + \E\left[\sum_{t=1}^T \sum_{(s,a): a = \ringpi(s)} \left(q_t(s,a) - q_t^\star(s,a)\right)\wh{E}^{\ringpi}_t(s,a)\right] \tag{\textsc{ErrOpt}}\\
    &\quad + \E\left[\sum_{t=1}^T \sum_{(s,a): a \neq \ringpi(s)} \left(q_t(s,a) - \wh{q}_t(s,a)\right)\left(Q^{\wh{\sP}_{i(t)}, \ringpi}(s,a; \bm{\ell}_t^\Pess) - V^{\wh{\sP}_{i(t)}, \ringpi}(s; \bm{\ell}_t^\Pess)\right)\right] \tag{\textsc{OccDiff}}\\
    &\quad + \E\left[\sum_{t=1}^T \sum_{(s,a): a \neq \ringpi(s)} q_t^\star(s,a)\left(V^{\wh{\sP}_{i(t)}, \ringpi}(s; \bm{\ell}_t^\Pess) - V^{\sP, \ringpi}(s; \bm{\ell}_t)\right)\right] \tag{\textsc{Bias}},
\end{align*}
where for $h \in [H]$ and $(s_h, a) \in \gS_h \times \gA$,
\begin{align*}
    \wh{E}^{\ringpi}_t(s_h,a) = \ell_t(s_h,a) + \sum_{s' \in \gS_{h+1}} \sP[s' \mid s_h, a]V^{\wh{\sP}_{i(t)}, \ringpi}(s'; \bm{\ell}_t^\Pess) - Q^{\wh{\sP}_{i(t)}, \ringpi}(s_h,a; \bm{\ell}_t^\Pess).
\end{align*}

This is the standard decomposition of \citet[Corollary~C.1]{jin2021best}.

Notice that under $\gE_{i(t)}$, the algebraic identity holds:
\begin{align*}
    \wh{E}^{\ringpi}_t(s_h,a) &= \ell_t(s_h,a) + \sum_{s' \in \gS_{h+1}} \sP[s' \mid s_h, a]V^{\wh{\sP}_{i(t)}, \ringpi}(s'; \bm{\ell}_t^\Pess) - Q^{\wh{\sP}_{i(t)}, \ringpi}(s_h,a; \bm{\ell}_t^\Pess) \\
    &= H\sigma\cdot B_{i(t)}(s_h,a) + \sum_{s' \in \gS_{h+1}} (\sP[s' \mid s_h, a] - \wh{\sP}_{i(t)}[s' \mid s_h, a])V^{\wh{\sP}_{i(t)}, \ringpi}(s'; \bm{\ell}_t^\Pess),
\end{align*}
Taking conditional expectation given $\gF_{t-1}$ and bounding by $|\E[V^{\wh\sP_{i(t)}, \ringpi}(s'; \bm{\ell}_t^\Pess) \mid \gF_{t-1}]| \le H\cdot \max\{\sigma, H\sigma\} = H^2\sigma$, under $\gE_{i(t)}$ we have $|\sP - \wh\sP_{i(t)}| \le B_{i(t)}(s_h, a, s')$ entrywise. Combining with $\sum_{s'} B_{i(t)}(s_h,a,s') \le 1$ when $B_{i(t)}(s_h,a) = \sum_{s'} B_{i(t)}(s_h,a,s')$ (and $B_{i(t)}(s_h,a) = 1$ otherwise, with $\sum |\sP - \wh\sP| \le 2$), and applying the sign argument analogous to \Cref{lem:pessimistic-value-function}:
\begin{align*}
    \E[\wh{E}^{\ringpi}_t(s_h,a) \mid \gF_{t-1}] &\le H\sigma \cdot B_{i(t)}(s_h,a) + 2H^2\sigma \cdot B_{i(t)}(s_h,a) = (1 + 2H) H\sigma \cdot B_{i(t)}(s_h,a) \le 3H^2\sigma \cdot B_{i(t)}(s_h,a),
\end{align*}
where the last inequality uses $H \ge 1$ to obtain $(1+2H) \le 3H$. This $3H^2\sigma$ multiplier is precisely the heavy-tail analog of Jin et al.'s pointwise value bound $\gO(L^2)$ in the bounded-loss setting, and matches the squared multiplier needed for the downstream self-bounding term argument $\sG_1(\gO(H^4\sigma^2 S \log\iota)) = \sG_1(D^2 H^2 S \log\iota)$ stated below.

Combined with the $\gF_{t-1}$-measurability of $q_t$ and $q_t^\star$ (both built from $\pi_t \in \gF_{t-1}$ and the fixed $\ringpi$ via \eqref{eq:def-q-star}), the tower property yields
\begin{align*}
    \E[\textsc{ErrOpt}] &= \E\Bigl[\sum_{t=1}^T\sum_{s\in\gS}\bigl(q_t(s,\ringpi(s)) - q_t^\star(s,\ringpi(s))\bigr)\cdot \E[\wh{E}^{\ringpi}_t(s,\ringpi(s))\mid\gF_{t-1}]\Bigr] \\
    &\le \E\Bigl[\sum_{t=1}^T\sum_{s\in\gS}\bigl|q_t(s,\ringpi(s)) - q_t^\star(s,\ringpi(s))\bigr|\cdot \bigl|\E[\wh{E}^{\ringpi}_t(s,\ringpi(s))\mid\gF_{t-1}]\bigr|\Bigr],
\end{align*}
which is the absolute-value form of $\sG_2$ in \Cref{def:self_bounding_terms}. The $\gE_{i(t)}^c$ contribution is absorbed by the standard $\gO(H^2\sigma T\delta)$ bookkeeping with $\delta = 1/T^3$ (cf.\ \Cref{lem:unknown-pesserr}). The analogous step for \textsc{ErrSub} is automatic: $q_t(s,a) \ge 0$ on the sub-optimal index set, so the one-sided upper bound above suffices without needing the lower direction.

By the same process in Appendix B.3 in \citet{jin2021best}, we have
\begin{align}
    \E\left[\textsc{ErrSub}\right] &\le \gO\left(\E\left[\sG_1(D^2H^2S\log(\iota))\right] + D^2S^2A\log^2(\iota)\right), \label{eq:unknown-self-term3}\\
    \E\left[\textsc{ErrOpt}\right] &\le \gO\left(\E\left[\sG_2(D^2H^2S\log(\iota))\right] + D^2S^2A\log^2(\iota)\right). \label{eq:unknown-self-term4}
\end{align}

We note that under $\gE_{i(t)}$,
\begin{align*}
    \Bigl|\E\bigl[Q^{\wh{\sP}_{i(t)}, \ringpi}(s,a; \bm{\ell}_t^\Pess) - V^{\wh{\sP}_{i(t)}, \ringpi}(s; \bm{\ell}_t^\Pess) \bigm| \gF_{t-1}\bigr]\Bigr| \le 3H^2\sigma,
\end{align*}
by the triangle inequality applied to the two conditional value bounds $\bigl|\E[Q^{\wh{\sP}_{i(t)}, \ringpi}(s,a;\bm\ell_t^\Pess)\mid\gF_{t-1}]\bigr|\le H^2\sigma + H\sigma \le 2H^2\sigma$ and $\bigl|\E[V^{\wh{\sP}_{i(t)}, \ringpi}(s;\bm\ell_t^\Pess)\mid\gF_{t-1}]\bigr|\le H^2\sigma$ (\Cref{lem:unknown-pesserr}). 
Then by the same analysis in Appendix B.3 in \citet{jin2021best}, we have
\begin{align}\label{eq:unknown-self-term5}
    \E\left[\textsc{OccDiff}\right] &\le \gO\left(H^4\sigma^2S^3A^2\log^2(\iota) + \E[\sG_3(H^4\sigma^2\log(\iota))]\right)
\end{align}

Moreover, on the per-epoch event $\gE_{i(t)} \in \gF_{t-1}$, \Cref{lem:pessimistic-value-function} yields $\E[V^{\wh\sP_{i(t)}, \ringpi}(s; \bm\ell^\Pess_t) - V^{\sP, \ringpi}(s; \bm\ell_t) \mid \gF_{t-1}] \le 0$. Combined with $q_t^\star(s, a) \ge 0$ entrywise and the $\gF_{t-1}$-measurability of $q_t^\star$, the tower property gives $\E[\textsc{Bias}] \le 0$ (with the $\gE_{i(t)}^c$ tail absorbed by the standard $\gO(\cdot)$ bookkeeping as in \Cref{lem:unknown-pesserr}). 
Therefore, we can drop $\textsc{Bias}$ from the right-hand side. 

We now combine the established bounds for $\sG_1, \sG_2, \sG_3$ (\Cref{lem:bound-sG-1,lem:bound-sG-2,lem:bound-sG-3}) and sum up \Cref{eq:unknown-self-term1,eq:unknown-self-term2,eq:unknown-self-term3,eq:unknown-self-term4,eq:unknown-self-term5}. Throughout this combination, the symbols $\alpha', \beta', \gamma' \in (0, 1]$ are unified scalars to be chosen later. Each individual invocation of \Cref{lem:bound-sG-1,lem:bound-sG-2,lem:bound-sG-3} uses internal parameters tailored to the specific term being bounded: for instance, \Cref{lem:unknown-empirical-self-bounding} invokes \Cref{lem:bound-sG-3} with $\alpha_0 = \alpha'/(c H\sigma\gamma')$, $\beta_0 = \beta'/(c H\sigma\gamma')$ for an absolute constant $c$, and then multiplies the resulting bound by $\gamma'$ (from a Young inequality). All such internal rescalings only contribute absolute-constant factors, which are absorbed into the $\Lambda_1, \Lambda_2, \Lambda_3$ coefficients defined below. With this convention we obtain
\begin{align*}
    &\quad \Reg_T(\ringpi) \le \E[\textsc{EstReg} + \textsc{SkipErr}] + \E[\textsc{TransErr} + \textsc{PessErr}] \\
    &\le \gO\Big(H^4\sigma^2S^3A^2\log^2(\iota) + H^{1-\frac{1}{\alpha}} S^{3-\frac{1}{\alpha}} A^{4-\frac{4}{\alpha}+\frac{1}{\alpha^2}} \sigma \log^2(T) + (\alpha'+\beta' + \gamma')(\Reg_T(\ringpi) + C_\mathrm{sb}) \\
    &\quad + {\alpha'}^{-1} H^6\sigma^2S^2 \omega_\Delta(2) \log(\iota) + {\beta'}^{-1} H^6\sigma^2S^2\Delta_{\min}^{-1}\log(\iota) + {\gamma'}^{-\frac{1}{\alpha-1}}\cdot H^{\frac{2\alpha-1}{\alpha-1}}S^3 A^{3-\frac{1}{\alpha}}\cdot \sigma^{\frac{\alpha}{\alpha-1}} \omega_\Delta(\alpha) \log(T)^2\Big),
\end{align*}
where $\alpha', \beta', \gamma' > 0$ are parameters to be chosen, and
\begin{align*}
    \omega_\Delta(x) = \sum_{(s,a):a\neq\ringpi(s)} \Delta(s,a)^{-\frac{1}{x-1}}, \quad \Delta_{\min} = \min_{(s,a):a \neq\ringpi(s)}\Delta(s,a).
\end{align*}
Rearranging the terms, we have
\begin{align}\label{eq:unknown-regret-rearrange}
    (1 - (\alpha' + \beta' + \gamma')) \Reg_T(\ringpi) &\le (\alpha' + \beta' + \gamma') C_\mathrm{sb} + \kappa + {\alpha'}^{-1} \Lambda_1 + {\beta'}^{-1} \Lambda_2 + {\gamma'}^{-\frac{1}{\alpha-1}} \Lambda_3,
\end{align}
where we define the following constants:
\begin{align*}
    \kappa &= \gO\left(H^4\sigma^2S^3A^2\log^2(\iota) + H^{1-\frac{1}{\alpha}} S^{3-\frac{1}{\alpha}} A^{4-\frac{4}{\alpha}+\frac{1}{\alpha^2}} \sigma\log^2(T)\right), \\
    \Lambda_1 &= \gO\left(H^6\sigma^2S^2 \omega_\Delta(2) \log(\iota)\right), \\
    \Lambda_2 &= \gO\left(H^6\sigma^2S^2\Delta_{\min}^{-1}\log(\iota)\right), \\
    \Lambda_3 &= \gO\left(H^{\frac{2\alpha-1}{\alpha-1}}S^3 A^{3-\frac{1}{\alpha}}\cdot \sigma^{\frac{\alpha}{\alpha-1}} \omega_\Delta(\alpha) \log(T)^2\right).
\end{align*}

\paragraph{Case 1: $C_\mathrm{sb} = 0$.} Setting $\alpha' = \beta' = \gamma' = 1/6$, we can absorb the regret term to the left side:
\begin{align*}
    \Reg_T(\ringpi) &\le 2\kappa + 12\Lambda_1 + 12\Lambda_2 + 6^{\frac{1}{\alpha-1}} \Lambda_3.
\end{align*}

\paragraph{Case 2: $C_\mathrm{sb} \ge C^\star$.} Define the threshold
\begin{align*}
    C^\star := \max\big\{ 36 \Lambda_1,\ 36 \Lambda_2,\ 6^{\alpha/(\alpha-1)} \Lambda_3 \big\}.
\end{align*}
When $C_\mathrm{sb} \ge C^\star$, we balance each pair of terms in \Cref{eq:unknown-regret-rearrange}:
\begin{itemize}
    \item For $\alpha'$: balancing $\alpha' C_\mathrm{sb}$ and ${\alpha'}^{-1} \Lambda_1$, set $\alpha' = \sqrt{\Lambda_1 / C_\mathrm{sb}}$ yielding $2\sqrt{\Lambda_1 C_\mathrm{sb}}$; the condition $C_\mathrm{sb} \ge 36 \Lambda_1$ ensures $\alpha' \le 1/6$.
    \item For $\beta'$: balancing $\beta' C_\mathrm{sb}$ and ${\beta'}^{-1} \Lambda_2$, set $\beta' = \sqrt{\Lambda_2 / C_\mathrm{sb}}$ yielding $2\sqrt{\Lambda_2 C_\mathrm{sb}}$; the condition $C_\mathrm{sb} \ge 36 \Lambda_2$ ensures $\beta' \le 1/6$.
    \item For $\gamma'$: balancing $\gamma' C_\mathrm{sb}$ and ${\gamma'}^{-\frac{1}{\alpha-1}} \Lambda_3$, set $\gamma' = \left(\Lambda_3 / C_\mathrm{sb}\right)^{\frac{\alpha-1}{\alpha}}$ yielding $2 C_\mathrm{sb}^{1/\alpha} \Lambda_3^{\frac{\alpha-1}{\alpha}}$; the condition $C_\mathrm{sb} \ge 6^{\alpha/(\alpha-1)} \Lambda_3$ ensures $\gamma' \le 1/6$.
\end{itemize}
The three constraints together imply $\alpha' + \beta' + \gamma' \le 1/2$, so the rearrangement of \Cref{eq:unknown-regret-rearrange} yields
\begin{align*}
    \Reg_T(\ringpi) &\le 2\kappa + 4\sqrt{\Lambda_1 C_\mathrm{sb}} + 4\sqrt{\Lambda_2 C_\mathrm{sb}} + 4 C_\mathrm{sb}^{1/\alpha} \Lambda_3^{\frac{\alpha-1}{\alpha}}.
\end{align*}

\paragraph{Case 3: $0 < C_\mathrm{sb} < C^\star$.} In this intermediate regime, the balanced choices above would violate $\alpha', \beta', \gamma' \le 1/6$. We instead fall back to the Case~1 choices $\alpha' = \beta' = \gamma' = 1/6$ (so $\alpha' + \beta' + \gamma' = 1/2$), giving
\begin{align*}
    \Reg_T(\ringpi) &\le 2\kappa + 12 \Lambda_1 + 12 \Lambda_2 + 2 \cdot 6^{1/(\alpha-1)} \Lambda_3 + C_\mathrm{sb}.
\end{align*}
Since $C_\mathrm{sb} < C^\star \le \gO(\Lambda_1 + \Lambda_2 + \Lambda_3)$ for any fixed $\alpha \in (1,2]$, the $C_\mathrm{sb}$ term is absorbed into the $\Lambda_i$ contributions.

\paragraph{Final Bound.} Combining Case~1, Case~2, and Case~3, we have
\begin{align*}
    \Reg_T(\ringpi) &\le \gO\Big( \kappa + \Lambda_1 + \Lambda_2 + \Lambda_3 + \sqrt{\Lambda_1 C_\mathrm{sb}} + \sqrt{\Lambda_2 C_\mathrm{sb}} + C_\mathrm{sb}^{1/\alpha} \Lambda_3^{\frac{\alpha-1}{\alpha}} \Big).
\end{align*}
Substituting the definitions of $\kappa, \Lambda_1, \Lambda_2, \Lambda_3$, we obtain the final self-bounding regret bound:
\begin{align*}
    \Reg_T(\ringpi) \le \gO\Big( &H^4\sigma^2S^3A^2\log^2(\iota) + H^{1-\frac{1}{\alpha}} S^{3-\frac{1}{\alpha}} A^{4-\frac{4}{\alpha}+\frac{1}{\alpha^2}} \sigma \log^2(T)\\
    &+ H^6\sigma^2S^2 \omega_\Delta(2) \log(\iota) + H^6\sigma^2S^2\Delta_{\min}^{-1}\log(\iota) + H^{\frac{2\alpha-1}{\alpha-1}}S^3 A^{3-\frac{1}{\alpha}}\cdot \sigma^{\frac{\alpha}{\alpha-1}} \omega_\Delta(\alpha) \log(T)^2 \\
    &+ H^3\sigma S \sqrt{\omega_\Delta(2) C_\mathrm{sb} \log(\iota)} + H^3\sigma S \sqrt{\Delta_{\min}^{-1} C_\mathrm{sb} \log(\iota)} \\
    &+ C_\mathrm{sb}^{1/\alpha} \cdot H^{2-\frac{1}{\alpha}} \cdot S^{3 - \frac{3}{\alpha}} A^{3-\frac{4}{\alpha}+\frac{1}{\alpha^2}}\cdot \sigma \cdot \omega_\Delta(\alpha)^{\frac{\alpha-1}{\alpha}} \log^{\frac{2(\alpha-1)}{\alpha}}(T) \Big).
\end{align*}

\subsection{Auxiliary Lemmas}

\begin{lemma}[UOB construction property]\label{lem:lower-bound-upper-occupancy-bound}
    By construction of the upper occupancy bound, $\wh\sP_{i(t)} \in \wh\gP_{i(t)}$ is itself a member of the confidence set, so the max defining $u_t$ over $\wh\gP_{i(t)}$ includes the algorithm's empirical kernel. Therefore
    \begin{equation*}
        u_t(s, a) \ge x_t(s, a) = \rho^{\wh\sP_{i(t)}, \pi_t}(s,a) \quad \text{for all } t \ge 1,\ s \in \gS,\ a \in \gA.
    \end{equation*}
\end{lemma}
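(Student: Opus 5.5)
The plan is to use nothing more than the definition of $u_t$ and of the confidence set $\wh\gP_{i(t)}$ built in Line 11 of \Cref{alg:ht-ftrl-uob}.

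First, I would check that the empirical kernel is feasible, i.e.\ $\wh\sP_{i(t)}\in\wh\gP_{i(t)}$. The confidence set consists of all $\widehat\sP$ with
\[
\bigl|\widehat\sP[s'|s,a]-\wh\sP_{i(t)}[s'|s,a]\bigr|\le B_{i(t)}(s,a,s')
\]
for every $(s,a,s')$. Taking $\widehat\sP=\wh\sP_{i(t)}$ makes the left side zero. By \Cref{eq:unknown-def-bonus}, every $B_{i(t)}(s,a,s')$ is a minimum of nonnegative quantities, hence nonnegative, so the constraint holds.

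Two degenerate cases need care. For the first epoch, the initialization sets $B_1\equiv 1$ and $\wh\sP_1$ uniform, so the claim is immediate. When $m(s,a)=0$, the formula in \Cref{eq:unknown-def-bonus} should be read with the convention that the width equals $1$, and it stays nonnegative.

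I would also confirm that $\wh\sP_{i(t)}$ is a valid layered transition kernel, so that $\rho^{\wh\sP_{i(t)},\pi_t}$ makes sense. If the confidence set is intended to contain only valid kernels, this is exactly what is required. For $m(s,a)\ge1$, the rows $m(s,a,\cdot)/m(s,a)$ sum to one over $\gS_{h+1}$. For unvisited pairs with $m(s,a)=0$, the formula gives a zero row, which is the main subtle point. In that case I would adopt the convention that the empirical kernel there is some fixed valid distribution, e.g.\ uniform as at initialization; any such choice lies within width $1$ of itself.

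Second, I would use that $\vx_t$, the FTRL solution over $\gQ(\wh\sP_{i(t)})$, equals $\rho^{\wh\sP_{i(t)},\pi_t}$. This is the standard correspondence between occupancy measures and policies under a fixed kernel, recorded in the footnote to \Cref{eq:def-Q-sP}. Since $\pi_t$ is derived from $\vx_t$ by normalization, states with $x_t(s)=0$ are never visited, and the policy's choice there does not affect the occupancy.

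Finally, since $u_t(s,a)=\max_{\widehat\sP\in\wh\gP_{i(t)}}\rho^{\widehat\sP,\pi_t}(s,a)$ is a maximum over a set containing $\wh\sP_{i(t)}$, it is at least $\rho^{\wh\sP_{i(t)},\pi_t}(s,a)=x_t(s,a)$, for every $t$ and $(s,a)$. I would also note that \texttt{Comp-UOB} \citep{jin2020learning} computes this exact maximum, so the inequality holds for the computed value, not just the ideal one. The only genuine obstacle is the degenerate-row convention for unvisited pairs; everything else is definitional.
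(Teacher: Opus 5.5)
Your proposal is correct and matches the paper's argument: $\wh\sP_{i(t)}$ satisfies the box constraints defining $\wh\gP_{i(t)}$ with zero deviation and nonnegative widths, so the maximum defining $u_t$ is at least $\rho^{\wh\sP_{i(t)},\pi_t}(s,a)=x_t(s,a)$. Your extra care about zero rows for unvisited pairs is harmless but not needed, since the paper's confidence set in Line 11 imposes only box constraints. Even if validity were required, occupancies are monotone in the transition entries and unvisited rows have width $1$, so completing those rows to distributions would give the same inequality without changing the algorithm.
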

\begin{lemma}[Lemma C.1.8 in \citet{jin2021best}]
    \label{lem:jin-sum-B-square}
    \begin{equation}
        \E\left[\sum_{t=1}^T \sum_{(s,a)\in\gS\times\gA} x_t(s,a)\cdot (B_{i(t)}(s,a))^2\right] = O\left(H^2S^3A^2\log(\iota)^2 + SAT\delta\right),
    \end{equation}
    where $\log(\iota) = \log(HSAT/\delta)$.
\end{lemma}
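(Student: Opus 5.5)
The plan is to reproduce the argument behind \citet[Lemma C.1.8]{jin2021best}. Split the expectation on the good event $\gE$. On $\gE^c$, use $B_{i(t)}(s,a)\le 1$ and $\sum_{(s,a)}x_t(s,a)\le H$. Because $\Pr[\gE^c]\le 4\delta$ and $\mathbbm{1}_{\gE_{i(t)}}\in\gF_{t-1}$, this part contributes only a $\delta$-proportional term. That term is absorbed into the $SAT\delta$ slack after adjusting $\delta$ by a constant (or after noting $H\le S$ in a layered MDP). On $\gE$, write
\[
x_t(s,a)\le q_t(s,a)+|x_t(s,a)-q_t(s,a)|,\qquad q_t:=\rho^{\sP,\pi_t}.
\]
Then bound the two resulting sums separately.

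\textbf{Main term.} By the definition \Cref{eq:unknown-def-bonus} and Cauchy--Schwarz over $s'$ (using $\sum_{s'}\wh\sP_i[s'|s,a]=1$),
\[
B_i(s,a)^2\le \min\Bigl\{1,\ \gO\Bigl(\tfrac{S\log\iota}{\max\{1,m_i(s,a)\}}\Bigr)\Bigr\},
\]
where $m_i(s,a)$ is the count at the start of epoch $i$. The doubling schedule guarantees that the current count $m_t(s,a)$ stays within a factor $2$ of $m_i(s,a)$ plus one within the epoch. So it suffices to control $\sum_t q_t(s,a)/\max\{1,m_t(s,a)\}$. Replace $q_t(s,a)$ by the visit indicator $\mathbbm{1}_t[(s,a)]$ via a Freedman/multiplicative martingale bound; this is the standard \citet[Lemma 10]{jin2020learning}, which costs $\gO(\log\iota)$ per pair. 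The pigeonhole sum $\sum_t \mathbbm{1}_t[(s,a)]/m_t(s,a)=\gO(\log T)$ then finishes the pair. Summing over $(s,a)$ gives
\[
\sum_t\sum_{(s,a)} q_t(s,a)\,B_i(s,a)^2=\gO(S^2A\log^2\iota),
\]
which is dominated by the claimed bound.

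\textbf{Occupancy-difference term.} Bound $B^2\le B$ and apply the expansion of \citet[Lemma 4]{jin2020learning} on $\gE$. This writes $|x_t(s,a)-q_t(s,a)|$ as a first-order sum over earlier layers, $\sum q_t(s',a')\sqrt{\sP\log\iota/m}$, plus a second-order term in products of confidence widths. Multiplying by $B_i(s,a)$ makes every term a product of two widths, each weighted by a true occupancy. Apply Cauchy--Schwarz across $(s',a',s'')$ and $(s,a)$, then the main-term estimate to each factor of the form $\sum_t q_t/m$. This yields $\gO(H^2S^3A^2\log^2\iota)$. The $H^2$ comes from the two layer sums and the $S^3A^2$ from counting triples.

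\textbf{Main obstacle.} I expect the difficulty to sit in this second-order bookkeeping. The cross term has to be arranged so that no $\sqrt T$ survives. Concretely, each factor must be a sum of $q_t\cdot(\text{width})^2$, never a single width summed over time. I would follow the exact grouping in \citet[Appendix C]{jin2021best}, which is why the proof can simply cite that lemma after checking that $x_t=\rho^{\wh\sP_{i(t)},\pi_t}$ matches their $\wh q_t$.
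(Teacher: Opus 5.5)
Your proposal is correct and consistent with the paper, which gives no independent proof and simply imports \citet[Lemma C.1.8]{jin2021best}. Your sketch reconstructs that argument faithfully: the failure event costs $\gO(HT\delta)\le\gO(SAT\delta)$ since $H\le S$, then $x_t\le q_t+|x_t-q_t|$ with the $\sum_t q_t/\max\{1,m\}$ pigeonhole for the main term, and the layered occupancy-difference expansion handles the rest, once $x_t=\rho^{\wh\sP_{i(t)},\pi_t}$ is identified with their $\wh q_t$. For the step you flag as the obstacle, the grouping that works in the AM--GM split is to pair $\sP[w\mid u,v]$ with $B_i(s,a)^2$. Then $\sum_{u,v,w}q_t(u,v)\sP[w\mid u,v]\,q_t(s,a\mid w)=q_t(s,a)$, and the other factor is $\sum_{u,v}q_t(u,v)\log\iota/\max\{1,m(u,v)\}$ times a state count.
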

We then involve the definitions of self-bounding terms in \citet{jin2021best} for analyzing the self-bounding regime:
\begin{definition}[Self-bounding Terms] \label{def:self_bounding_terms}
    For some mapping $\ringpi : S\rightarrow A$, define the following:
    \begin{equation*}
    \begin{aligned}
    {\sG_1}(J) & = \sum_{t=1}^{T}\sum_{(s,a)\neq(s,\ringpi(s))}\rho^{\sP, \pi_t}(s,a)\cdot \sqrt{\frac{J}{\max\left\{m_{i(t)}(s,a), 1\right\}}} \\
    {\sG_2}(J) & = \sum_{t=1}^{T}\sum_{s\in\gS}\Bigl|q_t(s,\ringpi(s)) - q_t^\star(s,\ringpi(s))\Bigr|\cdot \sqrt{\frac{J}{\max\left\{m_{i(t)}(s,\ringpi(s)), 1\right\}}} \\
    \sG_3(J) &= \sum_{t=1}^T \sum_{h=1}^H\sum_{\substack{s\in\gS_h\\ a\neq\ringpi(s)}} \sum_{h'=1}^{h-1} \sum_{\substack{u\in\gS_{h'},\, v\in\gA\\ w \in \gS_{h'+1}}}\rho^{\sP, \pi_t}(u,v)\cdot \sqrt{\frac{\sP[w\mid u, v]\cdot J}{\max\{m_{i(t)}(u, v), 1\}}}\,q_t(s, a \mid w),
    \end{aligned}
    \end{equation*}
    where $q_t(s, a \mid w)$ is the probability that starting from state $w$ and visit $(s,a)$ under transition $\sP$ and policy $\pi_t$.
\end{definition}
\citet{jin2021best} then gives the bounds for these self-bounding terms:
\begin{lemma}[Lemma D.2.2 in \citet{jin2021best}]
\label{lem:bound-sG-1}
Under \Cref{def:self-bounding}, for any $\alpha_0 \in \R_+$ (we use the subscript to avoid notational clash with the Tsallis exponent $\alpha \in (1, 2]$), we have
\begin{align*}
    \E[\sG_1(J)] \le \alpha_0 \cdot \left(\Reg_T(\ringpi) + C_{\mathrm{sb}}\right) + \frac{1}{\alpha_0} \sum_{(s,a) \neq (s, \ringpi(s))} \frac{8J}{\Delta(s,a)}.
\end{align*}
\end{lemma}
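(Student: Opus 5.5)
The plan is to bound $\sG_1(J)$ pointwise in $t$ by AM--GM and then convert the linear part into regret via the self-bounding constraint (\Cref{def:self-bounding}). For each $t$ and each suboptimal pair $(s,a)$ with $a\neq\ringpi(s)$, write
\[
\rho^{\sP,\pi_t}(s,a)\sqrt{\frac{J}{\max\{m_{i(t)}(s,a),1\}}}\le \alpha_0\,\rho^{\sP,\pi_t}(s,a)\Delta(s,a)+\frac{J\,\rho^{\sP,\pi_t}(s,a)}{4\alpha_0\,\Delta(s,a)\max\{m_{i(t)}(s,a),1\}},
\]
which is valid since $\Delta(s,a)>0$ on the suboptimal set. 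If some $\Delta(s,a)=0$ with $a\neq\ringpi(s)$, the right-hand side of the lemma is infinite and there is nothing to prove.

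Summing the first term over $t$ and over the suboptimal pairs, and taking expectations, gives $\alpha_0\,\E\bigl[\sum_t\sum_{(s,a)}\rho^{\sP,\pi_t}(s,a)\Delta(s,a)\bigr]$, using $\Delta(s,\ringpi(s))=0$. By \Cref{def:self-bounding} this is at most $\alpha_0(\Reg_T(\ringpi)+C_{\mathrm{sb}})$.

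The remaining task, which is the main obstacle, is to show that for each fixed $(s,a)$,
\[
\E\Big[\sum_{t=1}^T\frac{\rho^{\sP,\pi_t}(s,a)}{\max\{m_{i(t)}(s,a),1\}}\Big]=\gO(\log T),
\]
or at least a constant multiple of $J$ after the rescaling used in \citet{jin2021best}. Here $\rho^{\sP,\pi_t}(s,a)$ is the conditional probability of visiting $(s,a)$ in episode $t$. Under the doubling schedule, $m_{i(t)}(s,a)$ is the count frozen at the start of epoch $i(t)$, and within an epoch the running count stays below twice that snapshot. I would therefore compare with the running count $m_t(s,a)$ before episode $t$, using $\max\{m_{i(t)},1\}\ge \tfrac12\max\{m_t,1\}$ (up to the handling of the first visit). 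By the standard martingale argument (Lemma~C.1.7 type in \citet{jin2021best}), $\E\sum_t \rho^{\sP,\pi_t}(s,a)/\max\{m_t(s,a),1\}$ is bounded by $\E\sum_t \mathbbm{1}_t[(s,a)]/\max\{m_t,1\}$ plus a lower-order concentration term. The first sum is a harmonic sum and is at most $\gO(\log T)$.

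Since $J$ in the intended application already contains the $\log(\iota)$ factor, I would follow \citet{jin2021best} exactly, adopting their normalization in which this logarithm is absorbed into $J$. That yields the stated constant $8J/\Delta(s,a)$ per pair, and summing over suboptimal pairs gives the claimed bound. The delicate points are the factor-$2$ loss from epoch snapshots and the conversion from visitation probabilities to indicators. For these I would invoke the cited lemma rather than reprove them.
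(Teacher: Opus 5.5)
Your reduction to the self-bounding constraint is fine, but there is a genuine gap at the step you flag as the main obstacle. Once you apply AM--GM separately in each round, the residual for a fixed suboptimal pair is $\frac{J}{4\alpha_0\Delta(s,a)}\E\bigl[\sum_t \rho^{\sP,\pi_t}(s,a)/\max\{m_{i(t)}(s,a),1\}\bigr]$. Let $n_t$ be the running count before episode $t$ and $N_T(s,a)$ the total number of visits. Since $\frac12\max\{n_t,1\}\le\max\{m_{i(t)},1\}\le\max\{n_t,1\}$, this expectation is, up to a factor $2$, the harmonic sum $\E\bigl[\sum_{k=1}^{N_T(s,a)} 1/\max\{k-1,1\}\bigr]$. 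That sum is of order $\log N_T(s,a)$, and the only uniform control on it is $\log T$ (for example, when $\pi_t$ keeps visiting $(s,a)$). So this route proves at best $\gO\bigl(J\log T/(\alpha_0\Delta(s,a))\bigr)$, not $8J/(\alpha_0\Delta(s,a))$.

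Absorbing the logarithm into $J$ is not allowed. The lemma is stated for every $J$, and in its application $J=D^2H^2S\log\iota$ already carries its own logarithm. Your version would therefore turn $\Lambda_1$, and hence $W_2$ in \Cref{thm:bobw-unknown}, from $\log\iota$ into $\log\iota\cdot\log T$. (A minor point: replacing probabilities by indicators needs no concentration term. Since $m_{i(t)}(s,a)$ is $\gF_{t-1}$-measurable, the replacement is an identity in expectation.)

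The missing idea, which is what makes the cited bound logarithm-free, is to sum over $t$ \emph{before} applying AM--GM. Then the square-root harmonic sum $\sum_{k\le N}k^{-1/2}\le 2\sqrt N$ appears instead of $\sum_{k\le N}k^{-1}$. Concretely, fix a suboptimal $(s,a)$:
\begin{itemize}
\item The doubling rule gives $\max\{m_{i(t)},1\}\ge\frac12\max\{n_t,1\}$.
\item The tower property converts $\rho^{\sP,\pi_t}(s,a)$ into $\mathbbm{1}_t[(s,a)]$.
\item The visiting rounds have $n_t=0,1,\dots,N_T-1$.
\end{itemize}
Together these give $\E\bigl[\sum_t\rho^{\sP,\pi_t}(s,a)\sqrt{J/\max\{m_{i(t)},1\}}\bigr]\le 4\sqrt{J}\,\E\bigl[\sqrt{N_T}\bigr]$. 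By Jensen this is at most $4\sqrt{J\,\E[N_T]}=4\sqrt{JX}$, where $X:=\E\bigl[\sum_t\rho^{\sP,\pi_t}(s,a)\bigr]=\E[N_T]$.

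Only now apply AM--GM: $4\sqrt{JX}\le \alpha_0\Delta(s,a)X+4J/(\alpha_0\Delta(s,a))$. Summing over suboptimal pairs and using the self-bounding constraint as in your second paragraph gives the stated bound, with constant $4\le 8$.
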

\begin{lemma}[Lemma D.2.3 in \citet{jin2021best}]
    \label{lem:bound-sG-2}
    Under \Cref{def:self-bounding}, for any $\beta_0 \in \R_+$, we have
    \begin{align*}
        \E[\sG_2(J)] \le \beta_0 \cdot \left(\Reg_T(\ringpi) + C_{\mathrm{sb}}\right) + \frac{1}{\beta_0}  \frac{8SHJ}{\Delta_{\min}}.
    \end{align*}
\end{lemma}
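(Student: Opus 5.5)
The plan is to reduce $\sG_2(J)$ to the expected suboptimal mass $\E[\sum_t y_t]$, where $y_t := \sum_{(s,a)} \Delta(s,a)\rho^{\sP,\pi_t}(s,a)$, and then close the argument with \Cref{def:self-bounding}, which gives $\E[\sum_t y_t] \le \Reg_T(\ringpi) + C_{\mathrm{sb}}$. This follows the structure of \citet[Lemma D.2.3]{jin2021best}.

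\textbf{Step 1 (structure of the weights).} Write $\mathring a = \ringpi(s)$ and $d_t(s) := |q_t(s,\mathring a) - q_t^\star(s,\mathring a)|$. By \Cref{eq:def-q-star}, both terms carry the factor $\pi_t(\mathring a|s)$, so
\[
d_t(s) = \pi_t(\mathring a|s)\,|q_t(s) - \rho^{\sP,\ringpi}(s)| \le |q_t(s) - \rho^{\sP,\ringpi}(s)|.
\]
Within each layer $\gS_h$ both state marginals sum to one. Hence $\sum_{s\in\gS_h}|q_t(s)-\rho^{\sP,\ringpi}(s)| = 2\sum_{s\in\gS_h}(q_t(s)-\rho^{\sP,\ringpi}(s))_+$. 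Applying \Cref{lem:suboptimal-mass-prop-formal} with $\pi=\pi_t$ and $\pi^\dagger=\ringpi$ then gives
\[
\sum_s d_t(s) \le 2H\sum_{(s,a):a\neq\ringpi(s)} q_t(s,a) \le \frac{2H}{\Delta_{\min}}\, y_t .
\]

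\textbf{Step 2 (AM--GM per term).} For each $(t,s)$, apply Young's inequality to split
\[
d_t(s)\sqrt{\frac{J}{\max\{m_{i(t)}(s,\mathring a),1\}}} \le \frac{\beta_0\Delta_{\min}}{2H}\, d_t(s) + \frac{H J}{2\beta_0\Delta_{\min}}\cdot \frac{d_t(s)}{\max\{m_{i(t)}(s,\mathring a),1\}} .
\]
By Step 1, the first piece sums over $(t,s)$ to at most $\beta_0\sum_t y_t$. Taking expectations and invoking \Cref{def:self-bounding} turns this into the term $\beta_0(\Reg_T(\ringpi)+C_{\mathrm{sb}})$.

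\textbf{Step 3 (the count term, main obstacle).} It remains to show that
\[
\E\Big[\sum_t \frac{d_t(s)}{\max\{m_{i(t)}(s,\mathring a),1\}}\Big] = \gO(1)
\]
per state, up to the constant $8$ and the factor $S$ once summed over $s$. The natural route is the doubling-epoch pigeonhole argument: $\sum_t q_t(s,a)/\max\{m_{i(t)}(s,a),1\}$ is bounded in expectation, because visits to $(s,a)$ occur at rate $q_t(s,a)$ and $m_{i(t)}$ is at least half the current count. The difficulty is that $d_t(s)$ is not dominated by the visitation probability $q_t(s,\mathring a)$ when $q_t^\star > q_t$.

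I would handle this case split as follows. When $q_t(s)\ge\rho^{\sP,\ringpi}(s)$, bound $d_t(s)\le q_t(s,\mathring a)$ and use the pigeonhole bound. When $q_t(s) < \rho^{\sP,\ringpi}(s)$, the deficit contribution is a layerwise positive-part mass. That mass is again controlled by Step 1 and is absorbed into the $\beta_0$ term after a second Young step; this may cost a constant, which the stated $8$ should accommodate.

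Summing over at most $S$ states and $H$ layers then gives the $\frac{8SHJ}{\beta_0\Delta_{\min}}$ term. If the deficit case resists this treatment, the fallback is to reproduce the exact accounting of \citet[Appendix D.2]{jin2021best}. That accounting is purely combinatorial in $q_t$, $q_t^\star$ and the counts, and does not use loss boundedness, so it transfers verbatim to the heavy-tailed setting.
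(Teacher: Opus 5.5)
Step 1 of your proposal is correct, and so is the first half of Step 2. Step 3 has two genuine gaps, and as designed the argument cannot reach the stated bound.

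\textbf{The count term is $\Theta(\log T)$, not $\gO(1)$.} Suppose even that $d_t(s)\le q_t(s,\mathring a)$. Under the doubling schedule, $m_{i(t)}$ is within a factor of about two of the running visit count. So $\E\bigl[\sum_t d_t(s)/\max\{m_{i(t)}(s,\mathring a),1\}\bigr]$ is a harmonic sum over visits. It is of order $\log T$ whenever $d_t(s)$ stays a constant fraction of $q_t(s,\mathring a)$, e.g.\ when $q_t(s)=1$ and $\rho^{\sP,\ringpi}(s)=1/2$.

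A per-round Young split with a $t$-independent constant turns $\sqrt{J/m}$ into $J/m$. It can therefore only give $\beta_0(\Reg_T(\ringpi)+C_{\mathrm{sb}})+\gO\bigl(SHJ\log T/(\beta_0\Delta_{\min})\bigr)$, an extra $\log T$ the statement does not allow.

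The missing idea is to use the $m^{-1/2}$ scaling directly, e.g.\ with a count threshold $m_0:=H^2J/(\beta_0\Delta_{\min})^2$:
(i) Rounds with $\max\{m_{i(t)}(s,\mathring a),1\}\ge m_0$ have $\sqrt{J/m}\le\beta_0\Delta_{\min}/H$, so your Step 1 absorbs them into $\gO(\beta_0 y_t)$.
(ii) For the remaining rounds, use $d_t(s)\le q_t(s,\mathring a)$. Replace $q_t$ by the visit indicator via the tower property, since $m_{i(t)}$ is $\gF_{t-1}$-measurable. The doubling rule permits only $\gO(m_0)$ such visits, so they contribute $\sum_{k\lesssim m_0}\sqrt{J/k}=\gO\bigl(HJ/(\beta_0\Delta_{\min})\bigr)$ per state. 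Summing over states gives the claimed $SHJ/(\beta_0\Delta_{\min})$ form with no $\log T$.

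\textbf{The deficit case cannot be absorbed with $q_t^\star$ as in \Cref{eq:def-q-star}.} If $q_t(s)<\rho^{\sP,\ringpi}(s)$, then $d_t(s)$ is not dominated by $q_t(s,\mathring a)$, and $m(s,\mathring a)$ need not grow at all. So $\sqrt{J/m}$ stays of order $\sqrt J$, and no Young step can trade it against $\beta_0$.

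This is a real obstruction, not bookkeeping. Take a stochastic instance, where $C_{\mathrm{sb}}=0$ and \Cref{def:self-bounding} holds for every policy sequence. Let $s$ be reached only through $\ringpi(s_1)$, with $\rho^{\sP,\ringpi}(s)=1$. Let $\pi_t$ take a suboptimal action at $s_1$ with probability $1-1/T$, and set $\pi_t(\mathring a\mid s)=1/2$. Then $\E[\sG_2(J)]=\Omega(T\sqrt{J})$, while the right-hand side at $\beta_0=T^{-1/2}$ is $\gO(J\sqrt{T})$. So no argument using only self-bounding plus counting works with this $q_t^\star$, and your fallback to \citet{jin2021best} does not transfer ``verbatim''. (The paper itself gives nothing beyond that citation.)

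The remedy is the path-dependent occupancy: the probability under $\pi_t$ of visiting $(s,a)$ along a trajectory that took $\ringpi$ at every earlier layer. This is also the object for which the paper's \textsc{ErrOpt}/\textsc{Bias} split is an exact identity, with \textsc{Bias} $\le 0$ by pessimism. For it, $0\le q_t^\star\le q_t$, so $d_t(s)=q_t(s,\mathring a)-q_t^\star(s,\mathring a)\le q_t(s,\mathring a)$ always, and the deficit case disappears. Moreover, $\sum_s d_t(s)\le H\sum_{(s,a):a\neq\ringpi(s)}q_t(s,a)$, because the probability of having deviated before layer $h$ is at most the accumulated suboptimal mass. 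With that definition, your Step 1 plus the threshold argument above proves the lemma.
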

\begin{lemma}[Lemma D.2.4 in \citet{jin2021best}]
    \label{lem:bound-sG-3}
    Under \Cref{def:self-bounding}, for any $\alpha_0, \beta_0 \in \R_+$, we have
    \begin{align*}
        \E[\sG_3(J)] \le \left(\alpha_0 + \beta_0\right)\cdot \left(\Reg_T(\ringpi) + C_{\mathrm{sb}}\right) + \frac{1}{\alpha_0}\sum_{(s,a)\neq(s,\ringpi(s))}\frac{8H^2SJ}{\Delta(s,a)}+\frac{1}{\beta_0}\cdot \frac{8H^2S^2J}{\Delta_{\min}}.
    \end{align*}
\end{lemma}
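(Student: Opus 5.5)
The approach is to split $\sG_3(J)$ according to whether the intermediate action $v$ at layer $h'$ is the reference action $\ringpi(u)$. In each piece we apply an AM--GM inequality that trades the square-root factor for a gap-weighted occupancy term plus a term of the form $J/(\Delta\cdot m)$. The gap-weighted sums are then converted into $\Reg_T(\ringpi)+C_{\mathrm{sb}}$ via \Cref{def:self-bounding}, exactly as in \Cref{lem:bound-sG-1,lem:bound-sG-2}. Write $q_t=\rho^{\sP,\pi_t}$.

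\textbf{Case $v\neq\ringpi(u)$.} First sum out the downstream part. For fixed $w$ and $h$, $\sum_{s\in\gS_h,a}q_t(s,a\mid w)\le 1$, so summing over $h$ gives at most $H$. Cauchy--Schwarz over $w\in\gS_{h'+1}$ gives $\sum_w\sqrt{\sP[w\mid u,v]}\le\sqrt{S}$. This piece is therefore at most
\[
\sum_t\sum_{(u,v):v\neq\ringpi(u)}q_t(u,v)\sqrt{\frac{H^2SJ}{\max\{m_{i(t)}(u,v),1\}}},
\]
which is exactly $\sG_1(H^2SJ)$. Invoking \Cref{lem:bound-sG-1} with parameter $\alpha_0$ yields $\alpha_0(\Reg_T(\ringpi)+C_{\mathrm{sb}})+\alpha_0^{-1}\sum_{(s,a)\neq(s,\ringpi(s))}8H^2SJ/\Delta(s,a)$. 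This matches the first two terms of the claim.

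\textbf{Case $v=\ringpi(u)$.} Here no gap is available at $(u,v)$, so we must borrow the gap of the downstream suboptimal pair $(s,a)$ and accept $\Delta_{\min}$. Factor each summand as
\[
\sqrt{q_t(u,v)\sP[w\mid u,v]q_t(s,a\mid w)}\cdot\sqrt{\frac{q_t(u,v)q_t(s,a\mid w)J}{\max\{m_{i(t)}(u,v),1\}}}
\]
and apply AM--GM with weight $\beta_0\Delta(s,a)$, using $\Delta(s,a)\ge\Delta_{\min}$.

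The first resulting term is $\beta_0\Delta(s,a)\,q_t(u,v)\sP[w\mid u,v]q_t(s,a\mid w)$. Summed over $(u,v,w)$ at a fixed layer $h'$, this is at most $\beta_0\Delta(s,a)q_t(s,a)$, since it is the probability of passing through layer $h'$ and then reaching $(s,a)$. Summed over $h'<h$, it is at most $H\beta_0\sum_{a\neq\ringpi(s)}q_t(s,a)\Delta(s,a)$. The factor $H$ is removed by rescaling $\beta_0$, which only changes absolute constants already allowed in the statement.

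The second resulting term is $q_t(u,v)q_t(s,a\mid w)J/(\beta_0\Delta_{\min}m)$. Summing out $(s,a,h)$ gives a factor $H$, and summing over $w$ gives a factor $S$. This leaves $\frac{HSJ}{\beta_0\Delta_{\min}}\sum_t\sum_u q_t(u,\ringpi(u))/\max\{m_{i(t)}(u,\ringpi(u)),1\}$.

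\textbf{Counting and the main obstacle.} The remaining step, which I expect to be the main technical point, is the counting argument. We need $\E\sum_t q_t(u,v)/\max\{m_{i(t)}(u,v),1\}$ to be $\gO(\log T)$ per $(u,v)$, with this logarithm absorbed into $J$ (which in every use carries $\log\iota$), exactly as in the proof of \Cref{lem:bound-sG-1}. Summing over the $\le S$ states $u$ then gives at most $S$ further, and after the rescaling this produces $8H^2S^2J/(\beta_0\Delta_{\min})$.

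The count follows from the doubling-epoch schedule. Within an epoch the counter $m_{i(t)}$ is frozen, and it at least doubles between consecutive epochs. The increments $\sum_t q_t(u,v)$ concentrate around the realized visit counts, so a martingale argument (Freedman) bounds the expectation up to an additive $\gO(\log)$ term. One must check that the Freedman deviation contributes only lower-order terms. Finally, taking expectations and applying \Cref{def:self-bounding} to the gap-weighted sums in both cases completes the bound.
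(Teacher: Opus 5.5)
\paragraph{Verdict.} Your case $v\neq\ringpi(u)$ is correct: summing out the downstream part gives at most $H$, Cauchy--Schwarz over $w$ gives $\sqrt S$, and you land exactly on $\sG_1(H^2SJ)$, which yields the first two terms. The gap is in the case $v=\ringpi(u)$, where your argument proves the last term only up to an extra $\log T$ factor.

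\paragraph{Where the logarithm enters.} After your pointwise AM--GM, the remaining term is $\frac{HSJ}{\beta_0\Delta_{\min}}\sum_t\sum_u q_t(u,\ringpi(u))/\max\{m_{i(t)}(u,\ringpi(u)),1\}$. This sum is genuinely of order $\log T$ in exactly the regime the lemma targets. When $\pi_t$ is close to $\ringpi$, $q_t(u,\ringpi(u))\approx\rho^{\sP,\ringpi}(u)$ stays bounded away from $0$. Each epoch then contributes $n_i/m_i\ge\log(1+n_i/m_i)$, and these terms telescope to the logarithm of the final visit count. So you obtain the bound with last term $8H^2S^2J\log T/(\beta_0\Delta_{\min})$.

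That $\log T$ cannot be ``absorbed into $J$''. The lemma is asserted for every $J$, with a right-hand side linear in $J$ and free of $\log T$. In the paper's applications ($J$ a polynomial multiple of $\log\iota$), the extra factor would turn $\Lambda_2$, and hence $W_{\min}$ in \Cref{thm:bobw-unknown}, from $\log T$ into $\log^2 T$. Nor can this be what happens in \Cref{lem:bound-sG-1}, whose bound is also log-free. The loss comes from splitting $\sqrt{q\cdot q/m}$ with one time-independent AM--GM weight and then summing $1/m$.

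\paragraph{How to repair it.} The missing idea is to sum $1/\sqrt{m}$, which gives the square root of a visit count, and to apply AM--GM only once, after aggregating the downstream suboptimal pairs.
\begin{enumerate}
\item Fix $u\in\gS_{h'}$, $v=\ringpi(u)$ and $w\in\gS_{h'+1}$. Set $Y_t(w):=\sum_{h>h'}\sum_{s\in\gS_h,a\neq\ringpi(s)}q_t(s,a\mid w)\in[0,H]$.
\item Both $Y_t(w)$ and $m_{i(t)}(u,v)$ are $\gF_{t-1}$-measurable. The tower property therefore replaces $q_t(u,v)$ by $\mathbbm{1}_t[(u,v)]$ exactly, so no Freedman argument is needed.
\item The epoch rule says the number of in-epoch visits to a fixed pair is at most $\max\{1,m_i(u,v)\}$. (A fixed pair's count need not double between epochs, as you wrote; it can at most double within one.) Hence at the $k$-th visit, $\max\{m_{i(t)}(u,v),1\}\ge k/4$.
\item A front-loading argument with weights in $[0,H]$ bounds the realized sum by $c\sqrt{H\sum_t\mathbbm{1}_t[(u,v)]Y_t(w)}$. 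Jensen then gives $c\sqrt{H\,\E\sum_t q_t(u,v)Y_t(w)}$.
\item Multiply by $\sqrt{\sP[w\mid u,v]J}$ and apply AM--GM once with weight $\beta\Delta_{\min}$. This yields $\beta\,\E\sum_t\sum_{(s,a)}\Delta(s,a)\,q_t(u,v)\sP[w\mid u,v]\,q_t(s,a\mid w)+\gO(HJ/(\beta\Delta_{\min}))$.
\item Summed over $(u,w)$ and $h'$, the first part is at most $H\beta(\Reg_T(\ringpi)+C_{\mathrm{sb}})$. The second part is paid once per pair $(u,w)$, at most $S^2$ times.
\item Taking $\beta=\beta_0/H$ gives the claimed $\beta_0(\Reg_T(\ringpi)+C_{\mathrm{sb}})+\gO(H^2S^2J/(\beta_0\Delta_{\min}))$, with no $\log T$.
\end{enumerate}

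\paragraph{Minor correction.} Rescaling $\beta_0$ by $H$ multiplies the $1/\beta_0$ term by $H$, not by an absolute constant. Your final tally accounts for this, but the sentence saying otherwise is wrong.
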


\begin{lemma}[Empirical-to-true self-bounding reduction]\label{lem:unknown-empirical-self-bounding}
    Suppose \Cref{def:self-bounding} holds. We use a normalized representative of the gap function such that $\Delta(s,a)\le \gO(H\sigma)$ for every suboptimal pair $(s,a)$. Let $M$ be the per-step dominant coefficient defined in \Cref{eq:unknown-def-M}, i.e.,
    \[
        M := \gO\bigl(H^{2-\frac{1}{\alpha}}\cdot S^{-\frac{1}{\alpha}}\cdot A^{-\frac{2\alpha-1}{\alpha^2}}\bigr) + \gO\bigl(H \cdot S^{2-\frac{2}{\alpha}} \cdot A^{2-\frac{3}{\alpha}+\frac{1}{\alpha^2}}\bigr).
    \]
    Then for any $\alpha',\beta',\gamma'\in(0,1]$,
    \begin{equation}\label{eq:unknown-self-term2}
    \begin{aligned}
        &\E\Big[\gO\Big(M \sum_i \sum_t \sigma(t-t_{i(t)}+1)^{1/\alpha-1} \sum_{(s,a)\neq(s,\ringpi(s))} x_t(s,a)^{1/\alpha}\Big)\Big] \\
        &\quad \le \gO\Big((\alpha'+\beta'+\gamma')\big(\Reg_T(\ringpi)+C_{\mathrm{sb}}\big) + \gamma'^{-\frac{1}{\alpha-1}} H^{\frac{2\alpha-1}{\alpha-1}}S^3A^{3-\frac{1}{\alpha}}\sigma^{\alpha/(\alpha-1)} \omega_\Delta(\alpha)\log^2 T \\
        &\qquad\qquad + {\alpha'}^{-1}H^4\sigma^2S^2\omega_\Delta(2)\log\iota + {\beta'}^{-1}H^4\sigma^2S^2\Delta_{\min}^{-1}\log\iota + H^4\sigma^2S^3A^2\log^2\iota\Big).
    \end{aligned}
    \end{equation}
\end{lemma}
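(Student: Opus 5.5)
The plan is to reduce the empirical master sum to a true-occupancy sum plus a transition-deviation residual, and then handle each piece with a self-bounding argument. The core difficulty is that the self-bounding constraint (\Cref{def:self-bounding}) controls $\sum_t\sum_{(s,a)}\rho^{\sP,\pi_t}(s,a)\Delta(s,a)$. The summand, however, involves the empirical occupancy $x_t=\wh q_t=\rho^{\wh\sP_{i(t)},\pi_t}$.

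\textbf{Step 1 (splitting off the residual).} For each suboptimal pair, I would use subadditivity of $u\mapsto u^{1/\alpha}$ to write
\[
x_t(s,a)^{1/\alpha}\le q_t(s,a)^{1/\alpha}+|x_t(s,a)-q_t(s,a)|^{1/\alpha}.
\]
This splits the master sum into a \emph{true part} $\sum_t\sigma(t-t_{i(t)}+1)^{1/\alpha-1}\sum_{a\ne\ringpi(s)}q_t(s,a)^{1/\alpha}$ and a \emph{residual part}. It may be cleaner to compare $x_t$ with $q_t$ multiplicatively instead. Under $\gE$, standard occupancy-difference bounds \citep{jin2020learning,jin2021best} give
\[
|x_t(s,a)-q_t(s,a)|\le\sum_{h'<h}\sum_{u,v,w}q_t(u,v)B_{i(t)}(u,v,w)\,\wh q_t(s,a\mid w),
\]
plus second-order $B^2$ terms.

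\textbf{Step 2 (true part).} I would repeat the known-transition argument from the proof of \Cref{thm:bobw-known-formal}. H\"older gives $\sum q_t^{1/\alpha}\le y_t^{1/\alpha}\omega_\Delta(\alpha)^{(\alpha-1)/\alpha}$, where $y_t=\sum q_t\Delta$. Young's inequality with parameter $\gamma'$ then yields $\gamma'y_t+\gamma'^{-1/(\alpha-1)}M^{\alpha/(\alpha-1)}\sigma^{\alpha/(\alpha-1)}\omega_\Delta(\alpha)(t-t_{i(t)}+1)^{-1}$. Summing $(t-t_{i(t)}+1)^{-1}$ within each epoch gives $\log T$, and summing over $N=\gO(SA\log T)$ epochs gives $SA\log^2T$. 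Combining this with $M^{\alpha/(\alpha-1)}$ produces the $\gamma'^{-1/(\alpha-1)}H^{(2\alpha-1)/(\alpha-1)}S^3A^{3-1/\alpha}\sigma^{\alpha/(\alpha-1)}\omega_\Delta(\alpha)\log^2T$ term. Finally, $\E\sum_t y_t\le\Reg_T+C_{\mathrm{sb}}$ by \Cref{def:self-bounding}.

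\textbf{Step 3 (residual; the main obstacle).} The residual has exponent $1/\alpha<1$, so it cannot be fed directly into $\sG_3$, which is linear in occupancy. My first attempt would be Young's inequality in the form
\[
M\sigma\tau^{1/\alpha-1}d^{1/\alpha}\le \frac{d\cdot H\sigma}{\lambda}+\lambda^{1/(\alpha-1)}\,\mathrm{poly}\cdot\sigma\,\tau^{-1},
\]
where $d=|x_t-q_t|$ and $\tau=t-t_{i(t)}+1$. For the linear piece in $d$, I would invoke the occupancy-difference expansion above and \Cref{lem:bound-sG-3} with rescaled parameters $\alpha_0=\alpha'/(cH\sigma\gamma')$ and $\beta_0=\beta'/(cH\sigma\gamma')$, as the paper indicates. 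This yields $(\alpha'+\beta')(\Reg_T+C_{\mathrm{sb}})+\alpha'^{-1}H^4\sigma^2S^2\omega_\Delta(2)\log\iota+\beta'^{-1}H^4\sigma^2S^2\Delta_{\min}^{-1}\log\iota$. The second-order $B^2$ terms are bounded by \Cref{lem:jin-sum-B-square}, giving $H^4\sigma^2S^3A^2\log^2\iota$.

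The delicate point is that the residual only counts suboptimal $(s,a)$. This is exactly the setting of $\sG_3$, which sums over $a\ne\ringpi(s)$ and therefore avoids needing $\sG_2$ for the reference action. The normalization $\Delta\le\gO(H\sigma)$ is used to absorb the ratio $\omega_\Delta(\alpha)$ versus $\omega_\Delta(2)$ scalings and the $\lambda$-dependent $\log T$ leftovers into the stated terms. If the $1/\alpha$-power Young step fails to close, the fallback is to bound $d^{1/\alpha}\le d\,q_t^{1/\alpha-1}$ on the set where $d\le q_t$, and to use $d^{1/\alpha}\le 2d^{1/\alpha}$ otherwise, charging the latter to the $\sG_3$ pieces.
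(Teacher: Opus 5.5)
Your proof is correct, but it is organized differently from the paper's. The paper never splits $x_t^{1/\alpha}$. It applies H\"older and Young (with parameter $\gamma'$) directly to the \emph{empirical} sum, which gives $\gamma'\sum_t y_t^{\mathrm{emp}}$ plus the $\gamma'^{-1/(\alpha-1)}M^{\alpha/(\alpha-1)}\sigma^{\alpha/(\alpha-1)}\omega_\Delta(\alpha)SA\log^2T$ term, where $y_t^{\mathrm{emp}}=\sum_{a\neq\ringpi(s)}\Delta(s,a)x_t(s,a)$. The empirical-to-true comparison is then linear: $y_t^{\mathrm{emp}}-y_t^{\mathrm{true}}\le\sum_{a\neq\ringpi(s)}\Delta(s,a)|x_t(s,a)-q_t(s,a)|$. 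The normalization $\Delta\le\gO(H\sigma)$ is used only to replace the weight $\Delta$ by $H\sigma$ before the occupancy-difference expansion and \Cref{lem:bound-sG-3}. So the ``main obstacle'' you identify, a $1/\alpha$-power of $|x_t-q_t|$, never arises in the paper.

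Your route, splitting first by subadditivity, does create that residual, but your first attempt closes it. Young's inequality with linear coefficient $H\sigma/\lambda$ leaves, per suboptimal pair, the term $\lambda^{1/(\alpha-1)}M^{\alpha/(\alpha-1)}\sigma^{\alpha/(\alpha-1)}(H\sigma)^{-1/(\alpha-1)}(t-t_{i(t)}+1)^{-1}$. Since $\Delta(s,a)\le cH\sigma$ gives $(H\sigma)^{-1/(\alpha-1)}\le c^{1/(\alpha-1)}\Delta(s,a)^{-1/(\alpha-1)}$, summing over suboptimal pairs folds this into the same $\omega_\Delta(\alpha)SA\log^2T$ term, with an $\alpha$-dependent constant as elsewhere in the paper. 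This works only because the residual lives on suboptimal pairs, as you note.

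Choosing $\lambda=1/\gamma'$ makes the linear piece $\gamma'H\sigma\sum|x_t-q_t|$, exactly as in the paper, so the final bound matches up to constants. This absorption inequality is the crux of your version and should be stated explicitly. Your remark about ``$\omega_\Delta(\alpha)$ versus $\omega_\Delta(2)$ scalings'' plays no role.

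As for what each route buys: the paper's order is more economical, with one H\"older/Young step, one linear transfer, and the normalization entering only as a bound on a weight. Yours works with the self-bounded quantity $y_t^{\mathrm{true}}$ from the start, at the cost of a second Young step and a second use of the normalization.

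Drop the fallback. On $\{d\le q_t\}$ one has $d^{1/\alpha}=d\cdot d^{1/\alpha-1}\ge d\,q_t^{1/\alpha-1}$, so the proposed inequality goes the wrong way, and ``$d^{1/\alpha}\le 2d^{1/\alpha}$'' is vacuous. It is not needed, since the primary Young step already closes.
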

\begin{proof}
    Define
    \[
        y_t^{\mathrm{emp}} := \sum_{(s,a)\neq(s,\ringpi(s))}\Delta(s,a)\,x_t(s,a),\qquad
        y_t^{\mathrm{true}} := \sum_{(s,a)\neq(s,\ringpi(s))}\Delta(s,a)\,\rho^{\sP,\pi_t}(s,a).
    \]
    If necessary, replacing $\Delta$ by its clipped version $\min\{\Delta,\gO(H\sigma)\}$ only decreases the left-hand side of the self-bounding condition; we relabel this clipped gap as $\Delta$.

    By H\"older's inequality with conjugate exponents $(\alpha,\alpha/(\alpha-1))$,
    \[
        \sum_{(s,a)\neq(s,\ringpi(s))}x_t(s,a)^{1/\alpha}
        \le \bigl(y_t^{\mathrm{emp}}\bigr)^{1/\alpha}\omega_\Delta(\alpha)^{(\alpha-1)/\alpha}.
    \]
    Applying Young's inequality with parameter $\gamma'>0$, and using the epoch bound $N\le\gO(SA\log T)$ together with $\sum_{t=t_i}^{t_{i+1}-1}(t-t_i+1)^{-1}\le 1+\log T$, gives
    \begin{align}
        &\E\Big[\gO\Big(M \sum_i \sum_t \sigma(t-t_{i(t)}+1)^{1/\alpha-1} \sum_{(s,a)\neq(s,\ringpi(s))} x_t(s,a)^{1/\alpha}\Big)\Big] \notag\\
        &\quad \le \gO\Big(\gamma'\E\Big[\sum_{t=1}^T y_t^{\mathrm{emp}}\Big]
        + \gamma'^{-\frac{1}{\alpha-1}}M^{\alpha/(\alpha-1)}\sigma^{\alpha/(\alpha-1)}\omega_\Delta(\alpha)SA\log^2T\Big). \label{eq:empirical-reduction-young}
    \end{align}
    Since $M = M_{\mathrm{stab}} + M_{\mathrm{pen}}$ with $M_{\mathrm{stab}} = \gO(H^{2-1/\alpha} S^{-1/\alpha} A^{-(2\alpha-1)/\alpha^2})$ and $M_{\mathrm{pen}} = \gO(H S^{2-2/\alpha} A^{2-3/\alpha+1/\alpha^2})$ from \Cref{eq:unknown-def-M}, the basic inequality $(a+b)^p \le 2^p(a^p+b^p)$ for $p=\alpha/(\alpha-1)\ge 1$ yields $M^{\alpha/(\alpha-1)} \le \gO(M_{\mathrm{stab}}^{\alpha/(\alpha-1)} + M_{\mathrm{pen}}^{\alpha/(\alpha-1)})$. We bound the two contributions separately.

    \emph{Penalty contribution.} Raising $M_{\mathrm{pen}}$ to the $\alpha/(\alpha-1)$-th power and multiplying by the extra $SA$ factor from the time-summation gives
    \[
        M_{\mathrm{pen}}^{\alpha/(\alpha-1)} \cdot SA = \gO\big(H^{\alpha/(\alpha-1)} \cdot S^3 \cdot A^{3-1/\alpha}\big).
    \]

    \emph{Stability contribution.} Raising $M_{\mathrm{stab}}$ to the same power and multiplying by $SA$ and using $S,A\ge 1$ to relax the smaller $S,A$ exponents up to $3$ and $3-1/\alpha$ respectively,
    \[
        M_{\mathrm{stab}}^{\alpha/(\alpha-1)}\cdot SA \le \gO\bigl(H^{\alpha/(\alpha-1)+1}\cdot S^3\cdot A^{3-1/\alpha}\bigr) \le \gO\bigl(H^{\alpha/(\alpha-1)}\cdot H\cdot S^3\cdot A^{3-1/\alpha}\bigr).
    \]
    The extra factor $H$ from the stability contribution is propagated explicitly to the displayed $\Lambda_3$ in \Cref{eq:unknown-regret-rearrange}: $\Lambda_3 = \gO(H^{(2\alpha-1)/(\alpha-1)} S^3 A^{3-1/\alpha} \sigma^{\alpha/(\alpha-1)} \omega_\Delta(\alpha)\log^2 T)$.

    It remains to replace $y_t^{\mathrm{emp}}$ by $y_t^{\mathrm{true}}$. The residual occupancy bound of \citet[Appendices~C.1--C.2]{jin2021best}, applied to $(\sP,\wh\sP_{i(t)})$ under the same policy $\pi_t$, yields
    \[
        \left|\rho^{\sP,\pi_t}(s,a)-x_t(s,a)\right|
        \le r_t(s,a)
        +4\sum_{h'=1}^{h(s)-1}\sum_{\substack{u\in\gS_{h'},\,v\in\gA\\ w\in\gS_{h'+1}}}
        \rho^{\sP,\pi_t}(u,v)
        \sqrt{\frac{\sP[w\mid u,v]\log\iota}{\max\{m_{i(t)}(u,v),1\}}}
        q_t(s,a\mid w),
    \]
    where $h(s)$ denotes the layer index of state $s$ in the layered MDP (so $s \in \gS_{h(s)}$), $q_t(s,a\mid w)$ denotes the visitation probability of $(s,a)$ under policy $\pi_t$ and true transition $\sP$ starting from state $w$. The error term $r_t$ collects the lower-order quadratic contributions from the layered Bernstein expansion and satisfies
    \[
        \E\left[\sum_{t=1}^T\sum_{(s,a)\neq(s,\ringpi(s))}r_t(s,a)\right]
        \le \gO(H^2S^3A^2\log^2\iota).
    \]
    Since $\sum_t(y_t^{\mathrm{emp}}-y_t^{\mathrm{true}})\le \sum_{t,(s,a)\neq(s,\ringpi(s))}\Delta(s,a)|x_t(s,a)-\rho^{\sP,\pi_t}(s,a)|$, multiplying by $\Delta(s,a)\le\gO(H\sigma)$ and summing over suboptimal pairs gives
    \[
        \E\Big[\sum_t (y_t^{\mathrm{emp}}-y_t^{\mathrm{true}})\Big]
        \le \gO\Big(H\sigma\E[\sG_3(\log\iota)] + H^3\sigma S^3A^2\log^2\iota\Big).
    \]
    Applying \Cref{lem:bound-sG-3} with $J=\log\iota$ and parameters $\alpha_0=\alpha'/(cH\sigma\gamma')$, $\beta_0=\beta'/(cH\sigma\gamma')$ for a sufficiently large universal constant $c$, and then multiplying by $\gamma'$, we obtain
    \begin{align}
        \gamma'\E\Big[\sum_t (y_t^{\mathrm{emp}}-y_t^{\mathrm{true}})\Big]
        &\le \gO\Big((\alpha'+\beta')\big(\Reg_T(\ringpi)+C_{\mathrm{sb}}\big) \notag\\
        &\qquad + {\alpha'}^{-1}H^4\sigma^2S^2\omega_\Delta(2)\log\iota
        +{\beta'}^{-1}H^4\sigma^2S^2\Delta_{\min}^{-1}\log\iota
        +H^4\sigma^2S^3A^2\log^2\iota\Big), \label{eq:empirical-reduction-gap}
    \end{align}
    where the inverse-$\alpha'$ coefficient $H^4 \sigma^2 S^2 \omega_\Delta(2) \log\iota$ is obtained from the tight bound $\gO(H^4 \sigma^2 S \omega_\Delta(2) \log\iota)$ (derived directly from $8 H^2 S J / \alpha_0$ in \Cref{lem:bound-sG-3}) by the elementary inequality $S \le S^2$, so that the displayed coefficient matches the $\Lambda_1$ definition stated in \Cref{eq:unknown-regret-rearrange}. We additionally used $\gamma' \le 1$ and $S \ge 1$. 

    Finally, \Cref{def:self-bounding} gives $\E[\sum_t y_t^{\mathrm{true}}]\le \Reg_T(\ringpi)+C_{\mathrm{sb}}$. Combining this with \Cref{eq:empirical-reduction-young} and \Cref{eq:empirical-reduction-gap} proves the lemma.
\end{proof}